\newcommand{\xddots}{%
	\raise 4pt \hbox {.}
	\mkern 6mu
	\raise 1pt \hbox {.}
	\mkern 6mu
	\raise -2pt \hbox {.}
}
\newcommand{\ze}{Z\kern-0.45emZ}
\newcommand{\esp}{I\kern-0.37emE}
\newcommand{\N}{I\kern-0.37emN}
\newcommand{\real}{I\kern-0.37emR}
\newtheorem{proposition}{Proposition}
\newtheorem{theorem}{Theorem}
\newtheorem{lemma}{Lemma}
\newtheorem{example}{Example}
\newtheorem{remark}{Remark}
\newtheorem{assumption}{Assumption}
\title{Distribution-free Contextual Dynamic Pricing}% via Perturbed Linear Bandit }
\author
{
	Yiyun Luo\thanks{PhD Student, Department of Statistics and Operations Research, The University of North Carolina at Chapel Hill. Email: yiyun851@live.unc.edu.},
	Will Wei Sun\thanks{Assistant Professor, Krannert School of Management, Purdue University. Email: sun244@purdue.edu.}, and
	Yufeng Liu\thanks{Professor, Department of Statistics and Operations Research, Department of Genetics, Department of Biostatistics, Carolina Center for Genome Sciences, Lineberger Comprehensive Cancer Center, The University of North Carolina at Chapel Hill. Email: yfliu@email.unc.edu.}
}
\date{}
\begin{document}
	\maketitle
	
	{\singlespacing
		\begin{abstract}
			Contextual dynamic pricing aims to set personalized prices based on sequential interactions with customers. At each time period, a customer who is interested in purchasing a product comes to the platform. The customer's valuation for the product is a linear function of contexts, including product and customer features, plus some random market noise. The seller does not observe the customer's true valuation, but instead needs to learn the valuation by leveraging contextual information and historical binary purchase feedbacks. Existing models typically assume full or partial knowledge of the random noise distribution. In this paper, we consider contextual dynamic pricing with unknown random noise in the linear valuation model. Our distribution-free pricing policy learns both the contextual function and the market noise simultaneously. A key ingredient of our method is a novel perturbed linear bandit framework, where a modified linear upper confidence bound algorithm is proposed to balance the exploration of market noise and the exploitation of the current knowledge for better pricing. We establish the regret upper bound and a matching lower bound of our policy in the perturbed linear bandit framework and prove a sub-linear regret bound in the considered pricing problem. Finally, we demonstrate the superior performance of our policy on simulations and a real-life auto-loan dataset.
		\end{abstract}
	}

	\vspace{+0in}
	
\begin{quote} \small
	\textbf{Keywords:} Classification; Dynamic Pricing; Linear Bandits; Regret Analysis %Classification, Dynamic Pricing, Market Noise, Nonparametric Demand, Regret
\end{quote} \normalsize

	\newpage
	\setcounter{page}{1}

\section{Introduction}
\label{sec:1}

Contextual dynamic pricing aims to design an online pricing policy adaptive to product features, customer characteristics, and marketing environment \citep{huang2021value}. It has been widely used in industries such as hospitality, tourism, entertainment, retail, electricity, and public transportation \citep{den2015dynamic}. A successful dynamic pricing algorithm involves both pricing and learning to maximize the revenues. Upon receiving sequential customer responses, the algorithm continuously updates its knowledge on the customer purchasing behavior and sets a price accordingly. Such online statistical learning differs from traditional supervised or unsupervised learning in its adaptive and sequential manner. 

The key learning objective in dynamic pricing is the willingness-to-pay (demand) of a customer, i.e., the probability of a customer making a buying decision. With full knowledge of the demand, the seller can set optimal prices that yield the maximum expected revenues. However, it is common that the seller knows little about the demand prior to the pricing procedure. Such an unknown demand case has been studied extensively in dynamic pricing \citep{besbes2009dynamic,keskin2014dynamic,cheung2017dynamic,chen2019dynamic,cesa2019dynamic,den2020discontinuous}. In this case, one critical task is to balance the tradeoff between exploration and exploitation, where exploration aims for more customer demand knowledge and exploitation maximizes the revenue based on the current knowledge. Two major influential factors for a customer's willingness-to-pay are the price offered by the seller as well as the customer's valuation of the product. In this paper, we consider a widely adopted linear valuation model \citep{javanmard2019dynamic,golrezaei2019incentive}. Given the contextual covariate $x$, e.g., product features, customer characteristics, and marketing environment, the customer's valuation $v(x)$ for the product is $v(x) = x^{\top}\theta_{0}+z$. Here, the first component represents the linear effect of the covariates $x$ with an unknown parameter $\theta_{0}$, and the second component models a market noise $z$ drawn from an unknown distribution $F$. After observing the price $p$ set by the seller, the customer buys the product if $v(x)$ exceeds $p$ and otherwise leaves without purchasing. 

Existing contextual dynamic pricing models assume partial or full knowledge of the market noise distribution $F$. For example, \cite{javanmard2019dynamic} assumes a known $F$ for their RMLP method and considers $F$ to belong to a log-concave family for their RMLP-2 policy. In spite that knowing $F$ simplifies the pricing process and improves learning accuracy, it can be restrictive and unrealistic in practice. It is essential to tackle the contextual dynamic pricing problem with an unknown $F$. Importantly, it may happen in practice that not all relevant contexts can be observed and such unobserved contexts may lead to a complex noise term. For example, the heterogeneity among customers may lead to a noise that is a mixture of many distributions, beyond the log-concave family. In our auto loan dataset studied in Section \ref{sec:5}, the estimated Probability Distribution Functions (PDF) of the noise term in four states are clearly not log-concave, as shown in Figure \ref{fig:20}.

\begin{figure}[h]
	\centering
	\subfigure{
		\begin{minipage}[t]{0.24\linewidth}
			\centering
			\includegraphics[width=1.55in]{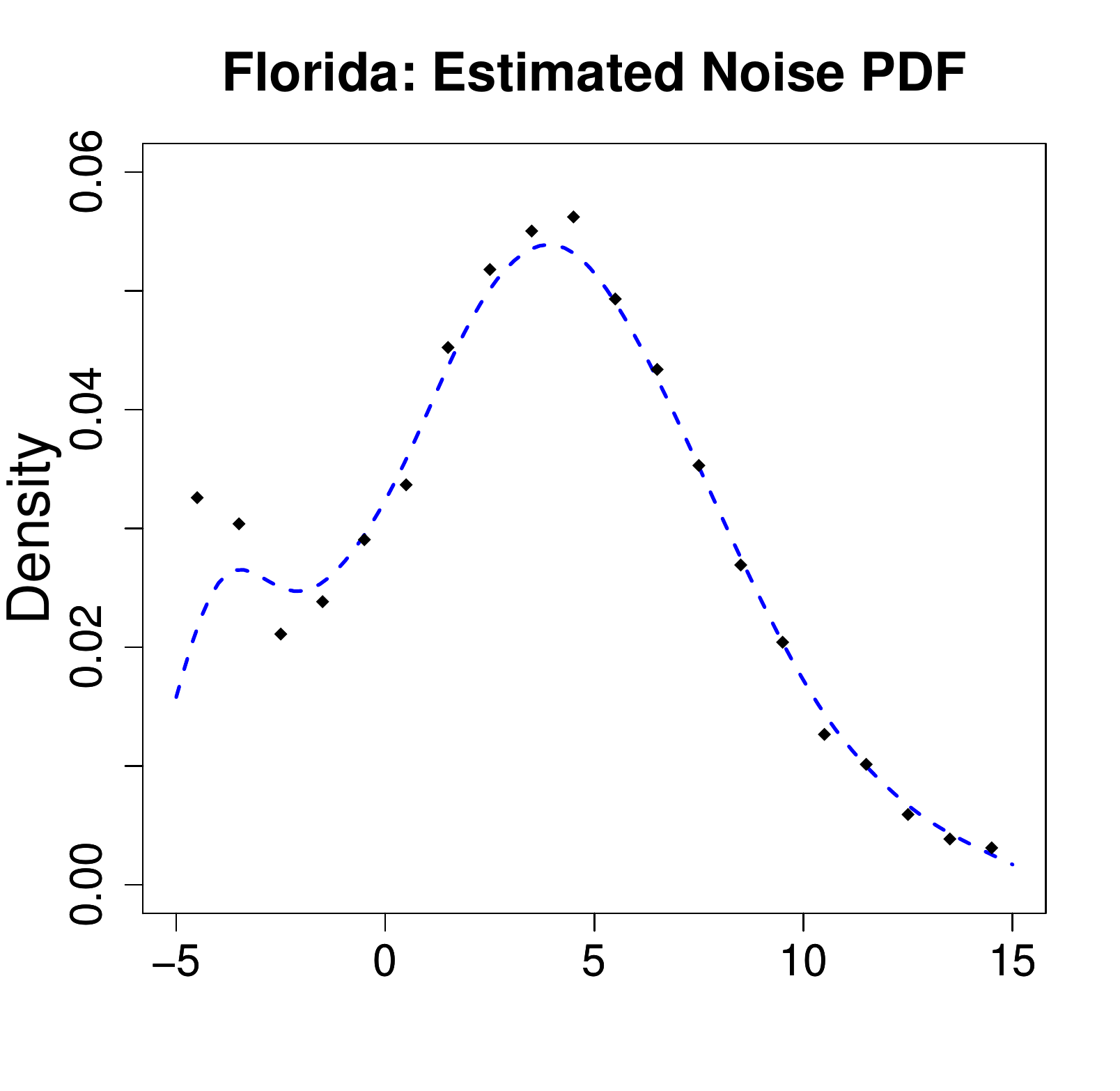}
			%\caption{fig1}
		\end{minipage}%
	}%
	\subfigure{
		\begin{minipage}[t]{0.24\linewidth}
			\centering
			\includegraphics[width=1.55in]{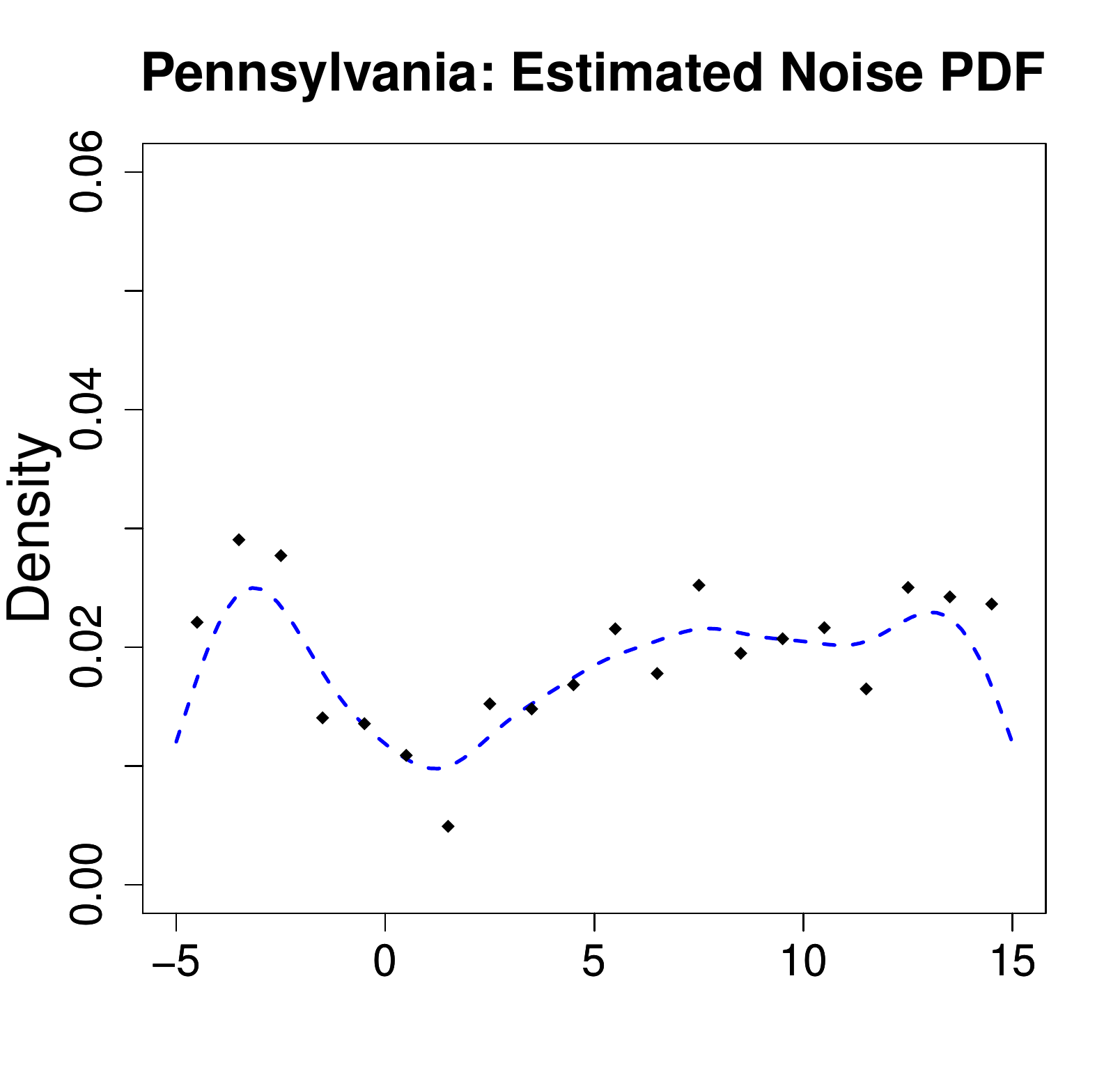}
			%\caption{fig2}
		\end{minipage}%
	}%
	\subfigure{
		\begin{minipage}[t]{0.24\linewidth}
			\centering
			\includegraphics[width=1.55in]{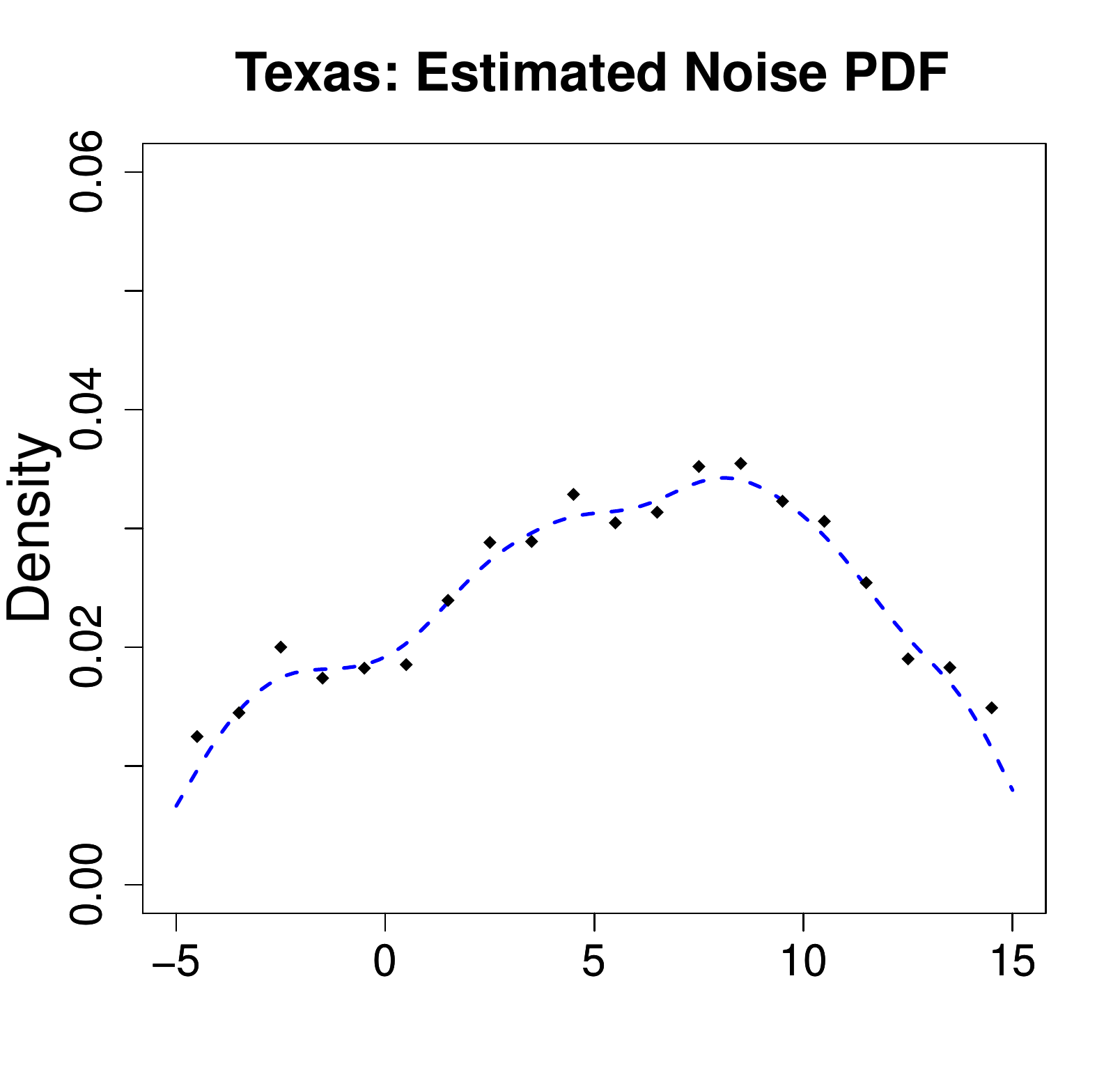}
			%\caption{fig1}
		\end{minipage}%
	}%
	\subfigure{
		\begin{minipage}[t]{0.24\linewidth}
			\centering
			\includegraphics[width=1.55in]{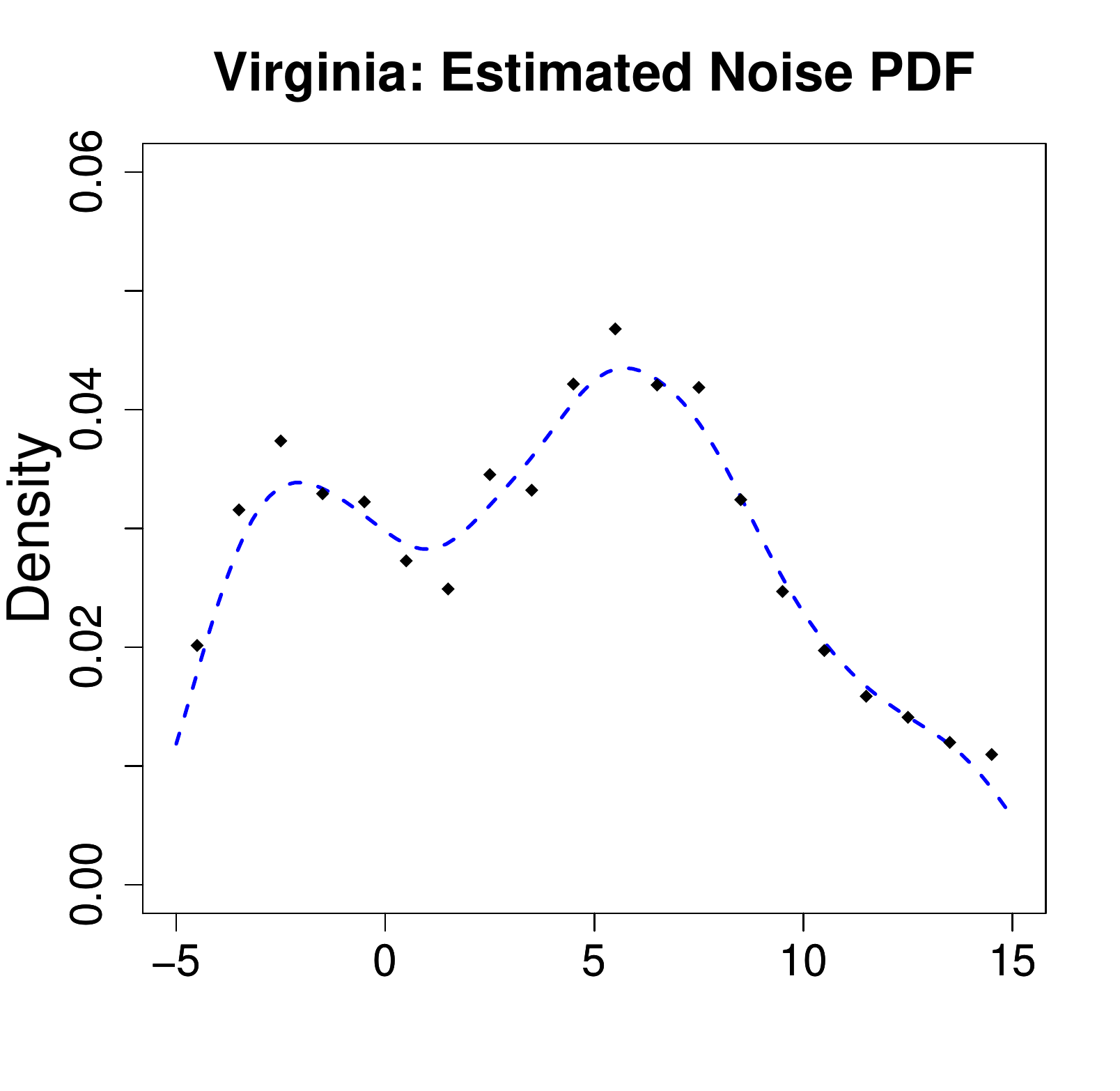}
			%\caption{fig1}
		\end{minipage}%
	}%
	\centering
	\caption{Estimated noise PDFs for four states in our auto loan real application.}
	\label{fig:20}
\end{figure}

In this paper, we propose a DIstribution-free Pricing (DIP) policy to tackle the contextual dynamic pricing problem with unknown $\theta_{0}$ and unknown $F$. DIP employs a doubling trick \citep{lattimore2020bandit} in its framework, which cuts the time horizon into episodes in order to reduce the correlations across data and handle the unknown horizon length. At the beginning of each episode, by formulating the $\theta_{0}$ estimation into a classification problem in which no prior knowledge of $F$ is required, our DIP policy adopts the logistic regression to estimate $\theta_{0}$ using data in the previous episode. Given such an estimate, we then translate our single-episode pricing problem into a newly-proposed Perturbed Linear Bandit (PLB). PLB can be considered as an extension of the classic linear bandit \citep{abbasi2011improved,chu2011contextual,agrawal2013thompson}, and is also of independent interests. Interestingly, the ``perturbation level" of the translated PLB can be specified as proportional to the $\ell_{1}$ error of the given $\theta_{0}$ estimate. A modified Linear Upper Confidence Bound (M-LinUCB) algorithm, serving as an essential part of DIP, is proposed for our translated PLB to unify the learning of $F$ and exploitation of the learnt knowledge to set prices. 

In addition to the methodological contribution, we also establish regret analysis of our DIP policy. The regret, as the expected loss  of revenues with respect to the clairvoyant policy, is widely used to evaluate the performance of a pricing policy. We first prove a $T_{0}$-period regret of $\tilde{O}(\sqrt{T_{0}}+C_{p}T_{0})$ for M-LinUCB on a general PLB with $C_{p}$ representing the perturbation level. The decomposition of a sub-linear term and a linear term is analogous to the regret in misspecified linear bandits \citep{lattimore2020learning,pacchiano2020model,foster2020adapting}. Importantly, we also show that the linear dependence of $T_0$ is unavoidable by establishing a matching lower bound for our perturbed linear bandit. We then apply this result to the specific PLB formulation of our single-episode pricing problem to obtain the regret bound for each episode. Finally we obtain the regret bound for the entire $T$ horizon, which consists of an $\tilde{O}(T^{2/3})$ sub-linear term and an extra term related to the estimation error for $\theta_{0}$. The latter term is dominated by the sub-linear term in a broad range of scenarios, which is well supported by our experiments. In summary, our sub-linear $\tilde{O}(T^{2/3})$ regret upper bound implies that the average regret per time period vanishes as the time horizon tends to infinity. Because our problems involves both unknown linear parameter $\theta_{0}$ and unknown noise distribution $F$, we conjecture the obtained $\tilde{O}(T^{2/3})$ rate is close to the optimal rate. %See our Remark \ref{rm:3} for more discussion. 

Finally, we demonstrate the superior performance of our policy on extensive simulations and a real-life auto-loan dataset by comparing our DIP policy to RMLP and RMLP-2 \citep{javanmard2019dynamic}. Due to the restrictive condition on $F$, RMLP is not satisfactory when a moderate misspecification of $F$ occurs. Despite being more robust than RMLP, RMLP-2 inevitably leads to a linear regret when the noise distribution is beyond log-concave. On the other hand, our DIP policy is robust to unknown complex noise distributions. In a real-life auto-loan dataset, our DIP policy is shown to largely improve the cumulative regret of the benchmark RMLP-2 method in learning customer's purchasing behavior of auto loans. Specifically, DIP has an $80\%$ improvement over RMLP-2 in the cumulative regret over the considered time horizon. Such an improvement keeps increasing when the total time horizon increases. See Figure \ref{fig:8} and Section \ref{sec:5} for more details.

\subsection{Related Work}

\textbf{Non-contextual dynamic pricing.} For non-contextual dynamic pricing without covariates, \cite{besbes2009dynamic,wang2014close,besbes2015surprising,chen2018primal} designed policies to handle a nonparametric model while \cite{besbes2009dynamic,broder2012dynamic,den2014simultaneously,keskin2014dynamic} considered parametric models. Furthermore, \cite{besbes2011minimax,den2015tracking,keskin2017chasing} investigated the time-varying unknown demand setting. In addition, the Upper Confidence Bound (UCB) idea \citep{auer2002finite,abbasi2011improved} has been used in different non-contextual instances \citep{kleinberg2003value,misra2019dynamic,wang2021multimodal}. However, all these approaches do not incorporate the covariates into pricing policy. Therefore, our model and technical tools are fundamentally different.

\noindent\textbf{Contextual dynamic pricing.} Dynamic pricing with covariates has garnered significant interest among researchers. As \cite{mueller2018low,javanmard2020multi,chen2021statistical} focused on the multi-product setting, most of the contextual dynamic pricing literature \citep{qiang2016dynamic,javanmard2017perishability,mao2018contextual,nambiar2019dynamic,bastani2019meta,ban2020personalized,cohen2020feature,wang2020uncertainty} considered a single product at each time. \cite{javanmard2019dynamic,golrezaei2019incentive,golrezaei2021dynamic} also considered the linear valuation model as we do in this paper. Similar to us, \cite{golrezaei2019incentive} assumed both the unknown linear effect and noise distribution and thus faced the same challenge of error propagation. They adopted a second price auction mechanism with multiple buyers at each time. One main difference lies in the feedback structure. Namely, they assumed a ``full information" setting that the seller observed all bids and valuations from multiple buyers while we considered a ``bandit" setting that the seller only observed one single buyer's binary purchasing decision. In \cite{javanmard2019dynamic}, their proposed RMLP assumed a known market noise distribution while RMLP-2 assumed a known log-concave family of the noise distribution. Hence their approaches are no longer applicable when the noise distribution is unknown or not log-concave. In addition, by assuming the noise distribution to be in a known ambiguity set, \cite{golrezaei2021dynamic} also established a $\tilde{O}(T^{2/3})$ regret with respect to a robust benchmark defined upon the ambiguity set. In the general unknown noise case, the ambiguity set could be extremely large and hence the robust benchmark could be far from the true optimal policy. In contrast, our DIP policy is adaptive to the general unknown noise case and our regret bound is established by comparing to the true optimal policy. On the other hand, \cite{shah2019semi,chen2021nonparametric} shared similar nonparametric ingredients in the unknown demand function as ours. Specifically, \cite{chen2021nonparametric} considered a general Lipschitz demand and proposed a pricing policy based on adaptive binning of the covariate space \citep{perchet2013multi} with a regret of $\tilde{O}(T^{(2+d_{0})/(4+d_{0})})$, where $d_{0}$ is the dimension of covariates. Thus when $d_{0}\geq 3$, our DIP policy enjoys better performance as we leverage the parametric structure in our dynamic pricing model. \cite{shah2019semi} adopted a log-linear valuation model to handle the unknown nonparametric noise in their semi-parametric model. Their method heavily relies on the special structure of the log-linear valuation model, whose optimal price has desirable separable effects of the unknown linear structure and unknown noise distribution. Hence their approach is not applicable to our pricing model where these two unknown parts entangle with each other. Therefore, techniques used in \cite{shah2019semi,chen2021nonparametric} for handling nonparametric components in the demand function are very different from the newly-proposed PLB framework of our DIP policy.

\noindent\textbf{Bandit algorithms.} Our pricing policy is also related to bandit algorithms \citep{bubeck2012regret,lattimore2020bandit,foster2020beyond} which address the balance between exploration and exploitation. In particular, our perturbed linear bandit is related to misspecified linear bandits \citep{lattimore2020learning,pacchiano2020model,foster2020adapting} and non-stationary linear bandits \citep{cheung2018hedging,russac2019weighted,zhao2020simple}. An interesting finding is that, by leveraging the special structure of the perturbed linear bandit formulation of our dynamic pricing problem, we achieve a better and more precise regret bound for our proposed policy, compared to a direct application of much complex existing algorithms for misspecified or non-stationary linear bandits. See Remarks \ref{rm:1} and \ref{rm:2} for more discussions.

\subsection{Notation and Paper Organization}

We adopt the following notations throughout the article. Let $[T] = \{1,\ldots,T\}$. For a vector $\beta \in \mathbb R^d$, let $\|\beta\|_{\infty} = \max_{j}|\beta_j|$ and $\|\beta\|_1 = \sum_{j=1}^d |\beta_j|$ denote its max norm and $\ell_1$ norm, respectively. For two sequences $a_n, b_n$, we say $a_n = O(b_n)$ if $a_n \le C b_n$ for some positive constant $C$, $a_n = \tilde{O}(b_n)$ if $a_n = O(b_n)$ that ignores a logarithm term, and $a_n = \Omega(b_n)$ if $a_n \ge C b_n$ for some positive constant $C$.

The rest of the paper is organized as follows. In Section \ref{sec:2}, we introduce the methodology of our proposed DIP policy along with the perturbed linear bandit formulation of the pricing problem. In Section \ref{sec:3}, we develop regret bounds for a general perturbed linear bandit problem and employ it to establish the regret bound of our DIP policy. In Section \ref{sec:4}, we demonstrate the superior performance of DIP on various synthetic datasets and in Section \ref{sec:5}, we apply DIP to a real-life auto loan dataset. We conclude our work along with some future directions in Section \ref{sec:6}. Most technical proofs are collected in the Supplementary Material. 

\section{Methodology}
\label{sec:2}

In this section, we discuss the contextual dynamic pricing problem setting and then introduce our DIP policy which involves a general perturbed linear bandit formulation.

\subsection{Problem Setting}
\label{sec:2_1}

In contextual dynamic pricing, a potential customer who is interested in purchasing a product arrives at the platform at each period $t\in[T] = \{1,\dots,T\}$, and the seller observes a covariate $x_{t}\in \mathcal{X}\subseteq \mathbb{R}^{d_{0}}$ representing the product features and customer characteristics. Similar to \cite{javanmard2019dynamic,golrezaei2019incentive,shah2019semi,chen2021nonparametric}, we assume $||x_t||_{\infty}\leq 1,\forall x_t\in\mathcal{X}$. Given $x_{t}$, the customer's valuation of the product $v_{t} = v(x_{t}) = x_{t}^{\top}\theta_{0} + z_{t}$ is a sum of a linear function of $x_{t}$ and a market noise $z_{t}$. We assume $\{z_{t}\}_{t\in[T]}$ are drawn i.i.d. from an unknown distribution with Cumulative Distribution Function (CDF) $F$. If the customer's valuation $v_{t}$ is higher than the price $p_{t}$ set by the seller, the sale happens and the seller collects a revenue of $p_{t}$. Otherwise, the customer leaves and the seller receives no revenue. Let $y_{t}=1_{\{v_{t}\geq p_{t} \}}$ denote whether the customer buys the product. By the aforementioned sales mechanism, it follows
\[
y_{t} = \begin{cases}
1 & \text{ if } v_{t}\geq p_{t},\text{ with probability }1-F(p_{t}-x_{t}^{\top}\theta_{0});\\
0 & \text{ if } v_{t}<p_{t},\text{ with probability }F(p_{t}-x_{t}^{\top}\theta_{0}),\\
\end{cases}
\]
and the reward $Z_{t} = p_{t}y_{t} = p_{t}1_{\{v_{t}\geq p_{t} \}}$. Then the triplet $(x_{t},p_{t},y_{t})$ records the information of the pricing procedure at time $t$.

Given the above customer choice model and the covariate $x$, the expected reward of a setting price $p$ is $p(1-F(p-x^{\top}\theta_{0}))$. We define the optimal price $p^{*}(x)$ as that maximizing $p(1-F(p-x^{\top}\theta_{0}))$, which is an implicit function of the covariate and dependent on both the unknown $\theta_{0}$ and $F$. By dynamically setting prices and observing binary feedbacks, we collect instant revenues and meanwhile gather more information to estimate $\theta_{0}, F$ and $p^{*}(x)$. An important feature of this process is the tradeoff between exploration and exploitation where we shall well balance between exploiting the current knowledge for larger immediate revenues and exploring more information for better future revenues.

We next introduce the notion of regret for evaluating a pricing policy. Denote $$p^{*}_{t} = p^{*}(x_{t}) = \arg\max_{p>0}p(1-F(p-x_{t}^{\top}\theta_{0}))$$ as the optimal price at time $t$. Then the regret $r_{t}$ at time $t$ is defined as the loss of reward by setting the price $p_{t}$ compared to the optimal price $p_{t}^{*}$, i.e., 
\begin{equation}
r_{t} = p_{t}^{*}(1-F(p_{t}^{*}-x_{t}^{\top}\theta_{0}))-p_{t}(1-F(p_{t}-x_{t}^{\top}\theta_{0})).
\label{eqn:regret}
\end{equation}
The $T$-period cumulative regret across the horizon is defined as $R_{T} = \sum_{t=1}^{T}r_{t}$. We obtain the expected cumulative regret $\mathbb{E}(R_{T})$ by taking the expectation with respect to the randomness of data and the potential randomness of the pricing policy. The goal of our contextual dynamic pricing is to decide the price $p_t$ for covariate $x_t$ at time $t$, by utilizing all historical data $\{(x_{s},p_{s},y_{s}), s=1,\ldots,t-1\}$, in order to minimize the expected cumulative regret.

\subsection{DIP Algorithm}
\label{sec:2_2}

Our proposed DIP policy enjoys a simple framework as an Outer Algorithm nested with the Inner Algorithm A and the Inner Algorithm B. The Inner Algorithm A is designed for estimating $\theta_{0}$, and the Inner Algorithm B is the essential part that fully exploits the perturbed linear bandit formulation of our single-episode pricing problem and implements the UCB idea to resolve the tradeoff between exploration and exploitation.

\subsubsection{Outer Algorithm}

In online learning, the total time horizon $T$ is typically unknown. To address this problem, we adopt a doubling trick widely used in online learning and bandit algorithms \citep{lattimore2020bandit} to cut the horizon into episodes. After the first warm-up episode and starting from the second episode, we set the length of the next episode as double of the current one until the horizon ends. The number of episodes $n = n(T,\alpha_{1},\alpha_{2})$ and their lengths denoted as $\{\ell_{k} = \ell_{k}(T,\alpha_{1},\alpha_{2})\}_{k\in[n]}$ are functions of the total horizon length $T$ and the first two episodes' lengths $\alpha_{1},\alpha_{2}$. Figure \ref{fig:1} demonstrates the case when the total time horizon is cut into 5 episodes via the doubling trick. 

\begin{figure}[h]
	\centering
	\includegraphics[width=6in]{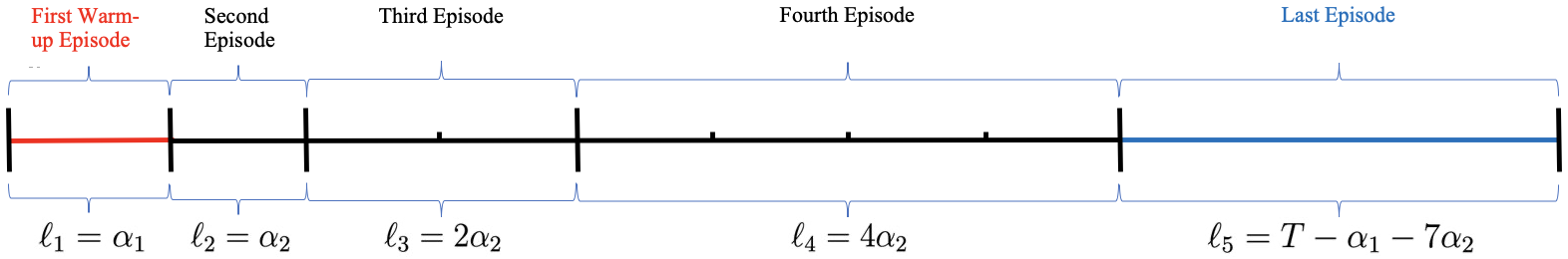}
	\caption{An illustration of cutting total time horizon utilizing the doubling trick.}
	\label{fig:1}	
\end{figure}

We present the outline of our DIP policy as the generic Outer Algorithm in Algorithm 1. In the first warm-up episode, DIP performs random exploration to set random prices at each time period. Then DIP alternates between Inner Algorithm A to obtain an estimate of $\theta_{0}$ and Inner Algorithm B to set prices. Specifically, Inner Algorithm A uses all data from episode $k-1$ to obtain an estimate $\hat{\theta}_{k-1}$ of $\theta_{0}$; then Inner Algorithm B takes $\hat{\theta}_{k-1}$ as an input to sequentially set prices for all time periods in episode $k$, which then forms all triplets of covariates, prices and customer responses in episode $k$ for future $\theta_{0}$ estimation by Inner Algorithm A. Another advantage of the horizon cutting strategy is the reduction of correlation across the pricing procedure.

\begin{algorithm}[h]
	\caption{Generic Outer Algorithm}\label{alg:0}
	\begin{algorithmic}[1]
		\STATE \textbf{Input:} \textbf{(arrives over time)} covariates $\{x_{t}\}_{t\in[T]}$
		\STATE Denote the episodes yielded by the doubling trick as $\mathcal{E}_{1},\dots,\mathcal{E}_{n}$. 
		\STATE \textbf{For} $t\in\mathcal{E}_{1}$, \textbf{do}
		\STATE \hspace{0.075in} Set a price $p_{t}$ randomly from $(0,p_{\max})$ and receive a binary response $y_{t}$.
		\STATE \textbf{For} episode $k = 2,3,\dots,n$, \textbf{do}
		\STATE \hspace{0.075in} With input data $\{(x_{t},p_{t},y_{t})\}_{t\in \mathcal{E}_{k-1}}$, apply Inner Algorithm A on this dataset to update\\ \hspace{0.075in} an estimate $\hat{\theta}_{k}$ of $\theta_{0}$;
		\STATE \hspace{0.075in} With input $\hat{\theta}_{k-1}$ as the estimate of $\theta_{0}$, apply Inner Algorithm B on $\mathcal{E}_{k}$ to sequentially\\
		\hspace{0.075in} set a price $p_{t}$ and receive a binary response $y_{t}$ for all $t\in \mathcal{E}_{k}$.
	\end{algorithmic}
\end{algorithm}
\subsubsection{Inner Algorithm A}

We  now introduce the Inner Algorithm A designed for estimating $\theta_{0}$. It uses all data $(x_{t},p_{t},y_{t})$ from the $(k-1)$-th episode to obtain an estimate $\hat{\theta}_{k-1}$ for future pricing in the $k$-th episode. For simplicity, we introduce its generic version with $[T_{0}]=\{1,\dots,T_{0}\}$ representing the $(k-1)$-th episode horizon. Since $y_{t}$ is binary and invoked by $x_{t},p_{t}$ through $\mathbb{P}(y_{t} = 1) = 1-F(p_{t}-x_{t}^{\top}\theta_{0})$, we obtain
\[
\begin{cases}
\mathbb{P}(y_{t} = 1)  > \frac{1}{2}, \text{ if }F^{-1}(\frac{1}{2})+x_{t}^{\top}\theta_{0}-p_{t}>0;\\
\mathbb{P}(y_{t} = 1)  = \frac{1}{2}, \text{ if }F^{-1}(\frac{1}{2})+x_{t}^{\top}\theta_{0}-p_{t}=0;\\
\mathbb{P}(y_{t} = 1)  < \frac{1}{2}, \text{ if }F^{-1}(\frac{1}{2})+x_{t}^{\top}\theta_{0}-p_{t}<0.\\
\end{cases}
\]
Therefore, we can form a classification problem with responses $y_{t}$ and covariates $(1,x_{t}^{\top},p_{t})^{\top}$ for $t\in [T_{0}]$. It admits a Bayes decision boundary $\{u: (F^{-1}(\frac{1}{2}),\theta_{0}^{\top},-1)u = 0 \}$ which involves the unknown parameter $\theta_{0}$. Thus we can estimate the linear decision boundary and extract an estimate of $\theta_{0}$ by applying a linear classification method. In this paper, we use logistic regression, which yields an estimate $(\hat{c},\hat{\beta}^{\top},\hat{b})$ of $(F^{-1}(\frac{1}{2}),\theta_{0}^{\top},-1)$ up to a constant factor. Thus $-\frac{\hat{\beta}}{\hat{b}}$ is a natural estimate of $\theta_{0}$. Similar to \cite{javanmard2019dynamic}, we assume $||\theta_{0}||_{1}$ is upper bounded by a known constant $W$. By projecting $-\frac{\hat{\beta}}{\hat{b}}$ onto the $\ell_{1}$-ball $\Theta = \{\theta\in\mathbb{R}^{d_{0}}:||\theta||_{1}\leq W \}$, we can obtain our final estimate denoted as $\hat{\theta} = \text{Proj}_{\Theta}(-\frac{\hat{\beta}}{\hat{b}})$. Such a projection has a closed-form solution as $\text{Proj}_{\Theta}(-\frac{\hat{\beta}}{\hat{b}})= \mathcal{T}_{\rho_{\min}}(-\frac{\hat{\beta}}{\hat{b}})$, where $\mathcal{T}_{\rho}(v) = \text{sgn}(v)(|v|-\rho)_{+}$ is the soft-thresholding operator and $\rho_{\min} = \min\{\rho:||\mathcal{T}_{\rho}(-\frac{\hat{\beta}}{\hat{b}})||_{1}\leq W\}$. Here the assumption of constant $W$ is purely for theoretical purpose and our policy is very robust to the value of $W$ in the empirical studies. The generic Inner Algorithm A is summarized in Algorithm 2. 

\begin{algorithm}[h]
	\caption{Generic Inner Algorithm A}\label{alg:2}
	\begin{algorithmic}[1]
		\STATE \textbf{Input:} $\{(x_{t},p_{t},y_{t})\}_{t\in[T_{0}]}$, $W$
		\STATE Use logistic regression to obtain the minimizer
		\[
		(\hat{c},\hat{\beta}^{\top},\hat{b})= \arg\min_{(c,\beta^{\top},b)}\sum_{t=1}^{T_{0}}\log(1+\exp((2y_{t}-1)(c,\beta^{\top},b)(1,x_{t}^{\top},p_{t})^{\top})).
		\]
		\STATE Estimate $\theta_{0}$ by $\hat{\theta} = \text{Proj}_{\Theta}(-\frac{\hat{\beta}}{\hat{b}})$, where $\Theta = \{\theta\in\mathbb{R}^{d_{0}}:||\theta||_{1}\leq W \}$. 
	\end{algorithmic}
\end{algorithm}

Under the same assumption of a known upper bound $W$ of $||\theta_{0}||_{1}$, RMLP and RMLP-2 in \cite{javanmard2019dynamic} estimated $\theta_{0}$ via the maximum likelihood type of method by assuming some knowledge on $F$. In comparison, our approach achieves robust $\theta_{0}$ estimation without knowledge of a potentially complex-shaped $F$. It is worth mentioning that the logistic regression used in Algorithm 2 can be replaced by other linear classification methods, e.g., large-margin classifiers \citep{wang2008probability}. We choose logistic regression for its simplicity and superior numerical performance.

\subsubsection{Inner Algorithm B}

Next we introduce the Inner Algorithm B designed for setting prices. Taking $\hat{\theta}_{k-1}$ obtained by Inner Algorithm A as an input, it sequentially sets prices for all time periods in episode $k$. For ease of presentation, we introduce a generic version by using $\hat{\theta}$ to represent $\hat{\theta}_{k-1}$ and $T_{0}$ to represent the length of the episode $k$.

Based on our model in Section \ref{sec:2_1}, the knowledge of the expected reward $p(1-F(p-x_{t}^{\top}\theta_{0}))$ plays a critical role in deciding the best price at time $t$. Given the current estimate $\hat{\theta}$, we would need to evaluate $\{p(1-F(p-x_{t}^{\top}\hat{\theta}))\}$ over $p\in(0,p_{\max})$. Here we assume there is a known upper bound $p_{\max}$ of our pricing problem. This assumption is very mild in real applications and was also used in \cite{javanmard2019dynamic,chen2021nonparametric}. By the condition $||x_t||_{\infty}\leq 1$, we have $p-x_{t}^{\top}\hat{\theta}\in G(\hat{\theta}) = [-||\hat{\theta}||_{1},p_{\max}+||\hat{\theta}||_{1}]$. Therefore the evaluation of the expected reward is reduced to evaluate $1-F$ on $G(\hat{\theta})$. When $F$ is Lipschitz continuous and no other global smoothness is assumed, it is sufficient to evaluate $1-F$ on several well-chosen discrete points in $G(\hat{\theta})$ to leverage the finite data for better pricing. In this paper, we utilize the discretization idea \citep{kleinberg2003value,weed2016online} to cut $G(\hat{\theta})$ into $d$ same-length subintervals with the set of their midpoints $\mathcal{M} = \{m_{1},\dots,m_{d} \}$. Here $d$ is a parameter that possibly depends on the horizon length $T_{0}$. When $T_{0}$ is large, it would be reasonable to set a bigger $d$ for a denser discretization and hence larger exploration spaces. We leave the detailed discussion on the choice of $d$ to the theoretical analysis of DIP in Section \ref{sec:3}. Our aim is then to dynamically set prices and evaluate $1-F$ on $\mathcal{M}$.

\begin{figure}
	\centering
	\includegraphics[width=3in]{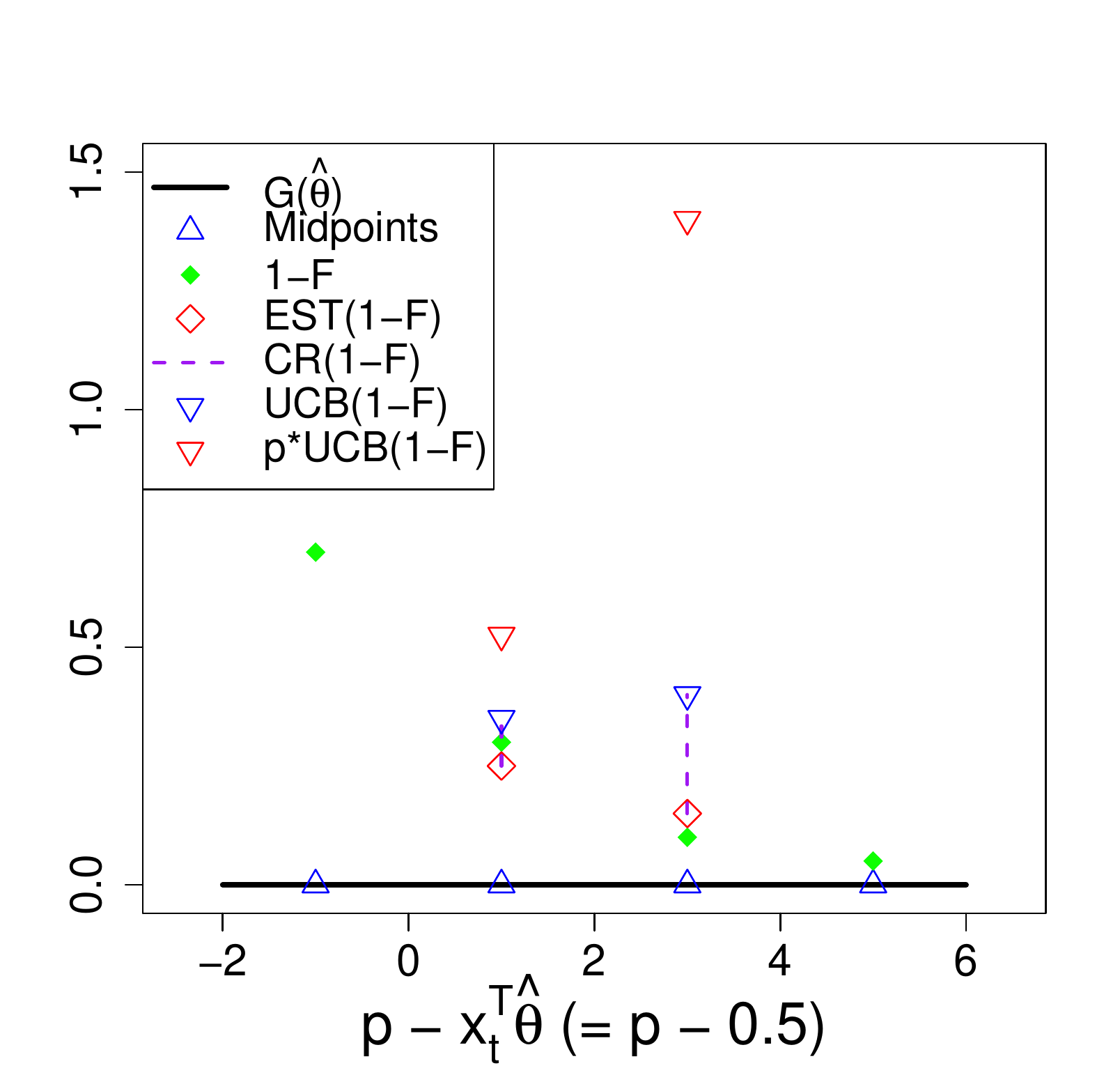}
	\caption{An illustration of Inner Algorithm B via Example 1.}
	\label{fig:0}
\end{figure}

\begin{example}[\textbf{Discretization}]\label{exp:1}
	We introduce a toy example to better illustrate our pricing policy. We couple each part of our pricing strategy with its corresponding realization in this toy example. All quantities that will be introduced in our pricing policy for this specific example are displayed in Figure \ref{fig:0}. Consider a two-dimensional covariate $x_{t} = (0.3,0.2)^{\top}$ at time $t$. Assume we have an estimation $\hat{\theta} = (1,1)^{\top}$ and $p_{\max} = 4$. Then the interval for discretization is $G(\hat{\theta}) = [-||\hat{\theta}||_{1},p_{\max}+||\hat{\theta}||_{1}] = [-2,6]$ represented by the black solid line in Figure \ref{fig:0}. If $d = 4$, we discretize $G(\hat{\theta})$ into subintervals $[-2,0],[0,2],[2,4]$, and $[4,6]$. Their midpoints $m_{1}=-1,m_{2}=1,m_{3}=3,m_{4}=5$, represented by blue hollow triangles on the black line in Figure \ref{fig:0}, form the set $\mathcal{M} = \{-1, 1, 3, 5 \}$. We will continue this example later. 
\end{example}

To achieve the mutual reinforcement of pricing and evaluation of $1-F$ on $\mathcal{M}$, we restrict the set price $p_{t}$ at time $t$ into a carefully constructed candidate set $\mathcal{S}_{t} = \{m_{j}+x_{t}^{\top}\hat{\theta}|j\in [d],m_{j}+x_{t}^{\top}\hat{\theta}\in(0,p_{\max})\}$. The key feature for any price $p \in \mathcal{S}_{t}$ is that $p-x_{t}^{\top}\hat{\theta}$ exactly equals to a midpoint in $\mathcal{M}$. We now illustrate why pricing in $\mathcal{S}_{t}$ and evaluation of $1-F$ on $\mathcal{M}$ can enhance each other. For any price $p = m_{j}+x_{t}^{\top}\hat{\theta}\in \mathcal{S}_{t}$, we can leverage our current knowledge of $1-F(m_{j})$ to obtain an estimate of its expected reward $p(1-F(p-x_{t}^{\top}\theta_{0}))$ as $p(1-F(p-x_{t}^{\top}\hat{\theta})) = p(1-F(m_{j}))$. Thus a better evaluation of $1-F$ on $\mathcal{M}$ improves our pricing decision from $\mathcal{S}_{t}$. On the other hand, when we set one price $p_{t} = m_{j}+x_{t}^{\top}\hat{\theta}$ from $\mathcal{S}_{t}$, we observe a binary response $y_{t}\sim\text{Ber}(1-F(m_{j}+x_{t}^{\top}\hat{\theta}-x_{t}^{\top}\theta_{0}))\approx \text{Ber}(1-F(m_{j}))$ which then improves our knowledge of $1-F(m_{j})$. Upon this observation, we say that we pull arm $j$ at time $t$ if we set $p_{t} = m_{j}+x_{t}^{\top}\hat{\theta}$. Then pulling arm $j$ yields more knowledge for $1-F$ on $m_{j}$. Thus we define the available arm set at time $t$ as $\mathcal{B}_{t} = \{j\in [d]: \exists p\in \mathcal{S}_{t} \text{ such that }  p= m_{j}+x_{t}^{\top}\hat{\theta}\}$, which varies over time as $x_{t}^{\top}\hat{\theta}$ changes over time.

\setcounter{example}{0}

\begin{example}[\textbf{Continued, Construct candidate sets}] We construct the candidate sets $\mathcal{S}_{t}$ based on the discretized set $\mathcal{M} = \{m_{1}=-1,m_{2}=1,m_{3}=3,m_{4}=5 \}$. As $x_{t}^{\top}\hat{\theta} = 0.5$, we obtain $\mathcal{S}_{t} =\{m_{j}+x_{t}^{\top}\hat{\theta}|m_{j}+x_{t}^{\top}\hat{\theta}\in(0,p_{\max})\} = \{m_{2}+0.5,m_{3}+0.5\}= \{1.5,3.5 \}$ since $m_{1}+x_{t}^{\top}\hat{\theta} = -0.5$ and $m_{4}+x_{t}^{\top}\hat{\theta} = 5.5$ are out of the range $(0,p_{\max})$. In this case, the arm set at time $t$ is $\mathcal{B}_{t} = \{j\in [d]: \exists p\in \mathcal{S}_{t} \text{ such that }  p= m_{j}+x_{t}^{\top}\hat{\theta}\} = \{2,3 \}$.% and $\mathcal{S}_{t}$ are both fixed in this toy example since $x_{t}^{\top}\hat{\theta}$ is fixed.
\end{example}

Restricted on $\mathcal{S}_{t}$, there is a clear tradeoff between exploration and exploitation for our pricing problem. A pure exploration tends to pull less-pulled arms in $\mathcal{B}_{t}$ and may set many suboptimal prices while a pure exploitation may continuously pull suboptimal arms due to lack of knowledge of other arms. To balance between exploration and exploitation, we utilize the principle of optimism in the face of uncertainty \citep{lattimore2020bandit} to construct an upper confidence bound (UCB), which calls for both an estimation $\text{EST}_{t}(1-F(m_{j}))$ for $1-F(m_{j})$ and a Confidence Radius (CR) $\text{CR}_{t}(1-F(m_{j}))$ of this estimation at the beginning of time $t$. We can accomplish this goal using all the past data yielded by pulling arm $j$. We leave the specific forms of $\text{EST}_{t}(1-F(m_{j}))$ and $\text{CR}_{t}(1-F(m_{j}))$ to the next subsection as they emerge naturally from the perturbed linear bandit formulation of our single-episode pricing problem. Then we select $p_{t}= m_{j}+x_{t}^{\top}\hat{\theta}\in\mathcal{S}_{t}$ with the largest optimism estimation $p_{t}\text{UCB}_{t}(1-F(m_{j}))$, where $\text{UCB}_{t}(1-F(m_{j}))= \text{EST}_{t}(1-F(m_{j})) + \text{CR}_{t}(1-F(m_{j}))$ is an optimism estimation of $1-F(m_{j})$. This optimism estimation addresses the exploration-exploitation tradeoff since a large UCB could result in either exploring a less-pulled arm with a large CR or exploiting an optimal arm with a large mean estimation.

%of its approximate expected reward $p_{t}(1-F(p_{t}-x_{t}^{\top}\hat{\theta})) = p_{t}(1-F(m_{j}))$

\setcounter{example}{0}

\begin{example}[\textbf{Continued, Set prices}] As the available arm set is $\mathcal{B}_{t} = \{2,3 \}$ at time $t$, we only require knowledge of $1-F(m_{2})$ and $1-F(m_{3})$ to compare between two candidate prices $m_{2}+x_{t}^{\top}\hat{\theta}$ and $m_{3}+x_{t}^{\top}\hat{\theta}$. To emphasize this, in Figure \ref{fig:0}, we only show $\{\text{EST}_{t}(1-F(m_{j}))\}_{j=2,3}$ (red hollow diamonds) and $\{\text{CR}_{t}(1-F(m_{j}))\}_{j=2,3}$ (lengths of purple dashed line) at two midpoints $m_{2} = 1$ and $m_{3} = 3$. Summing them up leads to the optimism estimations $\{\text{UCB}_{t}(1-F(m_{j}))\}_{j=2,3}$ represented by blue hollow inverted triangles. Multiplying them by their corresponding prices $m_{2}+x_{t}^{\top}\hat{\theta} = 1.5$ and $m_{3}+x_{t}^{\top}\hat{\theta} = 3.5$, we obtain their optimism expected reward estimations represented by red hollow inverted triangles, which are used to form our pricing decisions. Based on the illustration in Figure \ref{fig:0}, we will set the price $p_{t} = 3.5$, i.e., $m_{3}+x_{t}^{\top}\hat{\theta}$, since $1.5\text{UCB}_{t}(1-F(m_{2}))< 3.5\text{UCB}_{t}(1-F(m_{3}))$.
\end{example}

%Note that in our toy example, $\hat{\theta}$ is a perfect estimation of $\theta_{0}$ and $x_{t}^{\top}\hat{\theta}$ is fixed for $t\in[T_{0}]$. Thus it does not reflect the slight deviation of $m_{j}+x_{t}^{\top}\hat{\theta}-x_{t}^{\top}\theta_{0}$ from $m_{j}$ or the changing candidate sets $\mathcal{S}_{t}$ and available arm sets $\mathcal{B}_{t}$. In some sense, these two features demonstrate the complexity of our single-episode pricing problem and the difficulty for theoretical analysis of DIP. As a summary, the generic Inner Algorithm B is presented as below. 

We summarize the generic Inner Algorithm B for one episode in Algorithm \ref{alg:3}. 

\begin{algorithm}[h!]
	\caption{Generic Inner Algorithm B}\label{alg:3}
	\begin{algorithmic}[1]
		\STATE \textbf{Input:} \textbf{(arrives over time)} covariates $\{x_{t}\}_{t\in[T_{0}]}$, $\hat{\theta}$, discretization number $d$, and other inputs required to construct the specific forms of $\{\text{UCB}_{t}(1-F(m_{j}))\}_{j\in[d]}$.
		\STATE Cut the interval $G(\hat{\theta}) = [-||\hat{\theta}||_{1},p_{\max}+||\hat{\theta}||_{1}]$ into $d$ same-length intervals and denote their midpoints as $m_{1},\dots,m_{d}$. 
		\STATE \textbf{For} time $t = 1,\dots,T_{0}$, \textbf{do}
		\STATE \hspace{0.075in} Construct the candidate price set $\mathcal{S}_{t} = \{m_{j}+x_{t}^{\top}\hat{\theta}|j\in [d],m_{j}+x_{t}^{\top}\hat{\theta}\in(0,p_{\max})\}$;
		\STATE \hspace{0.075in} Determine the arm set $\mathcal{B}_{t} = \{j\in [d]: \exists p\in \mathcal{S}_{t} \text{ such that }  p= m_{j}+x_{t}^{\top}\hat{\theta}\}$;
		\STATE \hspace{0.075in} Calculate $\text{UCB}_{t}(1-F(m_{j}))$ for $j\in \mathcal{B}_{t}$ in (\ref{eq:4});
		\STATE \hspace{0.075in} Calculate $j_{t} \in \arg\max_{j\in \mathcal{B}_{t}}(m_{j}+x_{t}^{\top}\hat{\theta})\text{UCB}_{t}(1-F(m_{j}))$;
		\STATE \hspace{0.075in} Set a price $p_{t}=m_{j_{t}}+x_{t}^{\top}\hat{\theta}$ and receive a binary response $y_{t}$.
	\end{algorithmic}
\end{algorithm}

\subsubsection{Perturbed Linear Bandit}

In this subsection, we first introduce a perturbed linear bandit (PLB) framework, then show that our single-episode pricing problem can be formulated as a PLB. Furthermore, the proposed M-LinUCB for PLB is shown to be equivalent to the Inner Algorithm B with a specific UCB construction.

We say that the reward $Z_{t}$, the parameter $\xi_{t}$, the action set $\mathcal{A}_{t}$ form a perturbed linear bandit with a perturbation constant $C_{p}$ if $Z_{t} = \langle \xi_{t},A_{t}\rangle +\eta_{t}$ with any selected action $A_{t}\in\mathcal{A}_{t}$ and $||\xi_{s}-\xi_{t}||_{\infty}\leq C_{p}$ for any $s,t$. Here $\eta_{t}$ is a sub-Gaussian conditional on the filtration $\mathcal{F}_{t-1} = \sigma(\xi_{1}, A_{1},Z_{1},\dots,\xi_{t}, A_{t})$. Note that the condition on the linear parameters $\xi_{t}$'s implies the existence of a $\xi^{*}$ such that $||\xi_{t}-\xi^{*}||_{\infty}\leq \frac{C_{p}}{2}$ for any $t$. Thus the linear parameter $\xi_{t}$ regulating the reward structure at time $t$ can be viewed as a perturbation from a ``central" parameter $\xi^{*}$. Note that the linear bandit \citep{abbasi2011improved,chu2011contextual,agrawal2013thompson} is a special zero-perturbation PLB with $\xi_{t} = \xi^{*}$ for any $t$.

Now we introduce the perturbed linear bandit formulation of our single-episode pricing problem with time horizon $[T_{0}]$. We first specify the linear parameter $\xi_{t} = (1-F(m_{1}+x_{t}^{\top}\hat{\theta}-x_{t}^{\top}\theta_{0}),\dots,1-F(m_{d}+x_{t}^{\top}\hat{\theta}-x_{t}^{\top}\theta_{0}))^{\top} \in \mathbb R^d$, which turns out to regulate the reward at time $t$ as shown in Lemma \ref{lem:1} below. Note that for any price $m_{j}+x_{t}^{\top}\hat{\theta}\in\mathcal{S}_{t}$, the $j$-th element of $\xi_{t}$ is exactly the purchasing probability of the customer faced with this price. Further define $\xi^{*} = (1-F(m_{1}),\dots,1-F(m_{d}) )^{\top}$ as the ``central" parameter. Then by Lemma \ref{lem:1} below, $\xi_{t}$'s can be viewed as perturbations from $\xi^{*}$. It is interesting to see that the perturbations indeed originate from the difference between the estimate $\hat{\theta}$ and the true $\theta_{0}$, and may change with covariates $x_{t}$'s.

To transform price setting into action selection, we define a mapping from any price $p =m_{j}+x_{t}^{\top}\hat{\theta} \in \mathcal{S}_{t}$ to a vector $Q_{t}(p)\in\mathbb{R}^{d}$ with $Q_{t}(p)_{j} = m_{j}+x_{t}^{\top}\hat{\theta}$ and $Q_{t}(p)_{i}=0, \forall i\neq j$. Namely, $Q_{t}$ maps a price $p=m_{j}+x_{t}^{\top}\hat{\theta} \in\mathcal{S}_{t}$ to a vector with a single nonzero $j$-th element $p$. Further define a vector set $\mathcal{A}_{t} = \{Q_{t}(p):p\in\mathcal{S}_{t}\}$. Then $Q_{t}$ is a one-to-one mapping from $S_{t}$ to $\mathcal{A}_{t}$ and $Q_{t}^{-1}$ is well-defined. To proceed, we define the price-action coupling by $A_{t} = Q_{t}(p_{t})$. Then setting any price $p_{t}\in\mathcal{S}_{t}$ means selecting an action $A_{t}=Q_{t}(p_{t})\in\mathcal{A}_{t}$ and vice versa. With all these preparations, the following Lemma \ref{lem:1} rigorously forms our single-episode pricing problem into a perturbed linear bandit.

\begin{assumption}\label{ass:1}
	$F$ is Lipschitz with the Lipschitz constant $L$.
\end{assumption}

\begin{lemma}\label{lem:1}
	Under Assumption \ref{ass:1}, $||\xi_{t}-\xi^{*}||_{\infty}\leq L||\hat{\theta}-\theta_{0}||_{1},\forall t\in [T_{0}]$. Moreover, under the price-action coupling $A_{t} = Q_{t}(p_{t})$, the reward $Z_{t} = p_{t}1_{\{v_{t}\geq p_{t}\}}$, the parameter $\xi_{t}$ and the action set $\mathcal{A}_{t}$ form a perturbed linear bandit with a perturbation constant $2L||\hat{\theta}-\theta_{0}||_{1}$.
\end{lemma}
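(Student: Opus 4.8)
The plan is to prove the two assertions in sequence, since the coordinatewise perturbation bound feeds directly into the perturbation-constant claim. For the first assertion I would work one coordinate at a time. By the definitions of $\xi_t$ and $\xi^*$, the leading $1$'s cancel and the $j$-th entry of $\xi_t - \xi^*$ equals $F(m_j) - F\big(m_j + x_t^\top(\hat\theta - \theta_0)\big)$. Assumption \ref{ass:1} bounds its absolute value by $L\,|x_t^\top(\hat\theta - \theta_0)|$, and H\"older's inequality together with the standing assumption $\|x_t\|_\infty \le 1$ gives $|x_t^\top(\hat\theta - \theta_0)| \le \|x_t\|_\infty\,\|\hat\theta - \theta_0\|_1 \le \|\hat\theta - \theta_0\|_1$. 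Taking the maximum over $j \in [d]$ yields $\|\xi_t - \xi^*\|_\infty \le L\|\hat\theta - \theta_0\|_1$ for every $t \in [T_0]$, which is the first claim.

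The perturbation-constant statement is then immediate: for any $s,t$, the triangle inequality gives $\|\xi_s - \xi_t\|_\infty \le \|\xi_s - \xi^*\|_\infty + \|\xi^* - \xi_t\|_\infty \le 2L\|\hat\theta - \theta_0\|_1$, so the required bound holds with $C_p = 2L\|\hat\theta - \theta_0\|_1$.

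The substantive step is verifying the linear-reward decomposition $Z_t = \langle \xi_t, A_t\rangle + \eta_t$ with a conditionally sub-Gaussian noise. Here I would exploit the one-hot structure of $A_t = Q_t(p_t)$: if $p_t = m_{j_t} + x_t^\top\hat\theta$, then $A_t$ has a single nonzero entry, equal to $p_t$, in coordinate $j_t$, so $\langle \xi_t, A_t\rangle = p_t\,(\xi_t)_{j_t}$. The key identity is that the selected coordinate of $\xi_t$ is exactly the realized purchase probability: since $p_t - x_t^\top\theta_0 = m_{j_t} + x_t^\top(\hat\theta - \theta_0)$, the choice model gives $\mathbb{P}(y_t = 1 \mid \mathcal{F}_{t-1}) = 1 - F(p_t - x_t^\top\theta_0) = (\xi_t)_{j_t}$. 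Because $A_t$ and $\xi_t$ are $\mathcal{F}_{t-1}$-measurable by the stated form $\mathcal{F}_{t-1} = \sigma(\xi_1, A_1, Z_1, \dots, \xi_t, A_t)$, while the only fresh randomness in $Z_t = p_t y_t$ is $y_t$, it follows that $\mathbb{E}[Z_t \mid \mathcal{F}_{t-1}] = p_t\,(\xi_t)_{j_t} = \langle \xi_t, A_t\rangle$. Defining $\eta_t := Z_t - \langle \xi_t, A_t\rangle$ then produces a sequence with $\mathbb{E}[\eta_t \mid \mathcal{F}_{t-1}] = 0$; since $Z_t = p_t y_t \in [0, p_{\max}]$ is bounded, $\eta_t$ is a bounded centered variable and hence sub-Gaussian conditional on $\mathcal{F}_{t-1}$ by Hoeffding's lemma, with parameter controlled by $p_{\max}$. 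This confirms all three requirements of the PLB definition.

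The one genuinely delicate point, which I would state most carefully, is the coordinate-matching identity $(\xi_t)_{j_t} = 1 - F(p_t - x_t^\top\theta_0)$: it hinges on the price-action coupling being built through the same shift $m_{j_t} + x_t^\top\hat\theta$ that defines $\xi_t$, so that the estimation error $x_t^\top(\hat\theta - \theta_0)$ is precisely what separates the realized purchase probability from the ``central'' value $1 - F(m_{j_t})$. Everything else—the Lipschitz estimate, the triangle inequality, and the boundedness of $Z_t$—is routine. The only bookkeeping I would double-check is the measurability claim, i.e.\ that $\mathcal{F}_{t-1}$ renders $A_t$ and $\xi_t$ known while leaving $y_t$ random, which is exactly what the stated definition of the filtration guarantees.
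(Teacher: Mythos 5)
Your proposal is correct and follows essentially the same route as the paper's proof: the coordinatewise Lipschitz bound with H\"older's inequality for the first claim, the triangle inequality for the perturbation constant, the one-hot structure of $A_{t}$ for the identity $\langle \xi_{t},A_{t}\rangle = p_{t}(1-F(p_{t}-x_{t}^{\top}\theta_{0}))$, and boundedness of $Z_{t}$ for conditional sub-Gaussianity of $\eta_{t}$. The only cosmetic difference is that the paper spells out the measurability step you flag at the end by introducing the finer filtration $\tilde{\mathcal{F}}_{t-1}=\sigma(x_{1},p_{1},Z_{1},\dots,x_{t},p_{t})$, verifying $\mathcal{F}_{t-1}\subseteq\tilde{\mathcal{F}}_{t-1}$, and passing the sub-Gaussian bound down via the tower property, which is exactly the bookkeeping you correctly identify as the point to double-check.
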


%\begin{remark}
Lemma \ref{lem:1} implies that the perturbation is proportional to the $\ell_{1}$ estimation error $||\hat{\theta}-\theta_{0}||_{1}$. If the estimate $\hat{\theta}=\theta_{0}$, then $\xi_{t} = \xi^{*}$ with zero perturbation and the PLB reduces to a classic linear bandit. On the other hand, a worse $\hat{\theta}$ implies a larger perturbation, thus incurring more difficulty in solving the PLB and potentially leading to a larger regret.
%\end{remark}

%By Lemma \ref{lem:1}, we have $Z_{t} = A_{t}^{\top}\xi_{t}+\eta_{t}\approx A_{t}^{\top}\xi^{*}+\eta_{t}$ approximately following a linear model. Thus $\xi^{*}$ can be estimated by the ridge estimator $\hat{\xi}_{t-1} = V_{t-1}(\lambda)^{-1}\sum_{s=1}^{t-1}A_{s}Z_{s}$ where $V_{t-1}(\lambda) = \lambda I+\sum_{s=1}^{t-1}A_{s}A_{s}^{\top}$ and $\lambda>0$. 

According to Lemma \ref{lem:1}, $Z_{t} = A_{t}^{\top}\xi_{t}+\eta_{t}$ with $||\xi_{t}-\xi^{*}||_{\infty}\leq L||\hat{\theta}-\theta_{0}||_{1}$, and hence $\xi^*$ can be estimated from historical data. Similar to that in linear bandit \citep{lattimore2020bandit}, we employ the ridge estimator $\hat{\xi}_{t-1} = V_{t-1}(\lambda)^{-1}\sum_{s=1}^{t-1}A_{s}Z_{s}$, where $V_{t-1}(\lambda) = \lambda I+\sum_{s=1}^{t-1}A_{s}A_{s}^{\top}$ with the tuning parameter $\lambda>0$. Note that in Algorithm 3, we use $j_{t}$ to denote the arm pulled at time $t$. Let $\mathcal{U}_{t-1,j} = \{s:1\leq s\leq t-1,j_{s} = j \}$. Since $A_{s}$'s have a single nonzero element and $V_{t-1}(\lambda)$ is a diagonal matrix, we obtain the explicit form for the $j$-th element of $\hat{\xi}_{t-1}$ as $\hat{\xi}_{t-1,j}= \frac{\sum_{s\in\mathcal{U}_{t-1,j}}p_{s}^{2}y_{s}}{\lambda+\sum_{s\in\mathcal{U}_{t-1,j}}p_{s}^{2}}$, which serves as the estimate $\text{EST}_{t}(1-F(m_{j}))$ for $1-F(m_{j})=\xi^{*}_{j}$.

In order to construct a UCB using the principle of optimism in the face of uncertainty, we then compute a confidence radius $\text{CR}_{t}(1-F(m_{j}))$ of the above estimate $\text{EST}_{t}(1-F(m_{j}))=\hat{\xi}_{t-1,j}$. The common confidence set $\mathcal{C}_{t}(\beta_{t}) = \{\xi\in\mathbb{R}^{d}: ||\xi- \hat{\xi}_{t-1}||_{V_{t-1}(\lambda)}^{2}\leq \beta_{t} \}$ yields a marginal confidence radius for each $\hat{\xi}_{t-1,j}$.  Due to the simple form of $V_{t-1}(\lambda)$, we obtain an explicit form $\text{CR}_{t}(1-F(m_{j})) =\sqrt{\frac{\beta_{t}}{\lambda + \sum_{s\in\mathcal{U}_{t-1,j}}p_{s}^{2}}}$. Then we obtain the UCB as required in Inner Algorithm B,
\begin{equation}\label{eq:4}
\text{UCB}_{t}(1-F(m_{j})) = \frac{\sum_{s\in\mathcal{U}_{t-1,j}}p_{s}^{2}y_{s}}{\lambda+\sum_{s\in\mathcal{U}_{t-1,j}}p_{s}^{2}} + \sqrt{\frac{\beta_{t}}{\lambda + \sum_{s\in\mathcal{U}_{t-1,j}}p_{s}^{2}}}.
\end{equation}

Motivated by the linear bandit \citep{lattimore2020bandit}, we specify the parameter $\beta_{t} =\beta_{t}^{*}= p_{\max}^{2}(1\vee (\frac{1}{p_{\max}}\sqrt{\lambda d}+\sqrt{2\log(\frac{1}{\delta})+d\log(\frac{d\lambda+(t-1)p_{\max}^{2}}{d\lambda}) })^{2})$. Here $1-\delta$ is the confidence level and $\delta = \frac{1}{T_{0}}$ is a typical choice \citep{lattimore2020bandit} with known $T_{0}$. Thus we use $\delta = \frac{1}{2^{k-2}\ell_{2}}$ for the application of Inner Algorithm B to the $k$-th episode with an expected length of $2^{k-2}\ell_{2}$. Now we are ready to present the full version of our DIP policy as Algorithm \ref{alg:4}. In summary, DIP well organizes two sub-algorithms across episodes, one applying classification for linear parameter estimation and the other adapting the UCB idea for online pricing.

\begin{algorithm}[h!]
	\caption{DIP for Contextual Dynamic Pricing}\label{alg:4}
	\begin{algorithmic}[1]
		\STATE \textbf{Input:} \textbf{(at time 0)} $\alpha_{1},\alpha_{2},p_{\max},C,\lambda,W$
		\STATE \textbf{Input:} \textbf{(arrives over time)} covariates $\{x_{t}\}_{t\in[T]}$
		\STATE \textbf{For} time $t=1,\dots,\ell_{1} (= \alpha_{1})$, \textbf{do}
		\STATE \hspace{0.075in} Set a price $p_{t}$ randomly from $(0,p_{\max})$ and receive a binary response $y_{t}$. 
		\STATE \textbf{For} episodes $k=2,3,\dots,n(=n(T,\alpha_{1},\alpha_{2}))$, \textbf{do}
		\STATE \hspace{0.075in} Apply Inner Algorithm A with the input data $\{(x_{t},p_{t},y_{t}) \}_{\sum_{i=1}^{k-2}\ell_{i}+1\leq t\leq \sum_{i=1}^{k-1}\ell_{i}}$ and $W$ \\
		\hspace{0.075in} to obtain the estimate $\hat{\theta}_{k-1}$;
		\STATE \hspace{0.075in} Apply Inner Algorithm B on the coming sequential covariates $\{x_{t}\}_{\sum_{i=1}^{k-1}\ell_{i}+1\leq t\leq \sum_{i=1}^{k}\ell_{i}}$, \\
		\hspace{0.075in} with the estimate $\hat{\theta}_{k-1}$, discretization number $d_{k} = C\lceil (2^{k-2}\ell_{2})^{\frac{1}{6}}\rceil$ and the UCB\\
		\hspace{0.075in} construction in (\ref{eq:4}) with $\beta_{t} = \beta_{t}^{*}$ and $\delta = \frac{1}{2^{k-2}\ell_{2}}$.
	\end{algorithmic}
	%\end{small}
\end{algorithm}

\begin{remark}
	In this remark, we provide the computational complexity of Algorithm \ref{alg:4}. In each episode $k$, the Inner Algorithm A consists of a linear classification procedure and a projection with the complexity of $O(\ell_{k}d_{0})$ and $O(d_{0})$ respectively. Thus it contributes a complexity of $O(d_{0}T+d_{0}\log T) = O(d_{0}T)$ to the total horizon. The Inner algorithm B in episode $k$ first conducts a discretization with the complexity of $O(d_{k}) = O(\ell_{k}^{1/6})$. At its $t$-th iteration, we first locate the estimated linear valuation component $x_{t}^{\top}\hat{\theta}_{k}$ with a complexity of $O(d_{0})$. Then we calculate the expected revenue UCB for each candidate price. Each past data point is used at most once in this procedure by Equation (\ref{eq:4}) and thus the total UCB calculation complexity is $O(t)$. Then the overall complexity of Inner Algorithm B in episode $k$ is $O(d_{k}+d_{0}\ell_{k}+\ell_{k}^{2}) = O(d_{0}\ell_{k}+\ell_{k}^{2})$. Thus Inner Algorithm B contributes a total complexity of $O(d_{0}T+T^{2})$ to the entire horizon. Hence, the computational complexity of the whole DIP policy is $O(d_{0}T+T^{2})$. 
\end{remark}

Finally, we would like to mention that the proposed perturbed linear bandit framework can be used beyond the above contextual dynamic pricing problem. This motivates us to introduce a general algorithm called M-LinUCB in Algorithm \ref{alg:5} for the perturbed linear bandit framework $Z_{t} = \langle \xi_{t},A_{t}\rangle +\eta_{t}$ when any potential action has only one nonzero element. For any vector $v$ with a single nonzero element, denote $\delta(v)$ as the index of this nonzero element. For instance, $\delta((0,1,0)^{\top}) = 2$. Further define $\tilde{\mathcal{B}}_{t} = \{\delta(a): a\in \mathcal{A}_{t}\}$ as the nonzero index set of all potential actions at time $t$ and $\tilde{\mathcal{B}}^{'}_{t} = \{\delta(A_{s}):s\in[t-1]\}$ as the nonzero index set of all past selected actions. Then, bridged by the PLB formulation of our single-episode pricing problem, there exists a close connection between M-LinUCB and Inner Algorithm B formalized in Lemma \ref{lem:2} below.

%for some general PLB in which any action has only one nonzero element, we propose the M-LinUCB algorithm based on the estimate $\hat{\xi}_{t-1}$ and its confidence set $\mathcal{C}_{t}(\beta_{t})$ as the Algorithm 5 below. 

%For preparation, denote $Nz(v)$ as the index of the single nonzero element of any vector $v$ with such property. Further define $\tilde{\mathcal{B}}_{t} = \{j\in [d]: \exists a\in \mathcal{A}_{t}\ s.t.\ Nz(a) = j\}$ and $\tilde{\mathcal{B}}^{'}_{t} = \{j\in [d]: \exists s\in[t-1]\ s.t.\ Nz(A_{s}) = j\}$. 

\begin{algorithm}[h!]
	\caption{M-LinUCB for Perturbed Linear Bandit}\label{alg:5}
	\begin{algorithmic}[1]
		\STATE \textbf{Input:} \textbf{(arrives over time)} action sets $\mathcal{A}_{t}$, $\lambda, \{\beta_{t}\}_{t\in[T]}$
		\STATE \textbf{For} $t = 1,\dots,T$, \textbf{do}
		\STATE \hspace{0.075in} Determine $\tilde{\mathcal{B}}_{t} = \{\delta(a): a\in \mathcal{A}_{t}\}$ and $\tilde{\mathcal{B}}^{'}_{t} = \{\delta(A_{s}):s\in[t-1]\}$.
		\STATE \hspace{0.075in} \textbf{If} $\tilde{\mathcal{B}}_{t}\not\subseteq \tilde{\mathcal{B}}^{'}_{t}$, \textbf{do}
		\STATE \hspace{0.225in} Choose an arbitrary $A_{t}\in\mathcal{A}_{t}$ such that $\delta(A_{t})\notin \tilde{\mathcal{B}}^{'}_{t}$.
		\STATE \hspace{0.075in} \textbf{If} $\tilde{\mathcal{B}}_{t}\subseteq \tilde{\mathcal{B}}^{'}_{t}$, \textbf{do}
		\STATE \hspace{0.225in} \textbf{For} $a\in\mathcal{A}_{t}$, \textbf{do}
		\STATE \hspace{0.375in} Calculate $\text{LinUCB}_{t}(a) = \max_{\xi\in \mathcal{C}_{t}(\beta_{t})}\langle \xi,a\rangle$ where $\mathcal{C}_{t}(\beta_{t}) = \{\xi\in\mathbb{R}^{d}: ||\xi- $\\
		\hspace{0.375in} $\hat{\xi}_{t-1}||_{V_{t-1}(\lambda)}^{2}\leq \beta_{t} \}$ and $\hat{\xi}_{t-1} = V_{t-1}(\lambda)^{-1}\sum_{s=1}^{t-1}A_{s}Z_{s},V_{t-1}(\lambda) = \lambda I+\sum_{s=1}^{t-1}A_{s}A_{s}^{\top}$.
		\STATE \hspace{0.225in} Choose $A_{t} \in \arg\max_{a\in\mathcal{A}_{t}}\text{LinUCB}_{t}(a)$.	
		\STATE \hspace{0.075in} Receive a reward $Z_{t}$.
	\end{algorithmic}
\end{algorithm}

%We can apply Algorithm 5 to the PLB formulation of our single-episode pricing problem since its actions have only one nonzero element by Lemma \ref{lem:1}. Note that any action selection algorithm designed for this PLB naturally yields a pricing policy by the price-action coupling $p_{t} = Q_{t}^{-1}(A_{t})$. Therefore, not surprisingly, there is a close relationship between Algorithm 3 and 5, formalized in Lemma \ref{lem:2} below. 

\begin{lemma}\label{lem:2}
	Applying Algorithm \ref{alg:5} to the PLB formulation of our single-episode pricing problem with $\beta_{t} = \beta_{t}^{*}= p_{\max}^{2}(1\vee (\frac{1}{p_{\max}}\sqrt{\lambda d}+\sqrt{2\log(\frac{1}{\delta})+d\log(\frac{d\lambda+(t-1)p_{\max}^{2}}{d\lambda}) })^{2})$ yields Algorithm \ref{alg:3} using the UCB construction (\ref{eq:4}) with $\beta_{t} = \beta_{t}^{*}$.
\end{lemma}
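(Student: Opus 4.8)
The plan is to unfold each abstract object in M-LinUCB (Algorithm~\ref{alg:5})---the Gram matrix $V_{t-1}(\lambda)$, the ridge estimate $\hat{\xi}_{t-1}$, and the optimistic score $\text{LinUCB}_{t}(a)$---into the explicit coordinatewise quantities of Inner Algorithm B (Algorithm~\ref{alg:3}), using the defining feature of the PLB formulation that every action $Q_{t}(p)$ has a single nonzero coordinate. First I would record that $A_{s}=Q_{s}(p_{s})=p_{s}e_{j_{s}}$, where $e_{j}$ is the $j$-th standard basis vector and $j_{s}=\delta(A_{s})$ is the arm pulled at time $s$, and that $Z_{s}=p_{s}y_{s}$. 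Then $A_{s}A_{s}^{\top}=p_{s}^{2}e_{j_{s}}e_{j_{s}}^{\top}$ is supported on one diagonal entry, so $V_{t-1}(\lambda)$ is diagonal with $j$-th entry $\lambda+\sum_{s\in\mathcal{U}_{t-1,j}}p_{s}^{2}$, and $\sum_{s=1}^{t-1}A_{s}Z_{s}$ has $j$-th coordinate $\sum_{s\in\mathcal{U}_{t-1,j}}p_{s}^{2}y_{s}$. Applying the diagonal inverse coordinatewise immediately gives $\hat{\xi}_{t-1,j}=(\sum_{s\in\mathcal{U}_{t-1,j}}p_{s}^{2}y_{s})/(\lambda+\sum_{s\in\mathcal{U}_{t-1,j}}p_{s}^{2})$, which is exactly $\text{EST}_{t}(1-F(m_{j}))$.

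The core step is to evaluate the optimistic score in closed form. For any vector $a$ and any ellipsoid $\mathcal{C}_{t}(\beta_{t})$, a Cauchy--Schwarz argument in the $V_{t-1}(\lambda)$-inner product yields the standard identity $\max_{\xi\in\mathcal{C}_{t}(\beta_{t})}\langle\xi,a\rangle=\langle\hat{\xi}_{t-1},a\rangle+\sqrt{\beta_{t}}\,\|a\|_{V_{t-1}(\lambda)^{-1}}$. I would then specialize to $a=Q_{t}(p)=p\,e_{j}$ with $p=m_{j}+x_{t}^{\top}\hat{\theta}$: the linear term is $p\,\hat{\xi}_{t-1,j}$, and since $V_{t-1}(\lambda)^{-1}$ is diagonal, $\|a\|_{V_{t-1}(\lambda)^{-1}}^{2}=p^{2}/(\lambda+\sum_{s\in\mathcal{U}_{t-1,j}}p_{s}^{2})$. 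Collecting terms shows $\text{LinUCB}_{t}(Q_{t}(p))=(m_{j}+x_{t}^{\top}\hat{\theta})\,\text{UCB}_{t}(1-F(m_{j}))$ with $\text{UCB}_{t}$ precisely the expression in~(\ref{eq:4}); substituting $\beta_{t}=\beta_{t}^{*}$ matches the constant. Because $Q_{t}$ is a bijection from $\mathcal{S}_{t}$ onto $\mathcal{A}_{t}$, maximizing $\text{LinUCB}_{t}(a)$ over $a\in\mathcal{A}_{t}$ is literally the rule $j_{t}\in\arg\max_{j\in\mathcal{B}_{t}}(m_{j}+x_{t}^{\top}\hat{\theta})\text{UCB}_{t}(1-F(m_{j}))$ of Algorithm~\ref{alg:3}, and setting $p_{t}=m_{j_{t}}+x_{t}^{\top}\hat{\theta}$ closes the correspondence. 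This settles the exploitation branch of M-LinUCB, where $\tilde{\mathcal{B}}_{t}\subseteq\tilde{\mathcal{B}}'_{t}$.

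The remaining---and, I expect, most delicate---step is to reconcile the two algorithms on the exploration branch $\tilde{\mathcal{B}}_{t}\not\subseteq\tilde{\mathcal{B}}'_{t}$, where M-LinUCB explicitly plays an arbitrary arm not in $\tilde{\mathcal{B}}'_{t}$, whereas Algorithm~\ref{alg:3} only ever takes the optimistic $\arg\max$. Under the identifications $\tilde{\mathcal{B}}_{t}=\mathcal{B}_{t}$ and $\tilde{\mathcal{B}}'_{t}=\{j_{s}:s<t\}$, an arm $j$ with $\mathcal{U}_{t-1,j}=\emptyset$ has Gram entry exactly $\lambda$ and vanishing estimate, so the same closed form assigns it the largest per-unit-price confidence radius $\sqrt{\beta_{t}/\lambda}$ attainable. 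The argument I would give is that an unplayed arm thus carries the dominating uncertainty, so the optimistic rule of Algorithm~\ref{alg:3} visits each available arm once before exploiting, which is exactly the effect of M-LinUCB's forced-exploration branch, with M-LinUCB's ``arbitrary'' choice absorbing whichever arm the $\arg\max$ selects. The subtlety is that with a strictly positive ridge $\lambda$ this radius is finite rather than infinite, so the cleanest route is to adopt the convention that an unplayed arm carries an effectively unbounded radius (equivalently, to read the exploration branch of M-LinUCB as the explicit realization of this convention); making the ``explore-every-arm-first'' equivalence rigorous from the finite value $\sqrt{\beta_{t}/\lambda}$ is where I would concentrate the verification.
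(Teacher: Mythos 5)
Your proposal follows essentially the same route as the paper's proof: use the single-nonzero-coordinate structure of the actions to diagonalize $V_{t-1}(\lambda)$, read off $\hat{\xi}_{t-1,j}$ coordinatewise, and evaluate $\max_{\xi\in\mathcal{C}_{t}(\beta_{t})}\langle\xi,a\rangle$ in closed form so that $\text{LinUCB}_{t}(Q_{t}(p))=(m_{j}+x_{t}^{\top}\hat{\theta})\,\text{UCB}_{t}(1-F(m_{j}))$, which is exactly how the paper handles the case $\tilde{\mathcal{B}}_{t}\subseteq\tilde{\mathcal{B}}'_{t}$. On the exploration branch you have correctly isolated the only delicate point, and the paper resolves it precisely by the convention you name: it sets $\text{EST}_{t}(1-F(m_{j}))=\tfrac{0}{0}=+\infty$ for an unplayed arm (dropping the ridge term in that degenerate case), so every arm in $\mathcal{B}_{t}$ not yet pulled has infinite optimistic reward and the $\arg\max$ rule of Algorithm \ref{alg:3} reproduces M-LinUCB's arbitrary forced choice in Algorithm \ref{alg:5}. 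Your hesitation about arguing from the finite value $\sqrt{\beta_{t}/\lambda}$ is well founded: since the UCB is multiplied by the candidate price, which can be arbitrarily close to $0$, a literal finite-radius reading of (\ref{eq:4}) would not force unplayed arms to win the $\arg\max$, so the infinite-UCB convention is not merely the cleanest route but is needed for the stated equivalence to hold.
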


Therefore, Inner Algorithm B (Algorithm \ref{alg:3}) can be viewed as the ``projection'' of M-LinUCB onto our single-episode pricing problem. In the remaining part of this paper, without further specifications, we refer to Algorithms 3 and 5 as the ones mentioned in Lemma \ref{lem:2}.

\section{Theory}
\label{sec:3}

In this section, we establish the regret bound for the proposed DIP policy. As DIP divides the total time horizon into episodes, we conduct the regret analysis on a single episode and then merge them together. For the single-episode pricing problem, our discretization procedure leads to a natural decomposition of the regret into a discrete part and a continuous part. One key technical contribution is the proof of the discrete-part regret, which is shown via the equivalent regret of M-LinUCB for the corresponding PLB framework.

In our single-episode regret analysis, we denote the total horizon as $[T_{0}]$ and use $\hat{\theta}$ as the input for Algorithm \ref{alg:3}. In Algorithm \ref{alg:3}, we restrict the price in a discrete candidate set $\mathcal{S}_{t}$, thus yielding a ``discrete" best price $\tilde{p}_{t}^{*}$ in $\mathcal{S}_{t}$, i.e., $\tilde{p}^{*}_{t} \in \arg\max_{p\in \mathcal{S}_{t}}p(1-F(p-x_{t}^{\top}\theta_{0}))$. Thus the regret $r_t$ in $(\ref{eqn:regret})$ can be rewritten as 
$$
\underbrace{\tilde{p}^{*}_{t}(1-F(\tilde{p}^{*}_{t}-x_{t}^{\top}\theta_{0}))-p_{t}(1-F(p_{t}-x_{t}^{\top}\theta_{0}))}_{r_{t,1}}+\underbrace{p^{*}_{t}(1-F(p^{*}_{t}-x_{t}^{\top}\theta_{0}))-\tilde{p}^{*}_{t}(1-F(\tilde{p}^{*}_{t}-x_{t}^{\top}\theta_{0}))}_{r_{t,2}}.
$$ 
The first part $r_{t,1}$ is the reward loss with respect to the discrete best price $\tilde{p}_{t}^{*}$. The second part $r_{t,2}$ is the regret of setting $\tilde{p}_{t}^{*}$. Denote their sums as $R_{T_{0},1} = \sum_{t=1}^{T_{0}}r_{t,1}$ and $R_{T_{0},2} = \sum_{t=1}^{T_{0}}r_{t,2}$, which are the discrete-part and continuous-part regrets respectively. Then bounding the cumulative regret $R_{T_{0}}=R_{T_{0},1}+R_{T_{0},2}$ reduces to bounding $R_{T_{0},1}$ and $R_{T_{0},2}$ separately. As discussed before, the discrete-part regret would be shown to be the same as the regret under the equivalent PLB framework and then investigated by utilizing newly-developed regret bounds for the PLB setting. For the continuous-part regret, we adopt the following concavity assumption on the general expected revenue function defined as $f_{q}(p) = p(1-F(p-q))$. Note that  the single-step continuous-part regret $r_{t,2}$ can then be rewritten as $f_{x_{t}^{\top}\theta_{0}}(p^{*}_{t})-f_{x_{t}^{\top}\theta_{0}}(\tilde{p}^{*}_{t})$. 
\begin{assumption}\label{ass:2}
	There exists a constant $C$ such that for any $q=x^{\top}\theta_{0}$ and $x\in\mathcal{X}$, we have $f_{q}(p^{*}(x))-f_{q}(p)\leq C(p^{*}(x)-p)^{2}, \forall p\in[0,p_{\max}]$.
\end{assumption}
Assumption \ref{ass:2} requires that the reward difference between the overall best price and any other price can be bounded by a constant multiplying their quadratic difference. Given the global continuity of $F$, Assumption \ref{ass:2} indicates a uniform control of $f_{x^{\top}\theta_{0}}(p)$ over the local neighborhoods of the maximizers $p^{*}(x)$. In Proposition \ref{prop:2}, by applying the Taylor's theorem with the Lagrange remainder, we provide a sufficient condition for Assumption \ref{ass:2}. Nevertheless, Assumption \ref{ass:2} does not require any global smoothness of $F$. The derived regret bound would still hold for locally erratic $F$'s as long as Assumption \ref{ass:2} is satisfied.

%A similar assumption is adopted in \cite{chen2021nonparametric}. As a comparison, we do not require a unique maximizer or a lower bound. [Wei: remove this sentence?]}

\begin{proposition}\label{prop:2}
	Assumption \ref{ass:2} holds if $F^{''}(\cdot)$ is bounded on $[-||\theta_{0}||_{1},p_{\max}+||\theta_{0}||_{1}]$.
\end{proposition}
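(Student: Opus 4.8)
The plan is to establish the quadratic growth condition by expanding the expected revenue $f_q(p) = p(1 - F(p-q))$ to second order about its maximizer and controlling the remainder. Because $F''$ is assumed to exist on the relevant interval, $f_q$ is twice continuously differentiable there, and I would first record
\[
f_q'(p) = 1 - F(p-q) - pF'(p-q), \qquad f_q''(p) = -2F'(p-q) - pF''(p-q).
\]
Since $p^*(x) = \arg\max_{p>0} f_q(p)$ is attained at an interior point of $(0,\infty)$ (indeed in $(0, p_{\max})$), the first-order condition gives $f_q'(p^*(x)) = 0$. Taylor's theorem with the Lagrange remainder then produces, for each $p \in [0, p_{\max}]$, a point $\zeta$ lying between $p$ and $p^*(x)$ such that
\[
f_q(p^*(x)) - f_q(p) = -\tfrac{1}{2}\, f_q''(\zeta)\, \big(p^*(x) - p\big)^2,
\]
so the whole problem reduces to a uniform lower bound on $f_q''(\zeta)$ over all admissible $x$ and $p$.

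To obtain that bound I would first verify that the argument $\zeta - q$ never leaves the interval named in the statement. As $\|x\|_\infty \le 1$ we have $|q| = |x^\top \theta_0| \le \|\theta_0\|_1$, while $\zeta \in [0, p_{\max}]$ because it lies between $p$ and $p^*(x)$, both in $[0, p_{\max}]$; hence $\zeta - q \in [-\|\theta_0\|_1,\, p_{\max} + \|\theta_0\|_1]$, exactly the set on which $F''$ is controlled. On this compact interval $F'$ is continuous (it is differentiable, since $F''$ exists) and therefore bounded, and $F' \ge 0$ because $F$ is a CDF; combined with $0 \le \zeta \le p_{\max}$ and the assumed bound on $F''$, this gives
\[
f_q''(\zeta) = -2F'(\zeta - q) - \zeta F''(\zeta - q) \ge -2\sup F' - p_{\max}\sup|F''| =: -2C ,
\]
with $C$ independent of $x$ and $p$. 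Substituting into the Taylor identity yields $f_q(p^*(x)) - f_q(p) \le C\,(p^*(x) - p)^2$, which is precisely Assumption \ref{ass:2}.

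The step I expect to require the most care is the first-order condition $f_q'(p^*(x)) = 0$, i.e.\ confirming that the revenue maximizer is genuinely interior. At a boundary maximizer the linear Taylor term would survive and the difference $f_q(p^*(x)) - f_q(p)$ would scale like $|p^*(x) - p|$ rather than its square, breaking the quadratic bound; the argument therefore relies on $p^*(x) \in (0, p_{\max})$, which I would justify from the setup (the maximizer over $p>0$ is finite, strictly positive, and below $p_{\max}$). A secondary technical point is the passage from ``$F''$ bounded'' to ``$F'$ bounded,'' which I would handle by noting that the existence of $F''$ forces $F'$ to be continuous and hence bounded on the compact interval above; everything else is the routine differentiation and substitution sketched here.
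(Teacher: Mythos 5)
Your proposal is correct and follows essentially the same route as the paper's proof: both apply Taylor's theorem with the Lagrange remainder at the interior maximizer $p^*(x)$ (whose interiority is taken from the problem setup), compute $f_q''(p) = -2F'(p-q) - pF''(p-q)$, and bound the second derivative uniformly on the compact interval $[-\|\theta_0\|_1, p_{\max}+\|\theta_0\|_1]$ using continuity of $F'$ and the assumed bound on $F''$. Your version is in fact slightly more careful with the sign of the remainder term (a lower bound on $f_q''$ is what is actually needed), but the argument is the same.
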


Now we present our main result in the following Theorem \ref{thm:3}. It provides a regret upper bound over the entire horizon. 

%With the projection based on the known $W$ assumption, we obtain a universal upper bound for $||\hat{\theta}_{k-1}||_{1}$. Thus we are able to merge all the first terms into an overall $\tilde{O}(T^{2/3})$ term with the help of the doubling construction of $\ell_{k}$'s. All episode-wise linear terms are also summed together, which then can be sub-linear w.r.t. the entire horizon length if the estimation errors shrinks to $0$. We establish the result as Theorem \ref{thm:3} which presents a decomposition of the overall expected regret bound.

\begin{theorem}\label{thm:3}
	Under Assumptions \ref{ass:1} -- \ref{ass:2}, the DIP policy yields the expected regret
	\[
	\mathbb{E}(R_{T}) =  \tilde{O}(T^{2/3}) +4p_{\max}L\sum_{k=2}^{n}2^{k-2}\ell_{2}\mathbb{E}||\hat{\theta}_{k-1}-\theta_{0}||_{1}.
	\]
\end{theorem}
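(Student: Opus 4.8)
The plan is to sum single-episode regrets over the doubling episodes and then choose the discretization numbers $d_k$ so that per-episode learning error balances discretization error. I would first write $\mathbb{E}(R_T) = \mathbb{E}(R_{\mathcal{E}_1}) + \sum_{k=2}^{n}\mathbb{E}(R_{\mathcal{E}_k})$. On the warm-up episode $\mathcal{E}_1$ prices lie in $(0,p_{\max})$ and $y_t\in\{0,1\}$, so $r_t\le p_{\max}$ deterministically and $\mathbb{E}(R_{\mathcal{E}_1})\le p_{\max}\ell_1 = O(1)$, which is absorbed into the $\tilde{O}(T^{2/3})$ term. For each $k\ge2$ I would invoke the single-episode analysis on horizon $\ell_k = 2^{k-2}\ell_2$ with input estimate $\hat{\theta}_{k-1}$, using the split $R_{\ell_k} = R_{\ell_k,1}+R_{\ell_k,2}$ introduced before Assumption \ref{ass:2}.

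For the continuous part I would apply Assumption \ref{ass:2}. Since $\tilde{p}_t^*$ is the best price in the grid $\mathcal{S}_t$ whose consecutive candidates are spaced by the subinterval width $(p_{\max}+2\|\hat{\theta}_{k-1}\|_1)/d_k \le (p_{\max}+2W)/d_k$, the continuous maximizer $p_t^*$ falls within half a grid-width of some candidate, so $|p_t^*-\tilde{p}_t^*| = O(1/d_k)$ and hence $r_{t,2}\le C(p_t^*-\tilde{p}_t^*)^2 = O(1/d_k^2)$. Summing over the episode gives $R_{\ell_k,2} = O(\ell_k/d_k^2)$.

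For the discrete part I would use that $R_{\ell_k,1}$ is exactly the regret of M-LinUCB on the PLB formulation under the price-action coupling $A_t = Q_t(p_t)$ (Lemma \ref{lem:2}), and apply the PLB regret bound. By Lemma \ref{lem:1} the perturbation constant is $C_p = 2L\|\hat{\theta}_{k-1}-\theta_0\|_1$; because every action has a single nonzero entry bounded by $p_{\max}$, the PLB bound gives, conditional on $\hat{\theta}_{k-1}$, $R_{\ell_k,1} = \tilde{O}(d_k\sqrt{\ell_k}) + 4p_{\max}L\|\hat{\theta}_{k-1}-\theta_0\|_1\,\ell_k$, and taking expectation over the random estimate replaces the last factor by $\mathbb{E}\|\hat{\theta}_{k-1}-\theta_0\|_1$. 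Adding the two parts, the non-perturbation per-episode regret is $\tilde{O}(d_k\sqrt{\ell_k}+\ell_k/d_k^2)$, minimized by $d_k\asymp\ell_k^{1/6}$, which is precisely the choice $d_k = C\lceil(2^{k-2}\ell_2)^{1/6}\rceil$, yielding $\tilde{O}(\ell_k^{2/3})$. Summing the geometric sequence $\sum_{k=2}^n\ell_k^{2/3}$ (ratio $2^{2/3}>1$, so dominated by the last episode of length $\Theta(T)$) gives $\tilde{O}(T^{2/3})$, while the perturbation terms sum to exactly $4p_{\max}L\sum_{k=2}^n 2^{k-2}\ell_2\,\mathbb{E}\|\hat{\theta}_{k-1}-\theta_0\|_1$, which is the stated bound.

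The hard part is the discrete-part bound $R_{\ell_k,1}$, i.e., the PLB regret bound itself. The standard LinUCB optimism argument assumes a single fixed parameter, but here the per-step reward is governed by the perturbed $\xi_t$ while the ridge estimator $\hat{\xi}_{t-1}$ targets the central $\xi^*$, so $\hat{\xi}_{t-1}$ is biased for $\xi^*$ and $\xi^*$ need not lie in the confidence set. The analysis must absorb this bias both in the optimism step (keeping a near-feasible parameter inside $\mathcal{C}_t(\beta_t)$) and in the direct comparison against the $\xi_t$-optimal action, using $\|\xi_t-\xi^*\|_\infty\le C_p/2$; this is exactly what produces the unavoidable linear-in-$\ell_k$ term and pins its coefficient at $4p_{\max}L$. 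Tracking the $p_{\max}$ and dimension factors so that the learning term is $\tilde{O}(d_k\sqrt{\ell_k})$ and balances the discretization error at $d_k\asymp\ell_k^{1/6}$ is the delicate bookkeeping; by comparison the episode summation and the continuous-part estimate are routine.
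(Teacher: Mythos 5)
Your proposal is correct and follows essentially the same route as the paper: warm-up episode bounded by $p_{\max}\ell_1$, per-episode split into a continuous discretization error $O(\ell_k/d_k^2)$ via Assumption \ref{ass:2} and a discrete part equal to the M-LinUCB regret on the PLB (Lemmas \ref{lem:1}--\ref{lem:3}, handled in the paper by a ``shadow parameter'' $\dot{\xi}_t$ that lies in the confidence set and is within $C_p$ of each $\xi_t$), balanced at $d_k\asymp \ell_k^{1/6}$ and summed geometrically. The only slip is the inference ``$|p_t^*-\tilde{p}_t^*|=O(1/d_k)$'': the discrete maximizer $\tilde{p}_t^*$ need not be near $p_t^*$; one instead compares against the nearby grid point $\dot{p}_t$ and uses $f_{x_t^\top\theta_0}(\tilde{p}_t^*)\ge f_{x_t^\top\theta_0}(\dot{p}_t)$, which still yields $r_{t,2}=O(1/d_k^2)$ as you conclude.
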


Theorem \ref{thm:3} demonstrates how the estimation errors $||\hat{\theta}_{k}-\theta_{0}||_{1}$ affect the regret upper bound for DIP policy. If the estimates $\{\hat{\theta}_{k}\}_{k\in[n-1]}$ are perfectly accurate, the second term vanishes and the overall regret is of $\tilde{O}(T^{2/3})$. In general, if $\mathbb{E}||\hat{\theta}_{k}-\theta_{0}||_{1} =  O(\ell_{k}^{-\alpha})$ for some $0<\alpha\leq\frac{1}{2}$, we can conclude that $\sum_{k=2}^{n}2^{k-2}\ell_{2}\mathbb{E}||\hat{\theta}_{k-1}-\theta_{0}||_{1}= O(T^{1-\alpha})$ by the doubling construction. Then the overall regret is $\tilde{O}(T^{\frac{2}{3}\vee (1-\alpha)})$. Since we use the adaptive pricing data in the previous episode to estimate $\theta_{0}$, it is challenging to derive the exact rate of convergence for the estimation. In spite of this theoretical difficulty, we conduct a simulation study in Section \ref{sec:sim_estimation_error} to numerically demonstrate that the convergence rate of $||\hat{\theta}_{k}-\theta_{0}||_1$ is between $1/3$ and $1/2$ and hence the $\tilde{O}(T^{2/3})$ overall regret bound can be practically achieved.

\begin{remark}\label{rm:lower-bound}
At the first glance, the obtained regret upper bound is worse than the typical $\Omega(T^{1/2})$ lower bound in linear bandit \citep{lattimore2020bandit} and dynamic pricing with the  known noise distribution \citep{javanmard2019dynamic}. However, we would like to point out that our problem involves both unknown linear parameter $\theta_{0}$ and unknown noise distribution $F$. We conjecture that our obtained regret upper bound is close to the lower bound in our setting. To see it, \cite{chen2021nonparametric} considered a nonparametric pricing problem and proved an $\Omega(T^{(d_{0}+2)/(d_{0}+4)})$ lower bound under some additional smoothness and concavity assumptions, where $d_{0}$ is the dimension of the nonparametric component. With a one-dimensional nonparametric component $F$ in our pricing problem, their results suggest an $\Omega(T^{3/5})$ lower bound, which is higher than the typical $\Omega(T^{1/2})$ rate. However, their constructed instances do not fit into our considered pricing problem with an additional linear structure $x^{\top}\theta_{0}$. The additional unknown $\theta_{0}$ makes the lower bound derivation harder and we leave it for future work.

\end{remark}

\begin{remark}\label{rm:3}	
After our initial submission, there are two recent papers \citep{fan2021policy,xu2022towards} considering a similar dynamic pricing problem with the unknown noise distribution. In \cite{fan2021policy}, the authors considered an $m(\geq 2)$ times continuously differentiable $F$ and proposed an explore-then-commit-type policy which achieved an $\tilde{O}(T^{\frac{2m+1}{4m-1}})$ regret upper bound. Both our assumed Lipschitz and concavity assumptions are satisfied under their condition of twice continuously differentiable $F$ ($m=2$). Thus, even under stronger assumptions, their proved $\tilde{O}(T^{5/7})$ regret for $m=2$ is still worse than our main $\tilde{O}(T^{2/3})$ regret term. In \cite{xu2022towards}, the authors considered an adversarial setting and proposed a ``D2-EXP4'' policy that achieved a regret of $\tilde{O}(T^{3/4})$. By fully utilizing the smoothness and concavity of the noise distribution, our proposed DIP policy achieves an improvement to $\tilde{O}(T^{2/3})$ for our main regret term. 

\end{remark}

In the next two subsections, we first do some preparations by developing the regret bounds for the general perturbed linear bandit. Then we provide a proof outline for our main Theorem \ref{thm:3} by utilizing the proved PLB results.

\subsection{Regret Bounds for Perturbed Linear Bandit}
We consider a PLB setting with the reward model $Z_{t} = \langle \xi_{t},A_{t}\rangle + \eta_{t}$ which satisfies the following conditions. 
\begin{description}
	\item[\textbf{Condition 1}] For any $t\in \mathbb{N}^{+}$ and $a\in \mathcal{A}_{t}$, $|\langle \xi_{t},a\rangle|\leq 1$. 
	
	\item[\textbf{Condition 2}] For any $t\in\mathbb{N}^{+}$, $||\xi_{t}||_{\infty}\leq C_{1}$. %and $C_{1}$ is known.
	
	\item[\textbf{Condition 3}] For any $t\in\mathbb{N}^{+}$ and $a\in\mathcal{A}_{t}$, $||a||_{0} = 1$ and $||a||_{2}\leq a_{\max}$ for a constant $a_{\max}$. 
	
	\item[\textbf{Condition 4}] For any $t\in\mathbb{N}^{+}$, $\eta_{t}$ is a $1$-conditionally sub-Gaussian random variable, i.e., $\mathbb{E}(\exp(\alpha\eta_{t})|\mathcal{F}_{t-1})\leq \exp(\frac{\alpha^{2}}{2})$, where $\mathcal{F}_{t-1} = \sigma(\xi_{1},A_{1},Z_{1},\dots, \xi_{t},A_{t})$.
\end{description}

\begin{remark}
	Condition 1 ensures a constant regret upper bound at each time and is commonly adopted in linear bandit \citep{lattimore2020bandit}. Condition 2 assumes the bounded infinity norm of $\xi_{t}$. Condition 3 implies there is only one nonzero element bounded in absolute value for any action. This holds for our PLB formulation since any action vector $Q_{t}(p)$ with $p=m_{j}+x_{t}^{\top}\hat{\theta} \in\mathcal{S}_{t}$ has a single nonzero $j$-th element $p\in(0,p_{\max})$. Condition 4 implies that the noise is sub-Gaussian conditional on all the past parameters, actions, rewards, as well as the current parameter and action. The perturbed linear bandit formulation of our single-episode pricing problem satisfies all these conditions.
\end{remark}

We develop the following Lemma \ref{lem:3} to establish the regret bound for such a PLB setting.

\begin{lemma}\label{lem:3}
	Consider the PLB satisfying Conditions 1-4 with a perturbation $C_{p}$. With probability at least $1-\delta$, Algorithm \ref{alg:5} with $\beta_{t} = \tilde{\beta}_{t} =1\vee (C_{1}\sqrt{\lambda d}+\sqrt{2\log(\frac{1}{\delta})+d\log(\frac{d\lambda+(t-1)a_{\max}^{2}}{d\lambda}) })^{2}$ has the regret bound
	\[
	R^{PLB}_{T_{0}}\leq 2\sqrt{2dT_{0}\tilde{\beta}_{T_{0}}\log(\frac{d\lambda+T_{0}a_{\max}^{2}}{d\lambda})}+  2a_{\max}C_{p}T_{0}+2d.
	\]
\end{lemma}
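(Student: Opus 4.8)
The plan is to split the horizon into the forced-exploration steps of Algorithm \ref{alg:5} (those $t$ with $\tilde{\mathcal{B}}_t\not\subseteq\tilde{\mathcal{B}}'_t$) and the genuine UCB steps, and to bound $R^{PLB}_{T_0}=\sum_{t=1}^{T_0}\big(\max_{a\in\mathcal{A}_t}\langle\xi_t,a\rangle-\langle\xi_t,A_t\rangle\big)$ on each part. Since each index placed in $\tilde{\mathcal{B}}'_t$ stays there forever, there are at most $d$ forced-exploration steps, and by Condition 1 each incurs instantaneous regret at most $2$, which produces the additive $2d$. On the UCB steps I would run an optimism-in-the-face-of-uncertainty argument, but adapted so that the perturbation enters only through a deterministic, non-vanishing bias of size $O(a_{\max}C_p)$ per step.

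The crux is the confidence guarantee. Fix a central parameter $\xi^*$ with $\|\xi_t-\xi^*\|_\infty\le C_p/2$ for all $t$, which exists because $\|\xi_s-\xi_t\|_\infty\le C_p$. Because every action has a single nonzero coordinate (Condition 3), $V_{t-1}(\lambda)$ is diagonal, so the ridge estimator decouples coordinatewise. I would introduce the \emph{pseudo-estimator} $\hat\xi^0_{t-1}=V_{t-1}(\lambda)^{-1}\sum_{s=1}^{t-1}A_s(\langle\xi^*,A_s\rangle+\eta_s)$, i.e. the estimator one would obtain if the parameter were the constant $\xi^*$. For $\hat\xi^0_{t-1}$ the ``noise'' $\eta_s$ is a genuine conditionally sub-Gaussian martingale difference (Condition 4), so the self-normalized tail bound of \cite{abbasi2011improved}, together with the bound $\|\xi^*\|_2\le C_1\sqrt{d}$ from Condition 2 and the determinant--trace inequality, gives with probability at least $1-\delta$ and simultaneously for all $t$ that $\|\hat\xi^0_{t-1}-\xi^*\|_{V_{t-1}(\lambda)}\le\sqrt{\tilde\beta_t}$, which is exactly the choice of $\tilde\beta_t$. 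The actual estimator $\hat\xi_{t-1}$ uses the true rewards $Z_s=\langle\xi_s,A_s\rangle+\eta_s$, so the bias $b_{t-1}:=\hat\xi_{t-1}-\hat\xi^0_{t-1}$ equals, in coordinate $j$, $\big(\sum_s a_s^2(\xi_{s,j}-\xi^*_j)\big)/\big(\lambda+\sum_s a_s^2\big)$, a sub-average of quantities bounded by $C_p/2$; hence $\|b_{t-1}\|_\infty\le C_p/2$ for every $t$ regardless of how many pulls have accumulated. This coordinatewise convex-combination bound — available precisely because the actions are one-sparse — is what keeps the perturbation contribution linear in $T_0$ instead of blowing up in the $V_{t-1}(\lambda)$-norm.

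With these two facts I would run the optimism step. On the good event $\xi^*+b_{t-1}$ lies in the algorithm's confidence set $\mathcal{C}_t(\tilde\beta_t)$, since $\xi^*+b_{t-1}=\hat\xi_{t-1}+(\xi^*-\hat\xi^0_{t-1})$ and $\|\xi^*-\hat\xi^0_{t-1}\|_{V_{t-1}(\lambda)}\le\sqrt{\tilde\beta_t}$. Thus for the optimal action $a_t^*$ we get $\langle\xi^*,a_t^*\rangle\le\mathrm{LinUCB}_t(a_t^*)-\langle b_{t-1},a_t^*\rangle\le\mathrm{LinUCB}_t(A_t)+\tfrac{C_p}{2}a_{\max}$, using $|\langle b_{t-1},a_t^*\rangle|\le\|b_{t-1}\|_\infty\|a_t^*\|_1\le\tfrac{C_p}{2}a_{\max}$ and the greedy choice of $A_t$. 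Adding the perturbation gap $|\langle\xi_t-\xi^*,a_t^*\rangle|\le\tfrac{C_p}{2}a_{\max}$ gives $\langle\xi_t,a_t^*\rangle\le\mathrm{LinUCB}_t(A_t)+a_{\max}C_p$. For the chosen action, expanding $\mathrm{LinUCB}_t(A_t)=\langle\hat\xi_{t-1},A_t\rangle+\sqrt{\tilde\beta_t}\|A_t\|_{V_{t-1}(\lambda)^{-1}}$ and writing $\hat\xi_{t-1}-\xi_t=(\hat\xi^0_{t-1}-\xi^*)+b_{t-1}+(\xi^*-\xi_t)$, the confidence bound and the two $O(a_{\max}C_p)$ terms yield $\mathrm{LinUCB}_t(A_t)-\langle\xi_t,A_t\rangle\le 2\sqrt{\tilde\beta_t}\|A_t\|_{V_{t-1}(\lambda)^{-1}}+a_{\max}C_p$. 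Combining, the per-step regret on the UCB steps satisfies $r_t\le 2\sqrt{\tilde\beta_t}\|A_t\|_{V_{t-1}(\lambda)^{-1}}+2a_{\max}C_p$.

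It remains to sum. Since $\tilde\beta_t\ge1$ and $r_t\le2$ by Condition 1, I would replace $\|A_t\|_{V_{t-1}(\lambda)^{-1}}$ by $\min(1,\|A_t\|_{V_{t-1}(\lambda)^{-1}})$, apply Cauchy--Schwarz across $t$, and invoke the elliptical potential lemma \citep{lattimore2020bandit} to get $\sum_t\min(1,\|A_t\|^2_{V_{t-1}(\lambda)^{-1}})\le 2\log(\det V_{T_0}/\det(\lambda I))$, with the determinant--trace inequality bounding the right-hand side by $2d\log\big((d\lambda+T_0a_{\max}^2)/(d\lambda)\big)$; together with $\tilde\beta_t\le\tilde\beta_{T_0}$ this reproduces the leading term $2\sqrt{2dT_0\tilde\beta_{T_0}\log((d\lambda+T_0a_{\max}^2)/(d\lambda))}$, and adding $2a_{\max}C_pT_0$ from the per-step bias and $2d$ from forced exploration completes the bound. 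I expect the main obstacle to be the confidence analysis: the perturbation destroys the martingale structure the self-normalized inequality requires, and the resolution is the split into the pseudo-estimator (for which concentration is clean) plus the deterministic bias, whose boundedness by $C_p/2$ hinges on the one-sparse action structure of Condition 3.
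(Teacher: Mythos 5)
Your proof is correct and follows essentially the same route as the paper's: the same split into at most $d$ forced-exploration steps (yielding the additive $2d$) and genuine UCB steps, the same exploitation of the one-sparse action structure to turn the perturbation into a coordinatewise convex-combination bias bounded in $\ell_\infty$ by $O(C_p)$, and the same Cauchy--Schwarz plus elliptical-potential finish. The only difference is bookkeeping: the paper places the time-varying shadow parameter $\dot{\xi}_t = V_{t-1}^{+}\sum_{s=1}^{t-1}A_sA_s^{\top}\xi_s$ in the confidence set and bounds $|\dot{\xi}_{tj}-\xi_{tj}|\le C_p$ on the pulled coordinates, whereas you use a fixed central $\xi^*$ plus an explicit bias $b_{t-1}$; both absorb the ridge-regularization gap into the $C_1\sqrt{\lambda d}$ term of $\tilde{\beta}_t$ and arrive at the identical per-step bound $2\sqrt{\tilde{\beta}_t}\|A_t\|_{V_{t-1}(\lambda)^{-1}}+2a_{\max}C_p$.
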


\noindent\textbf{Proof Sketch}: We construct a new sequence of ``shadow'' linear parameters $\{\dot{\xi}_{t}\}_{2\leq t\leq T_{0}}$ and control the ``pseudo-regret" $\sum_{t=2}^{T_{0}}\langle \dot{\xi}_{t},\dot{A}_{t}-A_{t}\rangle$ with the sub-linear order $\tilde{O}(\sqrt{T_{0}})$, where $\dot{A}_{t} =\arg\max_{a\in \mathcal{A}_{t}}\langle \dot{\xi}_{t},a\rangle$. By proving closeness of $\dot{\xi}_{t}$ and $\xi_{t}$ for all $t$, we can bound the difference between the true regret and pseudo-regret by a linear term proportional to the perturbation $C_{p}$. The detailed construction of $\{\dot{\xi}_{t}\}_{2\leq t\leq T_{0}}$ and rigorous proofs are deferred to Section A of the Appendix. $\hfill\blacksquare$

%The detailed construction of $\{\dot{\xi}_{t}\}_{2\leq t\leq T_{0}}$ and rigorous derivations are deferred to Section A of the Appendix. 

As shown in Lemma \ref{lem:3}, the second term in the regret upper bound is proportional to the perturbation $C_{p}$. When $C_{p} = 0$, this linear term vanishes and the final regret bound matches that of the classic linear bandit. Reversely, the perturbed linear bandit would become intractable when $C_p$ is too large. Interestingly, by Lemma \ref{lem:1}, this perturbation constant in the PLB formulation of our single-episode pricing problem is proportional to $||\hat{\theta}-\theta_{0}||_{1}$. This matches the intuition that a larger estimation error would lead to more revenue loss. Lemma \ref{lem:3} is of independent interest since it provides an informative regret bound for the PLB problem.

\begin{remark}\label{rm:1}
	Our proposed PLB can be viewed as a misspecified linear bandit \citep{lattimore2020learning,pacchiano2020model,foster2020adapting} with a misspecification level $\epsilon_{*} = a_{\max}C_{p}/2$, where the latter has a general regret of $\tilde{O}(d\sqrt{T_{0}}+\epsilon_{*}\sqrt{d}T_{0})$. In comparison, by leveraging Condition 3, we prove a regret of $\tilde{O}(d\sqrt{T_{0}}+a_{\max}C_{p}T_{0}) = \tilde{O}(d\sqrt{T_{0}}+\epsilon_{*}T_{0})$ for the simple M-LinUCB algorithm under our PLB setting. To see it, the key in our proof is the closeness property $||\dot{\xi}_{t}-\xi_{t}||_{\infty}\leq C_{p}$ of our constructed shadow parameters $\dot{\xi}_{t} = V_{t-1}^{+}\sum_{s=1}^{t}A_{s}A_{s}^{\top}\xi_{s}$. The proof of this property relies on the fact that the Moore-Penrose inverse $V_{t-1}^{+}$ of $V_{t-1} = V_{t-1}(0)$ is a diagonal matrix, which is a direct result of the condition $||a||_{0} = 1$ in Condition 3. Importantly, this $\sqrt{d}$ improvement is critical for us to derive the final regret rate of our pricing problem. 
\end{remark}

\begin{remark}\label{rm:2}
	Non-stationary linear bandits (NLB) \citep{cheung2018hedging,russac2019weighted,zhao2020simple} also allow changing linear parameters $\xi_{t}$ but design policies to adapt to the smooth variations $B_{T_{0}}=\sum_{t=1}^{T_{0}-1}||\xi_{t}-\xi_{t+1}||_{2}$. Our PLB setting fits an NLB with linear variations $B_{T_{0}}=O(C_{p}T_{0})$. %Thus the asymptotic analysis in \cite{russac2019weighted} is not applicable. On the other hand, the
The nonasymptotic results in \cite{cheung2018hedging,zhao2020simple} suggest a regret of $\mathcal{\tilde{O}}(B_{T_{0}}^{1/3}T_{0}^{2/3})=\mathcal{\tilde{O}}(C_{p}^{1/3}T_{0})$ which is only valid for a range of $C_{p}$ (exclusive of zero and dependent on $T_{0}$). In contrast, our proved Lemma \ref{lem:3} provides regret behaviors with a fixed $T_{0}$ for $C_{p}\to 0$, i.e., approaching the classic linear bandit result $\mathcal{\tilde{O}}(\sqrt{T_{0}})$ linearly with $C_{p}$, which is essential for further derivations in our pricing problem. Though some intermediate results in \cite{cheung2018hedging,zhao2020simple} also yield regrets for fixed $T_{0}$ and $C_{p}\to 0$, they suggest worse regrets such as $\mathcal{\tilde{O}}(wC_{p}T_{0}+\frac{T_{0}}{\sqrt{w}})$ ($w$ chosen from $\{1,\dots,T_{0}\}$) and $\mathcal{\tilde{O}}(C_{p}T^{2}_{0}+\sqrt{T_{0}})$ when applied to our PLB setting, which will inevitably deteriorate the performance guarantee for our pricing problem. 
	
\end{remark}

Next we prove an $\Omega(C_pT_{0})$ regret lower bound for the PLB with a perturbation $C_p$. This implies that the linear term in the upper bound is inevitable due to the potentially adversarial perturbations. Define $PB(\tilde{\xi},C_{p}) = \{\xi\in \mathbb{R}^{d}:||\xi-\tilde{\xi}||_{\infty}\leq \frac{C_p}{2} \}$ as a parameter set with respect to a ``central" parameter $\tilde{\xi}$ and a perturbation quantification $C_p$. 

\begin{proposition}\label{prop:1}
	For any PLB algorithm $\mathcal{A}^{*}$, any $\tilde{\xi}$ with all positive elements and $\frac{C_p}{2}<\min_{i\in [d]}\tilde{\xi}_{i}$, there exists a PLB with parameters $(\xi_{1},\dots,\xi_{t},\dots)$ and action sets $(\mathcal{A}_{1},\dots,\mathcal{A}_{t},\dots)$ satisfying $\xi_{t}\in PB(\tilde{\xi},C_p),\forall t\in\mathbb{N}^{+}$ and a constant $C_{0}$ only dependent on $\tilde{\xi}$ such that \[
	\mathbb{E}(R^{PLB}_{T_{0}}(\mathcal{A}^{*}))\geq C_{0}C_pT_{0}, \forall T_{0}\in\mathbb{N}^{+}.
	\]
\end{proposition}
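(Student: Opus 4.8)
The plan is to establish the lower bound via a two-point (or multi-point) information-theoretic argument, constructing two PLB instances that are statistically hard to distinguish yet require different optimal actions, so that any algorithm must incur linear regret on at least one of them. First I would fix the ``central'' parameter $\tilde{\xi}$ with all positive coordinates and perturbation $C_p$ satisfying $\frac{C_p}{2} < \min_i \tilde{\xi}_i$. The key structural freedom granted by the PLB definition is that the perturbations $\xi_t \in PB(\tilde{\xi}, C_p)$ may be chosen \emph{adversarially} and may depend on the action history. I would exploit this by designing an adversary that, at each time $t$, sets the coordinate of $\xi_t$ corresponding to whichever arm the algorithm is about to favor to its minimal allowed value $\tilde{\xi}_i - \frac{C_p}{2}$, while keeping a genuinely better arm at $\tilde{\xi}_j + \frac{C_p}{2}$. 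Concretely, with a two-arm action set (each action having a single nonzero entry per Condition 3), the gap between the reward means of the two arms can be forced to be of order $C_p$ at every step, so that the algorithm cannot avoid an $\Omega(C_p)$ per-step regret regardless of its choices.

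The main steps, in order, are as follows. I would first reduce to a two-armed instance by taking $\mathcal{A}_t$ to consist of two fixed vectors $a^{(1)}, a^{(2)}$, each with a single nonzero coordinate (say coordinates $1$ and $2$) equal to some common scale so that $|\langle \xi_t, a\rangle| \le 1$ holds (Condition 1). Next I would define the adversarial perturbation: let $\xi_{t,1} = \tilde{\xi}_1 + \frac{C_p}{2}$ and $\xi_{t,2} = \tilde{\xi}_2 - \frac{C_p}{2}$ if the algorithm's rule (possibly randomized) would select arm $2$ at time $t$ with probability at least $\frac{1}{2}$, and the symmetric choice otherwise; all other coordinates are set to $\tilde{\xi}_i$. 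Because $A_t$ is $\mathcal{F}_{t-1}$-measurable given the algorithm's internal randomness, and the adversary only needs to know the conditional action distribution (which is determined before $Z_t$ is realized), this perturbation is a legitimate element of the filtration structure $\mathcal{F}_{t-1} = \sigma(\xi_1, A_1, Z_1, \dots, \xi_t, A_t)$. I would then verify all four Conditions hold with appropriate constants, compute the optimal per-step reward $\max_i \langle \xi_t, a^{(i)}\rangle$ versus the expected reward of $A_t$, and show the instantaneous regret is at least $C_0 C_p$ for a constant $C_0$ depending only on $\tilde{\xi}$ through the action scale. Summing over $t \in [T_0]$ yields $\mathbb{E}(R^{PLB}_{T_0}) \ge C_0 C_p T_0$.

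The hard part, and the step I would scrutinize most carefully, is making the adversary's construction rigorous with respect to the filtration and the reward realizations. In the oblivious-adversary reading of the PLB definition, $\xi_t$ should be determined before $A_t$, but the definition of $\mathcal{F}_{t-1}$ includes $\xi_t$ itself, which suggests the parameters may be revealed along with or just before the action. I would need to confirm that choosing $\xi_t$ as a function of the algorithm's conditional action distribution $\mathbb{P}(A_t = \cdot \mid \mathcal{F}_{t-1}^-)$ — rather than the realized $A_t$ — keeps the noise $\eta_t = Z_t - \langle \xi_t, A_t\rangle$ genuinely sub-Gaussian conditional on $\mathcal{F}_{t-1}$ (Condition 4), since $Z_t$ is a bounded Bernoulli-type reward whose centered version is automatically sub-Gaussian. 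A cleaner alternative, which I would fall back on if the adversarial dependence raises measurability concerns, is a Bayesian/randomized hard instance: pick $\xi$ uniformly from a two-point set $\{\xi^{(1)}, \xi^{(2)}\}$ where the two instances differ only in which of arms $1,2$ is optimal, with gap $C_p$, and apply a standard Le Cam or Bretagnolle--Huber argument to show that any algorithm confuses the two instances on a constant fraction of rounds and hence pays $\Omega(C_p T_0)$ in expectation over the prior, forcing a bad instance to exist. Either route reduces the remaining work to routine bounding of the per-step gap and a chi-square/KL computation between the two Bernoulli reward laws, which I would not expand here.
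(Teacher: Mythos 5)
Your primary route is essentially the paper's own proof: a two-armed instance with single-nonzero-coordinate actions and an adaptive adversary that, keyed to the algorithm's conditional action distribution (not the realized action), flips which arm is optimal so that the arm selected with probability at least $\tfrac{1}{2}$ always carries an $\Omega(C_p)$ gap, yielding expected per-step regret at least $C_0 C_p$ and hence the linear lower bound. One detail to repair: with a \emph{common} action scale the gap need not change sign when $\tilde{\xi}_1\neq\tilde{\xi}_2$ (if $\tilde{\xi}_1-\tilde{\xi}_2>C_p$, always pulling arm $1$ is regret-free), so each arm must be normalized separately — the paper takes the two smallest coordinates $u,v$ of $\tilde{\xi}$ and actions with entries $\tfrac{1}{u+C_p/2}$ and $\tfrac{1}{v+C_p/2}$, so the optimal value is identically $1$ and the suboptimal value is $\tfrac{u-C_p/2}{u+C_p/2}$ or $\tfrac{v-C_p/2}{v+C_p/2}$, giving a gap of order $C_p$ in both branches.
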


\subsection{Proof Outline for Theorem \ref{thm:3}}

To prove Theorem \ref{thm:3}, we first prove regret upper bounds for each episode and then merge them together. In the following Proposition \ref{thm:2}, we prove a high probability regret bound as well as an expected regret bound for our pricing policy in a single episode. Specifically, the expected regret is bounded by a sub-linear $\tilde{O}(T^{2/3})$ term and a linear term proportional to the $\ell_1$ estimation error $||\hat{\theta}-\theta_{0}||_{1}$. As DIP applies Algorithm 3 to  the $k$-th episode with $\hat{\theta} = \hat{\theta}_{k-1}$, we obtain Theorem \ref{thm:3} by applying Proposition \ref{thm:2} to each episode. 

\begin{proposition}\label{thm:2}
	Under Assumptions \ref{ass:1} -- \ref{ass:2}, with probability at least $1-\delta$, applying Algorithm 3 on the single-episode pricing problem yields the total regret $R_{T_{0}}$ satisfying
	\[
	R_{T_{0}}\leq 2\sqrt{2dT_{0}\beta_{T_{0}}^{*}\log(\frac{d\lambda+T_{0}p_{\max}^{2}}{d\lambda})} +  4p_{\max}L||\hat{\theta}-\theta_{0}||_{1}T_{0} + C\frac{T_{0}}{d^{2}}+2dp_{\max}.
	\]
	Moreover, by setting $\delta = \frac{1}{T_{0}},d = C\lceil T_{0}^{1/6}\rceil$ and taking the expectation, we have $\mathbb{E}(R_{T_{0}}) = \tilde{O}(T_{0}^{2/3}) + 4p_{\max}L||\hat{\theta}-\theta_{0}||_{1}T_{0}$.
\end{proposition}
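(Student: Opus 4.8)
The plan is to use the regret decomposition $R_{T_0}=R_{T_0,1}+R_{T_0,2}$ already set up before the statement, and bound the two pieces separately: the discrete part $R_{T_0,1}$ through the perturbed linear bandit (PLB) machinery of Lemmas \ref{lem:1}--\ref{lem:3}, and the continuous part $R_{T_0,2}$ through the local concavity in Assumption \ref{ass:2} together with the discretization granularity. Adding the two high-probability bounds gives the first displayed inequality, and a standard good-event/bad-event split then yields the expectation bound.

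First I would bound $R_{T_0,1}$. Under the coupling $A_t=Q_t(p_t)$, the expected reward of a candidate price $p=m_j+x_t^\top\hat\theta\in\mathcal{S}_t$ equals $\langle\xi_t,A_t\rangle$, so $\max_{a\in\mathcal{A}_t}\langle\xi_t,a\rangle$ is attained at the action corresponding to $\tilde p^*_t$; hence $R_{T_0,1}$ coincides with the per-round PLB regret $R^{PLB}_{T_0}$. By Lemma \ref{lem:2}, Algorithm \ref{alg:3} with $\beta_t=\beta_t^*$ is identical to M-LinUCB (Algorithm \ref{alg:5}) on this PLB, whose perturbation constant is $C_p=2L\|\hat\theta-\theta_0\|_1$ by Lemma \ref{lem:1}. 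To invoke Lemma \ref{lem:3}, which presumes Conditions 1--4 on a unit scale, I would rescale by $p_{\max}$: setting $\xi_t''=\xi_t/p_{\max}$ and $Z_t''=Z_t/p_{\max}$ with the actions unchanged gives $|\langle\xi_t'',a\rangle|\le 1$, $\|\xi_t''\|_\infty\le 1/p_{\max}=:C_1$, $a_{\max}=p_{\max}$, and noise $\eta_t''=p_t(y_t-(1-F))/p_{\max}$ supported in $[-1,1]$, hence $1$-conditionally-subGaussian. The rescaled perturbation is $C_p''=C_p/p_{\max}$, so Lemma \ref{lem:3} applies; multiplying its bound back by $p_{\max}$ and using $p_{\max}^2\tilde\beta_{T_0}=\beta_{T_0}^*$ turns the leading term into $2\sqrt{2dT_0\beta_{T_0}^*\log(\frac{d\lambda+T_0p_{\max}^2}{d\lambda})}$, the linear term $2a_{\max}C_p''T_0$ into $4p_{\max}L\|\hat\theta-\theta_0\|_1 T_0$, and the residual $2d$ into $2dp_{\max}$.

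Next I would bound $R_{T_0,2}=\sum_t\big(f_{x_t^\top\theta_0}(p^*_t)-f_{x_t^\top\theta_0}(\tilde p^*_t)\big)$. The governing geometric fact is that the candidate prices $\{m_j+x_t^\top\hat\theta\}$ form a grid of spacing $\Delta=(p_{\max}+2\|\hat\theta\|_1)/d\le(p_{\max}+2W)/d$, and since $|x_t^\top\hat\theta|\le\|\hat\theta\|_1$ the admissible part of this grid still covers the interval $(0,p_{\max})$ that contains $p^*_t$. Thus there is a candidate price $p\in\mathcal{S}_t$ within $O(1/d)$ of $p^*_t$, and because $\tilde p^*_t$ maximizes $f_{x_t^\top\theta_0}$ over $\mathcal{S}_t$ it is at least as good as $p$, so Assumption \ref{ass:2} gives $r_{t,2}\le f_{x_t^\top\theta_0}(p^*_t)-f_{x_t^\top\theta_0}(p)\le C(p^*_t-p)^2=O(1/d^2)$. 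Summing yields $R_{T_0,2}\le CT_0/d^2$, which together with the previous paragraph gives the advertised high-probability bound. For the expectation, I would intersect with the good event (probability $\ge 1-\delta$) and use the trivial per-step bound $r_t\le p_{\max}$ on its complement, contributing only $\delta\,p_{\max}T_0=p_{\max}$ when $\delta=1/T_0$. Substituting $d=C\lceil T_0^{1/6}\rceil$ makes $\beta_{T_0}^*=\tilde O(T_0^{1/6})$, so the leading term is $\tilde O(\sqrt{T_0^{1/6}\cdot T_0\cdot T_0^{1/6}})=\tilde O(T_0^{2/3})$, the term $CT_0/d^2=O(T_0^{2/3})$, and $2dp_{\max}=O(T_0^{1/6})$ is lower order, leaving $\tilde O(T_0^{2/3})+4p_{\max}L\|\hat\theta-\theta_0\|_1T_0$.

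I expect the main obstacle to be the rescaling/parameter-matching step: one must verify that the normalized instance genuinely meets all four PLB conditions — in particular that the bounded Bernoulli-type noise $p_t(y_t-(1-F))/p_{\max}$ is $1$-conditionally-subGaussian after division by $p_{\max}$ — and that the scaled constants $C_1=1/p_{\max}$ and $a_{\max}=p_{\max}$ reproduce $\beta_{T_0}^*$ \emph{exactly} rather than up to a factor, so that the claimed constants (especially the $4p_{\max}L$ coefficient) come out precisely. A secondary care point is the boundary analysis guaranteeing a grid point within $O(1/d)$ of $p^*_t$ even when $p^*_t$ lies near $0$ or $p_{\max}$, which is what legitimizes the uniform $O(1/d^2)$ control of $r_{t,2}$.
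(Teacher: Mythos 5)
Your proposal is correct and follows essentially the same route as the paper: it reconstructs inline the discrete-part bound that the paper isolates as Proposition \ref{thm:1} (PLB formulation via Lemmas \ref{lem:1}--\ref{lem:2}, rescaling by $p_{\max}$ to meet Conditions 1--4 with $C_1=1/p_{\max}$ and $a_{\max}=p_{\max}$, then applying Lemma \ref{lem:3} and scaling back), handles the continuous part by the same grid-covering argument with Assumption \ref{ass:2} and the bound $|G(\hat\theta)|\le p_{\max}+2W$, and uses the same good-event/bad-event split with $r_t\le p_{\max}$ for the expectation. The two care points you flag (exact reproduction of $\beta_{T_0}^*$ under rescaling, and grid coverage near the endpoints of $(0,p_{\max})$) are precisely the details the paper verifies, the latter via the observation that the first and last midpoints lie within half a grid spacing of $0$ and $p_{\max}$ respectively.
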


It remains to prove the regret bound in Proposition \ref{thm:2} for the single-episode pricing problem. We conduct the analysis of both the discrete-part and continuous-part regret, and then combine them together. 

\noindent\textbf{Discrete-Part Regret}: By the PLB formulation in Lemma \ref{lem:1} and the one-to-one correspondence between $\mathcal{S}_{t}$ and $\mathcal{A}_{t}$, the best action in $\mathcal{A}_{t}$ is $A^{*}_{t} = Q_{t}(\tilde{p}_{t}^{*})$. Therefore, the selected action $A_{t}=Q_{t}(p_{t})$ yields the regret $\tilde{p}^{*}_{t}(1-F(\tilde{p}^{*}_{t}-x_{t}^{\top}\theta_{0}))-p_{t}(1-F(p_{t}-x_{t}^{\top}\theta_{0}))$ for the PLB, which matches the discrete-part regret $r_{t,1}$. Moreover, Lemma \ref{lem:2} shows that Algorithm \ref{alg:5} yields Algorithm \ref{alg:3} under the price-action coupling. Therefore, we can investigate the regret of Algorithm \ref{alg:5} on the PLB to quantify the discrete-part regret of Algorithm \ref{alg:3}.

We now apply the general regret bound of Lemma \ref{lem:3} to the PLB formulation of our single-episode pricing problem to bound the discrete-part regret. After scaling the rewards, linear parameters and noises by $\frac{1}{p_{\max}}$ as $\tilde{\xi}_{t} = \frac{1}{p_{\max}}\xi_{t}$, $\tilde{Z}_{t} = \frac{1}{p_{\max}}Z_{t}$, $\tilde{\eta}_{t} = \frac{1}{p_{\max}}\eta_{t}$, we obtain the transformed model $\tilde{Z}_{t} = \langle \tilde{\xi}_{t},A_{t}\rangle + \tilde{\eta}_{t}$ with the perturbation constant $\tilde{C}_{p} = \frac{2L||\hat{\theta}-\theta_{0}||_{1}}{p_{\max}}$, which satisfies Conditions 1-4 with $C_{1} = \frac{1}{p_{\max}}$ and $a_{\max} = p_{\max}$. On the other hand, we can prove that applying Algorithm 5 with $\beta_{t} = \beta_{t}^{*}$ on the original PLB is equivalent to applying it with $\beta_{t} = \tilde{\beta}_{t} = \frac{1}{p_{\max}^{2}}\beta^{*}_{t}$ on the transformed model, with their regrets admitting a scaling relationship. By formalizing the above reasoning, we obtain the following Proposition \ref{thm:1}.

\begin{proposition}\label{thm:1}
	Under Assumption \ref{ass:1}, with probability at least $1-\delta$, applying Algorithm 3 on the single-episode pricing problem yields a discrete-part regret $R_{T_{0},1}$ satisfying
	\[
	R_{T_{0},1}\leq 2\sqrt{2dT_{0}\beta_{T_{0}}^{*}\log(\frac{d\lambda+T_{0}p_{\max}^{2}}{d\lambda})}+  4p_{\max}L||\hat{\theta}-\theta_{0}||_{1}T_{0}+2dp_{\max}.
	\]
\end{proposition}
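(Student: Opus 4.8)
The plan is to derive Proposition~\ref{thm:1} as an essentially direct corollary of the general PLB regret bound in Lemma~\ref{lem:3}, via the two-part reduction already sketched in the text preceding the statement. First I would invoke Lemma~\ref{lem:1} to certify that the single-episode pricing problem, under the price-action coupling $A_t = Q_t(p_t)$, is genuinely a PLB with perturbation constant $C_p = 2L\|\hat{\theta}-\theta_0\|_1$, and that $\tilde{p}_t^*$ corresponds to the optimal action $A_t^* = Q_t(\tilde{p}_t^*)$, so that the PLB regret $R^{PLB}_{T_0}$ coincides with the discrete-part regret $R_{T_0,1} = \sum_t r_{t,1}$. Next I would invoke Lemma~\ref{lem:2} to assert that running Algorithm~\ref{alg:5} (M-LinUCB) with $\beta_t = \beta_t^*$ on this PLB produces exactly Algorithm~\ref{alg:3} with the UCB~(\ref{eq:4}), so the regret of M-LinUCB on the PLB is the quantity we must bound.

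The core of the argument is the rescaling step. I would divide rewards, parameters, and noise by $p_{\max}$, setting $\tilde{\xi}_t = \xi_t/p_{\max}$, $\tilde{Z}_t = Z_t/p_{\max}$, $\tilde{\eta}_t = \eta_t/p_{\max}$. Then I must verify the four Conditions hold for the transformed model: Condition~3 holds with $a_{\max} = p_{\max}$ since each action $Q_t(p)$ has a single nonzero entry $p\in(0,p_{\max})$; Condition~1 holds because $\langle \tilde{\xi}_t, a\rangle = \frac{1}{p_{\max}} p(1-F(\cdot)) \le 1$ as a probability times price divided by $p_{\max}$; Condition~2 holds with $C_1 = 1/p_{\max}$ since $\tilde{\xi}_t$ has entries that are probabilities scaled by $1/p_{\max}$; and Condition~4 holds because $\tilde\eta_t$ is $1$-conditionally sub-Gaussian (the original reward is bounded in $[0,p_{\max}]$, so after scaling it is bounded in $[0,1]$, hence sub-Gaussian with the right constant). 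The transformed perturbation is $\tilde{C}_p = C_p/p_{\max} = 2L\|\hat\theta-\theta_0\|_1/p_{\max}$. I would also check that applying Algorithm~\ref{alg:5} with $\beta_t = \tilde\beta_t = \beta_t^*/p_{\max}^2$ on the transformed model selects the same arms as applying it with $\beta_t=\beta_t^*$ on the original, since the confidence-set geometry and the $\langle\xi,a\rangle$ ordering both scale homogeneously; consequently the regrets satisfy $R_{T_0,1} = p_{\max}\,R^{\text{transformed}}_{T_0,1}$.

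Applying Lemma~\ref{lem:3} to the transformed model yields
\[
R^{\text{transformed}}_{T_0,1} \le 2\sqrt{2dT_0\tilde\beta_{T_0}\log\!\left(\tfrac{d\lambda + T_0 a_{\max}^2}{d\lambda}\right)} + 2a_{\max}\tilde{C}_p T_0 + 2d,
\]
and substituting $a_{\max}=p_{\max}$, $\tilde{C}_p = 2L\|\hat\theta-\theta_0\|_1/p_{\max}$, and $\tilde\beta_{T_0} = \beta_{T_0}^*/p_{\max}^2$ (so that $\sqrt{\tilde\beta_{T_0}}$ contributes a factor $1/p_{\max}$ inside the root) gives, after multiplying through by $p_{\max}$, the three terms $2\sqrt{2dT_0\beta_{T_0}^*\log(\cdot)}$, $4p_{\max}L\|\hat\theta-\theta_0\|_1 T_0$, and $2dp_{\max}$, which is precisely the claimed bound. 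I would finally confirm that the form of $\tilde\beta_t$ demanded by Lemma~\ref{lem:3}, namely $1\vee(C_1\sqrt{\lambda d}+\sqrt{2\log(1/\delta)+d\log(\cdots)})^2$ with $C_1=1/p_{\max}$ and $a_{\max}=p_{\max}$, matches $\beta_t^*/p_{\max}^2$ under the definition of $\beta_t^*$ in the paper.

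The main obstacle I anticipate is the bookkeeping in the rescaling equivalence: ensuring that $\beta_t^*/p_{\max}^2$ equals exactly the $\tilde\beta_t$ prescribed by Lemma~\ref{lem:3} (checking the $1\vee(\cdot)$ and the $(\cdot)^2$ structure survive the $p_{\max}$-factoring intact), and that the high-probability event and confidence radius in the original and transformed problems coincide so that the $1-\delta$ guarantee transfers cleanly. The arm-selection invariance under simultaneous scaling of $\beta_t$, $\xi_t$, and $Z_t$ is intuitively clear but should be argued carefully, since it is what licenses treating the two runs of M-LinUCB as identical and hence the regrets as related by a single factor of $p_{\max}$.
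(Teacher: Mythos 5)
Your proposal is correct and follows essentially the same route as the paper's own proof: invoke Lemma \ref{lem:1} for the PLB formulation and the identification $R_{T_0,1}=R^{PLB}_{T_0}$, Lemma \ref{lem:2} for the equivalence of Algorithm \ref{alg:5} with $\beta_t=\beta_t^*$ and Algorithm \ref{alg:3}, then rescale everything by $1/p_{\max}$ to verify Conditions 1--4 with $C_1=1/p_{\max}$, $a_{\max}=p_{\max}$, $\tilde C_p = 2L\|\hat\theta-\theta_0\|_1/p_{\max}$, apply Lemma \ref{lem:3}, and multiply the resulting bound by $p_{\max}$. The only slight imprecision is in Condition 4, where the relevant fact is that the centered noise $\eta_t$ lies in $[-p_{\max},p_{\max}]$ (so $\tilde\eta_t$ is $1$-sub-Gaussian by the range-over-two argument), rather than the reward itself being bounded in $[0,p_{\max}]$; this does not affect the validity of the argument.
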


Proposition \ref{thm:1} provides an upper bound of the discrete-part regret on a single episode. The first term is sub-linear as $\tilde{O}(\sqrt{T_{0}})$ while the second term is linear in $T_{0}$ and proportional to the estimation error $||\hat{\theta}-\theta_{0}||_{1}$, which invokes the perturbation in our PLB formulation. The third term will be dominated by the first two terms as we further specify $d$ to yield a best tradeoff between discrete and continuous parts of the regret.

\noindent\textbf{Continuous-Part Regret}: We now discuss how to derive a bound for the continuous-part regret under Assumption \ref{ass:2}. By our discretization approach, $\{m_{i}+x_{t}^{\top}\hat{\theta} \}_{i\in[d]}$ are a sequence of points that ``cover" $[0,p_{\max}]$ with equal adjacent distance $\frac{p_{\max} + 2\|\hat{\theta}\|_1}{d}$. Since $\mathcal{S}_{t} = \{m_{j}+x_{t}^{\top}\hat{\theta}|j\in [d],m_{j}+x_{t}^{\top}\hat{\theta}\in(0,p_{\max})\}$ and $p_{t}^{*}\in(0,p_{\max})$, there must exist a $\dot{p}_{t}\in\mathcal{S}_{t}$ close enough with $p_{t}^{*}$ such that their expected reward difference is $O(\frac{1}{d^{2}})$ according to Assumption \ref{ass:2}. Since the discrete best price $\tilde{p}_{t}^{*}$ outperforms $\dot{p}_{t}$, the unit continuous-part regret $r_{t,2}$ of setting $\tilde{p}_{t}^{*}$ satisfies $r_{t,2} = O(\frac{1}{d^{2}})$. Thus the continous-part regret $R_{T_{0},2}$ in the entire horizon is of the order $O(\frac{T_{0}}{d^{2}})$.

\noindent\textbf{Combination}: We can prove that the right-hand side of the regret result in Proposition \ref{thm:1} has a simpler form of $\tilde{O}(d\sqrt{T_{0}}) + 4p_{\max}L||\hat{\theta}-\theta_{0}||_{1}T_{0}$. Thus the overall regret for the single episode is $\tilde{O}(d\sqrt{T_{0}}+\frac{T_{0}}{d^{2}}) + 4p_{\max}L||\hat{\theta}-\theta_{0}||_{1}T_{0}$. By setting the discretization number $d$ in the order of $T_{0}^{1/6}$, we obtain the single-episode regret bounds in Proposition \ref{thm:2}.

\section{Simulation Study}
\label{sec:4}

We demonstrate the performance of our DIP policy on synthetic datasets and compare it with RMLP and RMLP-2 proposed by \cite{javanmard2019dynamic}. The implementation details of DIP, RMLP and RMLP-2 are provided in Section B of the Supplement. 

Let $\Phi(\mu,\sigma^{2})$ denote the CDF of $N(\mu,\sigma^{2})$ distribution. For the first six examples, we consider a scalar covariate $x_{t}\overset{\text{i.i.d.}}{\sim} \text{Unif}[0,1]$ and set $\theta_{0} = 30$. The CDF $F$ of the noise distribution is designed as follows, where Examples 1 and 5 are motivated from the real application in Section \ref{sec:5}. Their probability density functions (PDFs) are shown in Figure \ref{fig:4}. 
\begin{description}
	\item[Example 1.] The true $F = \frac{1}{2}\Phi(-4,6) + \frac{1}{2}\Phi(4,6)$. 
	\item[Example 2.] The true $F = \frac{1}{3}\Phi(-6,\frac{\pi^{2}}{3}) + \frac{1}{3}\Phi(-1,\frac{\pi^{2}}{3})+\frac{1}{6}\Phi(1,\frac{\pi^{2}}{3}) + \frac{1}{6}\Phi(6,\frac{\pi^{2}}{3})$. 	
	\item[Example 3.] The true $F = \frac{1}{4}\Phi(-7,\frac{\pi^{2}}{3}) + \frac{1}{4}\Phi(-3,\frac{\pi^{2}}{3})+\frac{1}{4}\Phi(3,\frac{\pi^{2}}{3}) + \frac{1}{4}\Phi(7,\frac{\pi^{2}}{3})$.
	\item[Example 4.] The true $F(\cdot) = \tilde{F}(\cdot + \text{mean}(\tilde{F}))$ where $\tilde{F} = \frac{1}{3}\Phi(-3,\frac{\pi^{2}}{3}) + \frac{2}{3}\Phi(3,\frac{\pi^{2}}{3})$.
	\item[Example 5.] The true $F(\cdot) = \tilde{F}(\cdot + \text{mean}(\tilde{F}))$ where $\tilde{F} = \frac{1}{2}\Phi(-5,\frac{25\pi^{2}}{3}) + \frac{1}{2}\Phi(5,\frac{4\pi^{2}}{3})$. 
	\item[Example 6.] The true $F = \frac{1}{2}\Phi(-2.5,5) + \frac{1}{2}\Phi(2.5,5)$.
\end{description}
\begin{figure}[h!]
	\centering
	\vspace{-1em}
	\subfigure{
		\begin{minipage}[t]{0.32\linewidth}
			\centering
			\includegraphics[width=2in]{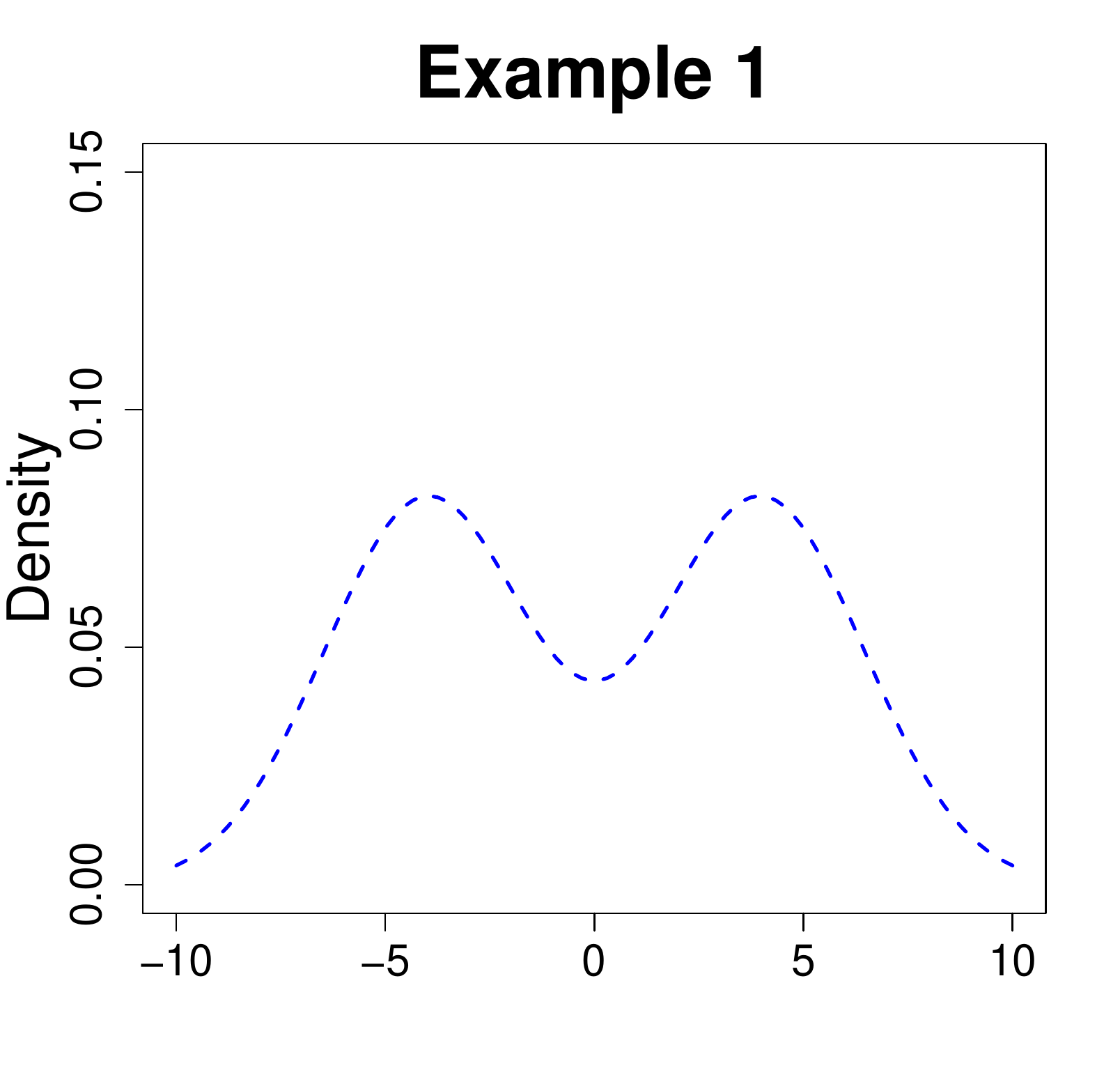}
			%\caption{fig1}
		\end{minipage}%
	}%
	\subfigure{
		\begin{minipage}[t]{0.32\linewidth}
			\centering
			\includegraphics[width=2in]{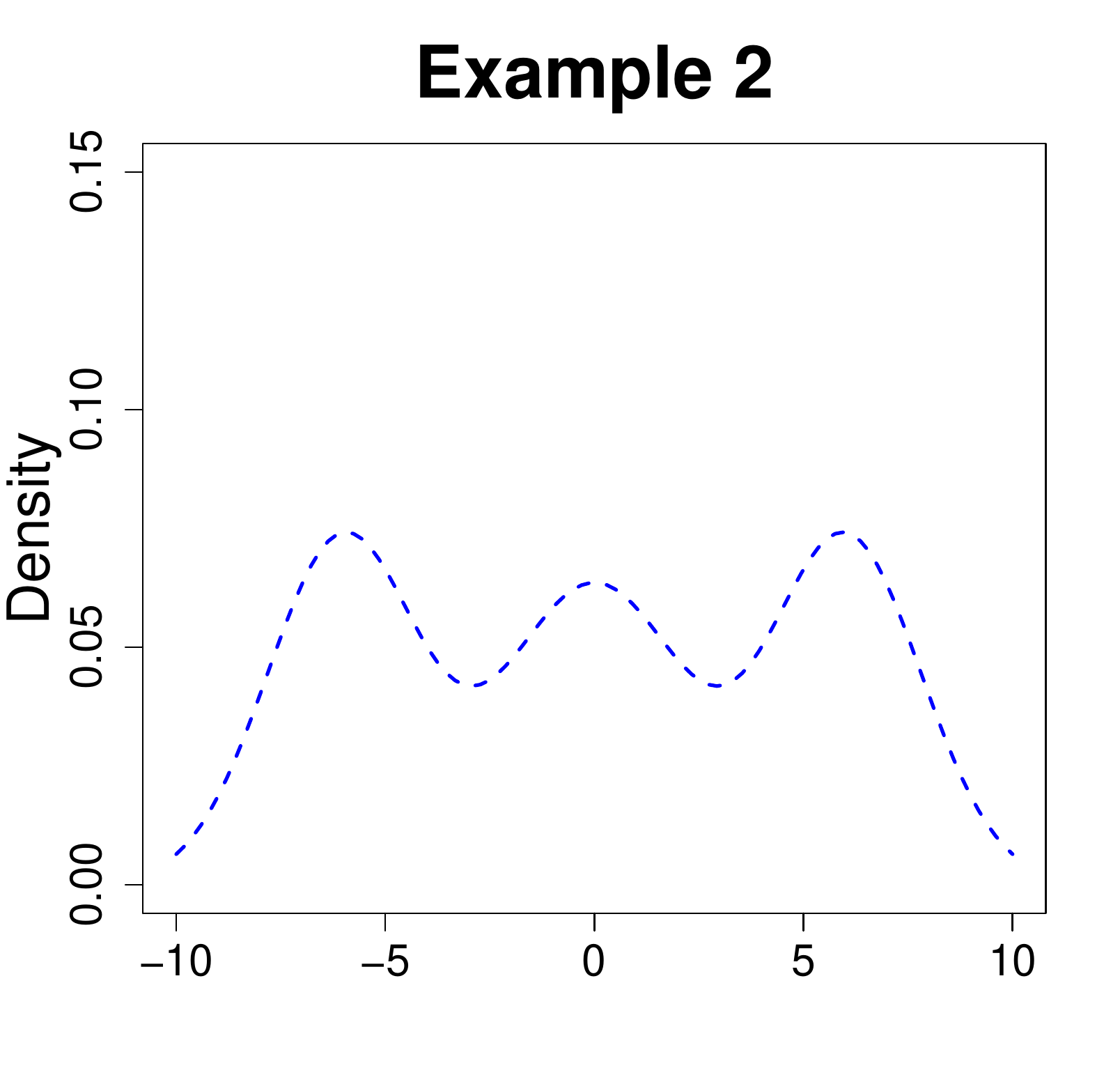}
			%\caption{fig2}
		\end{minipage}%
	}%
	\subfigure{
		\begin{minipage}[t]{0.32\linewidth}
			\centering
			\includegraphics[width=2in]{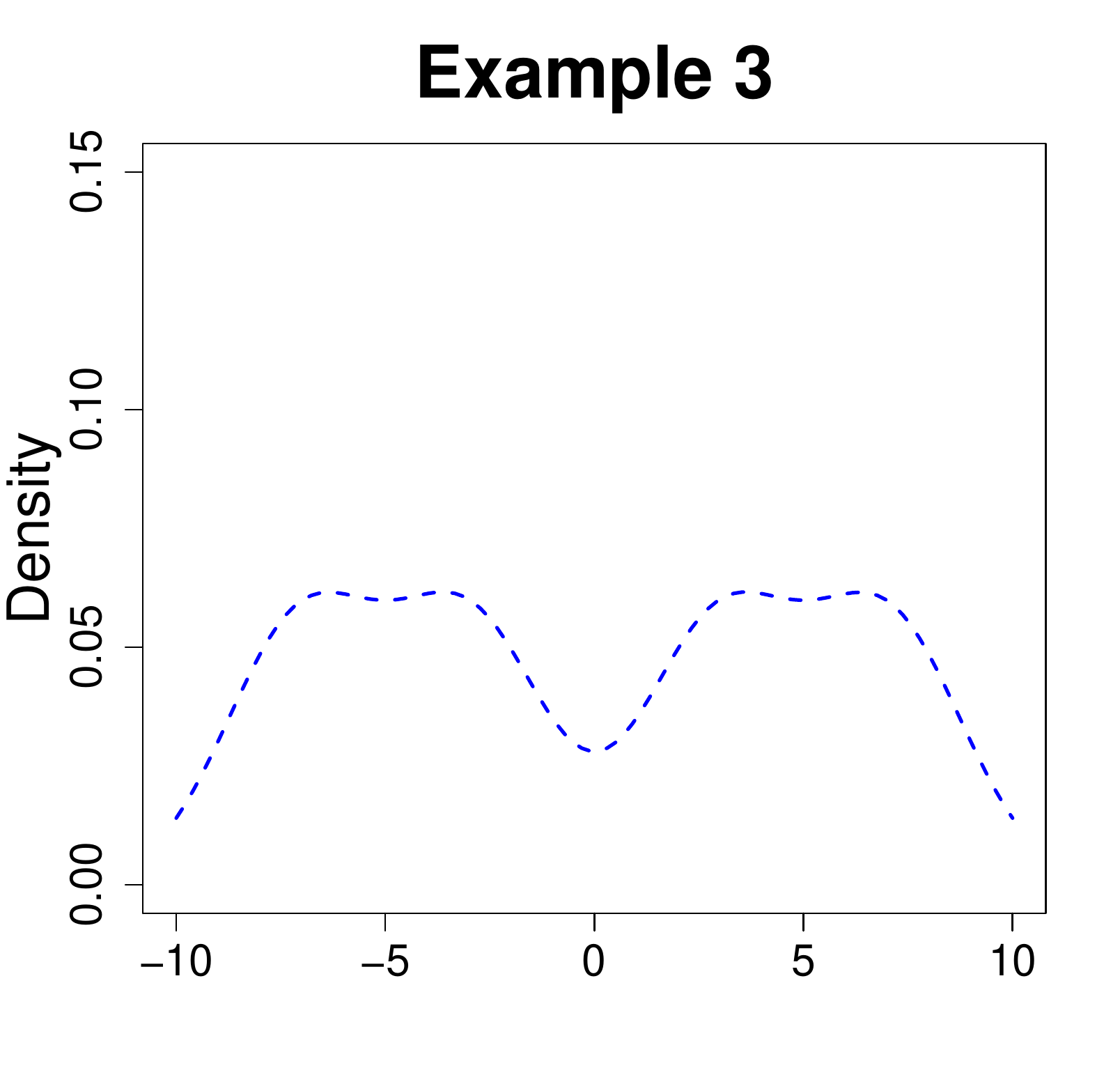}
			%\caption{fig2}
		\end{minipage}%
	}%
	\centering
	\label{fig:4a}
	%\medskip
	\centering
	\subfigure{
		\begin{minipage}[t]{0.32\linewidth}
			\centering
			\includegraphics[width=2in]{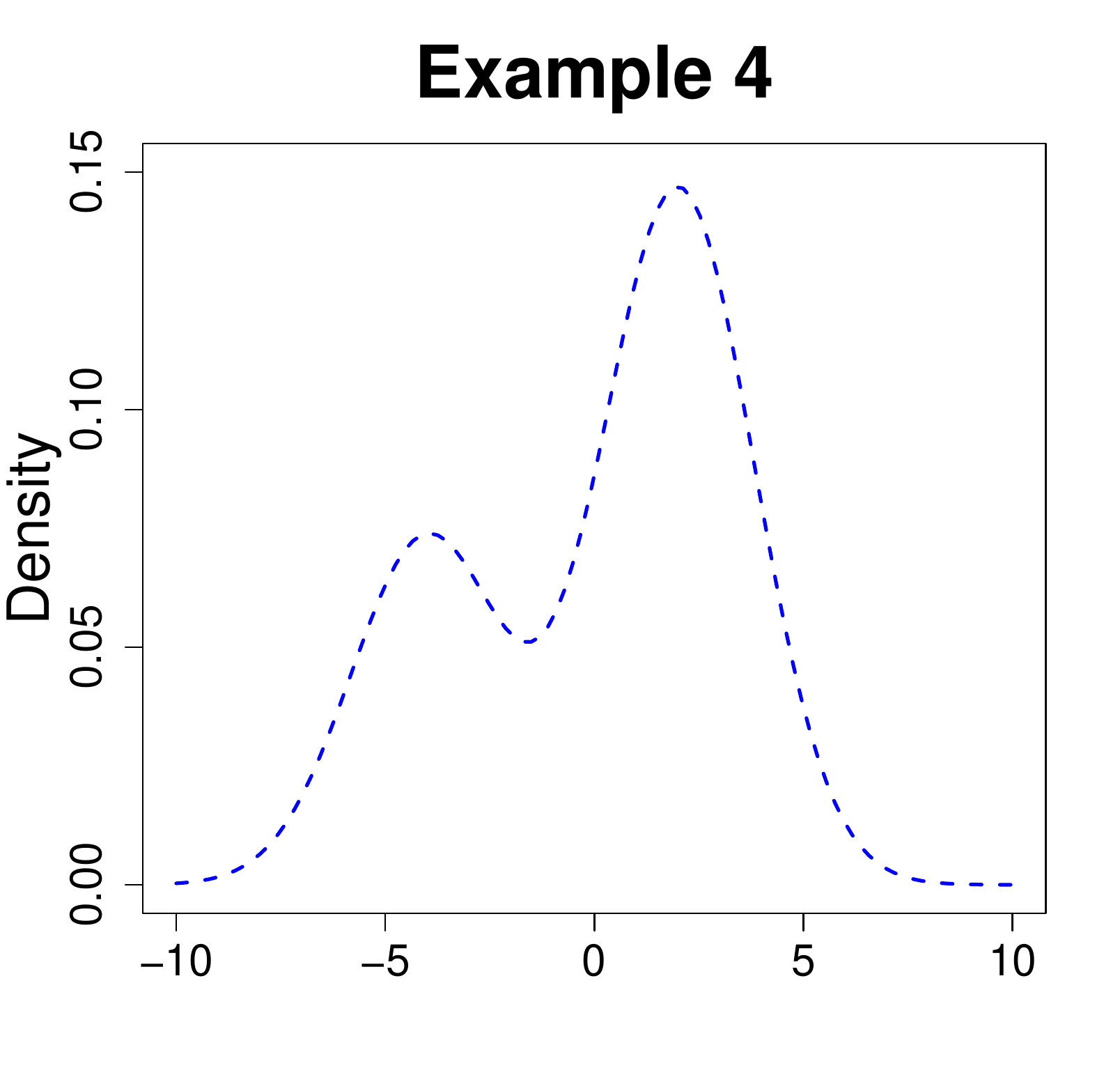}
			%\caption{fig1}
		\end{minipage}%
	}%
	\subfigure{
		\begin{minipage}[t]{0.32\linewidth}
			\centering
			\includegraphics[width=2in]{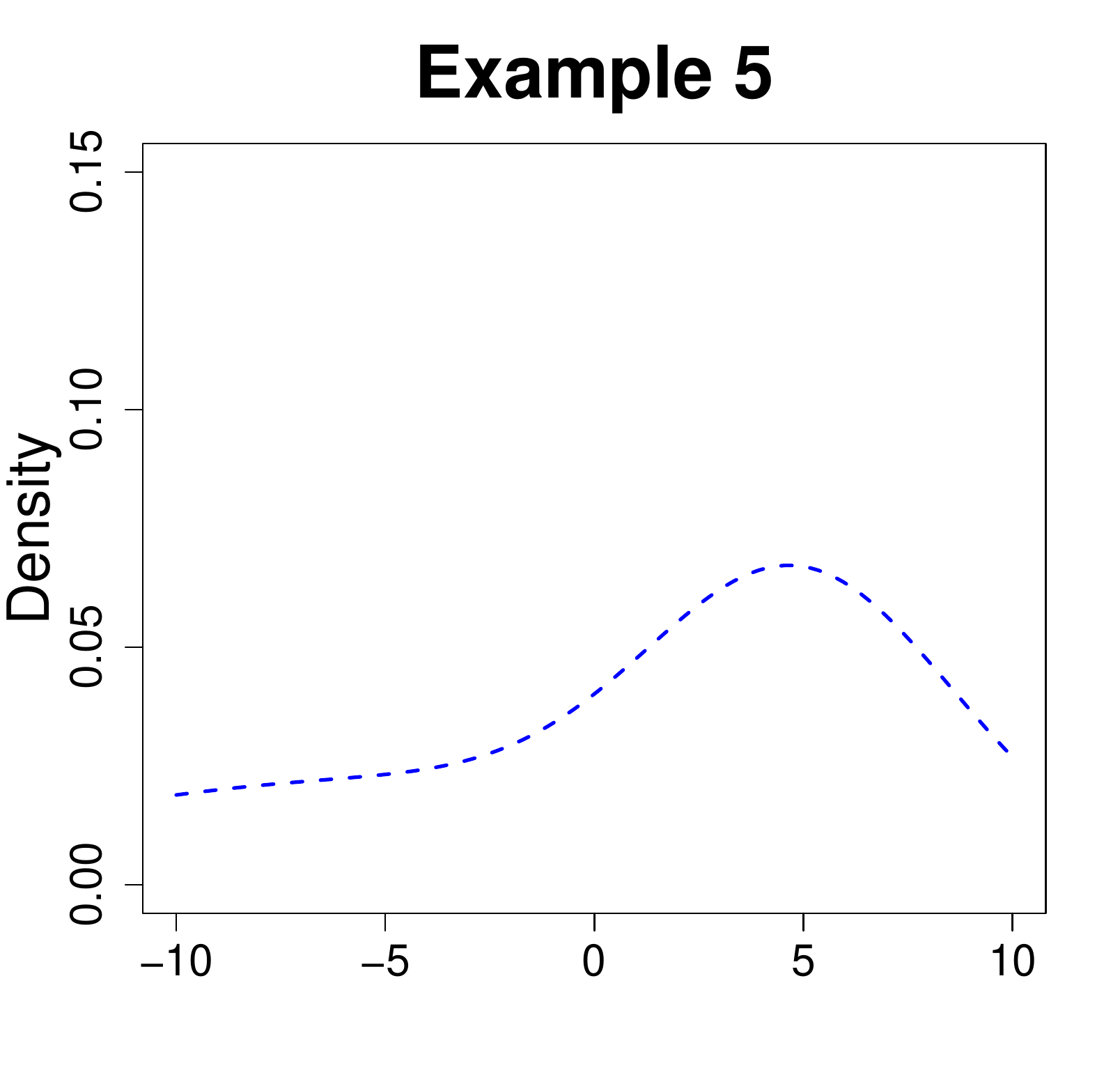}
			%\caption{fig2}
		\end{minipage}%
	}%
	\subfigure{
		\begin{minipage}[t]{0.32\linewidth}
			\centering
			\includegraphics[width=2in]{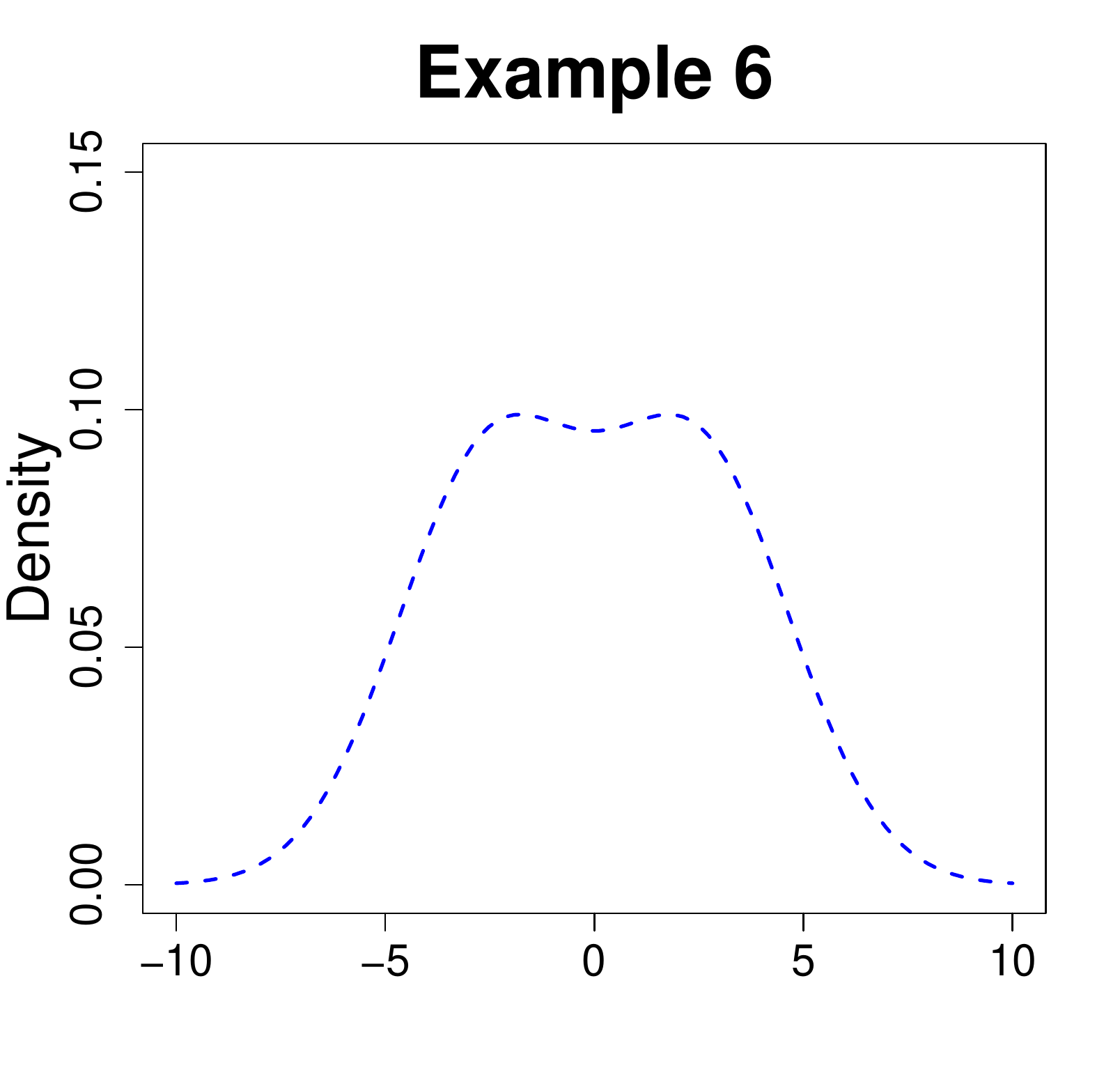}
			%\caption{fig2}
		\end{minipage}%
	}%
	\centering
	\vspace{-1em}
	\caption{PDFs of the noise distribution in Examples 1 -- 6.}
	\label{fig:4}
\end{figure}
As shown in Figure \ref{fig:4}, Examples 1 -- 3 have symmetric PDFs with two, three and four modes, respectively, while Examples 4 -- 5 have asymmetric PDFs with two modes and one single mode, respectively. Example 6 has two peaks but is close to a single-mode normal distribution. 

We compute the mean and confidence interval of cumulative regrets over 100 replications. As shown in Figure \ref{fig:5}, DIP outperforms both RMLP and RMLP-2 for Examples 1-5. In Example 6, RMLP and RMLP-2 perform better than DIP as the noise distribution $F$ is close to a normal distribution, which aligns with their model assumption. Due to the misspecification of $F$, the cumulative regrets for both RMLP and RMLP-2 exhibit clear linear patterns. The performance deterioration of RMLP and RMLP-2 becomes severe in Example 5, where the PDF is asymmetric and has heavy tails. On the other hand, our DIP policy gradually learns $F$ in the pricing process and achieves sub-linear cumulative regrets in all examples. These examples illustrate the severity of noise distribution misspecification of RMLP and RMLP-2 and hence the superior performance of the proposed robust pricing policy. 
\begin{figure}[h!]
	\centering
	\subfigure{
		\begin{minipage}[t]{0.32\linewidth}
			\centering
			\includegraphics[width=2in]{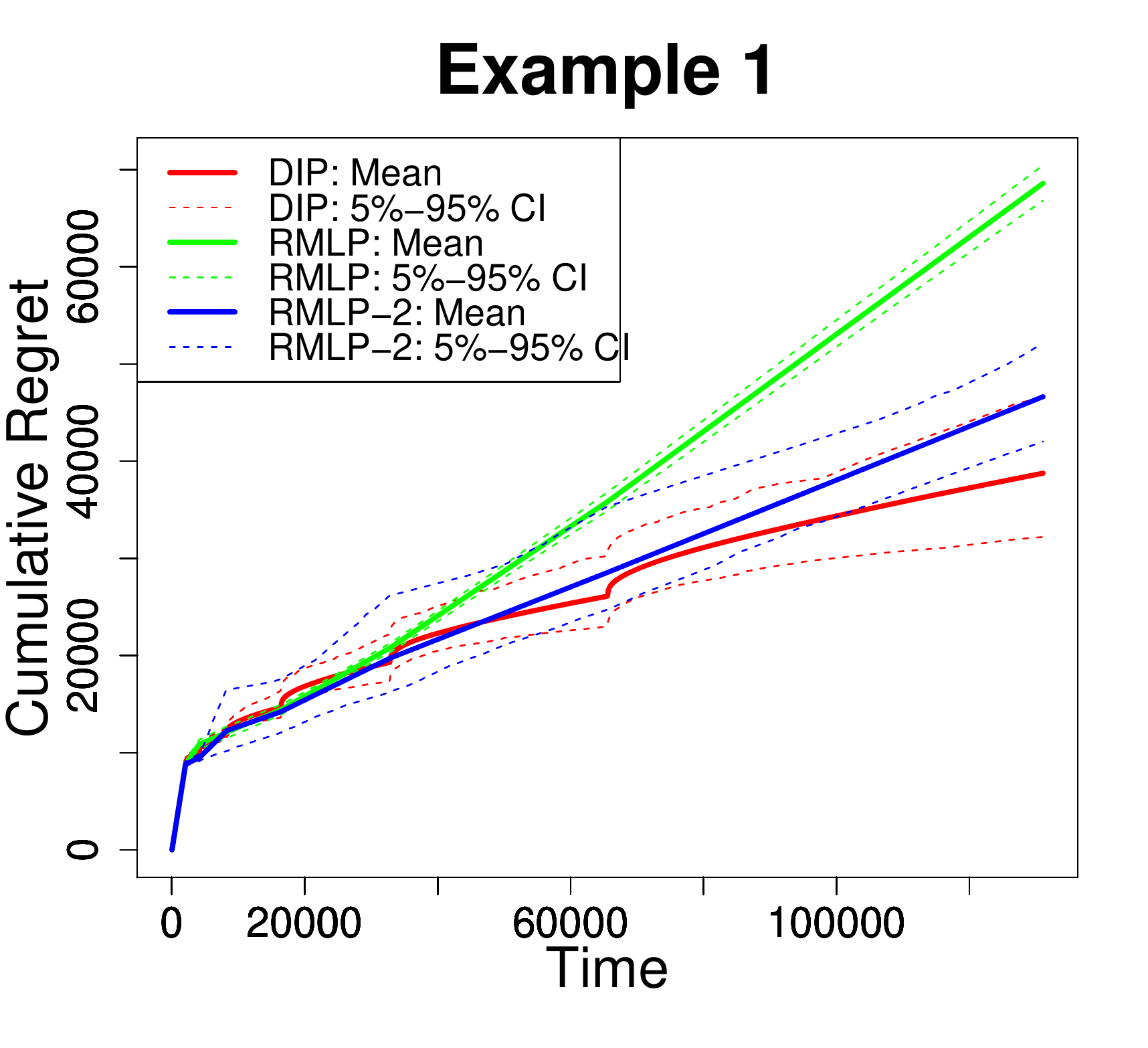}
			%\caption{fig1}
		\end{minipage}%
	}%
	\subfigure{
		\begin{minipage}[t]{0.32\linewidth}
			\centering
			\includegraphics[width=2in]{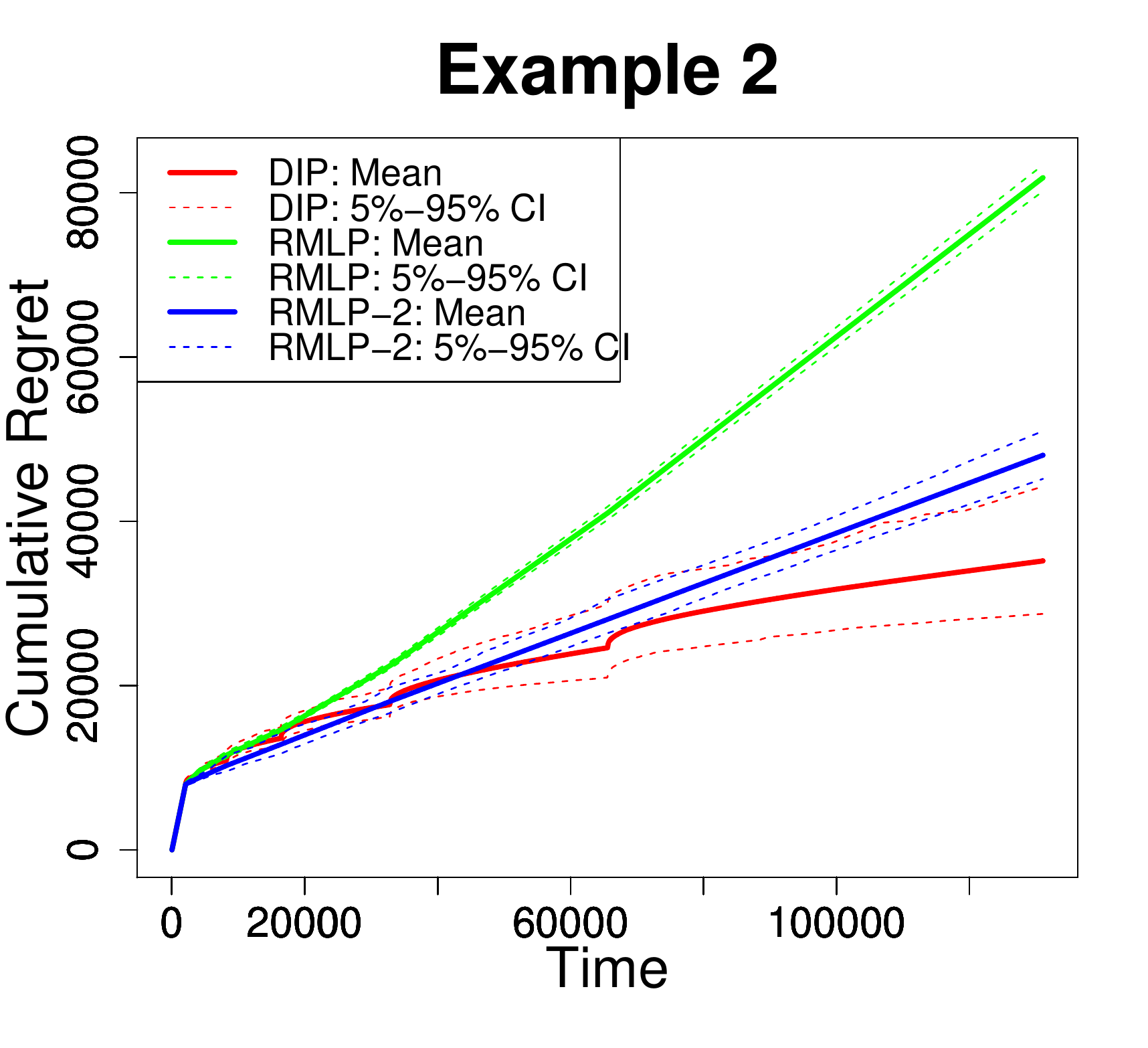}
			%\caption{fig2}
		\end{minipage}%
	}%
	\subfigure{
		\begin{minipage}[t]{0.32\linewidth}
			\centering
			\includegraphics[width=2in]{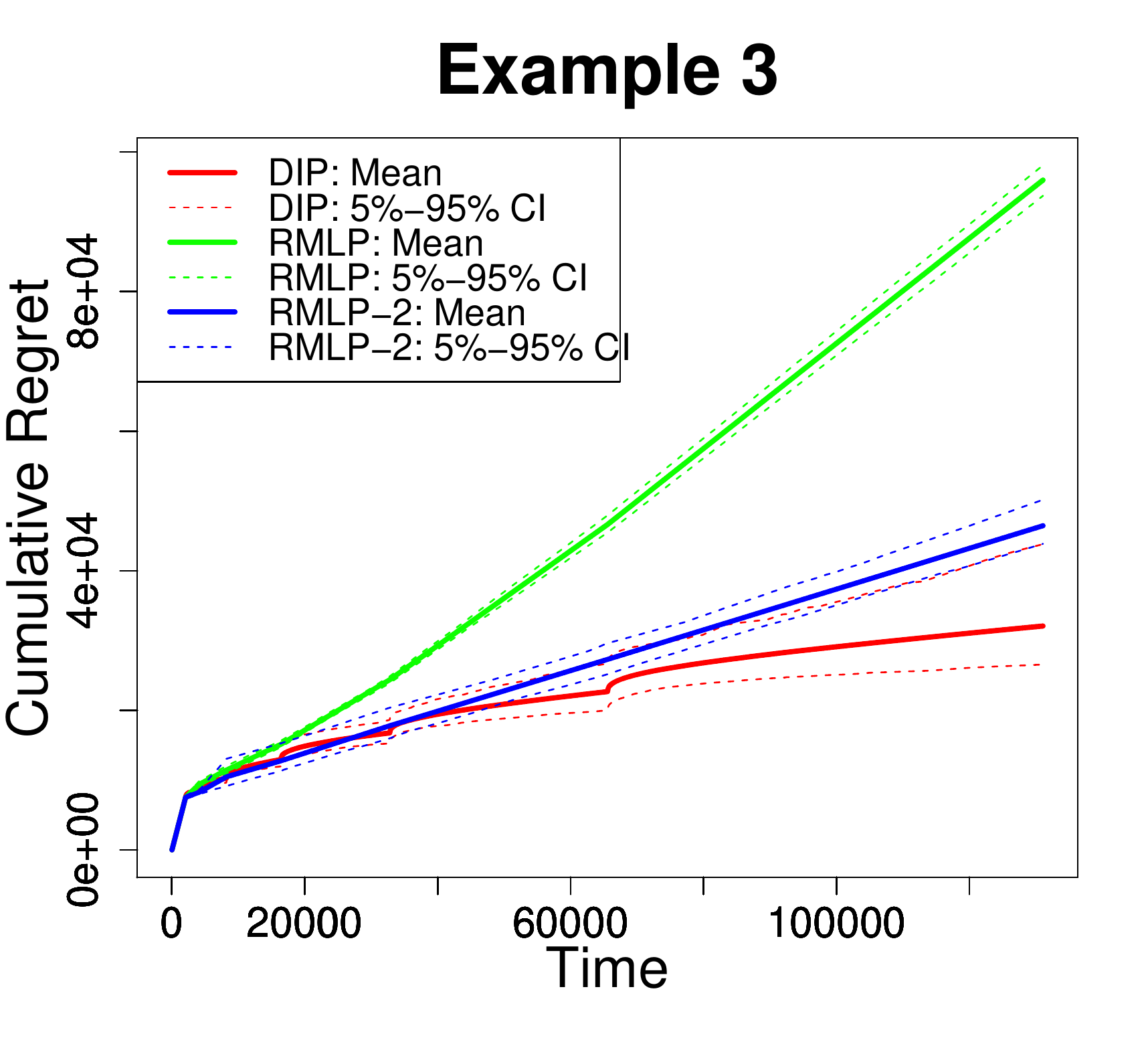}
			%\caption{fig2}
		\end{minipage}%
	}%
	\centering
	\label{fig:5a}
	\medskip
	\centering
	\subfigure{
		\begin{minipage}[t]{0.32\linewidth}
			\centering
			\includegraphics[width=2in]{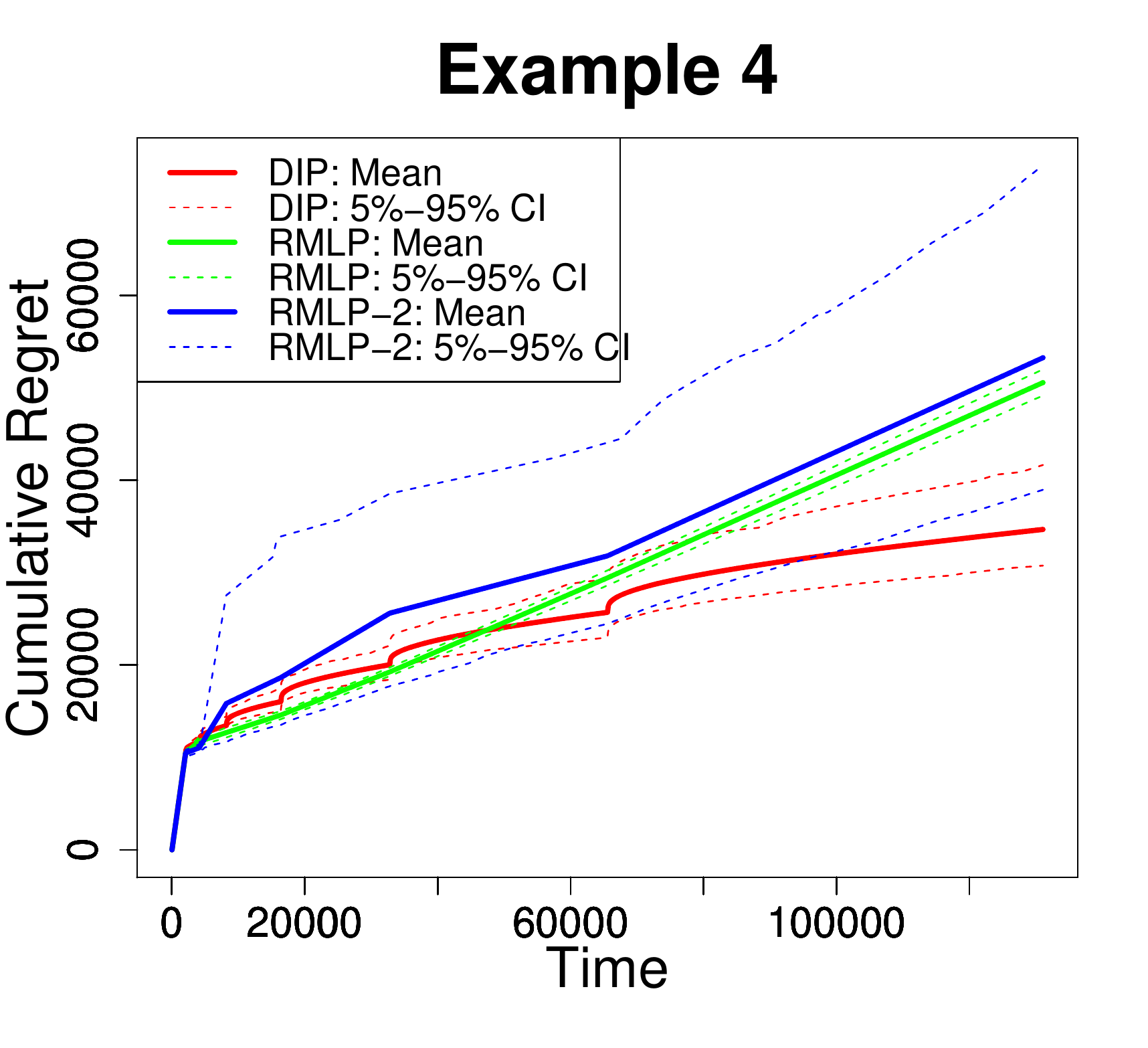}
			%\caption{fig1}
		\end{minipage}%
	}%
	\subfigure{
		\begin{minipage}[t]{0.32\linewidth}
			\centering
			\includegraphics[width=2in]{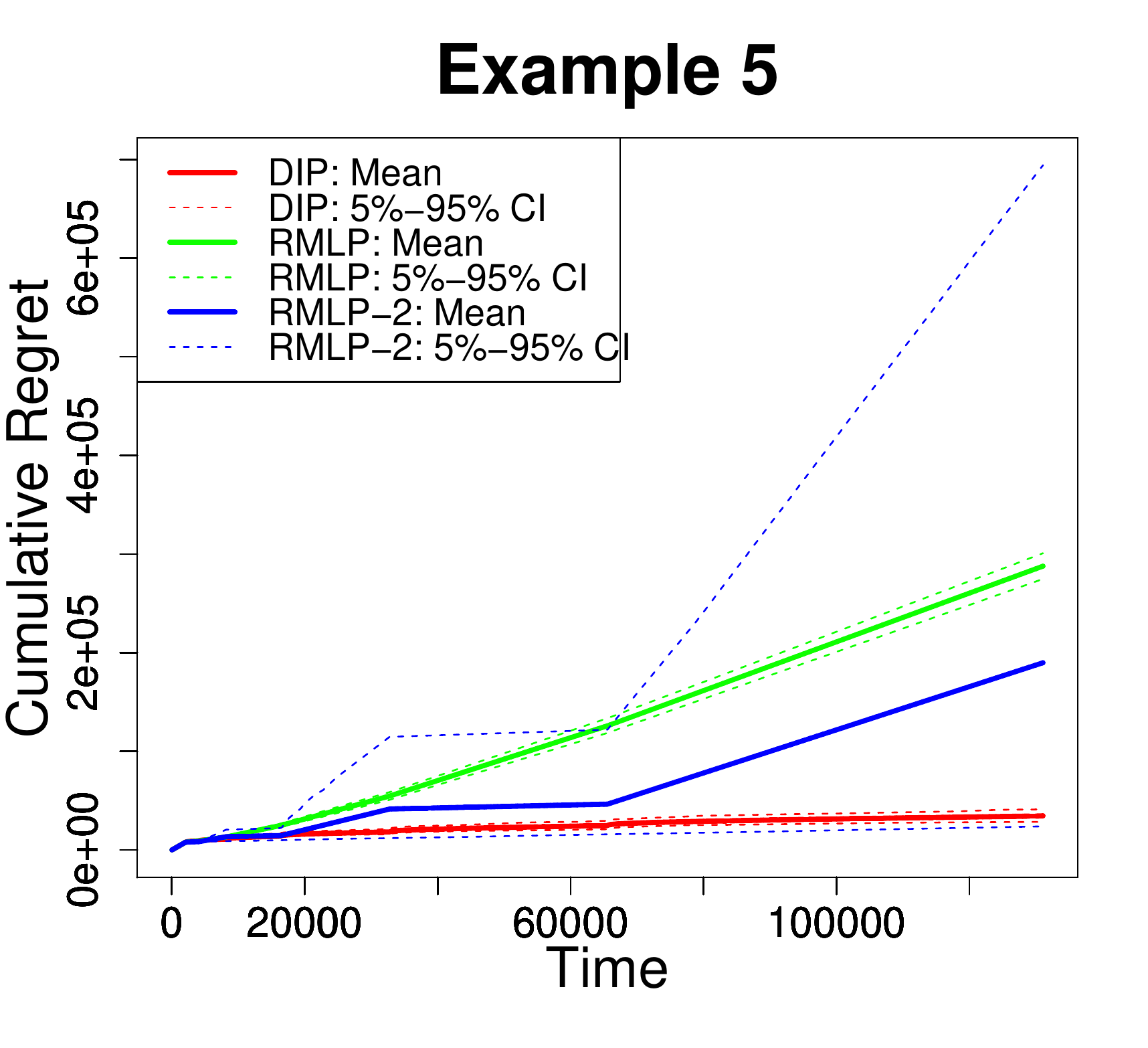}
			%\caption{fig2}
		\end{minipage}%
	}%
	\subfigure{
		\begin{minipage}[t]{0.32\linewidth}
			\centering
			\includegraphics[width=2in]{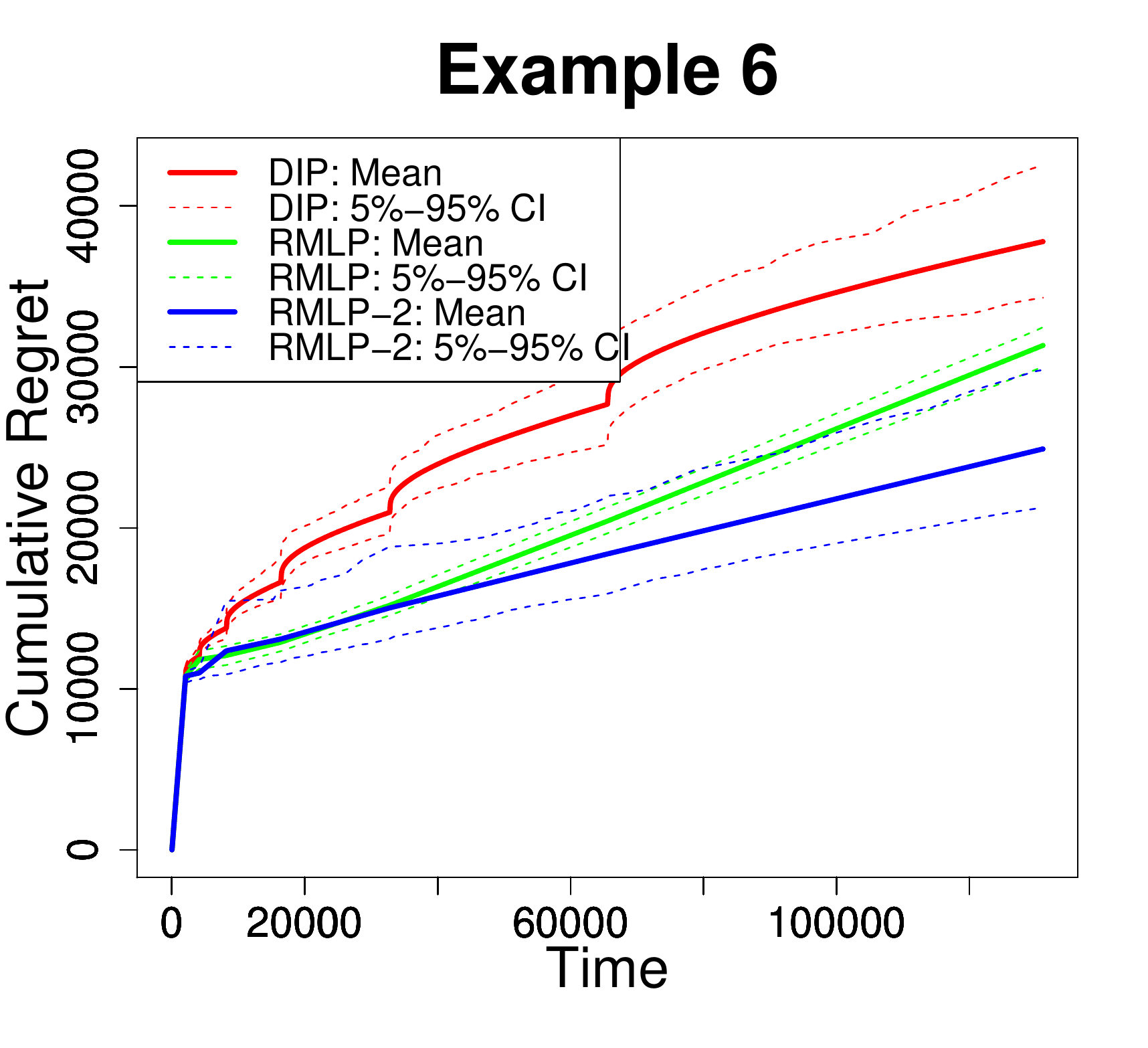}
			%\caption{fig2}
		\end{minipage}%
	}%
	\centering
	\caption{Regret comparisons of DIP, RMLP and RMLP-2 in Examples 1 -- 6.}
	\label{fig:5}
\end{figure}

Next we show that DIP can still outperform RMLP and RMLP-2 even when the noise distribution $F$ is standard Gaussian $\Phi(0,1)$, which satisfies the log-concave condition assumed by RMLP-2.  In the following Examples 7-9, we set $F= \Phi(0,1)$ and vary the context dimension $d_0$, the true $\theta_{0}$, and the context generation distribution.
\begin{description}
	\item[Example 7.] Dimension $d_0=3$, $\theta_{0} = (10,10,10)^{\top}, x_{t}\overset{\text{i.i.d.}}{\sim} \text{Unif}[0.3,1]^{3}$. 	
	\item[Example 8.] Dimension $d_0=10$, $\theta_{0} = (3,\dots,3)^{\top}, x_{t}\overset{\text{i.i.d.}}{\sim} \text{Unif}[0.1,1]^{10}$. 
	\item[Example 9.] Dimension $d_0=10$, $\theta_{0} = (3,\dots,3)^{\top}, x_{t}\overset{\text{i.i.d.}}{\sim} \text{Unif}[0,1]^{10}$. 
\end{description}

We show the log cumulative regrets of DIP, RMLP and RMLP-2 averaged over 100 replications in Figure \ref{fig:6}. We use log regret because the scale difference between the regrets of these three methods are large for Examples 7-9 mainly due to the unsatisfactory performance of RMLP. For all three methods, we estimate $\theta_{0}$ in each of six episodes and use it for pricing in subsequent episode. Figure \ref{fig:17} shows boxplots of estimation errors $\|\hat{\theta}_k - \theta_0\|_2$ for all six episodes $k=1,\ldots, 6$ and all three methods in Examples 7-9. In Examples 7-8, DIP outperforms RMLP-2 with more stable parameter estimations. In Example 9, the RMLP-2 is relatively stable and delivers better performance than DIP. Note that RMLP performs the worst since it specifies $F$ as $\frac{\exp(x)}{\exp(x)+1}$ with the variance $\frac{\pi^{2}}{3}$, which is quite different from that of the true $F$. Moreover, we find that RMLP-2 sometimes obtains poor estimates and thus incurs large regrets. For instance, as shown in the middle plot of Figure \ref{fig:17} representing Example 8, there is one replication in which RMLP-2 has an estimation error over $80$ in episode 2. Then in this replication, the regret of RMLP-2 in the subsequent episode 3 can be large due to this unsatisfactory estimate. We conjecture that the unstable estimations of RMLP-2 in Examples 7-8 are due to its produced singular price-covariate data in each episode. In Section C of the Supplement, we provide a more detailed discussion on this phenomenon. As a comparison, DIP well balances exploration and exploitation and sets dispersed prices at the beginning of each episode. This helps to generate a healthier data structure leading to more stable estimates.
\begin{figure}[h!]
	\centering
	\subfigure{
		\begin{minipage}[t]{0.32\linewidth}
			\centering
			\includegraphics[width=2in]{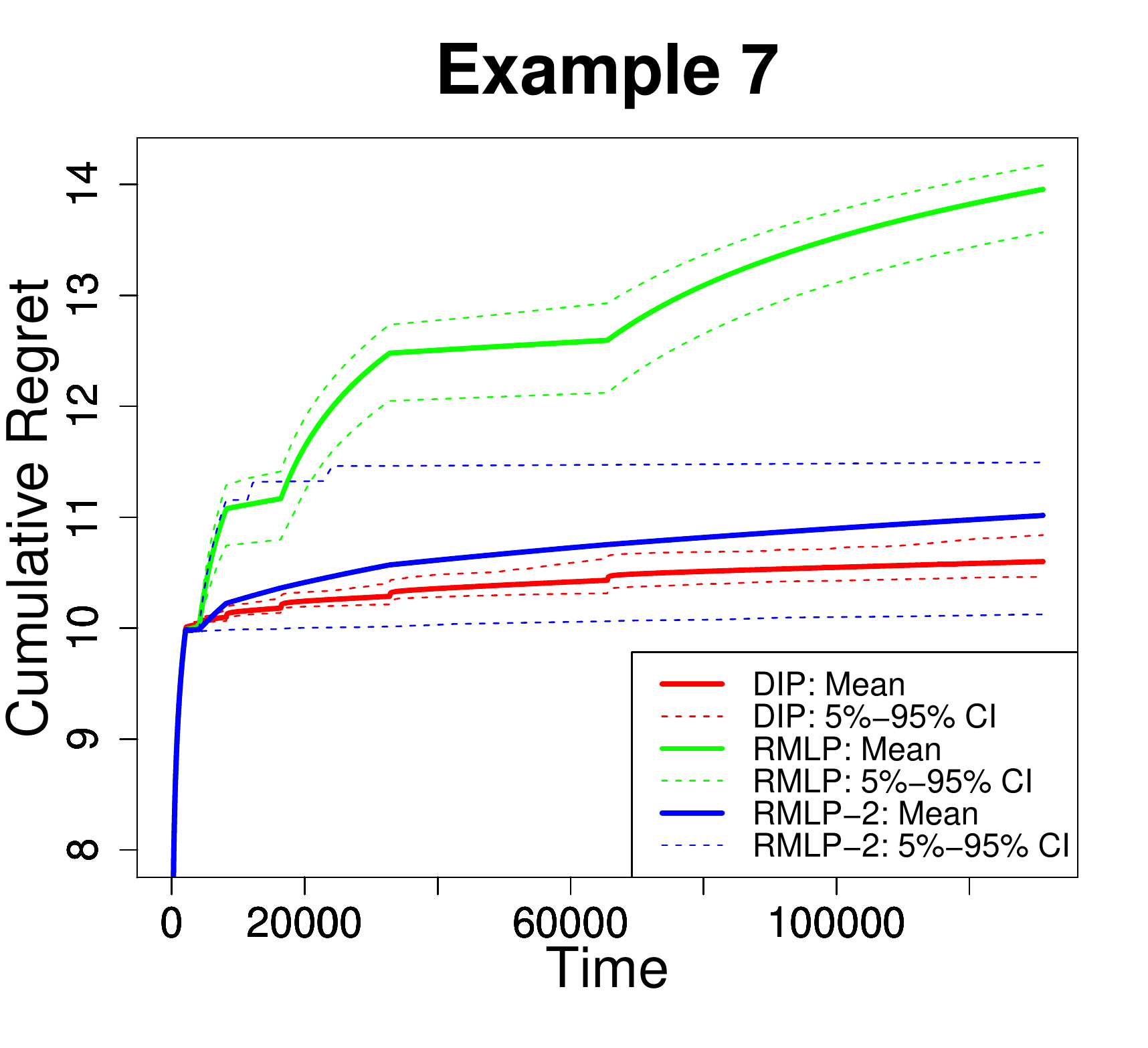}
			%\caption{fig1}
		\end{minipage}%
	}%
	\subfigure{
		\begin{minipage}[t]{0.32\linewidth}
			\centering
			\includegraphics[width=2in]{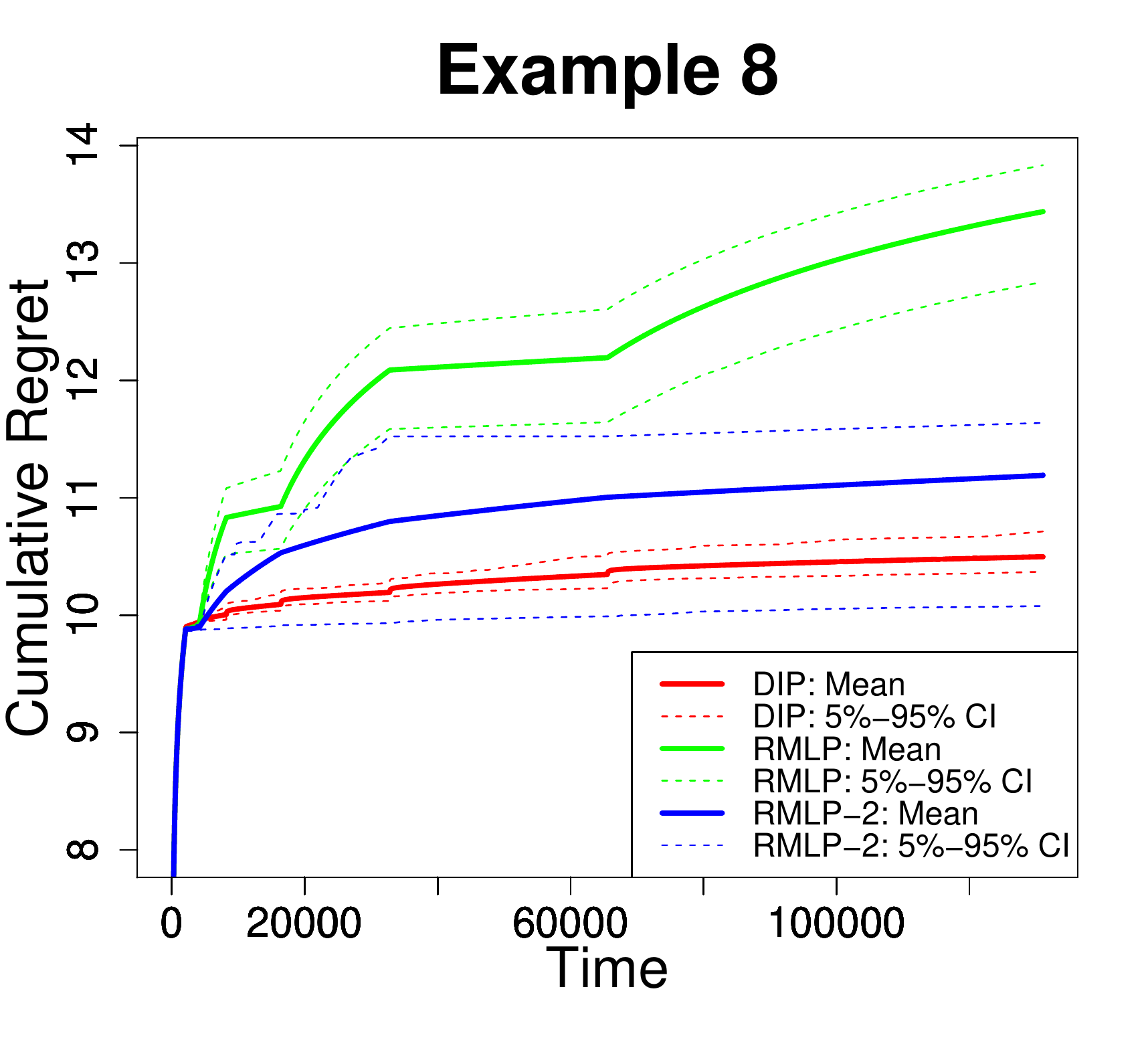}
			%\caption{fig2}
		\end{minipage}%
	}%
	\subfigure{
		\begin{minipage}[t]{0.32\linewidth}
			\centering
			\includegraphics[width=2in]{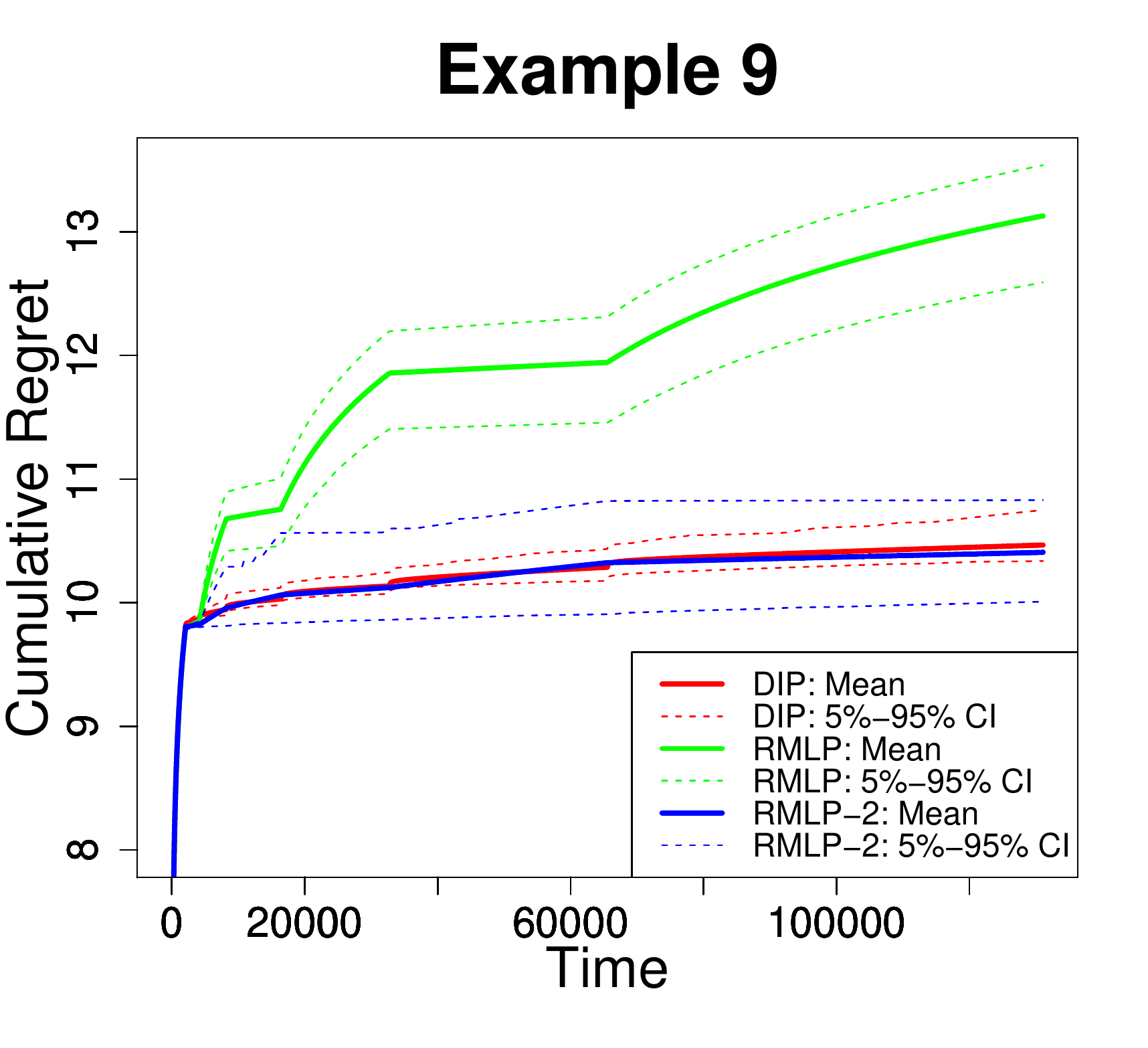}
			%\caption{fig1}
		\end{minipage}%
	}%
	\centering
	\caption{Log regret comparisons of DIP, RMLP and RMLP-2 in Examples 7 -- 9.}
	\label{fig:6}
\end{figure}

\begin{figure}[h!]
	\centering
	\subfigure{
		\begin{minipage}[t]{0.32\linewidth}
			\centering
			\includegraphics[width=2in]{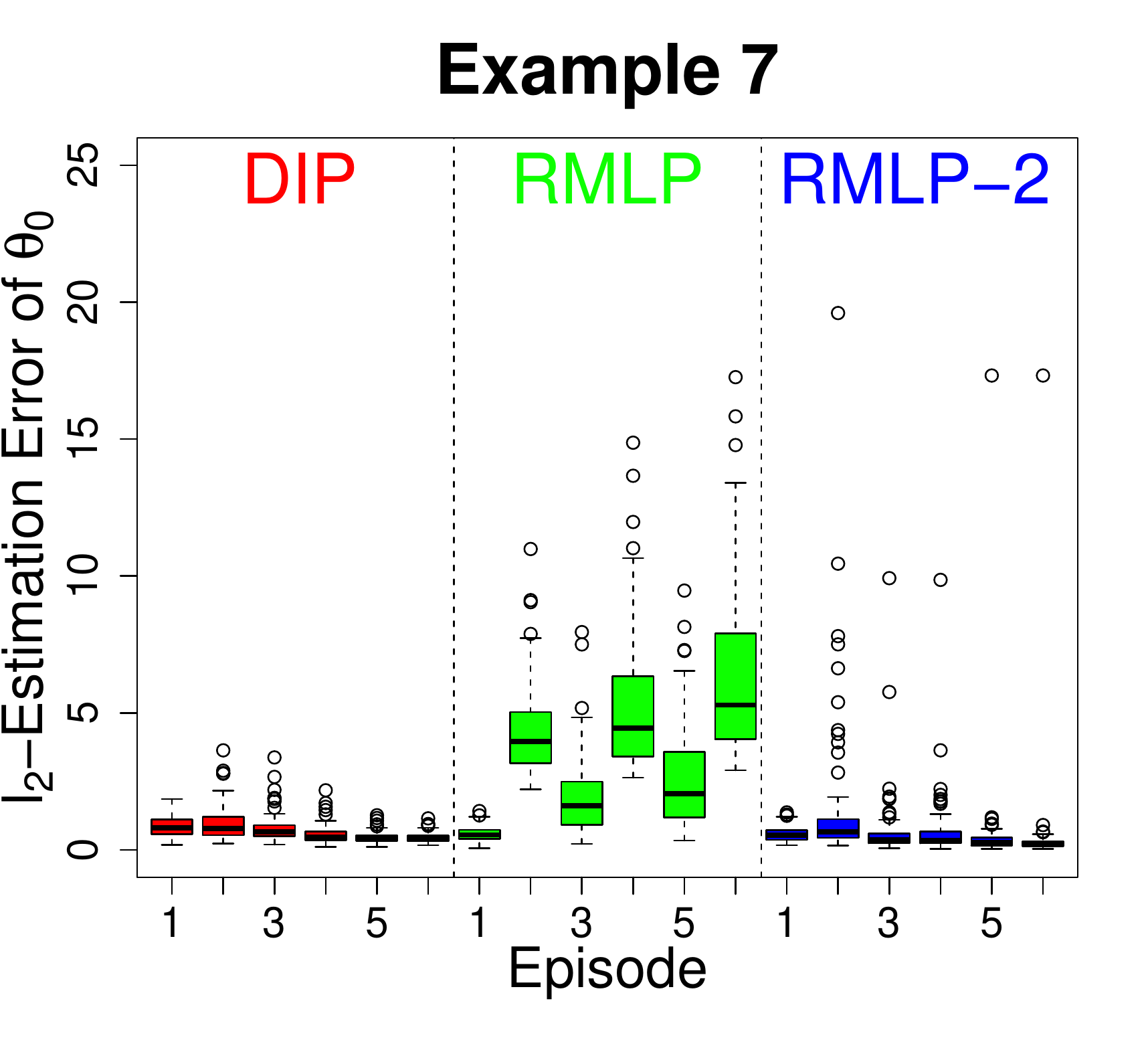}
			%\caption{fig1}
		\end{minipage}%
	}%
	\subfigure{
		\begin{minipage}[t]{0.32\linewidth}
			\centering
			\includegraphics[width=2in]{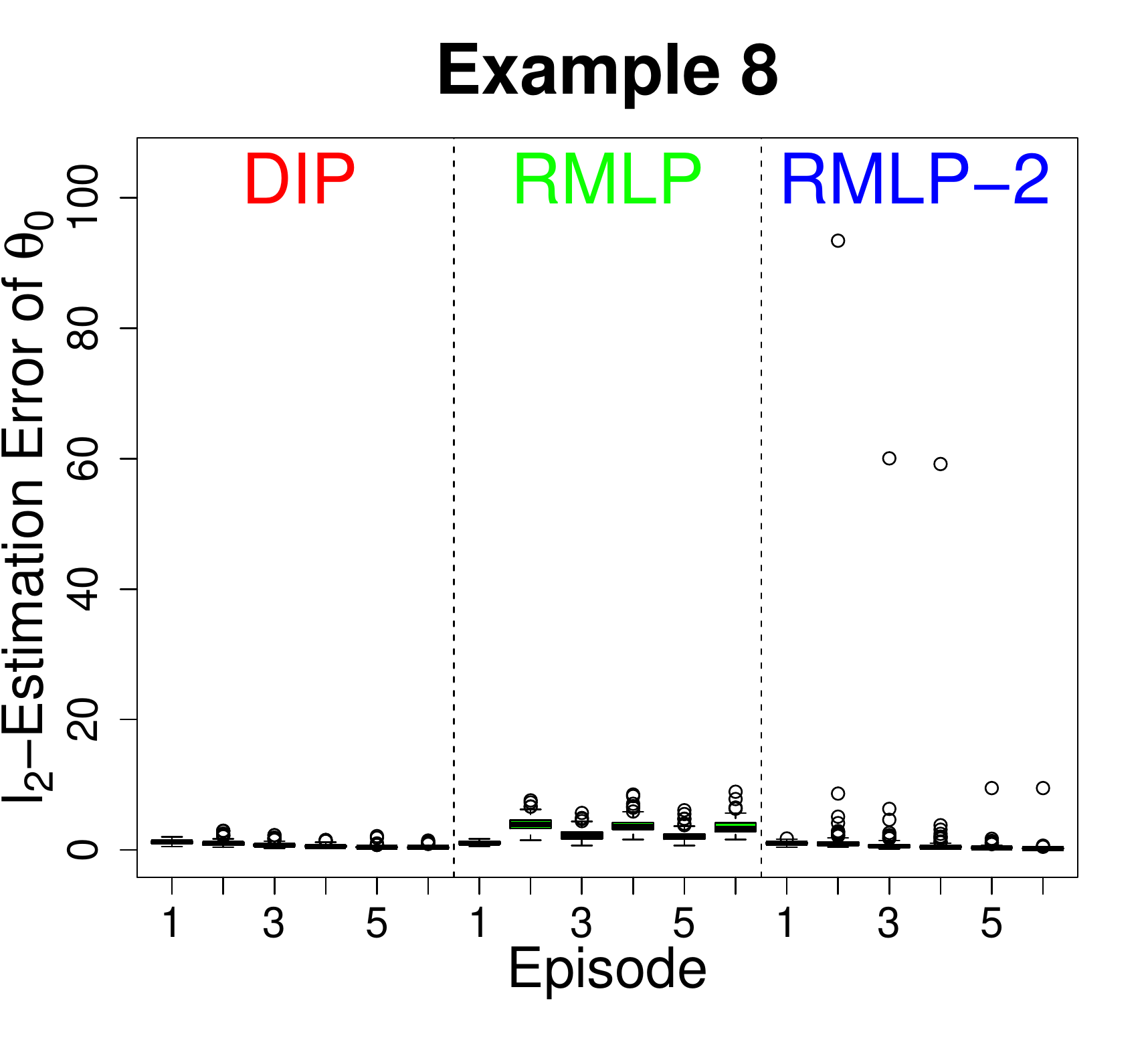}
			%\caption{fig2}
		\end{minipage}%
	}%
	\subfigure{
		\begin{minipage}[t]{0.32\linewidth}
			\centering
			\includegraphics[width=2in]{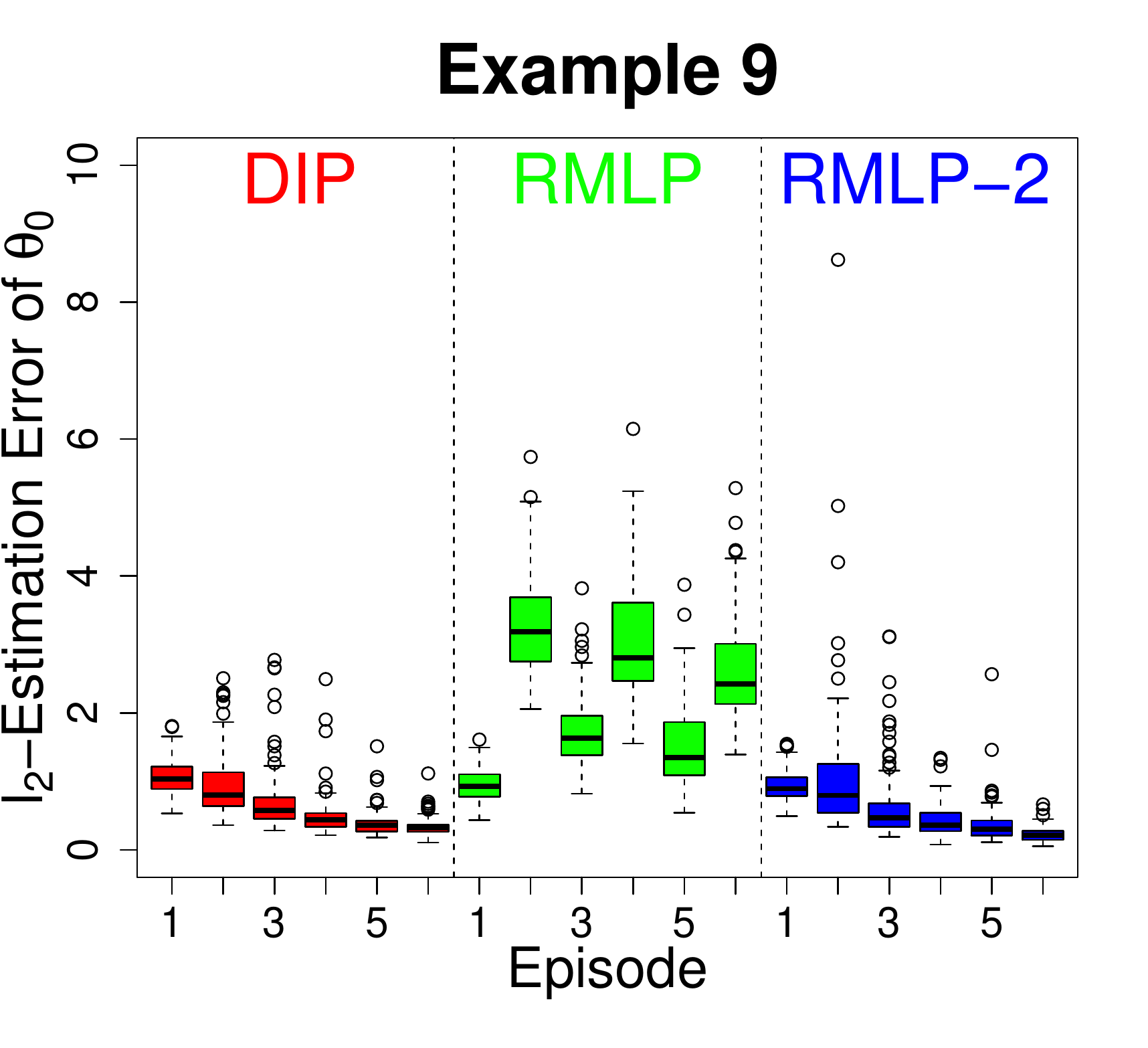}
			%\caption{fig1}
		\end{minipage}%
	}%
	\centering
	%\caption{\centering Log Estimation errors $\log\|\hat{\theta}_k - \theta_0\|_2$ of DIP, RMLP and RMLP-2 over six episodes in Examples 7-9.}
	\caption{Estimation errors $\|\hat{\theta}_k - \theta_0\|_2$ of DIP, RMLP and RMLP-2 over six episodes in Examples 7-9.}
	\label{fig:17}
\end{figure}

\subsection{$\ell_{1}$ Estimation Error Convergence}
\label{sec:sim_estimation_error}
Our proved regret upper bound in Theorem \ref{thm:3} involves a term related to $||\hat{\theta}_{k}-\theta_{0}||_{1}$. In our Examples 7 -- 9 of simulations, we have plotted the $\ell_{2}$ estimation errors $||\hat{\theta}_{k}-\theta_{0}||_{2}$. In this subsection, we plot in Figure \ref{fig:converge} the $\ell_{1}$ estimation errors calculated from each episode of Example 7 to investigate its convergence rates. The left panel of Figure \ref{fig:converge} shows a clear decaying trend starting from the second episode. In addition, in the right panel, we plot the $\text{log}_{2}$ average estimation errors over the $\text{log}_{2}$ number of data samples for episodes 2 -- 6. Through a linear fit, we extract a slope of $-0.354$, which implies that the real decaying rate is between $-1/2$ and $-1/3$ and hence $\alpha \in (-1/2,-1/3)$. Thus the $\tilde{O}(T^{2/3})$ overall regret bound in our theorem can be practically achieved.

\begin{figure}[h!]
	\centering
	\subfigure{
		\begin{minipage}[t]{0.4\linewidth}
			\centering
			\includegraphics[width=2.5in]{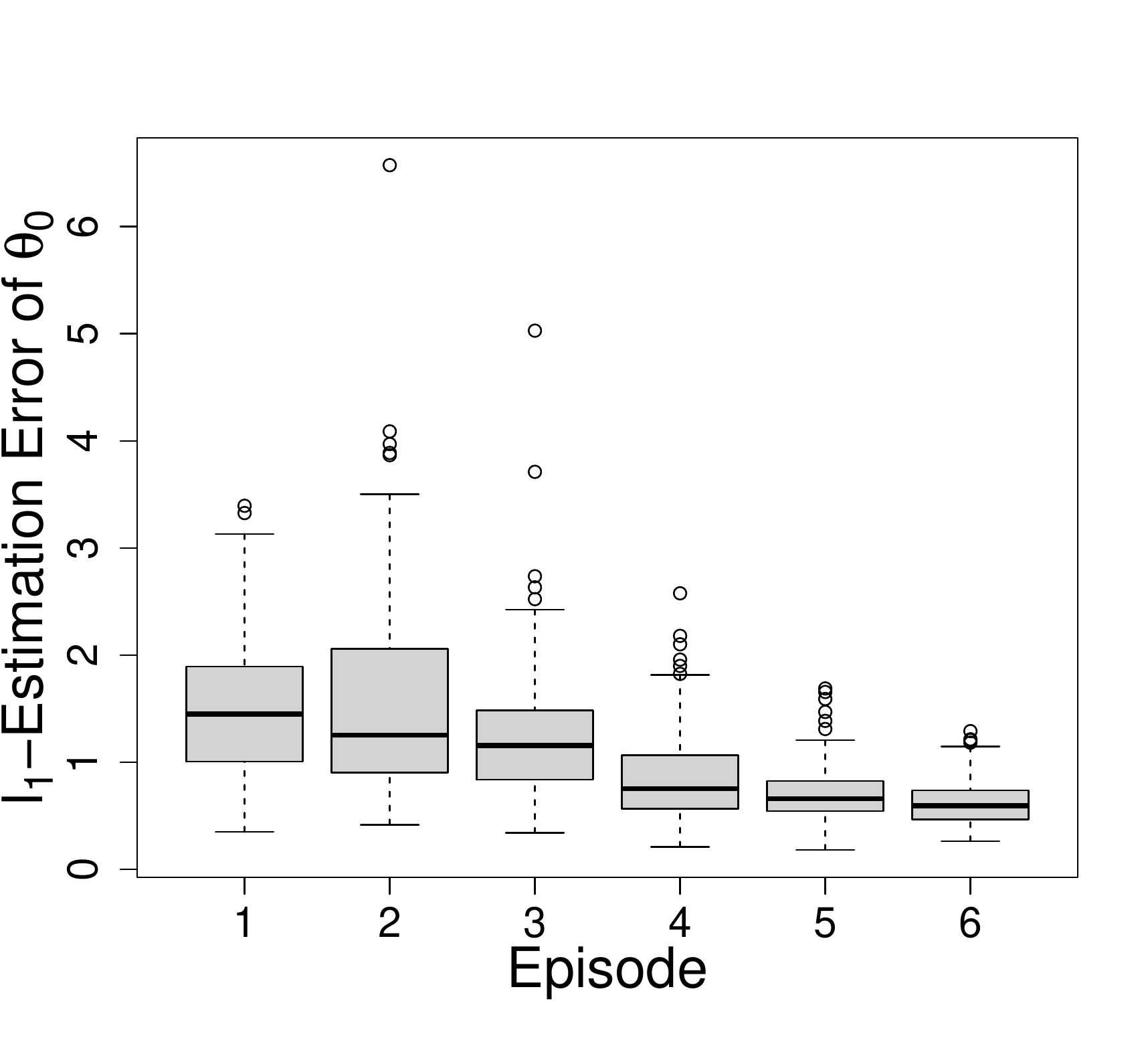}
			%\caption{fig1}
		\end{minipage}%
	}%
	\subfigure{
		\begin{minipage}[t]{0.4\linewidth}
			\centering
			\includegraphics[width=2.5in]{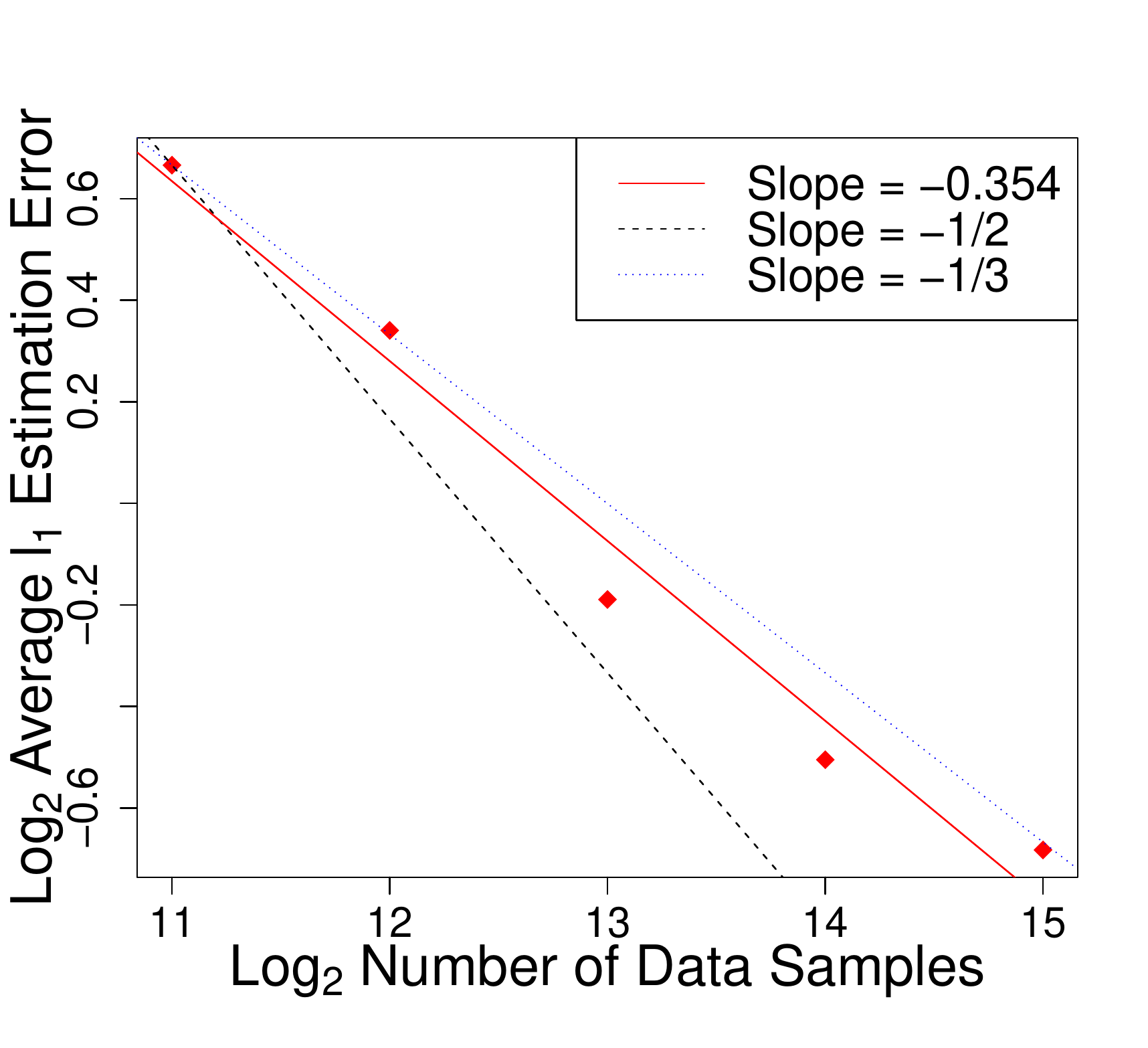}
			%\caption{fig2}
		\end{minipage}%
	}%
	\centering
	\caption{$\ell_{1}$ estimation error $||\hat{\theta}_{k}-\theta_{0}||_{1}$ of DIP in Example 7.}
	\label{fig:converge}
\end{figure}

\subsection{Heavy-tailed Noise Distributions}
Next we evaluate the performance of our DIP policy on heavy-tailed noise distributions. Let $\text{Cauchy}(\mu,\sigma^{2})$ denote the CDF of the Cauchy distribution with location parameter $\mu$ and scale parameter $\sigma$. We consider the three-dimensional covariates $x_{t}\overset{\text{i.i.d.}}{\sim} \text{Unif}[0.01,1]^{3}$ and set $\theta_{0} = (10,10,10)^{\top}$. The CDF $F$ of the noise distributions are designed as follows. 
\begin{description}
	\item[Example 10.] The true $F = \text{Cauchy}(0,1)$.
	\item[Example 11.] The true $F = \text{Cauchy}(0,3)$.
	\item[Example 12.] The true $F = \frac{1}{2}\text{Cauchy}(-5,6) + \frac{1}{2}\text{Cauchy}(5,6)$.
\end{description}

We repeat 100 times for each example and plot the average accumulative regret curves and their confidence bounds in Figure \ref{fig:10-12reg}. We can see that DIP outperforms both RMLP and RMLP-2 in all these three simulation settings. 
\begin{figure}[h!]
	\centering
	\subfigure{
		\begin{minipage}[t]{0.32\linewidth}
			\centering
			\includegraphics[width=2in]{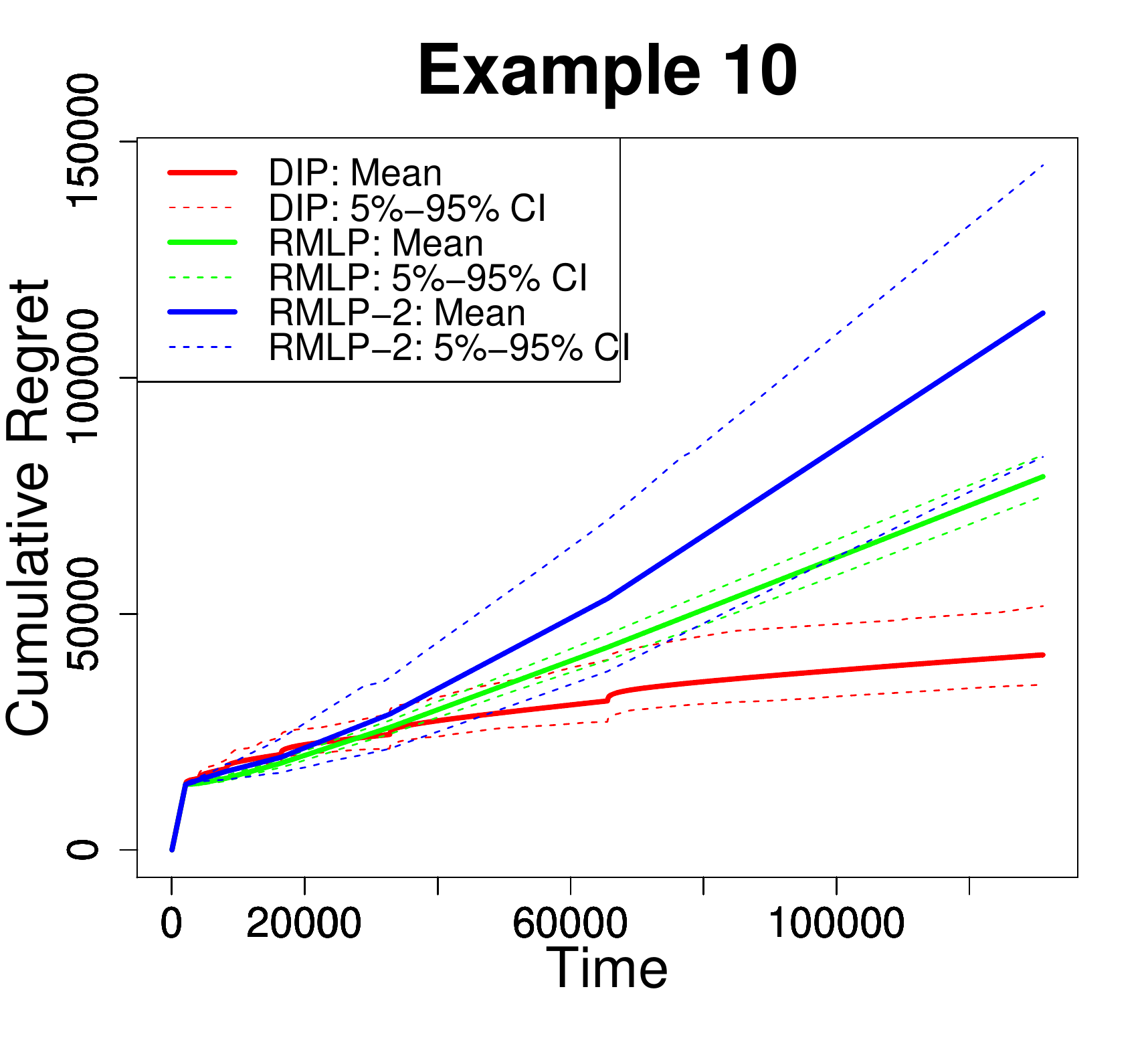}
			%\caption{fig1}
		\end{minipage}%
	}%
	\subfigure{
		\begin{minipage}[t]{0.32\linewidth}
			\centering
			\includegraphics[width=2in]{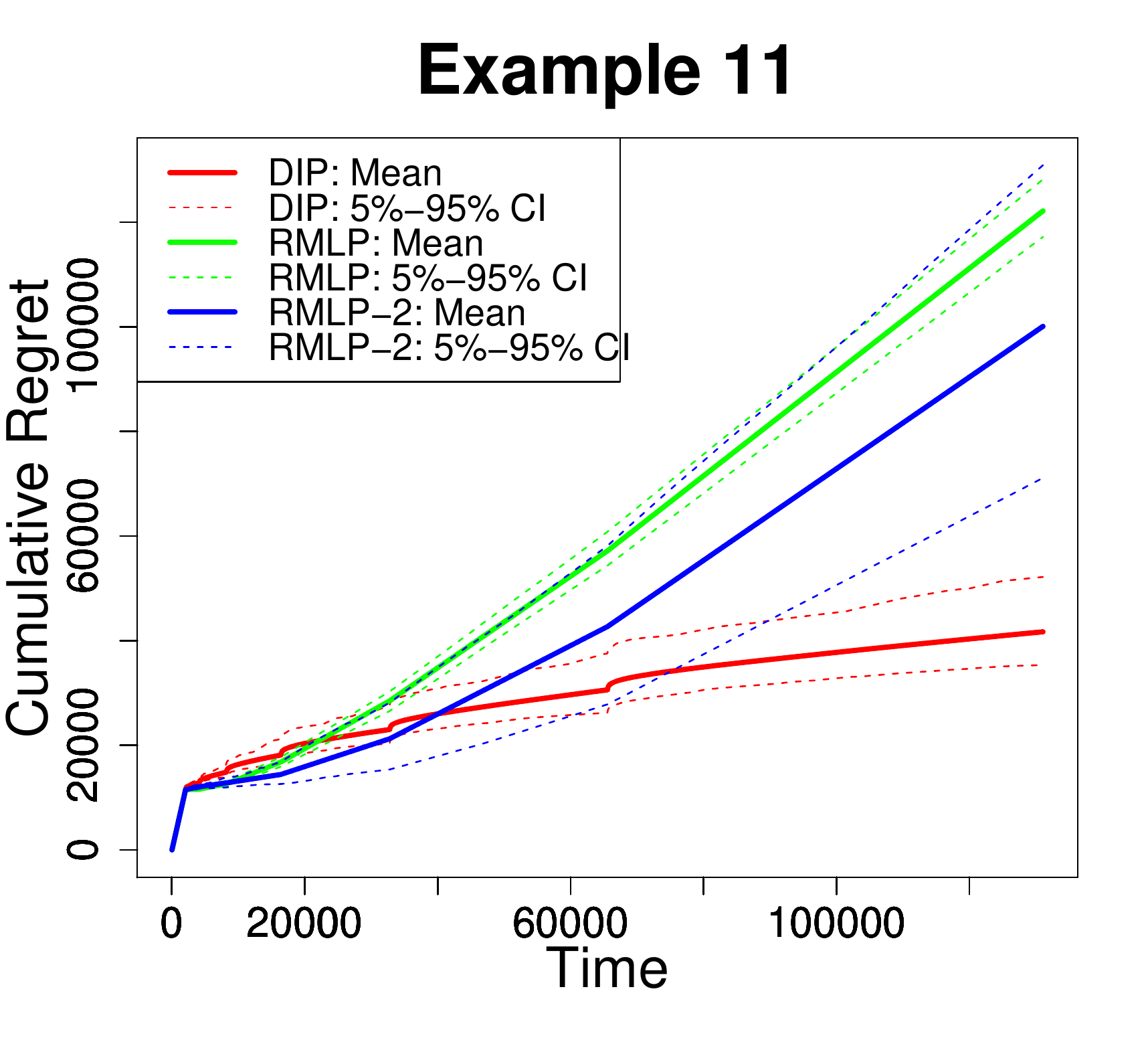}
			%\caption{fig2}
		\end{minipage}%
	}%
	\subfigure{
		\begin{minipage}[t]{0.32\linewidth}
			\centering
			\includegraphics[width=2in]{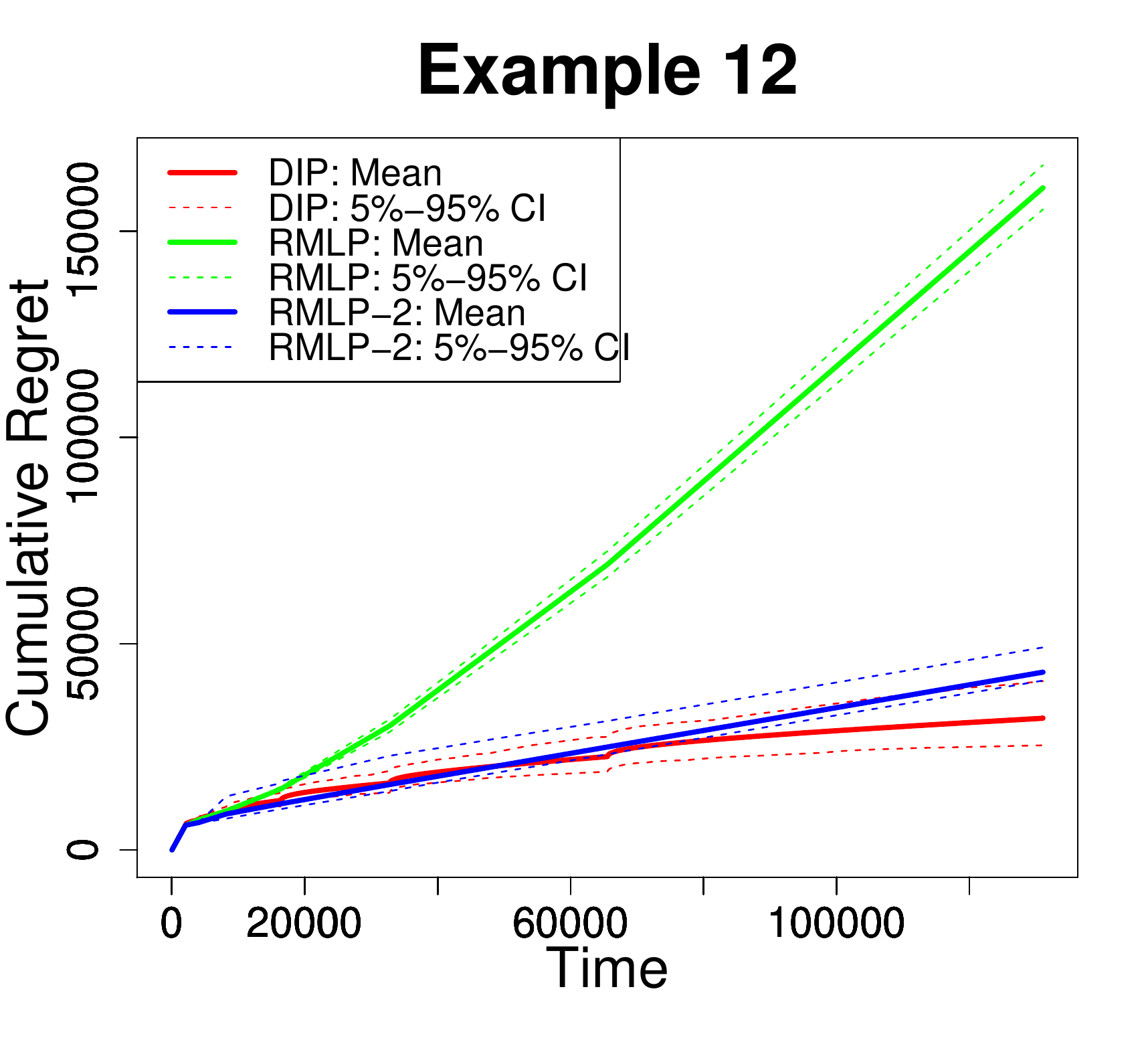}
			%\caption{fig1}
		\end{minipage}%
	}%
	\centering
	\caption{Regret comparisons of DIP, RMLP and RMLP-2 in Examples 10 -- 12.}
	\label{fig:10-12reg}
\end{figure}

\subsection{Sensitivity Tests}
Our DIP policy relies on three hyper-parameters $\lambda, p_{\max}$ and $C$. Here, $\lambda$ is the regularization parameter of the regression estimation procedure, $p_{\max}$ is an upper bound of any potential optimal prices, and $C$ is the constant in the discretization number $d = C\lceil T_{0}^{1/6}\rceil$. In all our simulation settings, we set $\lambda = 0.1, p_{\max} = 30$ and $C = 20$. In this section, we conduct sensitivity tests to see how their values affect the overall performance of DIP. Here we use the simulation setting in Example 1 as an illustration. In Figure \ref{fig:sensitive}, we include two other values for each of the three parameters, i.e., $\lambda = 0.01,1$, $p_{\max} = 25,35$, $C = 15,25$. The results demonstrate that DIP is relatively robust to the values of these three parameters.

\begin{figure}[h!]
	\centering
	\subfigure{
		\begin{minipage}[t]{0.32\linewidth}
			\centering
			\includegraphics[width=2in]{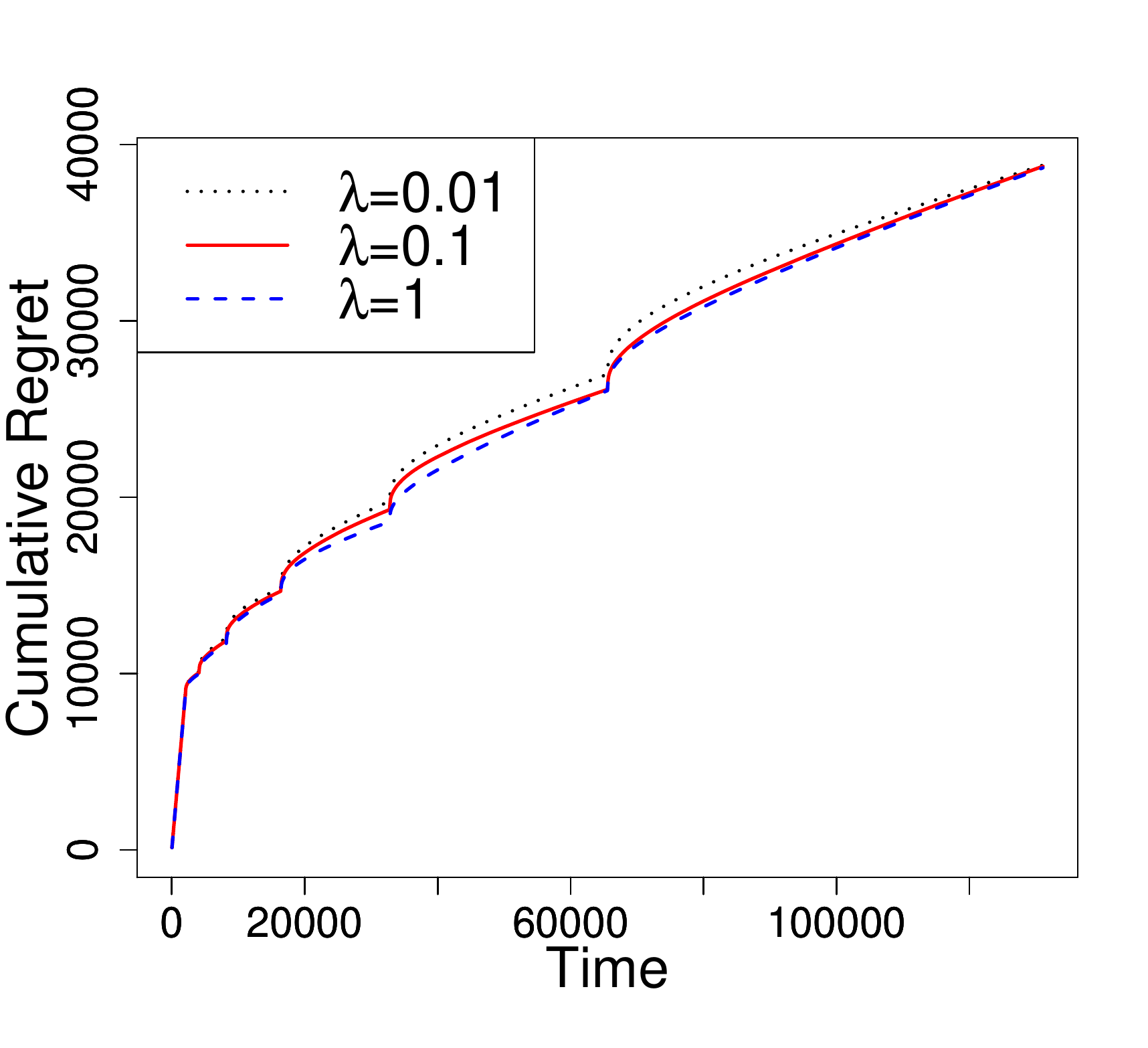}
			%\caption{fig1}
		\end{minipage}%
	}%
	\subfigure{
		\begin{minipage}[t]{0.32\linewidth}
			\centering
			\includegraphics[width=2in]{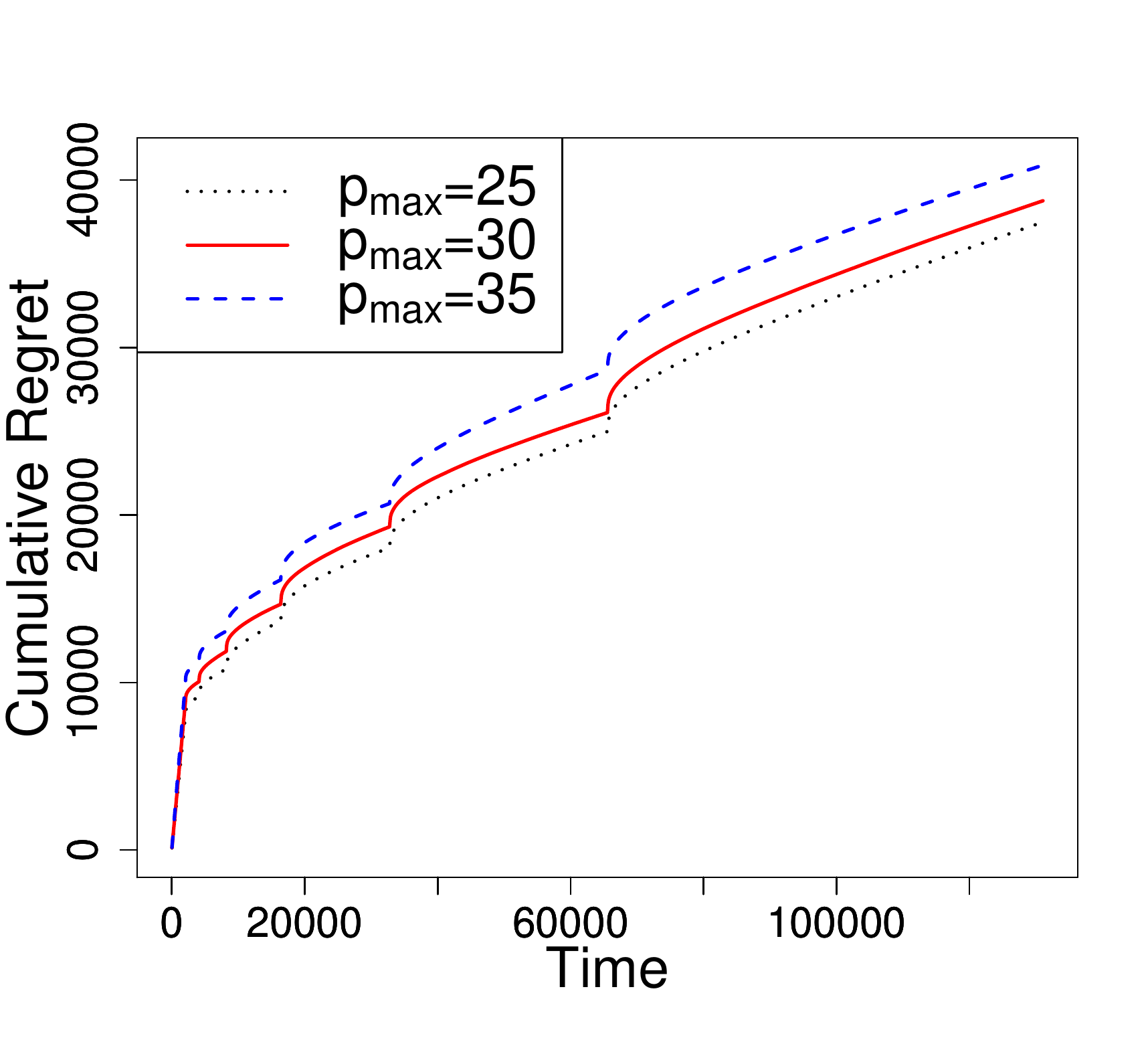}
			%\caption{fig2}
		\end{minipage}%
	}%
	\subfigure{
		\begin{minipage}[t]{0.32\linewidth}
			\centering
			\includegraphics[width=2in]{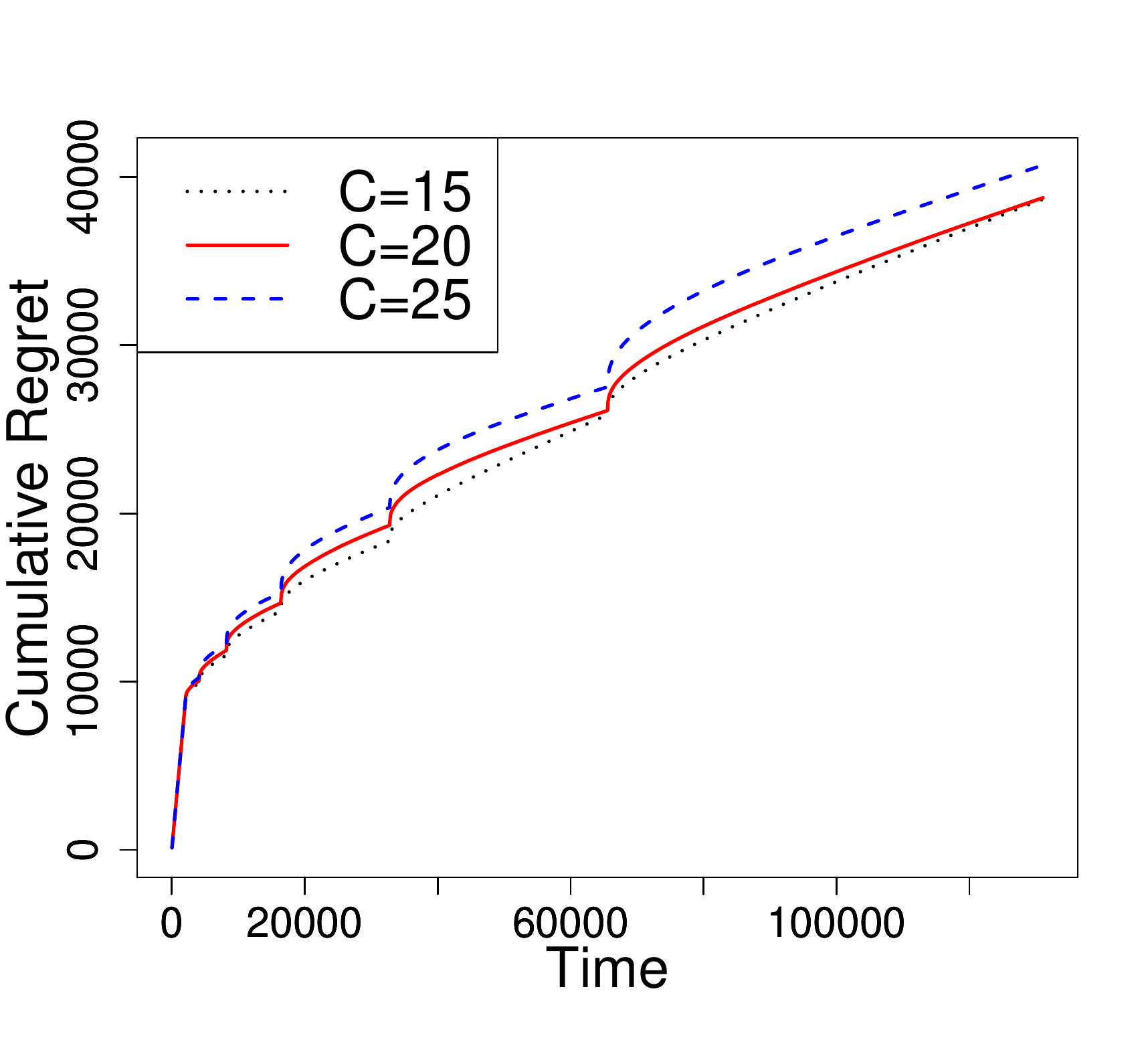}
			%\caption{fig1}
		\end{minipage}%
	}%
	\centering
	\caption{Sensitivity tests of DIP policy with respect to $\lambda,p_{\max}$ and $C$.}
	\label{fig:sensitive}
\end{figure}

\section{Real Data Analysis}
\label{sec:5}

We explore the efficiency of our proposed DIP policy on a real-life auto loan dataset provided by the Center for Pricing and Revenue Management at Columbia University. This dataset was first studied by \cite{phillips2015effectiveness} and further used by \cite{bastani2019meta} and \cite{ban2020personalized} to evaluate different dynamic pricing algorithms. 

The dataset records 208,085 auto loan applications received by a major online lender in the United States from July 2002 through November 2004. For each application, we observe some loan-specific features such as the date of application, the term and amount of loan requested, and the borrower's personal information. It also includes the monthly payment required by the lender which can be viewed as the pricing decision. Note that it is natural to set prices according to the marketing environment, product features, and customer characteristics in online auto lending. Finally, it records whether or not the price was accepted by the borrower, i.e., the customer's binary purchasing decision in our model. 

We adopt the feature selection result used in \cite{bastani2019meta} and \cite{ban2020personalized} and only consider the following four features: the loan amount approved, FICO score, prime rate, and the competitor's rate. We scale each feature to $[0,1]$ through dividing them by the maximum. The price $p$ of a loan is computed as the net present value of future payment minus the loan amount, i.e., $p = \text{Monthly Payment}\times \sum_{\tau = 1}^{\text{Term}}(1+\text{Rate})^{-\tau}-\text{Loan Amount}$. We use one thousand dollars as a basic unit and $0.12\%$ as the rate value here, an approximate average of the monthly London interbank offered rate for the studied time period. 

Note that it is impossible to obtain customers' real online responses to any dynamic pricing strategy unless it was used in the system while data were collected. Thus we follow the off-policy learning idea used in \cite{bastani2019meta,ban2020personalized} to first estimate the customer choice model using the entire dataset and use it as the grand truth to generate the willingness-to-pay of each customer given any prices. We utilize a two-step estimation procedure to estimate the unknown $\theta_{0}$ and $F$. In particular, we use logistic regression to estimate $\theta_{0}$ and then use the kernel density estimation idea to estimate $F$. The details of this estimation procedure are deferred to Section D of the Supplement. The estimated noise PDF for the US is shown in the left plot of Figure \ref{fig:8}. The estimated $\hat{\theta}_{0}$ and $\hat{F}$ are treated as the true parameters for the customer choice model $y_{t}\sim \text{Ber}(1-\hat{F}(p_{t}-x_{t}^{\top}\hat{\theta}_{0}))$. Note that these true parameters are not used in any dynamic pricing algorithm, but only used to calculate the regret for any set prices and evaluate the performance of any pricing policies. 

\begin{figure}[h!]
	\subfigure{
		\begin{minipage}[t]{0.4\linewidth}
			\centering
			\includegraphics[width=2.5in]{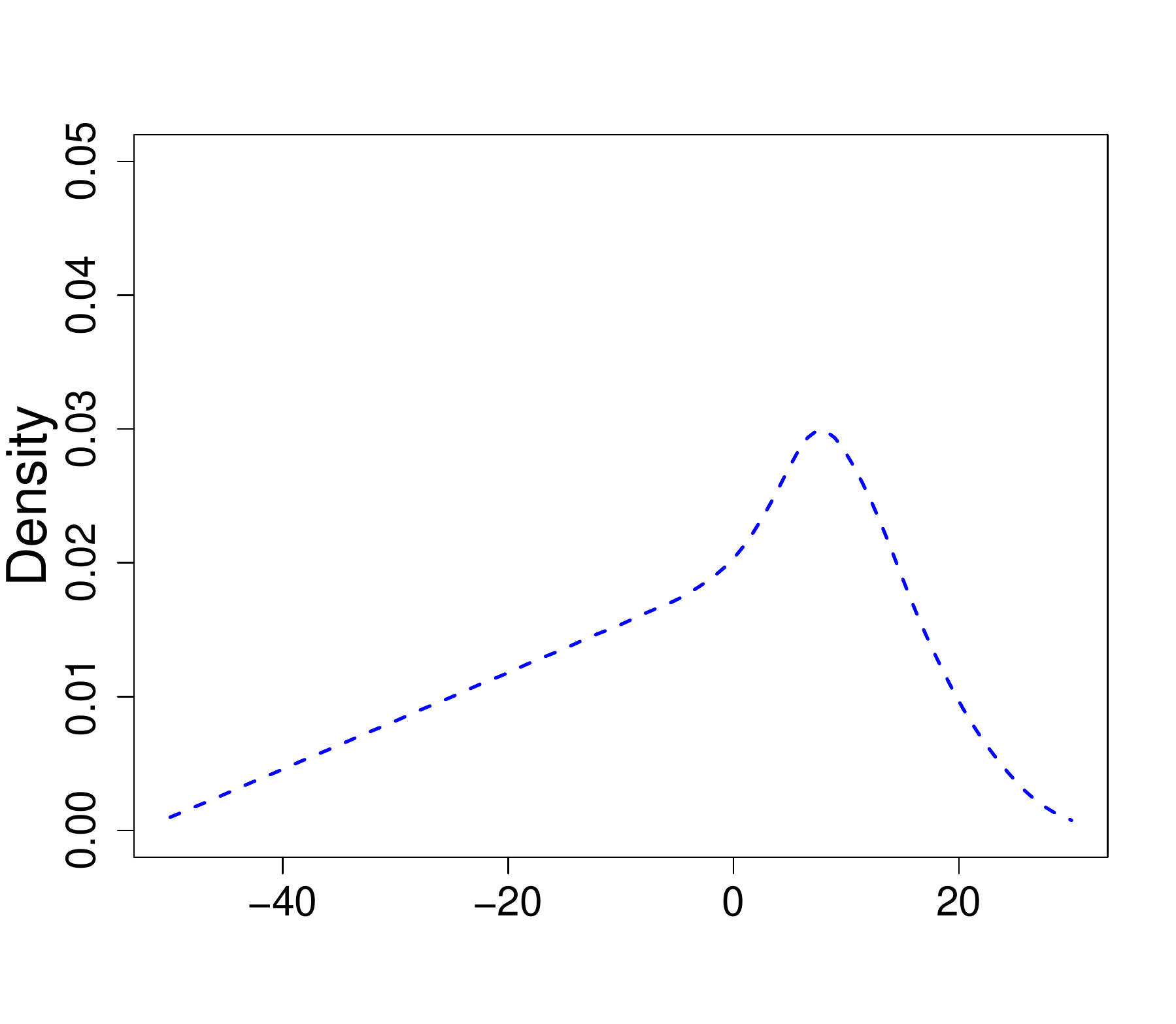}
			%\caption{Estimated PDF for US.}
		\end{minipage}%
	}%
	\subfigure{
		\begin{minipage}[t]{0.4\linewidth}
			\centering
			\includegraphics[width=2.5in]{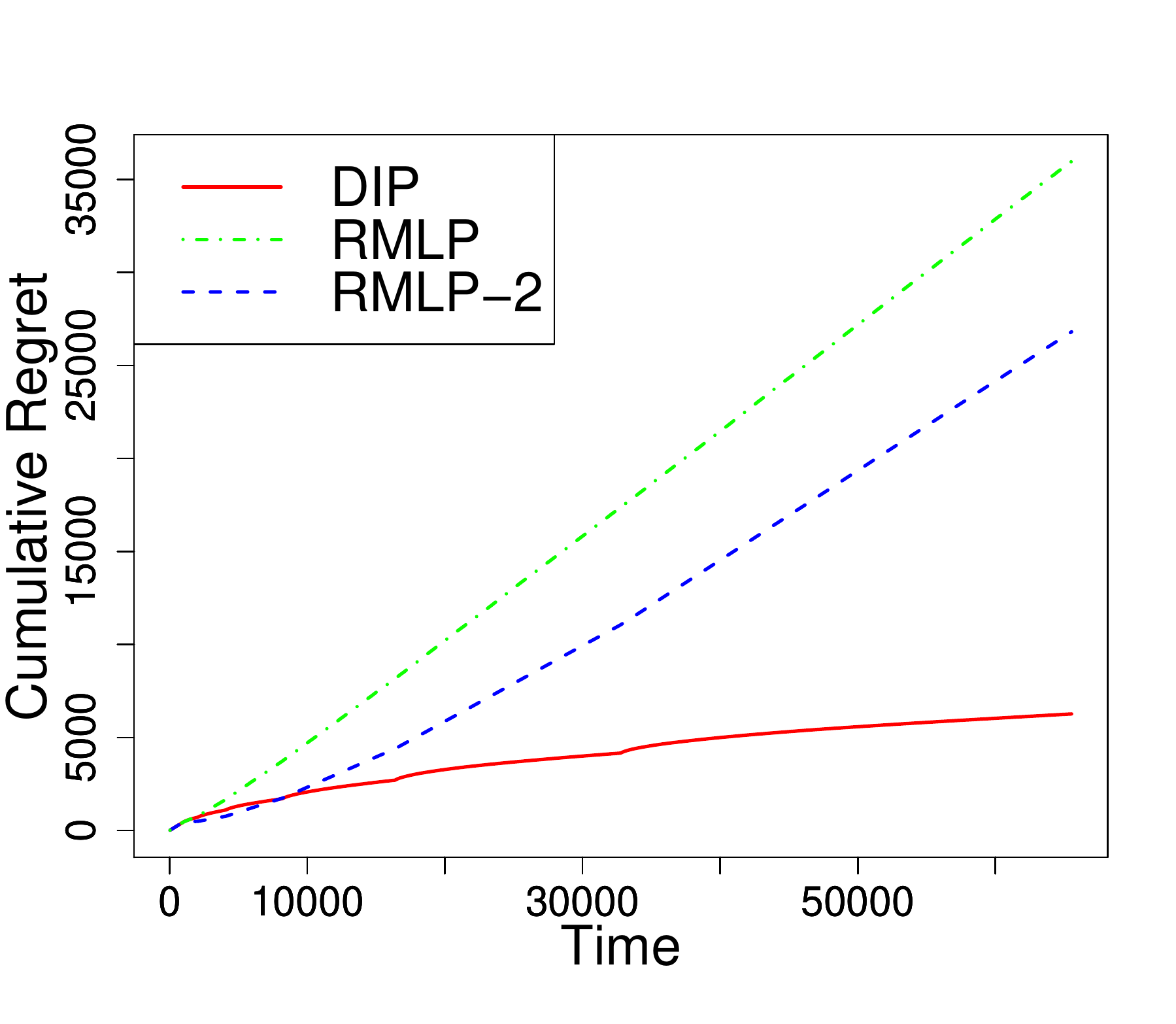}
			%\caption{Regret comparison for US.}
		\end{minipage}%		
	}%
	\centering
	\vspace{-1em}
	%\caption{\centering The left plots shows the PDF of the noise distribution for whole US data and the right plot shows the regret comparison of DIP and RMLP-2.}
	\caption{The left plot shows the PDF of the noise distribution for the whole US data and the right plot shows the regret comparison of DIP, RMLP and RMLP-2.}
	\label{fig:8}
\end{figure}

We only compare DIP with RMLP-2 since RMLP-2 is more robust than RMLP as shown in synthetic data in Section \ref{sec:4}. Since the dimension is low and the coefficients are nonsparse, we apply the RMLP-2 policy without regularization. As required by DIP, a known upper bound $p_{\max}$ of the best prices for all applications is set as $30$. We randomly sample $2^{16}$ applications from the total $208085$ for $50$ times and apply DIP and RMLP-2 policy to each of the $50$ replications and then record the average cumulative regrets.

As shown in the right plot of Figure \ref{fig:8}, DIP outperforms RMLP-2 when the time period passes above $10^{4}$. It enjoys more advantages as the time period grows larger. Moreover, DIP shows a clear sub-linear cumulative regret while RMLP-2 displays a linear pattern. This is because DIP can gradually learn the unknown distribution $F$. Furthermore, DIP enjoys a more accurate and stable $\theta_{0}$ estimation since it invests a certain amount in price explorations and generates a more well-distributed dataset. The RMLP-2 sets the prices by applying a deterministic mapping function to a linear combination of the covariates, which might yield a singular data structure leading to unsatisfactory estimates. This phenomenon is similar to that shown in Examples 7-8 of the synthetic experiments.

Next we evaluate the performance of DIP and RMLP-2 by focusing on data in California which has nearly 30000 applications. We apply the same estimation procedure for $\theta_0$ and $F$ on the California dataset to obtain the true customer choice model for California. The estimated PDF of the noise distribution for the California data is shown in the left panel of Figure \ref{fig:9}. It has a multimodal pattern and does not satisfy the log-concave condition required by RMLP-2. This illustrates our motivation that the noise distribution could be complex in real applications. We record the average cumulative regrets for 50 random samplings of $2^{14}$ applications. As shown in the right panel of Figure \ref{fig:9}, DIP again achieves a sub-linear regret, which outperforms that of RMLP-2 eventually. %Thus both scenarios for US and California show clear advantages of DEEEP and imply the benefits of learning the noise distribution $F$ in the pricing process.
\begin{figure}[h!]
	\subfigure{
		\begin{minipage}[t]{0.4\linewidth}
			\centering
			\includegraphics[width=2.5in]{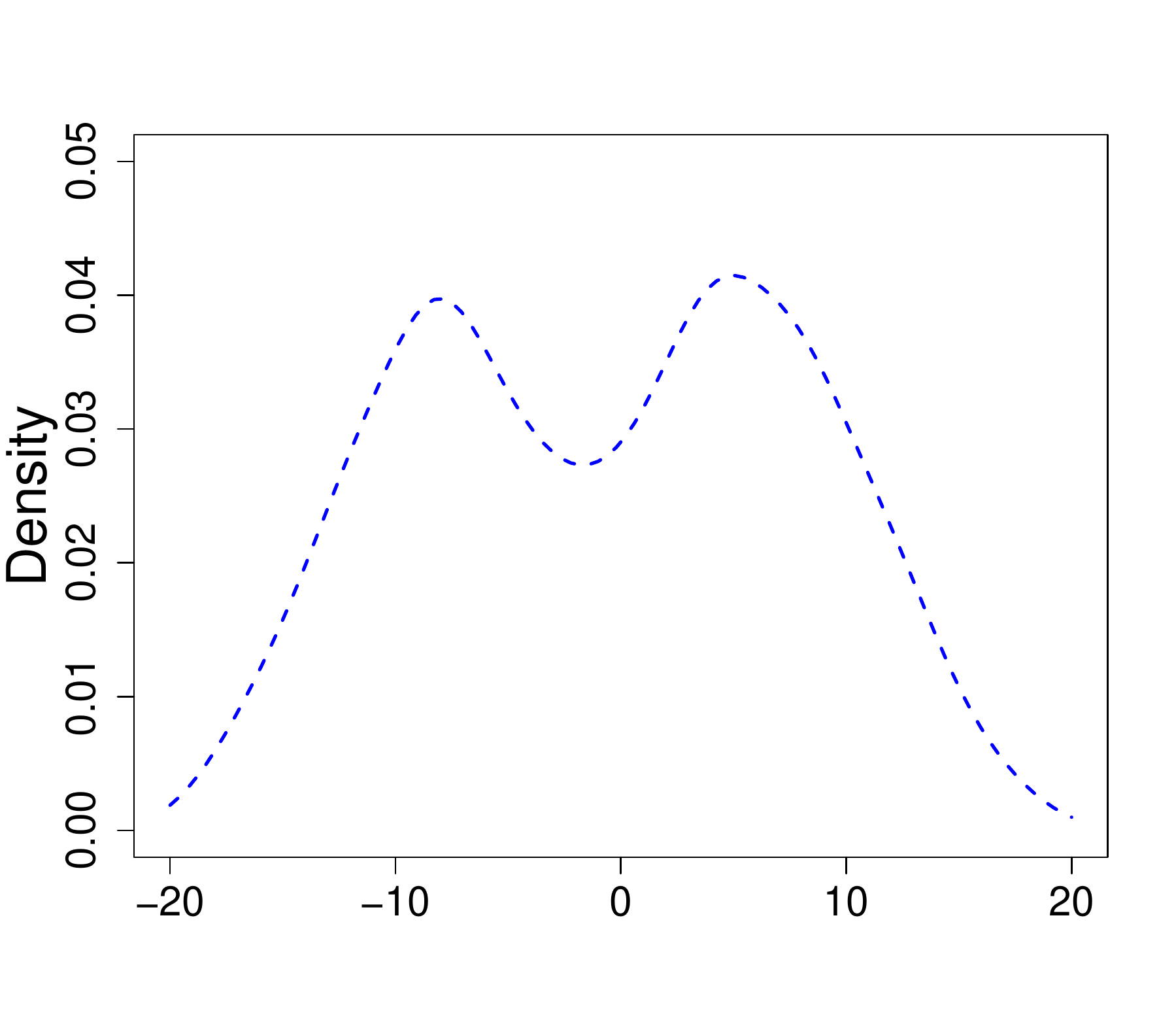}
			%\caption{Estimated PDF for California.}
		\end{minipage}%
	}%
	\subfigure{
		\begin{minipage}[t]{0.4\linewidth}
			\centering
			\includegraphics[width=2.5in]{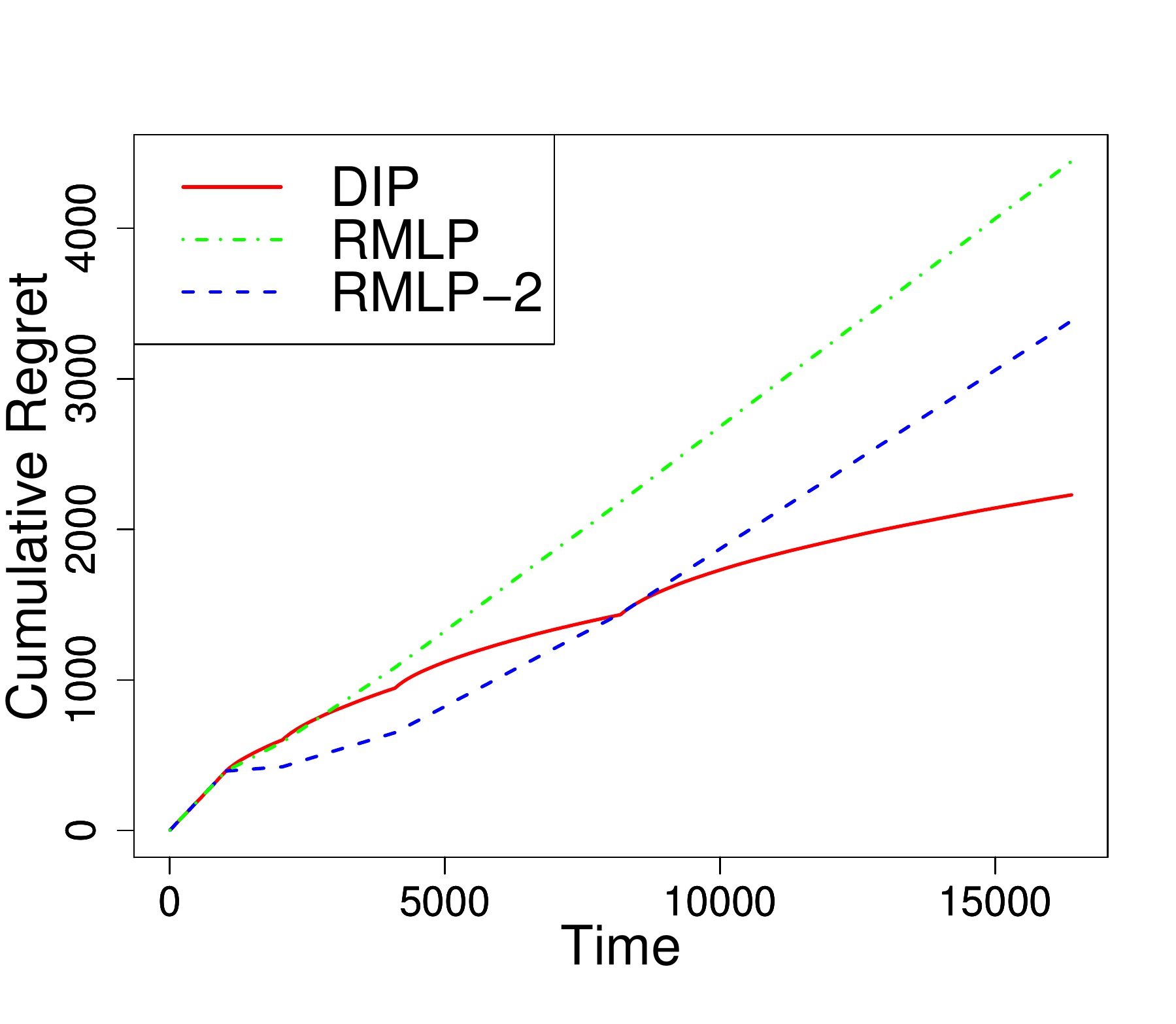}
			%\caption{Regret comparison for California.}
		\end{minipage}%
	}%
	\centering
	\vspace{-1em}
	%\caption{\centering The left plots shows the PDF of the noise distribution for the California data and the right plot shows the regret comparison of DIP and RMLP-2.}
	\caption{The left plot shows the PDF of the noise distribution for the California data and the right plot shows the regret comparison of DIP, RMLP and RMLP-2.}
	\label{fig:9}
\end{figure}

\section{Conclusion}
\label{sec:6}

In this paper, we consider a customer choice model generated by a linear valuation function with the unknown coefficient parameter and unknown noise distribution. A new pricing policy DIP is proposed to tackle this problem through simultaneously learning both the unknown parameter and the unknown distribution. In theory, we show that even when the noise distribution is unknown, our DIP policy is still able to achieve a sub-linear regret bound. We apply DIP on various synthetic datasets and a real online Auto Lending dataset and demonstrate its superior performance when compared with state-of-the-art pricing algorithms. 

There are a few interesting future directions. In this paper, we focus on non-sparse coefficients with an unknown noise distribution. It would be interesting to extend our policy to the high-dimensional setting with a sparse linear choice model. We can also extend the linear choice model to a more flexible semiparametric model \citep{bickel1993efficient} to allow both a parametric component and a nonparametric component on the covariates. Furthermore, it would be interesting to incorporate the considerations of fairness and welfare \citep{kallus2020fairness} into our dynamic pricing regime. 

\newpage
\appendix
\begin{center}
	\begin{Huge}
		\textbf{Appendix}
	\end{Huge}
\end{center}

\section{Technical Proofs of Lemmas, Propositions and Theorems}

\setcounter{theorem}{0}
\setcounter{lemma}{0}

\begin{lemma}\label{supp_lem:1}
	Under Assumption 1, $||\xi_{t}-\xi^{*}||_{\infty}\leq L||\hat{\theta}-\theta_{0}||_{1},\forall t\in [T_{0}]$. Moreover, under the price-action coupling $A_{t} = Q_{t}(p_{t})$, the reward $Z_{t} = p_{t}1_{\{v_{t}\geq p_{t}\}}$, the parameter $\xi_{t}$ and the action set $\mathcal{A}_{t}$ form a perturbed linear bandit with a perturbation constant $2L||\hat{\theta}-\theta_{0}||_{1}$.
\end{lemma}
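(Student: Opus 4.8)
The plan is to establish the two claims in turn, both resting on the Lipschitz property of $F$ together with the structural facts about the action vectors. First I would prove the componentwise bound $\|\xi_t-\xi^*\|_\infty\le L\|\hat\theta-\theta_0\|_1$. For each coordinate $j\in[d]$, the definitions of $\xi_t$ and $\xi^*$ give $(\xi_t-\xi^*)_j = F(m_j)-F\big(m_j+x_t^\top\hat\theta-x_t^\top\theta_0\big)$, so Assumption \ref{ass:1} yields $|(\xi_t-\xi^*)_j|\le L\,|x_t^\top(\hat\theta-\theta_0)|$. Applying H\"older's inequality and the standing condition $\|x_t\|_\infty\le 1$ gives $|x_t^\top(\hat\theta-\theta_0)|\le \|x_t\|_\infty\|\hat\theta-\theta_0\|_1\le \|\hat\theta-\theta_0\|_1$, and taking the maximum over $j$ finishes the first claim.

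Next I would verify the linear reward decomposition $Z_t=\langle \xi_t,A_t\rangle+\eta_t$. Under the coupling $A_t=Q_t(p_t)$ with $p_t=m_{j_t}+x_t^\top\hat\theta$, the vector $A_t$ has a single nonzero entry $p_t$ in position $j_t$, so $\langle\xi_t,A_t\rangle=p_t\,\xi_{t,j_t}=p_t\big(1-F(m_{j_t}+x_t^\top\hat\theta-x_t^\top\theta_0)\big)$. Since $p_t-x_t^\top\theta_0=m_{j_t}+x_t^\top\hat\theta-x_t^\top\theta_0$, this equals exactly the expected reward $p_t\big(1-F(p_t-x_t^\top\theta_0)\big)$. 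Defining $\eta_t:=Z_t-\langle\xi_t,A_t\rangle=p_t\big(y_t-(1-F(p_t-x_t^\top\theta_0))\big)$, it remains to check that $\eta_t$ is a mean-zero, conditionally sub-Gaussian term with respect to $\mathcal{F}_{t-1}=\sigma(\xi_1,A_1,Z_1,\dots,\xi_t,A_t)$.

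Here is the one genuinely delicate point, and where I expect the main care to be needed: showing $\mathbb{E}[\eta_t\mid\mathcal{F}_{t-1}]=0$. The quantities $x_t$ and $p_t$ (and hence the threshold $p_t-x_t^\top\theta_0$) are all $\mathcal{F}_{t-1}$-measurable, since $\xi_t$ encodes $x_t$ and $A_t$ encodes $p_t$. The only fresh randomness in $y_t=1_{\{v_t\ge p_t\}}=1_{\{z_t\ge p_t-x_t^\top\theta_0\}}$ is the noise $z_t$, which is drawn i.i.d. and is therefore independent of $\mathcal{F}_{t-1}$. Consequently $\mathbb{E}[y_t\mid\mathcal{F}_{t-1}]=\mathbb{P}(z_t\ge p_t-x_t^\top\theta_0\mid\mathcal{F}_{t-1})=1-F(p_t-x_t^\top\theta_0)$, which makes $\eta_t$ conditionally centered. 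Because $y_t\in\{0,1\}$ and $p_t\in(0,p_{\max})$, the variable $\eta_t$ is bounded in absolute value by $p_{\max}$; being bounded and conditionally centered, it is conditionally sub-Gaussian by Hoeffding's lemma. This confirms the reward model required in the PLB definition.

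Finally, the perturbation constant follows immediately from the first claim and the triangle inequality: for any $s,t$, $\|\xi_s-\xi_t\|_\infty\le\|\xi_s-\xi^*\|_\infty+\|\xi^*-\xi_t\|_\infty\le 2L\|\hat\theta-\theta_0\|_1$. Combining the linear decomposition, the conditional sub-Gaussianity of $\eta_t$, and this uniform bound on the parameter variation shows that $(Z_t,\xi_t,\mathcal{A}_t)$ form a perturbed linear bandit with perturbation constant $2L\|\hat\theta-\theta_0\|_1$, completing the proof.
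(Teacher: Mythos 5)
Your proof is correct and follows essentially the same route as the paper's: the componentwise Lipschitz bound combined with H\"older's inequality and $\|x_t\|_\infty\le 1$, the identification $\langle\xi_t,A_t\rangle=p_t\bigl(1-F(p_t-x_t^\top\theta_0)\bigr)$, boundedness of $\eta_t$ in $[-p_{\max},p_{\max}]$ giving conditional sub-Gaussianity, and the triangle inequality for the perturbation constant $2L\|\hat\theta-\theta_0\|_1$. The only (harmless) imprecision is the claim that ``$\xi_t$ encodes $x_t$'' --- $x_t$ itself need not be $\sigma(\xi_t)$-measurable --- but all your argument actually requires is that the success probability $1-F(p_t-x_t^\top\theta_0)=\xi_{t,j_t}$ is $\mathcal{F}_{t-1}$-measurable and that $z_t$ is independent of $\mathcal{F}_{t-1}$; the paper secures the same point slightly more carefully by first conditioning on the finer filtration $\sigma(x_1,p_1,Z_1,\dots,x_t,p_t)$ and then invoking the tower property.
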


\begin{proof}
	For any $t\in[T_{0}]$ and $j\in[d]$, we have
	\begin{equation*}
	\begin{aligned}
	|\xi_{tj}-\xi^{*}_{j}|& = |(1-F(m_{j}+x_{t}^{\top}\hat{\theta}-x_{t}^{\top}\theta_{0}))-(1-F(m_{j}))|\\
	& \leq L|x_{t}^{\top}\hat{\theta}-x_{t}^{\top}\theta_{0}|\leq L||x_{t}||_{\infty}||\hat{\theta}-\theta_{0}||_{1}\leq L||\hat{\theta}-\theta_{0}||_{1}.\\
	\end{aligned}
	\end{equation*}
	Thus we obtain $||\xi_{t}-\xi^{*}||_{\infty}\leq L||\hat{\theta}-\theta_{0}||_{1},\forall t\in [T_{0}]$.
	
	On the other hand, for any selected action $A_{t}\in\mathcal{A}_{t}$ with coupled set price $p_{t}=Q_{t}^{-1}(A_{t})\in\mathcal{S}_{t}$, we aim to prove that \[
	Z_{t} = p_{t}1_{\{v_{t}\geq p_{t}\}}=\langle \xi_{t},A_{t}\rangle + \eta_{t},
	\] where $\eta_{t}$ is $p_{\max}$-subgaussian conditional on $\mathcal{F}_{t-1} = \sigma(\xi_{1},A_{1},Z_{1},\dots,\xi_{t},A_{t})$. 
	
	Let $\eta_{t} = Z_{t}-p_{t}(1-F(p_{t}-x_{t}^{\top}\theta_{0}))$. Then we have $Z_{t} = p_{t}(1-F(p_{t}-x_{t}^{\top}\theta_{0}))+\eta_{t}$. By our definition, $\xi_{t} = (1-F(m_{1}+x_{t}^{\top}\hat{\theta}-x_{t}^{\top}\theta_{0}),\dots,1-F(m_{d}+x_{t}^{\top}\hat{\theta}-x_{t}^{\top}\theta_{0}))^{\top}$. Since $p_{t}\in\mathcal{S}_{t}$, there exists $j_{t}\in [d]$ such that $p_{t} = m_{j_{t}}+x_{t}^{\top}\hat{\theta}$. Since $A_{t}$ is a $d$-dimensional vector with $(A_{t})_{j_{t}} = p_{t}$ and $(A_{t})_{i}  = 0,\forall i\neq j_{t}$, we have \[
	\langle \xi_{t},A_{t}\rangle  = (A_{t})_{j_{t}}(\xi_{t})_{j_{t}} = p_{t}(1-F(m_{j_{t}}+x_{t}^{\top}\hat{\theta}-x_{t}^{\top}\theta_{0})) = p_{t}(1-F(p_{t}-x_{t}^{\top}\theta_{0})).
	\]
	Therefore, we have \[
	Z_{t} = p_{t}(1-F(p_{t}-x_{t}^{\top}\theta_{0}))+\eta_{t} = \langle \xi_{t},A_{t}\rangle +\eta_{t}. 
	\]
	
	Then we prove that $\eta_{t}$ is $p_{\max}$-subgaussian conditional on the filtration $\mathcal{F}_{t-1}$. Remember that $v_{t} = v(x_{t}) = x_{t}^{\top}\theta_{0}+z_{t}$ with $z_{t}$ drawn i.i.d. from distribution $F$. As the reward $Z_{t} = p_{t}1_{\{v_{t}\geq p_{t} \}}$, we have\[
	\mathbb{P}(Z_{t} = p_{t}| x_{1},p_{1},Z_{1},\dots,x_{t},p_{t}) = 1-F(p_{t}-x_{t}^{\top}\theta_{0}),\] \[\mathbb{P}(Z_{t} = 0| x_{1},p_{1},Z_{1},\dots,x_{t},p_{t}) = F(p_{t}-x_{t}^{\top}\theta_{0}).
	\]
	Note that we view $x_{t}$ and $p_{t}$ as potential random variables. For example, $x_{t}$ can be randomly independent of each other. Moreover, both random or deterministic choices of $x_{t}$ by an adversary according to the past random data can yield randomness in $x_{t}$. On the other hand, $p_{t}$ can be random regardless of whether it is selected by a random or deterministic pricing policy because of the randomness in past observations. 
	
	Denote $\tilde{\mathcal{F}}_{t-1} = \sigma(x_{1},p_{1},Z_{1},\dots,x_{t},p_{t})$. Then we have
	\begin{equation*}
	\begin{aligned}
	\mathbb{E}(\eta_{t}|\tilde{\mathcal{F}}_{t-1})& =\mathbb{E}(Z_{t}-p_{t}(1-F(p_{t}-x_{t}^{\top}\theta_{0}))|x_{1},p_{1},Z_{1},\dots,x_{t},p_{t})\\
	& = \mathbb{E}(Z_{t}|x_{1},p_{1},Z_{1},\dots,x_{t},p_{t})-p_{t}(1-F(p_{t}-x_{t}^{\top}\theta_{0}))\\
	& = p_{t}(1-F(p_{t}-x_{t}^{\top}\theta_{0}))-p_{t}(1-F(p_{t}-x_{t}^{\top}\theta_{0})) = 0.\\
	\end{aligned}
	\end{equation*}
	Namely, conditional on $\tilde{\mathcal{F}}_{t-1} = \sigma(x_{1},p_{1},Z_{1},\dots,x_{t},p_{t})$, $\eta_{t} = Z_{t}-p_{t}(1-F(p_{t}-x_{t}^{\top}\theta_{0}))$ has mean $0$. On the other hand, any $p\in\mathcal{S}_{t}$ satisfies $p\in(0,p_{\max})$ by the construction of $\mathcal{S}_{t}$. Thus we have $p_{t}\in(0,p_{\max})$ since $p_{t}$ is selected from $\mathcal{S}_{t}$. Thus $\eta_{t}$ is bounded in the interval $[-p_{\max},p_{\max}]$. By basic argument for subgaussian variables, $\eta_{t}$ is $\frac{p_{\max}-(-p_{\max})}{2} = p_{\max}$-subgaussian conditional on $\tilde{\mathcal{F}}_{t-1}$. Namely,
	\[
	\mathbb{E}(\exp(u\eta_{t})|\tilde{\mathcal{F}}_{t-1})\leq \exp(\frac{p_{\max}^{2}u^{2}}{2}),\forall u\in \mathbb{R}.
	\]
	
	We then derive the relationship between $\tilde{\mathcal{F}}_{t-1} $ and $\mathcal{F}_{t-1} = \sigma(\xi_{1},A_{1},Z_{1},\dots,\xi_{t},A_{t})$. By our definition, $\xi_{s} = (1-F(m_{1}+x_{s}^{\top}\hat{\theta}-x_{s}^{\top}\theta_{0}),\dots,1-F(m_{d}+x_{s}^{\top}\hat{\theta}-x_{s}^{\top}\theta_{0}))^{\top}\in \sigma(x_{s})$. Since $p_{s}\in\mathcal{S}_{s}$, there exists $j_{s}\in [d]$ such that $p_{s} = m_{j_{s}}+x_{s}^{\top}\hat{\theta}$. Thus $j_{s}\in\sigma(x_{s},p_{s})$. By the price-action coupling, $A_{s} = Q_{s}(p_{s})$ is a $d$-dimensional vector with $(A_{s})_{j_{s}} = p_{s}$ and $(A_{s})_{i}  = 0,\forall i\neq j_{s}$. Thus $A_{s}\in \sigma(x_{s},p_{s})$. Therefore, $\sigma(\xi_{s},A_{s})\in \sigma(x_{s},p_{s})$ and \[
	\mathcal{F}_{t-1} = \sigma(\xi_{1},A_{1},Z_{1},\dots,\xi_{t},A_{t})\subseteq \sigma(x_{1},p_{1},Z_{1},\dots,x_{t},p_{t})=\tilde{\mathcal{F}}_{t-1}.
	\]
	
	Therefore, we can write for any $u\in\mathbb{R}$, 
	\[\mathbb{E}(\exp(u\eta_{t})|\mathcal{F}_{t-1})
	=\mathbb{E}(\mathbb{E}(\exp(u\eta_{t})|\tilde{\mathcal{F}}_{t-1})|\mathcal{F}_{t-1})\leq\mathbb{E}(\exp(\frac{p_{\max}^{2}u^{2}}{2})|\mathcal{F}_{t-1}) = \exp(\frac{p_{\max}^{2}u^{2}}{2}).\]
	Thus $\eta_{t} = Z_{t}-p_{t}(1-F(p_{t}-x_{t}^{\top}\theta_{0}))$ is $p_{\max}$-subgaussian conditional on $\mathcal{F}_{t-1}$. 
	
	Furthermore, simple calculation yields $\forall t,s\in [T_{0}]$, $$||\xi_{t}-\xi_{s}||_{\infty}\leq ||\xi_{t}-\xi^{*}||_{\infty}+||\xi^{*}-\xi_{s}||_{\infty}\leq 2L||\hat{\theta}-\theta_{0}||_{1}.$$
	
	Therefore, under the price-action coupling, the reward $Z_{t} = p_{t}1_{\{v_{t}\geq p_{t}\}}$, $\xi_{t}$ and the action set $\mathcal{A}_{t}$ form a perturbed linear bandit with perturbation constant $2L||\hat{\theta}-\theta_{0}||_{1}$. 
\end{proof}

\begin{lemma}\label{supp_lem:2}
	Applying Algorithm 5 to the PLB formulation of our single-episode pricing problem with $\beta_{t} = \beta_{t}^{*}= p_{\max}^{2}(1\vee (\frac{1}{p_{\max}}\sqrt{\lambda d}+\sqrt{2\log(\frac{1}{\delta})+d\log(\frac{d\lambda+(t-1)p_{\max}^{2}}{d\lambda}) })^{2})$ yields Algorithm 3 using the UCB construction $\text{UCB}_{t}(1-F(m_{j})) = \frac{\sum_{s\in\mathcal{U}_{t-1,j}}p_{s}^{2}y_{s}}{\lambda+\sum_{s\in\mathcal{U}_{t-1,j}}p_{s}^{2}} + \sqrt{\frac{\beta_{t}}{\lambda + \sum_{s\in\mathcal{U}_{t-1,j}}p_{s}^{2}}}$ with $\beta_{t} = \beta_{t}^{*}$.
\end{lemma}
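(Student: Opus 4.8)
The plan is to verify that, once the single-nonzero-coordinate structure of the PLB actions is exploited, every internal quantity computed by Algorithm \ref{alg:5} collapses to the explicit per-arm expressions appearing in Algorithm \ref{alg:3}, and that the two selection rules then coincide. The enabling observation is that, by Lemma \ref{supp_lem:1} and the definition of $Q_t$, each selected action $A_s$ has exactly one nonzero coordinate, namely its $j_s$-th coordinate equal to $p_s$. Hence $A_s A_s^{\top}$ has the single nonzero entry $p_s^2$ in position $(j_s,j_s)$, so $V_{t-1}(\lambda) = \lambda I + \sum_{s=1}^{t-1} A_s A_s^{\top}$ is \emph{diagonal} with $j$-th diagonal entry $\lambda + \sum_{s\in\mathcal{U}_{t-1,j}} p_s^2$. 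This diagonality is precisely what makes the ridge estimator and the confidence ellipsoid decouple across arms, and it is the structural fact driving the whole computation.

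First I would compute the ridge estimate coordinatewise. Since $Z_s = p_s y_s$ and $A_s$ has $j_s$-th coordinate $p_s$, the vector $\sum_{s=1}^{t-1} A_s Z_s$ has $j$-th coordinate $\sum_{s\in\mathcal{U}_{t-1,j}} p_s^2 y_s$; dividing by the $j$-th diagonal entry of $V_{t-1}(\lambda)$ yields exactly $\hat{\xi}_{t-1,j} = (\sum_{s\in\mathcal{U}_{t-1,j}} p_s^2 y_s)/(\lambda + \sum_{s\in\mathcal{U}_{t-1,j}} p_s^2)$, matching $\text{EST}_t(1-F(m_j))$. Next I would evaluate $\text{LinUCB}_t(a) = \max_{\xi\in\mathcal{C}_t(\beta_t)}\langle\xi,a\rangle$ for $a = Q_t(p)$ with $p = m_j + x_t^{\top}\hat{\theta}$. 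Using the standard identity $\max_{\|\delta\|_{V_{t-1}(\lambda)}^{2}\le\beta_t}\langle\delta,a\rangle = \sqrt{\beta_t}\,\|a\|_{V_{t-1}(\lambda)^{-1}}$ together with the fact that only the $j$-th coordinate of $a$ is nonzero, so that $\|a\|_{V_{t-1}(\lambda)^{-1}}^{2} = p^2/(\lambda+\sum_{s\in\mathcal{U}_{t-1,j}} p_s^2)$, this reduces to a one-dimensional optimization giving
\[
\text{LinUCB}_t(a) = p\Big(\hat{\xi}_{t-1,j} + \sqrt{\tfrac{\beta_t}{\lambda+\sum_{s\in\mathcal{U}_{t-1,j}} p_s^2}}\Big).
\]
With $\beta_t=\beta_t^{*}$ this is exactly $(m_j+x_t^{\top}\hat{\theta})\,\text{UCB}_t(1-F(m_j))$ for the UCB in (\ref{eq:4}), so the LinUCB objective and the expected-revenue UCB objective agree arm by arm.

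It then remains to match the selection rules. Because $Q_t$ is a bijection between $\mathcal{S}_t$ and $\mathcal{A}_t$ (and between $\mathcal{B}_t$ and $\tilde{\mathcal{B}}_t$), the maximization $\arg\max_{a\in\mathcal{A}_t}\text{LinUCB}_t(a)$ translates verbatim into $\arg\max_{j\in\mathcal{B}_t}(m_j+x_t^{\top}\hat{\theta})\text{UCB}_t(1-F(m_j))$, which is the rule in Algorithm \ref{alg:3}; this settles the exploitation branch $\tilde{\mathcal{B}}_t\subseteq\tilde{\mathcal{B}}'_t$ of Algorithm \ref{alg:5}. The step I expect to require the most care, and which I regard as the main obstacle, is reconciling the two-branch form of Algorithm \ref{alg:5} with the single maximization of Algorithm \ref{alg:3}: I would identify the exploration branch $\tilde{\mathcal{B}}_t\not\subseteq\tilde{\mathcal{B}}'_t$ with the arms for which $\mathcal{U}_{t-1,j}=\emptyset$, where (\ref{eq:4}) reduces to the pure confidence term $\sqrt{\beta_t^{*}/\lambda}$, and argue that the arbitrary choice of a never-pulled available arm is consistent with the corresponding tie-breaking in Algorithm \ref{alg:3}. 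Pinning down this treatment of never-pulled arms, together with the attendant tie-breaking convention, is the delicate part; the ridge-estimate and ellipsoid-maximization computations above are routine once the diagonality of $V_{t-1}(\lambda)$ is in hand.
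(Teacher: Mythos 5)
Your treatment of the exploitation branch is correct and is essentially the paper's argument: the diagonality of $V_{t-1}(\lambda)$ induced by the single-nonzero-coordinate actions gives the coordinatewise ridge estimate $\hat{\xi}_{t-1,j}$, the ellipsoid maximization reduces to $\text{LinUCB}_t(a) = p\bigl(\hat{\xi}_{t-1,j}+\sqrt{\beta_t/(\lambda+\sum_{s\in\mathcal{U}_{t-1,j}}p_s^2)}\bigr)$, and the bijection $Q_t$ transports the argmax. This matches the paper's third case ($t\ge 2$ and $\tilde{\mathcal{B}}_t\subseteq\tilde{\mathcal{B}}'_t$) step for step.

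The gap is in the exploration branch, and it is the very point you flag as delicate: your proposed resolution does not work. If a never-pulled arm's UCB is taken to be the finite value $\sqrt{\beta_t^{*}/\lambda}$ (which is what the displayed formula literally gives when $\mathcal{U}_{t-1,j}=\emptyset$), then Algorithm \ref{alg:3} compares $(m_j+x_t^{\top}\hat{\theta})\sqrt{\beta_t^{*}/\lambda}$ against the price-weighted UCBs of already-pulled arms, and nothing prevents a pulled arm with a larger price or a large estimated mean from winning. In that event Algorithm \ref{alg:3} exploits while Algorithm \ref{alg:5} is \emph{forced} by its first branch to pull an unexplored arm, so the two algorithms genuinely select different actions --- this is not a tie-breaking issue. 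The paper closes this by adopting the convention that the mean estimate of a never-pulled arm is $\tfrac{0}{0}=+\infty$ (i.e., the UCB of any arm with $\mathcal{U}_{t-1,j}=\emptyset$ is $+\infty$), so that the single argmax in Algorithm \ref{alg:3} necessarily lands in $\mathcal{B}_t\setminus\tilde{\mathcal{B}}'_t$ whenever that set is nonempty, with the arbitrary choice within it matching the arbitrary choice in Algorithm \ref{alg:5}; the $t=1$ case is handled identically since then every available arm is unexplored. Without stating and using that convention (or an equivalent explicit forced-exploration rule in Algorithm \ref{alg:3}), the claimed equivalence fails on the exploration branch.
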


\begin{proof}
	We split all time periods into three cases and prove the equivalence of Algorithm 3 and Algorithm 5 for each case. 
	\begin{itemize}
		\item When $t = 1$. 
		
		In this case, Algorithm 5 chooses arbitrary $A_{t}\in\mathcal{A}_{t}$. By price-action coupling, it selects arbitrary price $p_{t}\in\mathcal{S}_{t}$. 
		
		On the other hand, $\mathcal{U}_{t-1,j} = \emptyset$ for any $j\in \mathcal{B}_{t}$ since $t = 1$. Then $\text{EST}_{t-1}(1-F(m_{j})) = \frac{\sum_{s\in\mathcal{U}_{t-1,j}}p_{s}^{2}y_{s}}{\sum_{s\in\mathcal{U}_{t-1,j}}p_{s}^{2}}= \frac{0}{0} = +\infty$ for any $j\in \mathcal{B}_{t}$. As $\text{CR}_{t-1}(1-F(m_{j})) = \sqrt{\frac{1}{\lambda + \sum_{s\in\mathcal{U}_{t-1,j}}p_{s}^{2}}}\in \mathbb{R}$ for any $j\in \mathcal{B}_{t}$, we have $\text{UCB}_{t-1}(1-F(m_{j})) = +\infty$ for any $j\in \mathcal{B}_{t}$. Thus $(m_{j}+x_{t}^{\top}\hat{\theta})\text{UCB}_{t-1}(1-F(m_{j})) = +\infty$ for any $j\in \mathcal{B}_{t}$. Therefore, Algorithm 3 selects arbitrary $j_{t}\in \arg\max_{j\in \mathcal{B}_{t}}(m_{j}+x_{t}^{\top}\hat{\theta})\text{UCB}_{t-1}(1-F(m_{j}))=\mathcal{B}_{t}$ and sets the price $p_{t} = m_{j_{t}}+x_{t}^{\top}\hat{\theta}$. Namely, Algorithm 3 sets arbitrary price $p_{t}\in\mathcal{S}_{t}$ since $\mathcal{B}_{t} = \{j\in [d]: \exists p\in \mathcal{S}_{t} \text{ such that }  p= m_{j}+x_{t}^{\top}\hat{\theta}\}$.
		
		\item When $t\geq 2$ and $\tilde{\mathcal{B}}_{t}\not\subseteq \tilde{\mathcal{B}}^{'}_{t}$. 
		
		In this case, Algorithm 5 chooses arbitrary $A_{t}\in\mathcal{A}_{t}$ such that $\delta(A_{t})\notin \tilde{\mathcal{B}}^{'}_{t}$. By price-action coupling, it sets arbitrary price $p_{t} = m_{j_{t}}+x_{t}^{\top}\hat{\theta}\in\mathcal{S}_{t}$ with $j_{t}\notin \tilde{\mathcal{B}}^{'}_{t}$. Namely, it selects arbitrary $j_{t}\in \tilde{\mathcal{B}}_{t}\setminus \tilde{\mathcal{B}}^{'}_{t}$ and sets the price $p_{t} = m_{j_{t}} + x_{t}^{\top}\hat{\theta}$. 
		
		On the other hand, we have \begin{equation*}
		\begin{aligned}
		j_{t} = j&\Leftrightarrow p_{t} = m_{j}+x_{t}^{\top}\hat{\theta}\\
		&\Leftrightarrow A_{t} = Q_{t}(p_{t}) \text{ has only one nonzero element with index }j\\
		&\Leftrightarrow \delta(A_{t}) = j.\\
		\end{aligned}
		\end{equation*}Thus we have \begin{equation*}
		\begin{aligned}
		&(m_{j}+x_{t}^{\top}\hat{\theta})\text{UCB}_{t-1}(1-F(m_{j})) = +\infty\Leftrightarrow\text{UCB}_{t-1}(1-F(m_{j}))=+\infty\\
		&\Leftrightarrow \mathcal{U}_{t-1,j} = \{s:1\leq s\leq t-1,j_{s}=j \} = \emptyset\\
		&\Leftrightarrow \{s:1\leq s\leq t-1,\delta(A_{s})=j \} = \emptyset\\
		&\Leftrightarrow j\notin \tilde{\mathcal{B}}^{'}_{t} = \{\delta(A_{s}):s\in[t-1]\}.\\
		\end{aligned}
		\end{equation*}
		Moreover, by price-action coupling, we have\begin{equation*}
		\begin{aligned}
		\mathcal{B}_{t} &= \{j\in [d]: \exists p\in \mathcal{S}_{t} \text{ such that }  p= m_{j}+x_{t}^{\top}\hat{\theta}\} \\
		&= \{j\in [d]: \exists p\in \mathcal{S}_{t} \text{ such that }  \delta(Q_{t}(p))= j\} \\
		&=  \{\delta(a):a\in \mathcal{A}_{t}\}=\tilde{\mathcal{B}}_{t}\\
		\end{aligned}
		\end{equation*}
		Since $\tilde{\mathcal{B}}_{t}\not\subseteq \tilde{\mathcal{B}}^{'}_{t}$, we have $\mathcal{B}_{t}\not\subseteq \tilde{\mathcal{B}}^{'}_{t}$. Namely, $\mathcal{B}_{t}\setminus \tilde{\mathcal{B}}^{'}_{t}\neq \emptyset$. Then for any $j\in \mathcal{B}_{t}\setminus \tilde{\mathcal{B}}^{'}_{t}$, we have $(m_{j}+x_{t}^{\top}\hat{\theta})\text{UCB}_{t-1}(1-F(m_{j})) = +\infty$. However, for any $j\in\mathcal{B}_{t}\setminus(\mathcal{B}_{t}\setminus \tilde{\mathcal{B}}^{'}_{t}) = \mathcal{B}_{t}\cap \tilde{\mathcal{B}}^{'}_{t}$, we have $(m_{j}+x_{t}^{\top}\hat{\theta})\text{UCB}_{t-1}(1-F(m_{j})) < +\infty$.  Therefore, Algorithm 3 selects arbitrary $j_{t}\in \arg\max_{j\in \mathcal{B}_{t}}(m_{j}+x_{t}^{\top}\hat{\theta})\text{UCB}_{t-1}(1-F(m_{j}))=\mathcal{B}_{t}\setminus \tilde{\mathcal{B}}^{'}_{t}$ and sets the price $p_{t} = m_{j_{t}}+x_{t}^{\top}\hat{\theta}$. 
		
		\item When $t\geq 2$ and $\tilde{\mathcal{B}}_{t}\subseteq \tilde{\mathcal{B}}^{'}_{t}$. 
		
		In this case, Algorithm 5 chooses $A_{t} = \arg\max_{a\in\mathcal{A}_{t}}\text{LinUCB}_{t}(a)$ where $\text{LinUCB}_{t}(a) = \max_{\xi\in\mathcal{C}_{t}}\langle \xi,a\rangle$. Note that $\mathcal{C}_{t} = \{\xi\in\mathbb{R}^{d}: ||\xi- \hat{\xi}_{t-1}||_{V_{t-1}(\lambda)}^{2}\leq \beta_{t} \}$ where $\beta_{t} = \beta_{t}^{*}$ used in Algorithm 3. Moreover, $V_{t-1}(\lambda) = \lambda I +\sum_{s=1}^{t-1}A_{s}A_{s}^{\top}$. Since $A_{s}$ has only one nonzero element, $V_{t-1}(\lambda)$ is a diagonal matrix. As proved in the second case, $\delta(A_{s}) = i\Leftrightarrow j_{s} = i$ by price-action coupling. Thus we have $$V_{t-1}(\lambda)_{ii} =\lambda+ \sum_{s\in[t-1],\delta(A_{s}) = i}A_{si}^{2} = \lambda + \sum_{s\in \mathcal{U}_{t-1,i}}p_{s}^{2}.$$ As $\hat{\xi}_{t-1} = V_{t-1}(\lambda)^{-1}\sum_{s=1}^{t-1}A_{s}Z_{s}$, we have $$(\hat{\xi}_{t-1})_{i} =\frac{\sum_{s\in[t-1],\delta(A_{s}) = i}A_{si}^{2}y_{s}}{\sum_{s\in[t-1],\delta(A_{s}) = i}A_{si}^{2}}= \frac{\sum_{s\in\mathcal{U}_{t-1,i}}p_{s}^{2}y_{s}}{\sum_{s\in\mathcal{U}_{t-1,i}}p_{s}^{2}}=\text{EST}_{t-1}(1-F(m_{i})).$$
		Since $V_{t-1}(\lambda)$ is a diagonal matrix, we have \begin{equation*}
		\begin{aligned}
		\max_{\xi\in \mathcal{C}_{t}}\xi_{i} &= (\hat{\xi}_{t-1})_{i} + \frac{\sqrt{\beta^{*}_{t}}}{\sqrt{V_{t-1}(\lambda)_{ii}}} = \text{EST}_{t-1}(1-F(m_{i}))+\text{CR}(1-F(m_{i})) \\&= \text{UCB}_{t-1}(1-F(m_{i}))\\
		\end{aligned}
		\end{equation*}For any $a\in\mathcal{A}_{t}$, there exists a price $p\in\mathcal{S}_{t}$ such that $a = Q_{t}(p)$. There further exists a $j\in \tilde{\mathcal{B}}_{t}$ such that $p = m_{j}+x_{t}^{\top}\hat{\theta}$. Then $a_{j} = p = m_{j}+x_{t}^{\top}\hat{\theta}$ and $a_{i} = 0,\forall i\neq j$. Then $$\text{LinUCB}_{t}(a) = \max_{\xi\in\mathcal{C}_{t}}\langle \xi,a\rangle = (m_{j}+x_{t}^{\top}\hat{\theta})\max_{\xi\in \mathcal{C}_{t}}\xi_{j}  = (m_{j}+x_{t}^{\top}\hat{\theta})\text{UCB}_{t-1}(1-F(m_{j})).$$
		Also, for any $j\in \tilde{\mathcal{B}}_{t}$, there exists an action $a = Q_{t}(m_{j}+x_{t}^{\top}\hat{\theta})\in\mathcal{A}_{t}$. Note that Algorithm 5 chooses arbitrary $A_{t}\in\arg\max_{a\in\mathcal{A}_{t}}\text{LinUCB}_{t}(a)$. By the price-action coupling, it selects any arbitrary $j_{t} \in \arg\max_{j\in \tilde{\mathcal{B}}_{t}}(m_{j}+x_{t}^{\top}\hat{\theta})\text{UCB}_{t-1}(1-F(m_{j}))$ and sets the price $p_{t} = m_{j_{t}} + x_{t}^{\top}\hat{\theta}$. 
		
		On the other hand, Algorithm 3 simply selects any arbitrary $j_{t} \in \arg\max_{j\in \tilde{\mathcal{B}}_{t}}(m_{j}+x_{t}^{\top}\hat{\theta})\text{UCB}_{t-1}(1-F(m_{j}))$ and sets the price $p_{t} = m_{j_{t}} + x_{t}^{\top}\hat{\theta}$. 
	\end{itemize}
	Therefore, by the price-action coupling, applying Algorithm 5 to the PLB formulation of our single-episode pricing problem with $\beta_{t} = \beta_{t}^{*}$ yields Algorithm 3 using the UCB construction $\text{UCB}_{t}(1-F(m_{j})) = \frac{\sum_{s\in\mathcal{U}_{t-1,j}}p_{s}^{2}y_{s}}{\lambda+\sum_{s\in\mathcal{U}_{t-1,j}}p_{s}^{2}} + \sqrt{\frac{\beta_{t}}{\lambda + \sum_{s\in\mathcal{U}_{t-1,j}}p_{s}^{2}}}$ with $\beta_{t} = \beta_{t}^{*}$.
\end{proof}

\setcounter{proposition}{0}
\renewcommand{\thelemma}{S\arabic{lemma}}
\setcounter{lemma}{0}

\begin{lemma}\label{supp_lem:3}
	Let $S_{1}\subseteq S_{2}\subseteq \mathbb{N}^{+}$. Suppose $|a_{i}-a_{j}|\leq \delta,\forall i,j\in S_{2}$, then for any positive numbers $\{b_{i}\}_{i\in S_{1}}$, we have 
	
	\[
	|\frac{\sum_{i\in S_{1}}b_{i}a_{i}}{\sum_{i\in S_{1}}b_{i}}-a_{j}|\leq \delta,\forall j\in S_{2}.
	\]
\end{lemma}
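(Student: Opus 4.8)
The plan is to recognize the quantity $\frac{\sum_{i\in S_1} b_i a_i}{\sum_{i\in S_1} b_i}$ as a convex combination (weighted average) of the values $\{a_i\}_{i\in S_1}$ with strictly positive weights, and to reduce the claim to a statement about the pairwise differences $a_i - a_j$. First I would fix an arbitrary $j \in S_2$ and rewrite the target expression by subtracting $a_j$ inside the average. Since the weights sum to $\sum_{i\in S_1} b_i$ and dividing by themselves gives $1$, one can write
\[
\frac{\sum_{i\in S_1} b_i a_i}{\sum_{i\in S_1} b_i} - a_j = \frac{\sum_{i\in S_1} b_i (a_i - a_j)}{\sum_{i\in S_1} b_i}.
\]

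Next I would apply the triangle inequality to the numerator, using that every weight $b_i$ is positive so that $b_i = |b_i|$, to obtain
\[
\left|\frac{\sum_{i\in S_1} b_i a_i}{\sum_{i\in S_1} b_i} - a_j\right| \leq \frac{\sum_{i\in S_1} b_i \, |a_i - a_j|}{\sum_{i\in S_1} b_i}.
\]
The crucial observation is the inclusion $S_1 \subseteq S_2$: every index $i \in S_1$ also lies in $S_2$, and together with $j \in S_2$ the hypothesis $|a_i - a_j| \leq \delta$ applies to each term of the sum. Substituting this bound and cancelling the common positive factor $\sum_{i\in S_1} b_i$ then yields the bound $\delta$.

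There is essentially no serious obstacle in this argument; the only point requiring care is the bookkeeping of the index sets, namely invoking $S_1 \subseteq S_2$ so that the hypothesis $|a_i - a_j|\le\delta$ is legitimately available for all pairs $(i,j)$ appearing in the sum (both indices must belong to $S_2$). The positivity of the weights $\{b_i\}_{i\in S_1}$ is what permits passing the absolute value inside the sum without worrying about sign cancellation weakening the bound. Since $j \in S_2$ was arbitrary, the conclusion holds for every such $j$, completing the proof.
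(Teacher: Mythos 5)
Your argument is correct and is essentially identical to the paper's proof: both rewrite the weighted average minus $a_j$ as a weighted average of the differences $a_i - a_j$, apply the triangle inequality using the positivity of the $b_i$, and invoke $S_1 \subseteq S_2$ to bound each $|a_i - a_j|$ by $\delta$. No further comment is needed.
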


\begin{proof}
	Direct calculation shows for any $j\in S_{2}$, \[
	|\frac{\sum_{i\in S_{1}}b_{i}a_{i}}{\sum_{i\in S_{1}}b_{i}}-a_{j}|= |\frac{\sum_{i\in S_{1}}b_{i}(a_{i}-a_{j})}{\sum_{i\in S_{1}}b_{i}}|\leq\frac{\sum_{i\in S_{1}}b_{i}|a_{i}-a_{j}|}{\sum_{i\in S_{1}}b_{i}} \leq \delta.
	\]
\end{proof}

\begin{lemma}\label{supp_lem:4}
	Let $V_{0}$ be positive definite and $x_{1},\dots,x_{n}\in \mathbb{R}^{d}$ be a sequence of vectors with $||x_{t}||_{2}\leq L<\infty$ for all $t\in[n]$. Then 
	\[
	\sum_{t=1}^{n}(1\wedge||x_{t}||_{V_{t-1}^{-1}})\leq 2\log(\frac{\det V_{n}}{\det V_{0}})\leq 2d\log(\frac{\text{trace}(V_{0})+nL^{2}}{d\det^{1/d}V_{0}})
	\]
	where $V_{t} = V_{0} + \sum_{i=1}^{t}x_{t}x_{t}^{\top}$.
\end{lemma}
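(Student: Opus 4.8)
The plan is to recognize this as the standard elliptical potential lemma and prove the two inequalities separately, chaining them through the common quantity $2\log(\det V_n/\det V_0)$. The central algebraic fact I would establish first is the one-step determinant identity $\det V_t = \det V_{t-1}\,(1+\|x_t\|_{V_{t-1}^{-1}}^2)$. This follows from the matrix determinant lemma $\det(A+uu^\top)=\det(A)\,(1+u^\top A^{-1}u)$ applied with $A=V_{t-1}$ and $u=x_t$, where $V_{t-1}\succ 0$ is guaranteed because $V_0\succ 0$ and the rank-one updates are positive semidefinite. Rearranging gives $1+\|x_t\|_{V_{t-1}^{-1}}^2=\det V_t/\det V_{t-1}$, so each per-step Mahalanobis increment is exactly a log-determinant ratio, which is what makes the sum telescope.

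For the first inequality, I would invoke the elementary scalar bound $\min(1,u)\le 2\log(1+u)$, valid for all $u\ge 0$: equality holds at $u=0$, the function $2\log(1+u)-u$ is nondecreasing on $[0,1]$ since its derivative $2/(1+u)-1$ is nonnegative there, and for $u\ge 1$ we have $2\log(1+u)\ge 2\log 2>1$. Applying this with $u=\|x_t\|_{V_{t-1}^{-1}}^2$ and substituting the determinant identity yields $1\wedge\|x_t\|_{V_{t-1}^{-1}}^2\le 2\log(\det V_t/\det V_{t-1})$. Summing over $t=1,\dots,n$ telescopes to $2\log(\det V_n/\det V_0)$, giving the first inequality.

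For the second inequality, I would bound $\det V_n$ via the arithmetic–geometric mean inequality on its eigenvalues $\lambda_1,\dots,\lambda_d$: $\det V_n=\prod_i\lambda_i\le(\tfrac{1}{d}\sum_i\lambda_i)^d=(\text{trace}(V_n)/d)^d$. The trace is then controlled by linearity and the norm bound, $\text{trace}(V_n)=\text{trace}(V_0)+\sum_{t=1}^n\|x_t\|_2^2\le\text{trace}(V_0)+nL^2$. Substituting into $2\log(\det V_n/\det V_0)$ and distributing the logarithm (pulling the factor $d$ outside against the $\det^{1/d}V_0$ normalization) produces $2d\log\frac{\text{trace}(V_0)+nL^2}{d\,(\det V_0)^{1/d}}$, which matches the claimed right-hand side.

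I do not expect a genuine obstacle here, since the result is classical; the only care required is bookkeeping. In particular I would double-check the validity range of $\min(1,u)\le 2\log(1+u)$, which is precisely what necessitates the $1\wedge$ truncation on the left-hand side, and I would track the $\det^{1/d}V_0$ factor carefully when extracting $d$ from the final logarithm. I would also note that the summand must be read as the squared Mahalanobis norm $\|x_t\|_{V_{t-1}^{-1}}^2$, which is the form dictated by the determinant identity and standard in this lemma.
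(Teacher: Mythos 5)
Your proof is correct and is precisely the standard elliptical potential argument (determinant identity via the matrix determinant lemma, the bound $\min(1,u)\leq 2\log(1+u)$, telescoping, then AM--GM on the eigenvalues); the paper gives no proof of its own and simply defers to Lemma 19.4 of \cite{lattimore2020bandit}, whose proof is exactly what you wrote. Your observation that the summand must be read as the squared norm $||x_{t}||_{V_{t-1}^{-1}}^{2}$ is also right --- the unsquared version as typeset in the statement fails already for $d=1$, $V_{0}=1$, $x_{1}=1/2$.
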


\begin{proof}
	This Lemma is exactly Lemma 19.4 in \cite{lattimore2020bandit}. Thus we refer to the proof of Lemma 19.4 in \cite{lattimore2020bandit}. 
\end{proof}

\begin{lemma}\label{supp_lem:5}
	Consider the setting in Lemma \ref{supp_lem:8} where $Z_{t} = \langle \xi_{t},A_{t}\rangle +\eta_{t}$, $V_{t}(\lambda) = \lambda I+\sum_{s=1}^{t}A_{s}A_{s}^{\top}$ and $V_{t} = V_{t}(0)$. Note that we define $\dot{\xi}_{t} = V_{t-1}^{+}\sum_{s=1}^{t-1}A_{s}A_{s}^{\top}\xi_{s}$ where $V_{t-1}^{+}$ is the Moore–Penrose inverse of $V_{t-1}$. 
	
	Then we have $||\dot{\xi}_{t+1}-V_{t}(\lambda)^{-1}\sum_{s=1}^{t}A_{s}A_{s}^{\top}\xi_{s}||_{V_{t}(\lambda)}\leq C_{1}\sqrt{\lambda d}$.
\end{lemma}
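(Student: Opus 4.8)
The plan is to exploit the single-nonzero-coordinate structure of the actions, which is the real engine behind the bound. By Condition 3 each $A_s A_s^\top$ is a rank-one matrix whose unique nonzero entry sits at position $(\delta(A_s),\delta(A_s))$, so $V_t = \sum_{s=1}^t A_s A_s^\top$ is \emph{diagonal}. This is what makes both the Moore--Penrose inverse $V_t^{+}$ and the regularized inverse $V_t(\lambda)^{-1}$ act coordinatewise, collapsing the entire matrix claim into a scalar computation repeated at most $d$ times. Accordingly I would write $V_t = \text{diag}(v_1,\dots,v_d)$ with $v_j = \sum_{s:\delta(A_s)=j}(A_s)_j^2 \ge 0$, set $b_t = \sum_{s=1}^t A_s A_s^\top \xi_s$ so that $\dot\xi_{t+1} = V_t^{+}b_t$, and record that the $j$-th coordinate is $(b_t)_j = \sum_{s:\delta(A_s)=j}(A_s)_j^2(\xi_s)_j$, a sum over exactly the actions that contributed to $v_j$.

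First I would express the two competing estimators coordinatewise: $(\dot\xi_{t+1})_j = v_j^{+}(b_t)_j$, which equals $(b_t)_j/v_j$ when $v_j>0$ and is $0$ when $v_j=0$; and $(V_t(\lambda)^{-1}b_t)_j = (b_t)_j/(\lambda+v_j)$. The key bookkeeping point is that when $v_j=0$ no action ever touched coordinate $j$, so $(b_t)_j=0$ and both terms vanish; hence the difference $d := \dot\xi_{t+1} - V_t(\lambda)^{-1}b_t$ is supported on $\{j:v_j>0\}$, where a direct computation gives
\[
d_j = (b_t)_j\Big(\frac{1}{v_j}-\frac{1}{\lambda+v_j}\Big) = (b_t)_j\,\frac{\lambda}{v_j(\lambda+v_j)}.
\]

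The heart of the argument is then a coordinatewise magnitude bound from Condition 2. Since $||\xi_s||_\infty \le C_1$, for every $j$ with $v_j>0$ we have $|(b_t)_j| \le \sum_{s:\delta(A_s)=j}(A_s)_j^2\,|(\xi_s)_j| \le C_1 v_j$. Substituting into the diagonal expansion of the weighted norm,
\[
||d||_{V_t(\lambda)}^2 = \sum_{j:v_j>0}(\lambda+v_j)\,d_j^2 = \sum_{j:v_j>0}\frac{\lambda^2\,(b_t)_j^2}{v_j^2(\lambda+v_j)} \le \sum_{j:v_j>0}\frac{C_1^2\lambda^2}{\lambda+v_j} \le d\,C_1^2\lambda,
\]
using $\lambda+v_j\ge\lambda$ and that there are at most $d$ active coordinates. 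Taking square roots yields $||d||_{V_t(\lambda)}\le C_1\sqrt{\lambda d}$, exactly the claim.

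Finally, a word on difficulty: there is no genuine analytic obstacle, because diagonality reduces the matrix algebra to scalars. The only points I would double-check are the treatment of the Moore--Penrose inverse on the empty coordinates $v_j=0$ (one must confirm $(b_t)_j=0$ there, so no $0/0$ ambiguity arises and the support of $d$ is exactly the pulled arms), and the order of the final cancellation, namely cancelling $v_j^2$ \emph{before} applying $1/(\lambda+v_j)\le 1/\lambda$, so that the dependence on $\lambda$ emerges correctly as $\sqrt{\lambda}$. Both rest on invoking Condition 3 for diagonality and Condition 2 for the $C_1$ bound.
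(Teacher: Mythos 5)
Your proof is correct and follows essentially the same route as the paper's: both exploit that $V_t$ and $V_t(\lambda)$ are diagonal (from Condition 3), reduce the claim to a coordinatewise scalar computation on the pulled arms, bound the weighted average $|(b_t)_j|/v_j$ by $C_1$ via Condition 2, and cancel $v_j^2$ before using $\lambda+v_j\ge\lambda$ to obtain $\sum_j C_1^2\lambda \le dC_1^2\lambda$. The only cosmetic difference is that the paper names the ratio $u_i = (b_t)_i/v_i$ explicitly, whereas you keep $(b_t)_j$ and $v_j$ separate; the algebra and the handling of the untouched coordinates ($v_j=0 \Rightarrow (b_t)_j=0$) are identical.
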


\begin{proof}
	Note that $V_{t}(\lambda) = \lambda I+\sum_{s=1}^{t}A_{s}A_{s}^{\top}$ is diagonal and we have defined $\tilde{\mathcal{B}}^{'}_{t} = \{\delta(A_{s}):s\in[t-1]\}$. Thus we have
	\begin{equation*}
	V_{t}(\lambda)_{ii} = \begin{cases}
	\lambda, \text{ for } i\notin \tilde{\mathcal{B}}^{'}_{t+1},\\
	\sum_{s\in [t]:\delta(A_{s}) = i}A^{2}_{si}+\lambda, \text{ for } i\in \tilde{\mathcal{B}}^{'}_{t+1}.
	\end{cases}
	\end{equation*}
	Therefore, 
	\begin{equation*}
	V_{t}^{+} = V_{t}(0)^{+} = \begin{cases}
	0, \text{ for } i\notin \tilde{\mathcal{B}}^{'}_{t+1},\\
	(\sum_{s\in [t]:\delta(A_{s}) = i}A^{2}_{si})^{-1}, \text{ for } i\in \tilde{\mathcal{B}}^{'}_{t+1}.
	\end{cases}
	\end{equation*}
	Let $v = \dot{\xi}_{t+1}-V_{t}(\lambda)^{-1}\sum_{s=1}^{t}A_{s}A_{s}^{\top}\xi_{s} = V_{t}^{+}\sum_{s=1}^{t}A_{s}A_{s}^{\top}\xi_{s}-V_{t}(\lambda)^{-1}\sum_{s=1}^{t}A_{s}A_{s}^{\top}\xi_{s}$. Then
	\begin{equation*}
	v_{i} = \begin{cases}
	0, \text{ for } i\notin \tilde{\mathcal{B}}^{'}_{t+1},\\
	\frac{\sum_{s\in [t]:\delta(A_{s}) = i}A^{2}_{si}\xi_{si}}{\sum_{s\in [t]:\delta(A_{s}) = i}A^{2}_{si}}-\frac{\sum_{s\in [t]:\delta(A_{s}) = i}A^{2}_{si}\xi_{si}}{\sum_{s\in [t]:\delta(A_{s}) = i}A^{2}_{si}+\lambda}, \text{ for } i\in \tilde{\mathcal{B}}^{'}_{t+1}.
	\end{cases}
	\end{equation*}
	For $i\in \tilde{\mathcal{B}}^{'}_{t+1}$, denote $u_{i} = \frac{\sum_{s\in [t]:\delta(A_{s}) = i}A^{2}_{si}\xi_{si}}{\sum_{s\in [t]:\delta(A_{s}) = i}A^{2}_{si}}$. By Condition 2, $||\xi_{s}||_{\infty}\leq C_{1},\forall s\in \mathbb{N}^{+}$. Thus $|u_{i}|\leq C_{1}$ for any $i\in \tilde{\mathcal{B}}^{'}_{t+1}$. 
	\begin{equation*}
	\begin{aligned}
	\Rightarrow ||v||_{V_{t}(\lambda)}^{2} &= \sum_{i\in \tilde{\mathcal{B}}^{'}_{t+1}}(\sum_{s\in [t]:\delta(A_{s}) = i}A^{2}_{si}+\lambda)(u_{i}-u_{i}\frac{\sum_{s\in [t]:\delta(A_{s}) = i}A^{2}_{si}}{\sum_{s\in [t]:\delta(A_{s}) = i}A^{2}_{si}+\lambda})^{2}\\
	& =\sum_{i\in \tilde{\mathcal{B}}^{'}_{t+1}} u_{i}^{2}(\sum_{s\in [t]:\delta(A_{s}) = i}A^{2}_{si}+\lambda)(1-\frac{\sum_{s\in [t]:\delta(A_{s}) = i}A^{2}_{si}}{\sum_{s\in [t]:\delta(A_{s}) = i}A^{2}_{si}+\lambda})^{2}\\
	& = \sum_{i\in \tilde{\mathcal{B}}^{'}_{t+1}} u_{i}^{2}\lambda \frac{\lambda}{\sum_{s\in [t]:\delta(A_{s}) = i}A^{2}_{si}+\lambda}\\
	&\leq \sum_{i\in \tilde{\mathcal{B}}^{'}_{t+1}} u_{i}^{2}\lambda  \leq \lambda dC_{1}^{2}\\
	\end{aligned}
	\end{equation*}
	\begin{equation*}
	\Rightarrow ||\dot{\xi}_{t+1}-V_{t}(\lambda)^{-1}\sum_{s=1}^{t}A_{s}A_{s}^{\top}\xi_{s}||_{V_{t}(\lambda)} = ||v||_{V_{t}(\lambda)}\leq C_{1}\sqrt{\lambda d}.
	\end{equation*}
\end{proof}

\begin{lemma}\label{supp_lem:6}
	Consider the setting in Lemma \ref{supp_lem:8}. Let $S_{t} = \sum_{s=1}^{t}\eta_{s}A_{s}$, then for all $\lambda>0$ and $\delta\in (0,1)$, 
	\[
	\mathbb{P}(\exists t\in \mathbb{N}:||S_{t}||_{V_{t}(\lambda)^{-1}}^{2}\geq 2\log(\frac{1}{\delta})+\log(\frac{\det(V_{t}(\lambda)) }{\lambda^{d}}))\leq \delta.
	\]
\end{lemma}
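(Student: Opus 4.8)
The plan is to establish this inequality by the \emph{method of mixtures}, the classical self-normalized bound for vector-valued martingales due to \citep{abbasi2011improved}. In the setting of Lemma~\ref{supp_lem:8} each action $A_s$ is $\mathcal{F}_{s-1}$-measurable, while $\eta_s$ is $1$-conditionally sub-Gaussian given $\mathcal{F}_{s-1}$ by Condition 4. Exploiting this, I would first fix an arbitrary $x\in\mathbb{R}^{d}$ and define the exponential process
\[
M_{t}(x) = \exp\Big(\langle x,S_{t}\rangle - \tfrac{1}{2}\textstyle\sum_{s=1}^{t}\langle x,A_{s}\rangle^{2}\Big).
\]
Since $\langle x,A_{s}\rangle$ is $\mathcal{F}_{s-1}$-measurable, the sub-Gaussian moment-generating-function bound yields $\mathbb{E}[\exp(\eta_{s}\langle x,A_{s}\rangle-\tfrac{1}{2}\langle x,A_{s}\rangle^{2})\mid \mathcal{F}_{s-1}]\leq 1$, so $M_{t}(x)$ is a nonnegative supermartingale with $M_{0}(x)=1$ and $\mathbb{E}[M_{t}(x)\mid\mathcal{F}_{t-1}]\leq M_{t-1}(x)$.

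Next I would mix over $x$ against a centered Gaussian density $h$ with covariance $\lambda^{-1}I$, setting $\bar{M}_{t}=\int_{\mathbb{R}^{d}}M_{t}(x)\,h(x)\,dx$. By Tonelli's theorem and the conditional expectation above, $\bar{M}_{t}$ is again a nonnegative supermartingale with $\bar{M}_{0}=1$. The central computation is to evaluate this integral in closed form: the quadratic exponent of $M_{t}(x)h(x)$ equals $\langle x,S_{t}\rangle-\tfrac{1}{2}x^{\top}V_{t}(\lambda)x$ up to the normalizing constant of $h$, since $\sum_{s\leq t}A_{s}A_{s}^{\top}+\lambda I=V_{t}(\lambda)$. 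Completing the square and carrying out the resulting Gaussian integral gives
\[
\bar{M}_{t} = \Big(\frac{\lambda^{d}}{\det V_{t}(\lambda)}\Big)^{1/2}\exp\Big(\tfrac{1}{2}||S_{t}||_{V_{t}(\lambda)^{-1}}^{2}\Big).
\]

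Finally I would invoke Ville's maximal inequality for nonnegative supermartingales: because $\mathbb{E}[\bar{M}_{0}]=1$, we have $\mathbb{P}(\exists t:\bar{M}_{t}\geq 1/\delta)\leq\delta$. Taking logarithms in the event $\bar{M}_{t}\geq 1/\delta$ and using the closed form above rearranges exactly to $||S_{t}||_{V_{t}(\lambda)^{-1}}^{2}\geq 2\log(1/\delta)+\log(\det V_{t}(\lambda)/\lambda^{d})$, which is the claimed bound. The main obstacle, and the reason the maximal inequality is essential, is obtaining the \emph{uniform-in-time} ``$\exists t$'' statement: the mixed process $\bar{M}_{t}$ need not converge, so one cannot pass through a pointwise limit, and the control must instead come from the supermartingale structure via Ville's inequality. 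A secondary point requiring care is justifying the interchange of expectation and integration so that $\bar{M}_{t}$ inherits the supermartingale property, which I would handle by Tonelli using nonnegativity of $M_{t}(x)$.
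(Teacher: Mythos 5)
Your argument is correct and is essentially the paper's own proof: the paper simply defers to Theorem 20.4 of \cite{lattimore2020bandit}, whose proof is exactly the method of mixtures you describe (the supermartingale $M_t(x)$, Gaussian mixing with covariance $\lambda^{-1}I$, the closed-form evaluation of $\bar{M}_t$, and Ville's maximal inequality). Your reconstruction, including the verification that $A_s\in\mathcal{F}_{s-1}$ and Condition 4 give the conditional supermartingale property, fills in precisely the steps the paper outsources to that reference.
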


\begin{proof}
	With $\eta_{t}$ being $1$-subgaussian conditional on $\mathcal{F}_{t-1}$, $A_{t}\in\mathcal{F}_{t-1}$, $S_{t} = \sum_{s=1}^{t}\eta_{s}A_{s}$ and $V_{t}(\lambda) = \lambda I+\sum_{s=1}^{t}A_{s}A_{s}^{\top}$, this Lemma follows exactly the same lines as Theorem 20.4 in \cite{lattimore2020bandit}. Thus we refer to the proof of Theorem 20.4 in \cite{lattimore2020bandit}. 
\end{proof}

\begin{lemma}\label{supp_lem:7}
	Consider the setting in Lemma \ref{supp_lem:8}. Let $\delta\in (0,1)$. Then with probability at least $1-\delta$ it holds that for all $t\geq 2$, 
	\[
	||\dot{\xi}_{t}-\hat{\xi}_{t-1}||_{V_{t-1}(\lambda)}\leq C_{1}\sqrt{\lambda d}+\sqrt{2\log(\frac{1}{\delta})+d\log(\frac{d\lambda + (t-1)a_{\max}^{2}}{d\lambda})}. 
	\]
\end{lemma}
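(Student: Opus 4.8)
The plan is to decompose the ridge estimator $\hat{\xi}_{t-1}$ into a ``signal'' piece and a ``noise'' piece and then bound each against the shadow parameter $\dot{\xi}_t$ using the two preparatory lemmas already in hand. Writing $Z_s = \langle \xi_s, A_s\rangle + \eta_s$ and substituting into $\hat{\xi}_{t-1} = V_{t-1}(\lambda)^{-1}\sum_{s=1}^{t-1}A_s Z_s$, I would split it as $\hat{\xi}_{t-1} = V_{t-1}(\lambda)^{-1}\sum_{s=1}^{t-1}A_s A_s^\top \xi_s + V_{t-1}(\lambda)^{-1} S_{t-1}$, where $S_{t-1} = \sum_{s=1}^{t-1}\eta_s A_s$ is exactly the martingale noise sum appearing in Lemma \ref{supp_lem:6}. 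The triangle inequality for the $V_{t-1}(\lambda)$-norm then gives
\[
\|\dot{\xi}_t - \hat{\xi}_{t-1}\|_{V_{t-1}(\lambda)} \le \Big\|\dot{\xi}_t - V_{t-1}(\lambda)^{-1}\textstyle\sum_{s=1}^{t-1}A_s A_s^\top \xi_s\Big\|_{V_{t-1}(\lambda)} + \|V_{t-1}(\lambda)^{-1} S_{t-1}\|_{V_{t-1}(\lambda)}.
\]

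The first term is precisely what Lemma \ref{supp_lem:5} controls (after shifting its index by one), so it is bounded by $C_1\sqrt{\lambda d}$; this is the deterministic piece quantifying how far $\dot{\xi}_t$, built from the pseudo-inverse $V_{t-1}^+$, sits from the ridge-regularized weighted average of the $\xi_s$. For the second term I would invoke the identity $\|V^{-1}u\|_V = \|u\|_{V^{-1}}$, which reduces it to the self-normalized quantity $\|S_{t-1}\|_{V_{t-1}(\lambda)^{-1}}$. Lemma \ref{supp_lem:6} then gives, on a single event of probability at least $1-\delta$ and simultaneously for all $t$, the bound $\|S_{t-1}\|_{V_{t-1}(\lambda)^{-1}}^2 \le 2\log(1/\delta) + \log(\det(V_{t-1}(\lambda))/\lambda^d)$.

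It remains to make the log-determinant explicit. I would apply the AM--GM (determinant--trace) inequality $\det(V_{t-1}(\lambda)) \le (\mathrm{trace}(V_{t-1}(\lambda))/d)^d$ together with $\mathrm{trace}(V_{t-1}(\lambda)) = d\lambda + \sum_{s=1}^{t-1}\|A_s\|_2^2 \le d\lambda + (t-1)a_{\max}^2$, where the last step uses Condition 3. This yields $\log(\det(V_{t-1}(\lambda))/\lambda^d) \le d\log((d\lambda + (t-1)a_{\max}^2)/(d\lambda))$, so the noise term is at most $\sqrt{2\log(1/\delta) + d\log((d\lambda + (t-1)a_{\max}^2)/(d\lambda))}$. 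Adding the two bounds gives the claimed inequality, with the uniformity over $t\ge 2$ inherited directly from the uniform-in-$t$ guarantee of Lemma \ref{supp_lem:6}.

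Since Lemmas \ref{supp_lem:5} and \ref{supp_lem:6} do the heavy lifting, I do not anticipate a genuine obstacle. The only points requiring care are the clean separation of the bias term (matched to Lemma \ref{supp_lem:5}) from the martingale noise term (matched to Lemma \ref{supp_lem:6}), and the observation that the high-probability guarantee is a single event valid for all $t$, not a union bound over $t$, so that the conclusion indeed holds simultaneously for every $t\ge 2$.
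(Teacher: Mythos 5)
Your proposal is correct and follows essentially the same route as the paper's proof: the identical bias--noise decomposition of $\hat{\xi}_{t-1}$, the triangle inequality in the $V_{t-1}(\lambda)$-norm, Lemma \ref{supp_lem:5} for the $C_{1}\sqrt{\lambda d}$ term, the identity $\|V^{-1}u\|_{V}=\|u\|_{V^{-1}}$ followed by Lemma \ref{supp_lem:6} for the self-normalized noise term, and the determinant--trace bound (the paper routes this through Lemma \ref{supp_lem:4}) to make the log-determinant explicit. Your closing remark that the guarantee comes from a single uniform-in-$t$ event rather than a union bound matches the paper's argument exactly.
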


\begin{proof}
	By direct calculation, we have
	\begin{equation*}
	\begin{aligned}
	||\dot{\xi}_{t}-\hat{\xi}_{t-1}||_{V_{t-1}(\lambda)}& = ||\dot{\xi}_{t}-V_{t-1}(\lambda)^{-1}\sum_{s=1}^{t-1}A_{s}A_{s}^{\top}\xi_{s}+V_{t-1}(\lambda)^{-1}\sum_{s=1}^{t-1}A_{s}A_{s}^{\top}\xi_{s}-\hat{\xi}_{t-1}||_{V_{t-1}(\lambda)}\\
	&\leq||\dot{\xi}_{t}-V_{t-1}(\lambda)^{-1}\sum_{s=1}^{t-1}A_{s}A_{s}^{\top}\xi_{s}||_{V_{t-1}(\lambda)}+||V_{t-1}(\lambda)^{-1}\sum_{s=1}^{t-1}A_{s}A_{s}^{\top}\xi_{s}-\hat{\xi}_{t-1}||_{V_{t-1}(\lambda)}\\
	& = ||\dot{\xi}_{t}-V_{t-1}(\lambda)^{-1}\sum_{s=1}^{t-1}A_{s}A_{s}^{\top}\xi_{s}||_{V_{t-1}(\lambda)}+||V_{t-1}(\lambda)^{-1}S_{t-1}||_{V_{t-1}(\lambda)}\\
	& = ||\dot{\xi}_{t}-V_{t-1}(\lambda)^{-1}\sum_{s=1}^{t-1}A_{s}A_{s}^{\top}\xi_{s}||_{V_{t-1}(\lambda)}+||S_{t-1}||_{V_{t-1}(\lambda)^{-1}}\\
	(\text{By Lemma \ref{supp_lem:5}})& \leq C_{1}\sqrt{\lambda d} + ||S_{t-1}||_{V_{t-1}(\lambda)^{-1}}.\\
	\end{aligned}
	\end{equation*}
	Thus by Lemma \ref{supp_lem:6}, $$||\dot{\xi}_{t}-\hat{\xi}_{t-1}||_{V_{t-1}(\lambda)}\leq C_{1}\sqrt{\lambda d}+\sqrt{2\log(\frac{1}{\delta})+\log(\frac{\det(V_{t-1}(\lambda)) }{\lambda^{d}})}$$ holds simultaneously for all $t\geq 2$ with probability at least $1-\delta$. 
	Since $V_{t}(\lambda) = \lambda I + \sum_{i=1}^{t}A_{s}A_{s}^{\top}= V_{0}(\lambda)+ \sum_{i=1}^{t}A_{s}A_{s}^{\top}$ and $||A_{s}||_{2}\leq a_{\max}$ for all $s\in [t-1]$ by Condition 3, we have by Lemma \ref{supp_lem:4}, 
	\[
	\log(\frac{\det(V_{t-1}(\lambda)) }{\lambda^{d}})=\log(\frac{\det(V_{t-1}(\lambda)) }{\det(V_{0}(\lambda))})\leq d\log(\frac{d\lambda + (t-1)a_{\max}^{2}}{d\lambda}). 
	\]
	Therefore, $||\dot{\xi}_{t}-\hat{\xi}_{t-1}||_{V_{t-1}(\lambda)}\leq C_{1}\sqrt{\lambda d}+\sqrt{2\log(\frac{1}{\delta})+d\log(\frac{d\lambda + (t-1)a_{\max}^{2}}{d\lambda})}$ holds simultaneously for all $t\geq 2$ with probability at least $1-\delta$. 
\end{proof}

\renewcommand{\thelemma}{\arabic{lemma}}
\setcounter{lemma}{2}

\begin{lemma}\label{supp_lem:8}
	Consider the PLB satisfying Conditions 1-4 with a perturbation $C_{p}$. With probability at least $1-\delta$, Algorithm 5 with $\beta_{t} = \tilde{\beta}_{t} =1\vee (C_{1}\sqrt{\lambda d}+\sqrt{2\log(\frac{1}{\delta})+d\log(\frac{d\lambda+(t-1)a_{\max}^{2}}{d\lambda}) })^{2}$ has the regret bound
	\[
	R^{PLB}_{T_{0}}\leq 2\sqrt{2dT_{0}\tilde{\beta}_{T_{0}}\log(\frac{d\lambda+T_{0}a_{\max}^{2}}{d\lambda})}+  2a_{\max}C_{p}T_{0}+2d.
	\]
\end{lemma}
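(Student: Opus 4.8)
The plan is to run a LinUCB-style argument against a \emph{moving} target, using the ``shadow'' parameter $\dot\xi_t = V_{t-1}^{+}\sum_{s=1}^{t-1}A_sA_s^{\top}\xi_s$ as a surrogate for the time-varying truth $\xi_t$. The decisive structural fact, inherited from Condition 3 ($\|a\|_0=1$), is that every $V_{t-1}(\lambda)$ is diagonal, so $\dot\xi_t$ reduces coordinatewise to a data-weighted average $\dot\xi_{t,i}=\big(\sum_{s\in\mathcal U_{t-1,i}}A_{si}^2\xi_{si}\big)/\big(\sum_{s\in\mathcal U_{t-1,i}}A_{si}^2\big)$ of past true parameters on any arm $i$ already pulled. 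First I would record the two closeness properties that drive everything. Property (a): for any coordinate $i\in\tilde{\mathcal B}'_t$, applying the averaging Lemma~\ref{supp_lem:3} with $S_1=\mathcal U_{t-1,i}$, $S_2=S_1\cup\{t\}$, weights $A_{si}^2$ and the perturbation hypothesis $\|\xi_s-\xi_t\|_\infty\le C_p$ yields $|\dot\xi_{t,i}-\xi_{t,i}|\le C_p$. Property (b): by Lemma~\ref{supp_lem:7}, on an event $\mathcal G$ of probability at least $1-\delta$ we have $\dot\xi_t\in\mathcal C_t(\tilde\beta_t)$ for all $t\ge2$, since $\tilde\beta_t$ is chosen exactly to dominate the squared right-hand side there. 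I work on $\mathcal G$ throughout.

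Next I would split the horizon into ``exploration'' rounds (the step $t=1$ and those $t\ge2$ with $\tilde{\mathcal B}_t\not\subseteq\tilde{\mathcal B}'_t$) and ``exploitation'' rounds ($\tilde{\mathcal B}_t\subseteq\tilde{\mathcal B}'_t$). Since each exploration round enlarges $\tilde{\mathcal B}'$ by at least one index and there are only $d$ possible indices, there are at most $d$ such rounds, each incurring regret at most $2$ by Condition 1; this contributes the additive $2d$. For an exploitation round $t$ I would expand the instantaneous regret against the dynamic oracle $A_t^*=\arg\max_{a\in\mathcal A_t}\langle\xi_t,a\rangle$ as
\[
\langle\xi_t,A_t^*-A_t\rangle=\langle\xi_t-\dot\xi_t,A_t^*\rangle+\big(\langle\dot\xi_t,A_t^*\rangle-\langle\dot\xi_t,\dot A_t\rangle\big)+\langle\dot\xi_t,\dot A_t-A_t\rangle+\langle\dot\xi_t-\xi_t,A_t\rangle,
\]
where $\dot A_t=\arg\max_{a\in\mathcal A_t}\langle\dot\xi_t,a\rangle$. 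The second bracket is $\le0$ by optimality of $\dot A_t$; the first and last terms are each bounded by $a_{\max}C_p$ via property (a), because both $A_t^*$ and $A_t$ are single-coordinate with index in $\tilde{\mathcal B}'_t$ and $\ell_2$-norm at most $a_{\max}$; and the ``pseudo-regret'' term is controlled by the optimism chain $\langle\dot\xi_t,\dot A_t\rangle\le\mathrm{LinUCB}_t(\dot A_t)\le\mathrm{LinUCB}_t(A_t)$ (using $\dot\xi_t\in\mathcal C_t(\tilde\beta_t)$ and the greedy choice of $A_t$), followed by Cauchy--Schwarz in the $V_{t-1}(\lambda)$ norm, giving $\langle\dot\xi_t,\dot A_t-A_t\rangle\le 2\sqrt{\tilde\beta_t}\,\|A_t\|_{V_{t-1}(\lambda)^{-1}}$.

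Finally I would aggregate. Each exploitation round then satisfies $\langle\xi_t,A_t^*-A_t\rangle\le 2a_{\max}C_p+2\sqrt{\tilde\beta_t}\,\|A_t\|_{V_{t-1}(\lambda)^{-1}}$; since the true per-round regret is also at most $2$ and $\tilde\beta_t\ge1$, a two-case check upgrades the second summand to $2\sqrt{\tilde\beta_t}\,\min\!\big(1,\|A_t\|_{V_{t-1}(\lambda)^{-1}}\big)$. Summing the perturbation terms over all rounds gives $2a_{\max}C_pT_0$; for the variance terms I would use monotonicity $\tilde\beta_t\le\tilde\beta_{T_0}$, Cauchy--Schwarz over $t$, and the elliptical-potential Lemma~\ref{supp_lem:4} with $V_0=\lambda I$ and $\|A_t\|_2\le a_{\max}$, namely $\sum_t\min\!\big(1,\|A_t\|_{V_{t-1}(\lambda)^{-1}}^2\big)\le 2d\log\frac{d\lambda+T_0a_{\max}^2}{d\lambda}$, to obtain $2\sqrt{2dT_0\tilde\beta_{T_0}\log\frac{d\lambda+T_0a_{\max}^2}{d\lambda}}$. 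Adding the $2d$ exploration cost yields the stated bound.

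I expect the main obstacle to be the clean separation of the perturbation cost from the estimation cost. The whole argument hinges on the diagonal structure of $V_{t-1}(\lambda)$, which simultaneously makes $\dot\xi_t$ a coordinatewise average (so property (a) is exactly the averaging lemma) and keeps each confidence radius marginal, so that the linear $C_p$-term and the $\tilde O(\sqrt{T_0})$-term never entangle; the delicate part is the $\min(1,\cdot)$ bookkeeping and constant tracking needed so all three terms emerge precisely as written, rather than getting a cross term of the form $C_p\sqrt{T_0}$.
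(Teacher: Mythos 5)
Your proposal is correct and follows essentially the same route as the paper's proof: the same shadow parameter $\dot{\xi}_{t}$, the same exploration/exploitation split bounding the number of ``new-arm'' rounds by $d$, the same use of the averaging lemma (via diagonality of $V_{t-1}$) to get $|\dot{\xi}_{t,i}-\xi_{t,i}|\leq C_{p}$, the same optimism-plus-Cauchy--Schwarz control of the pseudo-regret, and the same elliptical-potential aggregation. Your four-term decomposition of the instantaneous regret is just an explicit rewriting of the paper's chain of inequalities, so there is nothing substantively different to compare.
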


\begin{proof}
	For $t\geq 2$, denote $\tilde{\mathcal{C}}_{t} = \mathcal{C}_{t}(\tilde{\beta}_{t}) = \{ \xi\in\mathbb{R}^{d}: ||\xi- \hat{\xi}_{t-1}||_{V_{t-1}(\lambda)}^{2}\leq \tilde{\beta}_{t} \}$. Then by Lemma \ref{supp_lem:7}, the event $\{\dot{\xi}_{t}\in \tilde{\mathcal{C}}_{t},2\leq t\leq T_{0}\}$ holds with probability at least $1-\delta$. Therefore, we only need to prove 
	\[
	R^{PLB}_{T_{0}}\leq 2\sqrt{2dT_{0}\tilde{\beta}_{T_{0}}\log(\frac{d\lambda+T_{0}a_{\max}^{2}}{d\lambda})}+  2T_{0}C_{p}a_{\max}+2d
	\]
	when this event $\{\dot{\xi}_{t}\in \tilde{\mathcal{C}}_{t},2\leq t\leq T_{0}\}$ holds. 
	
	Denote $\dot{A}_{t} = \arg\max_{a\in \mathcal{A}_{t}}\langle \dot{\xi}_{t}, a\rangle$ as the best action in $\mathcal{A}_{t}$ with respect to $\dot{\xi}_{t}$. Let $\tilde{\xi}_{t} \in \tilde{\mathcal{C}}_{t}$ such that $\langle \tilde{\xi}_{t},A_{t}\rangle = \text{LinUCB}_{t}(A_{t})$. Note that $\text{LinUCB}_{t}(a) = \max_{\xi\in \tilde{\mathcal{C}}_{t}}\langle \xi ,a\rangle$ and $A_{t} = \arg\max_{a\in \mathcal{A}_{t}}\text{LinUCB}_{t}(a)$. For any $t\geq 2$, since $\dot{\xi}_{t}\in \tilde{\mathcal{C}}_{t}$, we have $$\langle \dot{\xi}_{t},\dot{A}_{t}\rangle \leq \text{LinUCB}_{t}(\dot{A}_{t})\leq \text{LinUCB}_{t}(A_{t}) = \langle \tilde{\xi}_{t},A_{t}\rangle.$$
	Therefore, the pseudo-regret at time $t$, denoted as $\dot{r}^{PLB}_{t}$, satisfies
	\begin{equation*}
	\Rightarrow \dot{r}^{PLB}_{t} = \langle \dot{\xi}_{t},\dot{A}_{t}-A_{t}\rangle \leq \langle \tilde{\xi}_{t}-\dot{\xi}_{t},A_{t}\rangle\leq ||A_{t}||_{V_{t-1}(\lambda)^{-1}}||\tilde{\xi}_{t}-\dot{\xi}_{t}||_{V_{t-1}(\lambda)}\\
	\leq 2||A_{t}||_{V_{t-1}(\lambda)^{-1}}\sqrt{\tilde{\beta}_{t}}.
	\end{equation*}
	
	On the other hand, let $A_{t}^{*} = \arg\max_{a\in \mathcal{A}_{t}}\langle \xi_{t},a\rangle$. Let $\tilde{Q}_{1} = \{1\}, \tilde{Q}_{2} = \{2\leq t\leq T_{0}:\mathcal{B}^{*}_{t}\not\subseteq \mathcal{B}_{t} \}$, $\tilde{Q}_{3} = \{2\leq t\leq T_{0}: \tilde{\mathcal{B}}_{t}\subseteq \tilde{\mathcal{B}}^{'}_{t} \}$. Then $\tilde{Q}_{1}\cup\tilde{Q}_{2}\cup \tilde{Q}_{3} = [T_{0}]$.
	
	Note that $|\tilde{\mathcal{B}}^{'}_{2}| = 1$. For any $t\in \tilde{Q}_{2}$ and $t\leq T_{0}-1$, we select $A_{t}\in \mathcal{A}_{t}$ such that $\delta(A_{t})\notin \tilde{\mathcal{B}}^{'}_{t}$. Thus for such $t$ we have $|\tilde{\mathcal{B}}^{'}_{t+1}|-|\tilde{\mathcal{B}}^{'}_{t}| = 1$. Also note that $|\tilde{\mathcal{B}}^{'}_{t-1}|\leq |\tilde{\mathcal{B}}^{'}_{t}|\leq d,\forall 2\leq t\leq T_{0}+1$. Thus we can conclude that $|\tilde{Q}_{2}|\leq d-1$. If not, there exists $2\leq s_{1}<s_{2}<\dots<s_{d}\leq T_{0}$ such that $s_{i}\in \tilde{Q}_{2},i\in [d]$. Then we have $|\tilde{\mathcal{B}}^{'}_{s_{1}}|\geq |\tilde{\mathcal{B}}^{'}_{2}| = 1$, $|\tilde{\mathcal{B}}^{'}_{s_{i+1}}|\geq |\tilde{\mathcal{B}}^{'}_{s_{i}+1}|\geq|\tilde{\mathcal{B}}^{'}_{s_{i}}|+1$ for $i\in [d-1]$ and $|\tilde{\mathcal{B}}^{'}_{s_{d}+1}|\geq |\tilde{\mathcal{B}}^{'}_{s_{d}}| + 1$. Thus we can conclude $d\geq |\tilde{\mathcal{B}}^{'}_{T_{0}+1}|\geq |\tilde{\mathcal{B}}^{'}_{s_{d}+1}|\geq d+1$, contradiction. 
	
	By Condition 1, $r^{PLB}_{t} = \langle \xi_{t},A_{t}^{*}-A_{t}\rangle \leq |\langle \xi_{t},A_{t}^{*}\rangle| + |\langle \xi_{t},A_{t}\rangle|\leq 2 $. Therefore, 
	\begin{enumerate}
		\item For $t\in \tilde{Q}_{1}$, $r^{PLB}_{t} = r^{PLB}_{1}\leq2$.
		\item For $t\in \tilde{Q}_{2}$, $r^{PLB}_{t}\leq 2\leq 2 + 2\wedge 2\sqrt{\tilde{\beta}_{t}}||A_{t}||_{V_{t-1}(\lambda)^{-1}}$. 
		\item For $t\in \tilde{Q}_{3}$, $\tilde{\mathcal{B}}_{t}\subseteq \tilde{\mathcal{B}}^{'}_{t}$. Therefore, $\forall i\in \tilde{\mathcal{B}}_{t}, i\in \tilde{\mathcal{B}}^{'}_{t}$. Then $\exists s\in [t-1]$ such that $\delta(A_{s}) = i$. Thus $(V_{t-1})_{ii} = \sum_{s\in [t-1]:\delta(A_{s}) = i}A_{si}^{2}$. Since $\dot{\xi}_{t} = V_{t-1}^{+}\sum_{s=1}^{t-1}A_{s}A_{s}^{\top}\xi_{s}$, we have
		\[
		\dot{\xi}_{ti}  = \frac{\sum_{s\in [t-1]:\delta(A_{s}) = i}A_{si}^{2}\xi_{si}}{\sum_{s\in [t-1]:\delta(A_{s}) = i}A_{si}^{2}}.
		\]
		Since $||\xi_{s_{1}}-\xi_{s_{2}}||_{\infty}\leq C_{p},\forall s_{1},s_{2}\in \mathbb{N}^{+}$. Thus $|\xi_{s_{1}i}-\xi_{s_{2}i}|\leq C_{p},\forall s_{1},s_{2}\in \mathbb{N}^{+}$. Therefore, By Lemma \ref{supp_lem:3}, $|\dot{\xi}_{ti}-\xi_{ti}|\leq C_{p}$. 
		
		Since $A_{t}^{*}\in \mathcal{A}_{t}$, $\delta(A_{t}^{*})\in \tilde{\mathcal{B}}_{t}$. Thus $|(\dot{\xi}_{t})_{\delta(A_{t}^{*})}-(\xi_{t})_{\delta(A_{t}^{*})}|\leq C_{p}$. 
		\begin{equation*}
		\begin{aligned}
		\Rightarrow \langle \xi_{t},A_{t}^{*}\rangle  &= (\xi_{t})_{\delta(A_{t}^{*})}(A_{t}^{*})_{\delta(A_{t}^{*})}\leq (\dot{\xi}_{t})_{\delta(A_{t}^{*})}(A_{t}^{*})_{\delta(A_{t}^{*})}+C_{p}a_{\max} \\ &= \langle \dot{\xi}_{t},A_{t}^{*}\rangle + C_{p}a_{\max} \leq \langle \dot{\xi}_{t},\dot{A}_{t}\rangle + C_{p}a_{\max}. 
		\end{aligned}
		\end{equation*}
		Similarly, $A_{t}\in \mathcal{A}_{t}\Rightarrow \delta(A_{t})\in \tilde{\mathcal{B}}_{t}\Rightarrow |(\dot{\xi}_{t})_{\delta(A_{t})}-(\xi_{t})_{\delta(A_{t})}|\leq C_{p}$.
		\begin{equation*}
		\Rightarrow \langle \dot{\xi}_{t},A_{t}\rangle = (\dot{\xi}_{t})_{\delta(A_{t})}(A_{t})_{\delta(A_{t})}\leq (\xi_{t})_{\delta(A_{t})}(A_{t})_{\delta(A_{t})}+C_{p}a_{\max} = \langle \xi_{t},A_{t}\rangle + C_{p}a_{\max}.
		\end{equation*}
		Therefore, 
		\[
		r^{PLB}_{t} = \langle \xi_{t},A_{t}^{*}-A_{t}\rangle \leq \langle \dot{\xi}_{t},\dot{A}_{t}-A_{t}\rangle + 2C_{p}a_{\max}\leq 2||A_{t}||_{V_{t-1}(\lambda)^{-1}}\sqrt{\tilde{\beta}_{t}} +2C_{p}a_{\max} .
		\]
		Since $r^{PLB}_{t}\leq 2$, we obtain \[r^{PLB}_{t}\leq 2\wedge (2||A_{t}||_{V_{t-1}^{-1}}\sqrt{\tilde{\beta}_{t}} +2C_{p}a_{\max}) \leq 2\wedge 2\sqrt{\tilde{\beta}_{t}}||A_{t}||_{V_{t-1}(\lambda)^{-1}} +2C_{p}a_{\max}.\]
	\end{enumerate}
	Therefore, 
	\begin{equation*}
	\begin{aligned}
	R^{PLB}_{T_{0}}& = \sum_{t=1}^{T_{0}}r^{PLB}_{t} = \sum_{t\in \tilde{Q}_{1}\cap [T_{0}]}r^{PLB}_{t}+\sum_{t\in \tilde{Q}_{2}\cap[T_{0}]}r^{PLB}_{t}+\sum_{t\in \tilde{Q}_{3}\cap [T_{0}]}r^{PLB}_{t}\\
	& \leq 2 + \sum_{t\in \tilde{Q}_{2}\cap [T_{0}]}(2 + 2\wedge 2\sqrt{\tilde{\beta}_{t}}||A_{t}||_{V_{t-1}(\lambda)^{-1}} ) \\
	&+ \sum_{t\in \tilde{Q}_{3}\cap [T_{0}]}(2\wedge 2\sqrt{\tilde{\beta}_{t}}||A_{t}||_{V_{t-1}(\lambda)^{-1}} +2C_{p}a_{\max})\\
	& \leq 2+2(d-1) + 2T_{0}C_{p}a_{\max}+\sum_{2\leq t\leq T_{0}}2\wedge 2\sqrt{\tilde{\beta}_{t}}||A_{t}||_{V_{t-1}(\lambda)^{-1}} \\
	(\tilde{\beta}_{T_{0}}\geq 1\vee \tilde{\beta}_{t})&\leq 2d +  2T_{0}C_{p}a_{\max}+ \sum_{2\leq t\leq T_{0}}2\sqrt{\tilde{\beta}_{T_{0}}}(1\wedge ||A_{t}||_{V_{t-1}(\lambda)^{-1}})\\
	\text{(Cauchy-Schwarz)}&\leq 2d +  2T_{0}C_{p}a_{\max}+\sqrt{(T_{0}-1)\sum_{2\leq t\leq T_{0}}(2\sqrt{\tilde{\beta}_{T_{0}}}(1\wedge ||A_{t}||_{V_{t-1}(\lambda)^{-1}}))^{2}}\\
	(\text{Lemma \ref{supp_lem:4}})&\leq 2d +  2T_{0}C_{p}a_{\max}+2\sqrt{2dT_{0}\tilde{\beta}_{T_{0}}\log(\frac{d\lambda+T_{0}a_{\max}^{2}}{d\lambda})}.
	\end{aligned}
	\end{equation*}
\end{proof}

\setcounter{proposition}{1}

\begin{proposition}\label{supp_prop:1}
	For any PLB algorithm $\mathcal{A}^{*}$, any $\tilde{\xi}$ with all positive elements and $\frac{C_p}{2}<\min_{i\in [d]}\tilde{\xi}_{i}$, there exists a PLB ($Z_{t} = \langle\xi_{t},A_{t}\rangle+\eta_{t}$) with parameters $(\xi_{1},\dots,\xi_{t},\dots)$ and action sets $(\mathcal{A}_{1},\dots,\mathcal{A}_{t},\dots)$ satisfying $\xi_{t}\in PB(\tilde{\xi},C_p),\forall t\in\mathbb{N}^{+}$ and a constant $C_{0}$ only dependent on $\tilde{\xi}$ such that \[
	\mathbb{E}(R^{PLB}_{T_{0}}(\mathcal{A}^{*}))\geq C_{0}C_pT_{0}, \forall T_{0}\in\mathbb{N}^{+}.
	\]
\end{proposition}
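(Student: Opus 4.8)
The plan is to exhibit, for the given algorithm $\mathcal{A}^{*}$, a single (randomized) PLB instance on which the \emph{dynamic} benchmark $A_t^{*}=\arg\max_{a\in\mathcal{A}_t}\langle\xi_t,a\rangle$ forces a per-round regret of order $C_p$ that no algorithm can avoid, because the relevant perturbation is drawn afresh and independently of everything the algorithm has observed. It suffices to activate two coordinates (so I assume $d\geq 2$; for $d=1$ a single-nonzero action set offers no genuine choice and the claim is vacuous). I would fix the action set to be the same at every round, $\mathcal{A}_t=\{a^{(1)},a^{(2)}\}$ with $a^{(i)}=\tilde{\xi}_i^{-1}e_i$ (the $i$th standard basis vector), so that each action has a single nonzero entry, matching the PLB action structure, and both arms have equalized central reward $\langle\tilde{\xi},a^{(i)}\rangle=1$. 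This equalization is exactly what allows a perturbation of size $C_p/2$ to flip which arm is optimal even when $\tilde{\xi}_1\neq\tilde{\xi}_2$; the hypothesis $C_p/2<\min_i\tilde{\xi}_i$ keeps every $\xi_{t,i}$ positive so the arms are well defined.

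Next I would define the perturbation sequence. Draw i.i.d.\ signs $s_t\in\{+1,-1\}$ uniformly, independent of the noise and of the algorithm's internal randomness, and set $\xi_{t,1}=\tilde{\xi}_1+s_tC_p/2$, $\xi_{t,2}=\tilde{\xi}_2-s_tC_p/2$, with all other coordinates held at $\tilde{\xi}_j$. Then $\xi_t\in PB(\tilde{\xi},C_p)$ almost surely, so this is a legitimate PLB (one may take $\eta_t\equiv 0$ or standard Gaussian, since the bound needs no noise). A direct computation gives $\langle\xi_t,a^{(i)}\rangle=1+s_t(-1)^{i+1}\tfrac{C_p}{2\tilde{\xi}_i}$, so $A_t^{*}$ is arm $1$ when $s_t=+1$ and arm $2$ when $s_t=-1$, and in either case the gap between the two arms is exactly $g:=\tfrac{C_p}{2}\bigl(\tfrac{1}{\tilde{\xi}_1}+\tfrac{1}{\tilde{\xi}_2}\bigr)$.

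The crux is an independence observation. A PLB algorithm selects $A_t$ from the fixed action sets and the observed history $A_1,Z_1,\dots,A_{t-1},Z_{t-1}$ together with internal randomness, \emph{without} access to the current parameter $\xi_t$. By induction this history is a function of $s_1,\dots,s_{t-1}$, the past noise, and the past internal randomness, all of which are independent of the fresh sign $s_t$; hence $A_t\perp s_t$, and therefore $A_t\perp A_t^{*}$ with $A_t^{*}$ uniform on $\{1,2\}$. Consequently $\Pr(A_t\neq A_t^{*})=\tfrac12$ irrespective of the algorithm, the single-round regret equals $g\,\mathbf{1}\{A_t\neq A_t^{*}\}$, and so $\mathbb{E}[r_t^{PLB}]=g/2$ for every $t$. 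Summing over the horizon yields $\mathbb{E}(R^{PLB}_{T_0}(\mathcal{A}^{*}))=\tfrac{g}{2}T_0=C_0C_pT_0$ with $C_0=\tfrac14\bigl(\tfrac{1}{\tilde{\xi}_1}+\tfrac{1}{\tilde{\xi}_2}\bigr)$, a constant depending only on $\tilde{\xi}$, and this holds for all $T_0$ simultaneously.

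I expect the main subtlety to be the information-structure argument rather than any calculation: one must insist that a PLB algorithm does not observe $\xi_t$ (otherwise it could always play $A_t^{*}$ and the statement would be false), and then verify $A_t\perp s_t$ cleanly. Because the argument is entirely per-round, it sidesteps any union bound, change-of-measure/KL computation, or derandomization, and automatically delivers the bound for every $T_0$ at once; if a deterministic parameter sequence is preferred, fixing a horizon and invoking the probabilistic method selects a realization attaining the same bound for that $T_0$.
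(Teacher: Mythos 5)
Your construction is correct and the overall skeleton matches the paper's: two arms, each with a single nonzero coordinate, normalized so that the \emph{central} rewards tie at $1$, so that a perturbation of magnitude $C_p/2$ in opposite directions on the two active coordinates flips which arm is optimal and creates a gap of order $C_p$. Where you genuinely diverge is in how the adversary chooses the perturbation. You use an \emph{oblivious stochastic} adversary (i.i.d.\ uniform signs $s_t$) and win by the independence argument $A_t\perp s_t$, giving $\mathbb{P}(A_t\neq A_t^{*})=\tfrac12$ exactly; the paper instead uses an \emph{adaptive deterministic} adversary that, at each $t$, inspects the marginal distribution of $A_t$ induced by $\mathcal{A}^{*}$ and the already-fixed $\xi_1,\dots,\xi_{t-1}$, and sets $\xi_t$ to make the arm played with probability at least $\tfrac12$ the suboptimal one. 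The paper's route produces a single deterministic sequence $(\xi_1,\xi_2,\dots)$ for which $\mathbb{E}(r_t)\geq C_0C_p$ holds at every $t$, hence for all $T_0$ simultaneously, which is literally what the statement asks for; your route is cleaner and symmetric but leaves the environment randomized, and your suggested derandomization (probabilistic method at a fixed horizon) does not by itself yield one realization valid for \emph{all} $T_0$ — though this is easily repaired either by noting that random $\xi_t$ is permitted by the paper's PLB definition (the pricing instance itself has random $\xi_t$), or by replacing the random sign with the paper's adaptive sign choice, which degrades your identity $\mathbb{P}(A_t\neq A_t^{*})=\tfrac12$ only to an inequality $\geq\tfrac12$. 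Two further cosmetic differences: your normalization $a^{(i)}=\tilde{\xi}_i^{-1}e_i$ lets rewards exceed $1$ (up to $1+C_p/(2\tilde{\xi}_i)$), whereas the paper scales actions by $(\tilde{\xi}_i+C_p/2)^{-1}$ so that Condition 1 ($|\langle\xi_t,a\rangle|\leq 1$) of the upper-bound lemma is preserved by the hard instance; and the paper picks the two \emph{smallest} coordinates of $\tilde{\xi}$, yielding the constant $C_0=\tfrac{1}{4v}$ versus your $C_0=\tfrac14(\tilde{\xi}_1^{-1}+\tilde{\xi}_2^{-1})$ — both depend only on $\tilde{\xi}$, as required.
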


\begin{proof}
	Let $C = \frac{C_{p}}{2}$. Let $u,v$ be the smallest and second smallest element in $\tilde{\xi}$ and $j_{u},j_{v}$ be their indexes in $\tilde{\xi}$. Let $U$ be a vector with only one nonzero element $\frac{1}{u+C}$ with index $j_{u}$. Let $V$ be a vector with only one nonzero element $\frac{1}{v+C}$ with index $j_{v}$. Let $\mathcal{A}_{t} = \{U,V \},\forall t\in\mathbb{N}^{+}$. Namely, all action sets are the same with two action vectors $U$ and $V$. 
	
	Let $\xi_{u}\in\mathbb{R}^{d}$ satisfying $\xi_{ui} = \tilde{\xi}_{i},\forall i\neq j_{u},j_{v}$, $\xi_{uj_{u}} = u-C$, $\xi_{uj_{v}} = v+C$. Let $\xi_{v}\in\mathbb{R}^{d}$ satisfying $\xi_{vi} = \tilde{\xi}_{i},\forall i\neq j_{u},j_{v}$, $\xi_{vj_{u}} = u+C$, $\xi_{vj_{v}} = v-C$. Then we have $\xi_{u},\xi_{v}\in PB(\tilde{\xi},C_p)$. Moreover, it holds that $$\langle \xi_{u},U\rangle = \frac{u-C}{u+C}\leq 1 = \langle\xi_{v},U\rangle, \langle \xi_{v},V\rangle = \frac{v-C}{v+C}\leq 1 = \langle\xi_{u},V\rangle.$$
	Now we construct the sequence of $\xi_{t}$. For $t = 1$, $\mathcal{A}^{*}$ determines a distribution of $A_{1}$. 
	\begin{equation*}
	\begin{cases}
	\text{if }A_{1} = U \text{ with probability }\geq 0.5 \Rightarrow \text{ let } \xi_{1} = \xi_{u};\\
	\text{if }A_{1} = U \text{ with probability }<0.5 \Rightarrow \text{ let } \xi_{1} = \xi_{v}.\\
	\end{cases}
	\end{equation*}
	Then $\xi_{1}$ is determined by the nature of $\mathcal{A}^{*}$. First by the randomness of $\mathcal{A}^{*}$ selecting $U$ and $V$, there is distribution of $A_{1}$. According to $\xi_{1}$ and the subsequent randomness of $\eta_{1}$, there is a further distribution of $(A_{1},Z_{1})$. For any $(A_{1},Z_{1})$, $\mathcal{A}^{*}$ determines a probability distribution on $\mathcal{A}_{2}$ for selecting actions. Namely, there is a joint distribution of $(A_{1},Z_{1},A_{2})$. Thus there is a corresponding marginal distribution of $A_{2}$ on $\mathcal{A}_{2}$. Then for $t = 2$, 
	\begin{equation*}
	\begin{cases}
	\text{if }A_{2} = U \text{ with probability }\geq 0.5 \Rightarrow \text{ let } \xi_{2} = \xi_{u};\\
	\text{if }A_{2} = U \text{ with probability }<0.5 \Rightarrow \text{ let } \xi_{2} = \xi_{v}.\\
	\end{cases}
	\end{equation*}
	Suppose $\xi_{k}$ has been determined by the above algorithm based on the joint distribution of $(A_{1},Z_{1},\dots,A_{k})$. Then this $\xi_{k}$ and the corresponding $\eta_{k}$ further form a joint distribution of $(A_{1},Z_{1},\dots,A_{k},Z_{k})$. For each realization of $(A_{1},Z_{1},\dots,A_{k},Z_{k})$, $\mathcal{A}^{*}$ determines a distribution on $\mathcal{A}_{k+1}$ for selecting actions. Thus there is a joint distribution of $(A_{1},Z_{1},\dots,A_{k},Z_{k},A_{k+1})$ and a corresponding marginal distribution of $A_{k+1}$. Then for $t = k+1$, 
	\begin{equation*}
	\begin{cases}
	\text{if }A_{k+1} = U \text{ with probability }\geq 0.5 \Rightarrow \text{ let } \xi_{k+1} = \xi_{u};\\
	\text{if }A_{k+1} = U \text{ with probability }<0.5 \Rightarrow \text{ let } \xi_{k+1} = \xi_{v}.\\
	\end{cases}
	\end{equation*}
	By this appoach we can determine an infinite sequence $(\xi_{1},\xi_{2},\dots,\xi_{t},\dots)$. 
	
	Then we calculate the expected regret $\mathbb{E}(R_{T_{0}})$. Note that the expectation is taken with respect to the joint distribution of $(A_{1},Z_{1},\dots,A_{T_{0}},Z_{T_{0}})$ determined by $\mathcal{A}^{*},\{\eta_{t} \}_{t\in[T_{0}]}$ and our introduced mechanism of determining $\{\xi_{t} \}_{t\in[T_{0}]}$. Let $\mathbb{P}$ denote this distribution. We consider each $1\leq t\leq T_{0}$. 
	\begin{itemize}
		\item 	If $\xi_{t} = \xi_{u}$, this implies $\mathbb{P}(A_{t} = U)\geq 0.5$. Then the best action is $V$ since $$\langle \xi_{u},V\rangle =1\geq \frac{u-C}{u+C} = \langle \xi_{u},U\rangle.$$
		Thus the regret $r_{t}$ satisfies $$r_{t} = 1_{\{A_{t} = U \}}(\langle \xi_{u},V\rangle - \langle \xi_{u},U\rangle).$$
		Therefore, the expected regret $\mathbb{E}(r_{t})$ satisfies
		\[
		\mathbb{E}(r_{t}) = \mathbb{E}(1_{\{A_{t} = U \}}(\langle \xi_{u},V\rangle - \langle \xi_{u},U\rangle)) = \mathbb{P}(A_{t} = U)\frac{2C}{u+C} \geq \frac{C}{u+C}\geq \frac{C}{2u}.
		\]
		If $\xi_{t} = \xi_{v}$, this implies $\mathbb{P}(A_{t} = U)<0.5$. Then the best action is $U$ since $$\langle \xi_{v},U\rangle =1\geq \frac{v-C}{v+C} = \langle \xi_{v},V\rangle.$$
		Thus the regret $r_{t}$ satisfies $$r_{t} = 1_{\{A_{t} = V \}}(\langle \xi_{v},U\rangle - \langle \xi_{v},V\rangle).$$
		Therefore, the expected regret $\mathbb{E}(r_{t})$ satisfies
		\[
		\mathbb{E}(r_{t}) = \mathbb{E}(1_{\{A_{t} = V \}}(\langle \xi_{v},U\rangle - \langle \xi_{v},V\rangle)) = \mathbb{P}(A_{t} = V)\frac{2C}{v+C} \geq \frac{C}{v+C}\geq \frac{C}{2v}.
		\]
	\end{itemize}
	Since $v\geq u$ and $C = \frac{C_{p}}{2}$, $\mathbb{E}(r_{t})\geq C_{0}C_{p}$ for both cases where $C_{0} = \frac{1}{4v}$. Thus we have $\forall T_{0}\in \mathbb{N}^{+}$,
	\[
	\mathbb{E}(R_{T_{0}} ) =\mathbb{E}(\sum_{t=1}^{T_{0}}r_{t})= \sum_{t=1}^{T_{0}}\mathbb{E}(r_{t})\geq C_{0}C_{p}T_{0}.
	\]
\end{proof}

\setcounter{proposition}{3}

\begin{proposition}\label{supp_thm:1}
	Under Assumption 1, with probability at least $1-\delta$, applying Algorithm 3 on the single-episode pricing problem yields a discrete part regret $R_{T_{0},1}$ satisfying
	\[
	R_{T_{0},1}\leq 2\sqrt{2dT_{0}\beta_{T_{0}}^{*}\log(\frac{d\lambda+T_{0}p_{\max}^{2}}{d\lambda})}+  4p_{\max}L||\hat{\theta}-\theta_{0}||_{1}T_{0}+2dp_{\max}.
	\]
\end{proposition}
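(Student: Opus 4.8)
\noindent The plan is to identify the discrete-part regret $R_{T_{0},1}$ with the regret of M-LinUCB (Algorithm 5) on the perturbed linear bandit (PLB) reformulation, and then obtain the bound by invoking the general PLB regret bound of Lemma 3 after an appropriate rescaling. First, by Lemma 1 the single-episode pricing problem is a PLB with reward $Z_{t}=\langle\xi_{t},A_{t}\rangle+\eta_{t}$ and perturbation constant $2L||\hat{\theta}-\theta_{0}||_{1}$, and by Lemma 2 running Algorithm 5 with $\beta_{t}=\beta_{t}^{*}$ on this PLB is exactly Algorithm 3. Since the one-to-one price-action coupling sends the discrete best price $\tilde{p}_{t}^{*}$ to the best action $A_{t}^{*}=Q_{t}(\tilde{p}_{t}^{*})$, the per-step PLB regret $\langle\xi_{t},A_{t}^{*}-A_{t}\rangle$ coincides with $r_{t,1}$, so $R_{T_{0},1}=R^{PLB}_{T_{0}}$. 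Thus it suffices to bound $R^{PLB}_{T_{0}}$.

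The obstacle is that this PLB does not directly satisfy the normalization in Condition 1 (which requires $|\langle\xi_{t},a\rangle|\le 1$), because $\langle\xi_{t},A_{t}\rangle=p_{t}(1-F(p_{t}-x_{t}^{\top}\theta_{0}))$ can be as large as $p_{\max}$, and correspondingly $\eta_{t}$ is only $p_{\max}$-subgaussian rather than $1$-subgaussian. I would therefore rescale everything by $1/p_{\max}$, setting $\tilde{\xi}_{t}=\xi_{t}/p_{\max}$, $\tilde{Z}_{t}=Z_{t}/p_{\max}$, $\tilde{\eta}_{t}=\eta_{t}/p_{\max}$, to obtain $\tilde{Z}_{t}=\langle\tilde{\xi}_{t},A_{t}\rangle+\tilde{\eta}_{t}$. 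I would then verify that this transformed PLB meets Conditions 1--4 with $C_{1}=1/p_{\max}$, $a_{\max}=p_{\max}$, and perturbation $\tilde{C}_{p}=2L||\hat{\theta}-\theta_{0}||_{1}/p_{\max}$: Condition 1 holds because $|\langle\tilde{\xi}_{t},a\rangle|\le p/p_{\max}\le 1$; Condition 2 because each coordinate of $\xi_{t}$ lies in $[0,1]$ so $||\tilde{\xi}_{t}||_{\infty}\le 1/p_{\max}$; Condition 3 because every action retains a single nonzero coordinate of size $p\le p_{\max}$; and Condition 4 because dividing a $p_{\max}$-subgaussian variable by $p_{\max}$ yields a $1$-subgaussian variable.

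Next I would check scale invariance of the algorithm so the regret transforms cleanly. Because $V_{t-1}(\lambda)$ depends only on the (unchanged) actions while $\hat{\xi}_{t-1}$ scales to $\hat{\xi}_{t-1}/p_{\max}$, the choice $\beta_{t}=\beta_{t}^{*}$ in the original problem corresponds to $\tilde{\beta}_{t}=\beta_{t}^{*}/p_{\max}^{2}$ in the transformed one; a direct computation confirms that $\beta_{t}^{*}/p_{\max}^{2}$ equals the $\tilde{\beta}_{t}$ prescribed in Lemma 3 for $C_{1}=1/p_{\max}$ and $a_{\max}=p_{\max}$. Since scaling $\xi_{t}$ by a positive constant preserves argmaxes, the two runs select identical actions, and the per-step regrets satisfy $\tilde{r}^{PLB}_{t}=r^{PLB}_{t}/p_{\max}$, hence $R^{PLB}_{T_{0}}=p_{\max}\,\tilde{R}^{PLB}_{T_{0}}$.

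Finally I would apply Lemma 3 to the transformed PLB and multiply the resulting bound by $p_{\max}$. Using $\beta_{T_{0}}^{*}=p_{\max}^{2}\tilde{\beta}_{T_{0}}$ turns $p_{\max}\cdot 2\sqrt{2dT_{0}\tilde{\beta}_{T_{0}}\log(\cdots)}$ into the first term $2\sqrt{2dT_{0}\beta_{T_{0}}^{*}\log(\frac{d\lambda+T_{0}p_{\max}^{2}}{d\lambda})}$; the linear term becomes $p_{\max}\cdot 2a_{\max}\tilde{C}_{p}T_{0}=4p_{\max}L||\hat{\theta}-\theta_{0}||_{1}T_{0}$; and the additive constant becomes $2dp_{\max}$. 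Combining with $R_{T_{0},1}=R^{PLB}_{T_{0}}$ gives the claimed inequality, which holds on the same probability-$(1-\delta)$ event furnished by Lemma 3. The main work is the careful bookkeeping of the rescaling, in particular confirming the algorithmic equivalence under scaling and the identity $\beta_{t}^{*}=p_{\max}^{2}\tilde{\beta}_{t}$, so that Lemma 3 applies verbatim to the transformed model.
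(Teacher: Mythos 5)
Your proposal is correct and follows essentially the same route as the paper's proof: identify $R_{T_{0},1}$ with the PLB regret via Lemmas 1 and 2, rescale by $1/p_{\max}$ to verify Conditions 1--4 with $C_{1}=1/p_{\max}$ and $a_{\max}=p_{\max}$, confirm the algorithmic equivalence under scaling together with $\beta_{t}^{*}=p_{\max}^{2}\tilde{\beta}_{t}$, and then apply Lemma 3 and multiply the bound by $p_{\max}$. No gaps.
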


\begin{proof}
	Lemma \ref{supp_lem:1} shows our single-episode pricing problem can be formulated as a perturbed linear bandit. Furthermore, by Lemma \ref{supp_lem:2}, applying Algorithm 5 with $\beta_{t} = \beta^{*}_{t}$ to the perturbed linear bandit formulation of the pricing problem yields Algorithm 3 with a specific UCB construction by the price-action coupling $p_{t}=Q_{t}^{-1}(A_{t})$. Thus, in order to investigate Algorithm 3, we only need to evaluate the performance of Algorithm 5 with $\beta_{t} = \beta^{*}_{t}$ applied on the equivalent perturbed linear bandit. 
	
	Now we discuss how to formulate the equivalent perturbed linear bandit so that Lemma \ref{supp_lem:8} can be employed to obtain the regret bound. We scale the rewards, linear parameters and noises by $\frac{1}{p_{\max}}$ as 
	\[\tilde{\xi}_{t} = \frac{1}{p_{\max}}\xi_{t}, \tilde{Z}_{t} = \frac{1}{p_{\max}}Z_{t}, \tilde{\eta}_{t} = \frac{1}{p_{\max}}\eta_{t}.\]
	Then we obtain the transformed model
	\begin{equation}\label{supp_eq:1}
	\tilde{Z}_{t} = \langle \tilde{\xi}_{t},A_{t}\rangle + \tilde{\eta}_{t}
	\end{equation}
	with perturbation constant $\tilde{C}_{p} = \frac{2L||\hat{\theta}-\theta_{0}||_{1}}{p_{\max}}$. 
	
	We first check Conditions 1-4 on this transformed model. Since $0\leq \xi_{ti} = 1-F(m_{i}+x_{t}^{\top}\hat{\theta}-x_{t}^{\top}\theta_{0})\leq 1$, we have $||\tilde{\xi}_{t}||_{\infty} \leq \frac{1}{p_{\max}}$. Thus Condition 2 is satisfied with a constant $C_{1} = \frac{1}{p_{\max}}$. By construction of $\mathcal{S}_{t}$ and $\mathcal{A}_{t}$, $||a||_{0} = 1$ and $||a||_{1}\leq p_{\max}$ for any $t\in[T_{0}]$ and $a\in\mathcal{A}_{t}$. Thus $||a||_{2}\leq p_{\max}$ and Condition 3 is satisfied with $a_{\max} = p_{\max}$. For any $t\in [T_{0}]$ and $a\in \mathcal{A}_{t}$, we have $|\langle \tilde{\xi}_{t},a\rangle|\leq ||\tilde{\xi}_{t}||_{\infty}||a||_{1}\leq \frac{1}{p_{\max}}\cdot p_{\max} = 1$. Thus Condition 1 is satisified. Since $\eta_{t}$ is conditionally $p_{\max}$-subgaussian, $\tilde{\eta}_{t}$ is conditionally $1$-subgaussian. Thus Condition 4 is satisfied. 
	
	Then we prove that applying Algorithm 5 with $\beta_{t} = \beta_{t}^{*}$ on the original perturbed linear bandit is equivalent to applying it with $\beta_{t} = \tilde{\beta}_{t}=1\vee (C_{1}\sqrt{\lambda d}+\sqrt{2\log(\frac{1}{\delta})+d\log(\frac{d\lambda+(t-1)p_{\max}^{2}}{d\lambda}) })^{2}=\frac{1}{p^{2}_{\max}}\beta_{t}^{*}$ on the transformed one \ref{supp_eq:1}. Furthermore, their regrets admit a scaling relationship.
	
	Note that $V_{t}(\lambda) = \lambda I+\sum_{s=1}^{t}A_{s}A_{s}^{\top}$ and $\hat{\xi}_{t}=V_{t}(\lambda)^{-1}\sum_{s=1}^{t}A_{s}Z_{s}$. Denote $\mathcal{C}_{t}^{*} = \mathcal{C}_{t}(\beta_{t}^{*}) = \{\xi\in\mathbb{R}^{d}: ||\xi- \hat{\xi}_{t-1}||_{V_{t-1}(\lambda)}^{2}\leq \beta_{t}^{*} \}$
	
	Denote $\tilde{V}_{t}(\lambda)= \lambda I+\sum_{s=1}^{t}A_{s}A_{s}^{\top} = V_{t}(\lambda)$, $\tilde{\hat{\xi}}_{t} = \tilde{V}_{t}(\lambda)^{-1}\sum_{s=1}^{t}A_{s}\tilde{Z}_{s}= \frac{1}{p_{\max}}\hat{\xi}_{t}$ and $\tilde{\mathcal{C}}_{t}= \{\xi\in\mathbb{R}^{d}: ||\xi- \tilde{\hat{\xi}}_{t-1}||_{V_{t-1}(\lambda)}^{2}\leq \tilde{\beta}_{t} \}$. 
	
	Then $\zeta\in \mathcal{C}_{t}^{*}\Leftrightarrow ||\zeta - \hat{\xi}_{t-1}||_{V_{t-1}(\lambda)}^{2}\leq \beta_{t}^{*}\Leftrightarrow ||\frac{\zeta}{p_{\max}} - \tilde{\hat{\xi}}_{t-1}||_{\tilde{V}_{t-1}(\lambda)}^{2}\leq \tilde{\beta}_{t}\Leftrightarrow \frac{\zeta}{p_{\max}}\in \tilde{\mathcal{C}}_{t}$.
	
	Note that $\text{LinUCB}_{t}(a) = \max_{\xi\in \mathcal{C}_{t}^{*}}\langle \xi, a\rangle$. Denote $\tilde{\text{LinUCB}}_{t}(a) = \max_{\xi\in \tilde{\mathcal{C}}_{t}}\langle \xi, a\rangle$. Then $$\text{LinUCB}_{t}(a)  =\max_{\xi\in \mathcal{C}_{t}^{*}}\langle \xi, a\rangle=\max_{\xi\in \tilde{\mathcal{C}}_{t}}\langle p_{\max}\xi, a\rangle= p_{\max}\tilde{\text{LinUCB}}_{t}(a).$$
	
	Denote $A_{t} = \arg\max_{a\in\mathcal{A}_{t}}\text{LinUCB}_{t}(a)$ and $A_{t}^{*} = \arg\max_{a\in\mathcal{A}_{t}}\langle \xi_{t},a\rangle$. Similarly, denote $\tilde{A}_{t} = \arg\max_{a\in\mathcal{A}_{t}}\tilde{\text{LinUCB}}_{t}(a)$ and $\tilde{A}_{t}^{*} = \arg\max_{a\in\mathcal{A}_{t}}\langle \tilde{\xi}_{t},a\rangle$. Then we have 
	\[
	\tilde{A}_{t} = \arg\max_{a\in\mathcal{A}_{t}}\tilde{\text{LinUCB}}_{t}(a) =  \arg\max_{a\in\mathcal{A}_{t}}\frac{1}{p_{\max}}\text{LinUCB}_{t}(a) = A_{t},
	\]
	\[
	\tilde{A}^{*}_{t} = \arg\max_{a\in\mathcal{A}_{t}}\langle \tilde{\xi}_{t},a\rangle=  \arg\max_{a\in\mathcal{A}_{t}}\langle \frac{1}{p_{\max}}\xi_{t},a\rangle = A^{*}_{t}.
	\]
	
	Therefore, applying Algorithm 5 with $\beta_{t} = \tilde{\beta}_{t}$ on transformed data yields the same actions as selected by Algorithm 5 with $\beta_{t} = \tilde{\beta}_{t}$ applied on the original data. Furthermore, the best actions also match at each time step. 
	
	Denote $R^{PLB}_{T_{0}}$ as the regret of applying Algorithm 5 with $\beta_{t} = \beta_{t}^{*}$ on the original perturbed linear bandit, i.e., $R^{PLB}_{T_{0}} = \sum_{t=1}^{T_{0}}\langle \xi_{t},A_{t}^{*}-A_{t}\rangle$. Similarly, denote $\tilde{R}^{PLB}_{T_{0}} = \sum_{t=1}^{T_{0}}\langle \tilde{\xi}_{t},\tilde{A}_{t}^{*}-\tilde{A}_{t}\rangle$, which can be viewed as the regret of applying Algorithm 5 with $\beta_{t} = \tilde{\beta}_{t}$ on the transformed perturbed linear bandit \ref{supp_eq:1}. Then they admit the relationship
	\[
	R^{PLB}_{T_{0}} = \sum_{t=1}^{T_{0}}\langle \xi_{t},A_{t}^{*}-A_{t}\rangle = \sum_{t=1}^{T_{0}}\langle p_{\max}\tilde{\xi}_{t},\tilde{A}_{t}^{*}-\tilde{A}_{t}\rangle = p_{\max}\tilde{R}^{PLB}_{T_{0}}.
	\]
	By Lemma \ref{supp_lem:8}, with probability at least $1-\delta$, 
	\[
	\tilde{R}^{PLB}_{T_{0}}\leq 2\sqrt{2dT_{0}\tilde{\beta}_{T_{0}}\log(\frac{d\lambda+T_{0}p_{\max}^{2}}{d\lambda})} +  2T_{0}\tilde{C}_{p}p_{\max}+2d
	\]
	where $\tilde{C}_{p}  =  \frac{2L||\theta_{0}-\hat{\theta}||_{1}}{p_{\max}}$. Therefore, we have with probability at least $1-\delta$, 
	$$
	R^{PLB}_{T_{0}} =p_{\max}\tilde{R}^{PLB}_{T_{0}} \leq 2\sqrt{2dT_{0}\beta^{*}_{T_{0}}\log(\frac{d\lambda+T_{0}p_{\max}^{2}}{d\lambda})}+  4T_{0}L||\theta_{0}-\hat{\theta}||_{1}p_{\max}+2dp_{\max}.
	$$
	
	Next, we investigate the discrete part regret of the pricing policy yielded by Algorithm 5 with $\beta_{t} = \beta_{t}^{*}$ through the price-action coupling. Note that this policy is just Algorithm 3 by Lemma \ref{supp_lem:2}. Denote $\tilde{p}^{*}_{t} = \arg\max_{p\in \mathcal{S}_{t}}p(1-F(p-x_{t}^{\top}\theta_{0}))$ as the price that yields the largest expected reward for prices in $\mathcal{S}_{t}$. Note that for any $p\in \mathcal{S}_{t}$, there exists $j\in[d]$ such that $p = m_{j}+x_{t}^{\top}\hat{\theta}$. Remember that then $Q_{t}(p)$ is a $d$-dimensional vector with only one nonzero element $p=m_{j}+x_{t}^{\top}\hat{\theta}$ with index $j$. Then we have 
	\begin{equation*}
	\begin{aligned}
	p(1-F(p-x_{t}^{\top}\theta_{0}))& = (m_{j}+x_{t}^{\top}\hat{\theta})(1-F(m_{j}+x_{t}^{\top}\hat{\theta}-x_{t}^{\top}\theta_{0}))\\
	& = (Q_{t}(p))_{j}(\xi_{t})_{j} = \langle \xi_{t},Q_{t}(p)\rangle.
	\end{aligned}
	\end{equation*}
	Since $\mathcal{A}_{t} = \{Q_{t}(p):p\in\mathcal{S}_{t}\}$ and $Q_{t}$ is a one-to-one mapping from $S_{t}$ to $\mathcal{A}_{t}$, we have
	\begin{equation*}
	\begin{aligned}
	\tilde{p}^{*}_{t} &= \arg\max_{p\in \mathcal{S}_{t}}p(1-F(p-x_{t}^{\top}\theta_{0}))\\
	& = \arg\max_{p\in \mathcal{S}_{t}}\langle \xi_{t},Q_{t}(p)\rangle = Q_{t}^{-1}(\arg\max_{A\in\mathcal{A}_{t}}\langle \xi_{t},A\rangle) = Q_{t}^{-1}(A_{t}^{*}).\\
	\end{aligned}
	\end{equation*}
	Therefore, we have
	\begin{equation*}
	\begin{aligned}
	r_{t,1} &=\tilde{p}^{*}_{t}(1-F(\tilde{p}^{*}_{t}-x_{t}^{\top}\theta_{0}))-p_{t}(1-F(p_{t}-x_{t}^{\top}\theta_{0}))\\
	& = \langle \xi_{t},Q_{t}(\tilde{p}^{*}_{t})\rangle -\langle \xi_{t},Q_{t}(p_{t})\rangle\\
	& = \langle \xi_{t},A^{*}_{t}\rangle -\langle \xi_{t},A_{t}\rangle = \langle \xi_{t},A_{t}^{*}-A_{t}\rangle = r^{PLB}_{t}.\\
	\end{aligned}
	\end{equation*}
	Namely, the discrete part regret $R_{T_{0},1}$ of Algorithm 3 with respect to the pricing problem admit 
	\[R_{T_{0},1} = \sum_{t=1}^{T_{0}}r_{t,1} = \sum_{t=1}^{T_{0}}r^{PLB}_{t} = R^{PLB}_{T_{0}}.\]
	Namely, it equals to the regret of Algorithm 5 with $\beta_{t} = \beta_{t}^{*}$ with respect to the equivalent perturbed linear bandit. Therefore, we have with probability at least $1-\delta$, 
	$$
	R_{T_{0},1} =R^{PLB}_{T_{0}} \leq 2\sqrt{2dT_{0}\beta_{T_{0}}^{*}\log(\frac{d\lambda+T_{0}p_{\max}^{2}}{d\lambda})}+  4T_{0}L||\theta_{0}-\hat{\theta}||_{1}p_{\max}+2dp_{\max}.
	$$
\end{proof}

\renewcommand{\thelemma}{S\arabic{lemma}}
\setcounter{lemma}{5}

\begin{lemma}\label{supp_lem:9}
	For $\beta_{T_{0}}^{*} =p^{2}_{\max}(1\vee (C_{1}\sqrt{\lambda d}+\sqrt{2\log(T_{0})+d\log(\frac{d\lambda+(T_{0}-1)p_{\max}^{2}}{d\lambda}) })^{2})$, there exists constants $C_{1}^{'},C_{2}^{'}>0$ such that for any $T_{0}\geq 1$, \[
	\sqrt{\beta_{T_{0}}^{*}\log(\frac{d\lambda+T_{0}p_{\max}^{2}}{d\lambda})}\leq C_{1}^{'}\sqrt{d}\log(C_{2}^{'}T_{0}).
	\]
\end{lemma}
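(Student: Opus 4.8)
The plan is to bound the product $\sqrt{\beta_{T_0}^*}\cdot\sqrt{\log(\frac{d\lambda+T_0p_{\max}^2}{d\lambda})}$ by controlling its two factors separately and then collapsing every logarithm that appears into a single expression of the form $\log(C_2' T_0)$. First I would simplify the square root of $\beta_{T_0}^*$. Writing $a = C_1\sqrt{\lambda d}+\sqrt{2\log(T_0)+d\log(\frac{d\lambda+(T_0-1)p_{\max}^2}{d\lambda})}\ge 0$, the identity $\sqrt{1\vee a^2}=1\vee a\le 1+a$ gives $\sqrt{\beta_{T_0}^*}=p_{\max}(1\vee a)\le p_{\max}(1+a)$, which removes the nested maximum and reduces the task to estimating $a$.

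The core of the argument is a uniform logarithmic bound. Using $T_0-1\le T_0$ and monotonicity of $\log$, both the inner logarithm inside $a$ and the external factor are dominated by $\log(\frac{d\lambda+T_0p_{\max}^2}{d\lambda})=\log(1+\frac{T_0p_{\max}^2}{d\lambda})$. Since $1\le T_0$ and $d\ge 1$, I can write $1+\frac{T_0p_{\max}^2}{d\lambda}\le T_0(1+\frac{p_{\max}^2}{\lambda})$, so choosing $C_2'=e(1+\frac{p_{\max}^2}{\lambda})$ yields $\log(\frac{d\lambda+T_0p_{\max}^2}{d\lambda})\le\log(C_2'T_0)$ for every $T_0\ge 1$; the factor $e$ is inserted precisely to guarantee $\log(C_2'T_0)\ge 1$. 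With this, the quantity under the inner square root obeys $2\log(T_0)+d\log(\cdots)\le (d+2)\log(C_2'T_0)\le 3d\log(C_2'T_0)$ (the last step valid for $d\ge 1$), so that $a\le C_1\sqrt{\lambda}\sqrt{d}+\sqrt{3d}\sqrt{\log(C_2'T_0)}$.

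Combining the two factors and multiplying out, each resulting summand becomes a constant multiple of $\sqrt{d}\,\log(C_2'T_0)$ once we invoke $\log(C_2'T_0)\ge 1$ to replace every stray $\sqrt{\log(C_2'T_0)}$ by $\log(C_2'T_0)$ and every loose constant $1$ by $\sqrt{d}$ (valid since $d\ge 1$). This produces the stated inequality with $C_1'=p_{\max}(1+C_1\sqrt{\lambda}+\sqrt{3})$. The argument is essentially a bookkeeping exercise, so the only genuine subtlety is ensuring the constants $C_1',C_2'$ can be chosen uniformly in $d$: this is exactly where the bounds $\frac{p_{\max}^2}{d\lambda}\le\frac{p_{\max}^2}{\lambda}$ and $1\le\sqrt d$ are used, and where one must double-check the degenerate case $T_0=1$ (in which $\log T_0=0$ and the inner logarithm vanishes, yet the claim still holds because $\log(C_2'T_0)\ge 1$).
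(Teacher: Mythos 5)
Your proposal is correct and follows essentially the same route as the paper's proof: both bound the external $\sqrt{\log}$ factor by $\log(C_2'T_0)$ after choosing $C_2'$ large enough that $\log(C_2'T_0)\ge 1$, pull $\sqrt{d}$ out of the inner square root via $2\log(T_0)+d\log(\cdot)\le 3d\log(C_2'T_0)$, and absorb constants using $1\le\sqrt{d}$ and $\sqrt{\log}\le\log$. The only (immaterial) differences are that you replace the outer maximum by the sum $1\vee a\le 1+a$ where the paper keeps the max and bounds each branch, and your explicit constants differ slightly.
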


\begin{proof}
	Denote $A_{1} = p_{\max},A_{2} = p_{\max}C_{1}\sqrt{\lambda d},A_{3} = p_{\max}\sqrt{2\log(T_{0})+d\log(\frac{d\lambda+(T_{0}-1)p_{\max}^{2}}{d\lambda}) }$. Then $\sqrt{\beta_{T_{0}}^{*}} = A_{1}\vee(A_{2}+A_{3})$. Let $C_{3}^{'} = \max\{1+\frac{p_{\max}^{2}}{\lambda},3 \}$, then
	\[
	1+\frac{p_{\max}^{2}}{\lambda}T_{0}\leq (1+\frac{p_{\max}^{2}}{\lambda})T_{0}\leq C_{3}^{'}T_{0}
	\Rightarrow \log(1+\frac{p_{\max}^{2}}{\lambda}T_{0})\leq \log(C_{3}^{'}T_{0}).\]
	Since $C_{3}^{'}\geq 3$, $\log(C_{3}^{'}T_{0})\geq 1$. 
	
	Therefore, we have \begin{equation}\label{supp_eq:2}
	\sqrt{\log(\frac{d\lambda+T_{0}p_{\max}^{2}}{d\lambda})}\leq \sqrt{\log(1+\frac{p_{\max}^{2}}{\lambda}T_{0})}\leq \sqrt{\log(C_{3}^{'}T_{0})}\leq \log(C_{3}^{'}T_{0}).
	\end{equation} 
	
	On the other hand, 
	\begin{equation}\label{supp_eq:3}
	\begin{aligned}
	\sqrt{2\log(T_{0})+d\log(\frac{d\lambda+(T_{0}-1)p_{\max}^{2}}{d\lambda}) }&\leq \sqrt{d}\sqrt{2\log(T_{0})+\log(\frac{d\lambda+(T_{0}-1)p_{\max}^{2}}{d\lambda}) }\\
	&\leq \sqrt{d}\sqrt{2\log(T_{0})+\log(1 + \frac{p_{\max}^{2}}{d\lambda}T_{0}) }\\
	&\leq \sqrt{d}\sqrt{2\log(T_{0})+\log(1 + \frac{p_{\max}^{2}}{\lambda}T_{0}) }\\
	(\text{By } \ref{supp_eq:2})&\leq \sqrt{d}\sqrt{2\log(T_{0})+\log(C_{3}^{'}T_{0}) }\\
	&\leq \sqrt{d}\sqrt{3\log(C_{3}^{'}T_{0}) }\\
	\end{aligned}
	\end{equation}
	
	Therefore, we have $$A_{1}\sqrt{\log(\frac{d\lambda+T_{0}p_{\max}^{2}}{d\lambda})}\leq p_{\max}\log(C_{3}^{'}T_{0})\leq C_{4}^{'}\log(C_{3}^{'}T_{0})\sqrt{d} \text{ where }C_{4}^{'} = p_{\max}, $$
	$$A_{2}\sqrt{\log(\frac{d\lambda+T_{0}p_{\max}^{2}}{d\lambda})}\leq p_{\max}C_{1}\sqrt{\lambda d}\log(C_{3}^{'}T_{0})\leq C_{5}^{'}\log(C_{3}^{'}T_{0})\sqrt{d} \text{ where }C_{5}^{'} = p_{\max}C_{1}\sqrt{\lambda}. $$
	\begin{equation*}
	\begin{aligned}
	A_{3}\sqrt{\log(\frac{d\lambda+T_{0}p_{\max}^{2}}{d\lambda})}&=p_{\max} \sqrt{2\log(T_{0})+\log(\frac{d\lambda+(T_{0}-1)p_{\max}^{2}}{d\lambda}) }\sqrt{\log(\frac{d\lambda+T_{0}p_{\max}^{2}}{d\lambda})}\\
	(\text{By }\ref{supp_eq:2}\text{ and }\ref{supp_eq:3})&\leq p_{\max}\sqrt{d}\sqrt{3\log(C_{3}^{'}T_{0}) }\sqrt{\log(C_{3}^{'}T_{0})}\\
	& = \sqrt{3}p_{\max}\sqrt{d}\log(C_{3}^{'}T_{0})\leq C_{6}^{'}\log(C_{3}^{'}T_{0})\sqrt{d} \text{ where }C_{6}^{'} = \sqrt{3}p_{\max}.
	\end{aligned}
	\end{equation*}
	
	Let $C_{7}^{'} = \max\{C_{4}^{'},C_{5}^{'}+C_{6}^{'} \}$, then 
	\begin{equation*}
	\begin{aligned}
	&\sqrt{\beta_{T_{0}}^{*}\log(\frac{d\lambda+T_{0}p_{\max}^{2}}{d\lambda})} = \sqrt{\beta_{T_{0}}^{*}}\sqrt{\log(\frac{d\lambda+T_{0}p_{\max}^{2}}{d\lambda})}\\
	= &(A_{1}\vee(A_{2}+A_{3}))\sqrt{\log(\frac{d\lambda+T_{0}p_{\max}^{2}}{d\lambda})}\\
	=& (A_{1}\sqrt{\log(\frac{d\lambda+T_{0}p_{\max}^{2}}{d\lambda})})\vee(A_{2}\sqrt{\log(\frac{d\lambda+T_{0}p_{\max}^{2}}{d\lambda})}+A_{3}\sqrt{\log(\frac{d\lambda+T_{0}p_{\max}^{2}}{d\lambda})})\\
	\leq &(C_{4}^{'}\log(C_{3}^{'}T_{0})\sqrt{d}) \vee(C_{5}^{'}\log(C_{3}^{'}T_{0})\sqrt{d} +C_{6}^{'}\log(C_{3}^{'}T_{0})\sqrt{d} )\\
	= & (C_{4}^{'}\vee (C_{5}^{'}+C_{6}^{'}))\log(C_{3}^{'}T_{0})\sqrt{d}= C_{7}^{'}\log(C_{3}^{'}T_{0})\sqrt{d} \\
	\end{aligned}
	\end{equation*}
	Let $C_{1}^{'} = C_{7}^{'},C_{2}^{'} = C_{3}^{'}$, we obtain $\sqrt{\beta_{T_{0}}^{*}\log(\frac{d\lambda+T_{0}p_{\max}^{2}}{d\lambda})}\leq C_{1}^{'}\sqrt{d}\log(C_{2}^{'}T_{0})$.
\end{proof}

\setcounter{proposition}{0}

\begin{proposition}\label{supp_prop:2}
	Assumption 2 holds if $F^{''}(\cdot)$ is bounded on $[-||\theta_{0}||_{1},p_{\max}+||\theta_{0}||_{1}]$.
\end{proposition}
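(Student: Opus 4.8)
The plan is to apply Taylor's theorem to the revenue function $f_q(p)=p(1-F(p-q))$ expanded about its maximizer $p^*(x)$, exploiting the vanishing of the first derivative there. The hypothesis that $F''$ exists and is bounded on $[-\|\theta_0\|_1,p_{\max}+\|\theta_0\|_1]$ means $F$ is $C^2$ on that interval, so $f_q$ is twice continuously differentiable in $p$ over the relevant range. First I would record that $p^*(x)$ is an interior maximizer of $p\mapsto f_q(p)$ over $p>0$ (indeed $p^*(x)\in(0,p_{\max})$, since $p_{\max}$ upper-bounds the optimal price), which gives the first-order condition $f_q'(p^*(x))=0$.

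Next I would compute the derivatives directly. Differentiating twice yields
\[
f_q'(p)=1-F(p-q)-pF'(p-q),\qquad f_q''(p)=-2F'(p-q)-pF''(p-q).
\]
By Taylor's theorem with the Lagrange remainder, for each $p\in[0,p_{\max}]$ there is a point $\zeta$ between $p$ and $p^*(x)$ such that
\[
f_q(p)=f_q(p^*(x))+f_q'(p^*(x))\,(p-p^*(x))+\tfrac12 f_q''(\zeta)\,(p-p^*(x))^2.
\]
Because the linear term vanishes, this rearranges to
\[
f_q(p^*(x))-f_q(p)=-\tfrac12 f_q''(\zeta)\,(p^*(x)-p)^2,
\]
so it remains only to bound $-\tfrac12 f_q''(\zeta)$ by a constant uniformly in $x$, $q$, and $p$.

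For the uniform bound I would verify that the argument $\zeta-q$ always lies in the interval on which $F''$ is controlled. Since $\|x\|_\infty\le1$, Hölder's inequality gives $|q|=|x^\top\theta_0|\le\|\theta_0\|_1$; and $\zeta$ lies between $p$ and $p^*(x)$, both in $[0,p_{\max}]$, so $\zeta\in[0,p_{\max}]$ and hence $\zeta-q\in[-\|\theta_0\|_1,p_{\max}+\|\theta_0\|_1]$. On this interval $|F''|\le M$ for some finite $M$ by hypothesis, while Assumption \ref{ass:1} forces $0\le F'\le L$. Therefore $|f_q''(\zeta)|\le 2F'(\zeta-q)+\zeta\,|F''(\zeta-q)|\le 2L+p_{\max}M$, and Assumption \ref{ass:2} follows with $C=L+\tfrac12 p_{\max}M$.

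The main obstacle is not the algebra but the two structural checks that make the constant $C$ uniform: first, that $p^*(x)$ is genuinely interior so that the linear Taylor term disappears (a boundary maximizer would leave a first-order term and break the purely quadratic control), which is secured by $p^*(x)\in(0,p_{\max})$; and second, that the Lagrange point $\zeta-q$ never escapes $[-\|\theta_0\|_1,p_{\max}+\|\theta_0\|_1]$, the exact interval where $F''$ is assumed bounded, which is secured by $p,p^*(x)\in[0,p_{\max}]$ together with $\|x\|_\infty\le1$. One should also confirm that $f_q''$ is only ever evaluated where $F''$ exists, which is again precisely this interval.
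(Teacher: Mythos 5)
Your proof is correct and follows essentially the same route as the paper's: a second-order Taylor expansion of $f_q$ about the interior maximizer $p^*(x)$ with the Lagrange remainder, the explicit formula $f_q''(p)=-2F'(p-q)-pF''(p-q)$, and a uniform bound on $|f_q''|$ obtained by checking that $\zeta-q$ stays in $[-\|\theta_0\|_1,\,p_{\max}+\|\theta_0\|_1]$. The only (immaterial) differences are that you bound $F'$ by the Lipschitz constant $L$ from Assumption 1 while the paper invokes continuity of $F'$ on the compact interval, and your sign bookkeeping in the Taylor rearrangement is in fact a bit cleaner than the paper's.
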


\begin{proof}
	Recall that Assumption 2 says there exists a constant $C$ such that for any $q=x^{\top}\theta_{0}$ and $x\in\mathcal{X}$, we have $f_{q}(p^{*}(x))-f_{q}(p)\leq C(p^{*}(x)-p)^{2}, \forall p\in[0,p_{\max}]$. 
	
	Remember that $f_{q}(p) = p(1-F(p-q))$ and we assume a known upper bound optimal prices, i.e., $p^{*}(x) \in(0,p_{\max})$. Thus for any $x\in\mathcal{X}$, $p^{*}(x)$ is a maximizing point of $f_{x^{\top}\theta_{0}}(p) = p(1-F(p-x^{\top}\theta_{0}))$ on $(0,p_{\max})$. By Assumption 2, $F$ is twice differentiable on $[-||\theta_{0}||_{1},p_{\max}+||\theta_{0}||_{1}]$. Thus $f_{x^{\top}\theta_{0}}(p)$ is twice differentiable on $[0,p_{\max}]$ as $|x^{\top}\theta_{0}|\leq ||x||_{\infty}||\theta_{0}||_{1}\leq ||\theta_{0}||_{1}$ for $x\in\mathcal{X}$. Thus $f_{x^{\top}\theta_{0}}^{'}(p^{*}(x))$ exists and equals to $0$. By applying Taylor's Theorem with Lagrange Remainder at $p^{*}(x)$, we obtain for any $p\in [0,p_{\max}]$, there exists $\zeta\in (\min(p,p^{*}(x)),\max(p,p^{*}(x)))\subseteq [0,p_{\max}]$ such that
	\[
	f_{x^{\top}\theta_{0}}(p) = f_{x^{\top}\theta_{0}}(p^{*}(x)) + f_{x^{\top}\theta_{0}}^{'}(p^{*}(x))(p^{*}(x)-p)+\frac{f_{x^{\top}\theta_{0}}^{''}(\zeta)}{2}(p^{*}(x)-p)^{2}.
	\]
	Since $f_{x^{\top}\theta_{0}}^{'}(p^{*}(x))=0$, we obtain
	\[
	f_{x^{\top}\theta_{0}}(p^{*}(x))-f_{x^{\top}\theta_{0}}(p) = \frac{f_{x^{\top}\theta_{0}}^{''}(\zeta)}{2}(p^{*}(x)-p)^{2}.
	\]
	By calculation, we have \[
	f_{q}^{''}(p) = -2F^{'}(p-q)-pF^{''}(p-q).
	\]
	Since $F^{''}(x)$ exists on $[-||\theta_{0}||_{1},p_{\max}+||\theta_{0}||_{1}]$, $F^{'}(x)$ are continuous on $[-||\theta_{0}||_{1},p_{\max}+||\theta_{0}||_{1}]$. Thus they are both bounded on this range and there exists a constant $C>0$ only dependent on $F$ such that \[
	\max_{p\in [0,p_{\max}],q\in [-||\theta_{0}||_{1},||\theta_{0}||_{1}]} f_{q}^{''}(p)\leq 2C.
	\] 
	Since $-||\theta||_{1} \leq -||x||_{\infty}||\theta_{0}||_{1} \leq x^{\top}\theta_{0}\leq ||x||_{\infty}||\theta_{0}||_{1} \leq ||\theta||_{1}$ and $\zeta\in[0,p_{\max}]$, we have $f_{x^{\top}\theta_{0}}^{''}(\zeta)\leq 2C$. Thus we obtain
	\[
	f_{x^{\top}\theta_{0}}(p^{*}(x))-f_{x^{\top}\theta_{0}}(p)=\frac{f_{x^{\top}\theta_{0}}^{''}(\zeta)}{2}(p^{*}(x)-p)^{2}\leq C(p^{*}(x)-p)^{2}.
	\]
	Note that this holds for any $p\in[0,p_{\max}]$. Thus we complete our proof. 
\end{proof}

\setcounter{proposition}{2}

\begin{proposition}\label{supp_thm:2}
	Under Assumptions 1 -- 2, with probability at least $1-\delta$, applying Algorithm 3 on the single-episode pricing problem yields the total regret $R_{T_{0}}$ satisfying
	\[
	R_{T_{0}}\leq 2\sqrt{2dT_{0}\beta_{T_{0}}^{*}\log(\frac{d\lambda+T_{0}p_{\max}^{2}}{d\lambda})} +  4p_{\max}L||\hat{\theta}-\theta_{0}||_{1}T_{0} + C\frac{T_{0}}{d^{2}}+2dp_{\max}.
	\]
	Moreover, by setting $\delta = \frac{1}{T_{0}},d = C\lceil T_{0}^{1/6}\rceil$ and taking the expectation, we have $\mathbb{E}(R_{T_{0}}) = \tilde{O}(T_{0}^{2/3}) + 4p_{\max}L||\hat{\theta}-\theta_{0}||_{1}T_{0}$.
\end{proposition}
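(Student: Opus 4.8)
The plan is to use the regret decomposition $R_{T_0}=R_{T_0,1}+R_{T_0,2}$ into a discrete part and a continuous part that is set up just before Assumption 2, bound the two pieces separately, and then combine and optimize over the discretization number $d$. The genuinely hard object is the discrete part, but it is already in hand: Proposition \ref{supp_thm:1} (established through the PLB formulation of Lemma \ref{supp_lem:1}, the algorithmic equivalence of Lemma \ref{supp_lem:2}, and the general PLB regret bound of Lemma \ref{supp_lem:8}) gives, with probability at least $1-\delta$,
\[
R_{T_{0},1}\leq 2\sqrt{2dT_{0}\beta_{T_{0}}^{*}\log(\tfrac{d\lambda+T_{0}p_{\max}^{2}}{d\lambda})}+4p_{\max}L\|\hat{\theta}-\theta_{0}\|_{1}T_{0}+2dp_{\max}.
\]
So the first displayed inequality of the proposition will follow once I add to this a deterministic bound on $R_{T_0,2}$.

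For the continuous part I would invoke Assumption 2. The candidate prices $\{m_i+x_t^\top\hat\theta\}_{i\in[d]}$ form an equally spaced grid covering $[0,p_{\max}]$ with spacing $(p_{\max}+2\|\hat\theta\|_1)/d$, so for each $t$ there is a grid price $\dot p_t\in\mathcal{S}_t$ with $|\dot p_t-p^*_t|\le (p_{\max}+2\|\hat\theta\|_1)/(2d)$ (the boundary cases being handled as in the continuous-part discussion, using $p^*_t\in(0,p_{\max})$ and $\|\hat\theta\|_1\le W$). Assumption 2 then gives $f_{x_t^\top\theta_0}(p^*_t)-f_{x_t^\top\theta_0}(\dot p_t)=O(1/d^2)$, and since the discrete optimum $\tilde p^*_t$ dominates $\dot p_t$ we obtain $r_{t,2}\le f_{x_t^\top\theta_0}(p^*_t)-f_{x_t^\top\theta_0}(\dot p_t)\le C/d^2$ after absorbing $(p_{\max}+2\|\hat\theta\|_1)^2/4$ into the constant $C$. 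Summing over $t\in[T_0]$ yields $R_{T_0,2}\le CT_0/d^2$, and adding this to the bound for $R_{T_0,1}$ gives the first claim.

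For the second claim I would substitute $\delta=1/T_0$ and $d=C\lceil T_0^{1/6}\rceil$. The leading square-root term is simplified with Lemma \ref{supp_lem:9}: writing it as $2\sqrt{2}\,\sqrt{dT_0}\,\sqrt{\beta_{T_0}^*\log(\cdot)}\le 2\sqrt{2}\,C_1'\,d\sqrt{T_0}\log(C_2'T_0)$, which with $d\asymp T_0^{1/6}$ is $\tilde O(T_0^{2/3})$; the continuous part $CT_0/d^2=O(T_0^{2/3})$, and the choice $d\asymp T_0^{1/6}$ is exactly the balance point of $d\sqrt{T_0}$ and $T_0/d^2$, while $2dp_{\max}=O(T_0^{1/6})$ is lower order. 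Hence on the good event $E$ of probability at least $1-\delta$ one has $R_{T_0}\le \tilde O(T_0^{2/3})+4p_{\max}L\|\hat\theta-\theta_0\|_1 T_0$.

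Finally I would pass from this high-probability statement to the expectation bound. Since every set price lies in $(0,p_{\max})$, each $r_t\le p_{\max}$ and so $R_{T_0}\le p_{\max}T_0$ holds deterministically; splitting $\mathbb E(R_{T_0})=\mathbb E(R_{T_0}\mathbf 1_E)+\mathbb E(R_{T_0}\mathbf 1_{E^c})$, the second term is at most $\delta\,p_{\max}T_0=p_{\max}$ with $\delta=1/T_0$, which is $O(1)$ and absorbed into $\tilde O(T_0^{2/3})$. The main thing to watch, and the only real subtlety of this proposition, is the bookkeeping of the low-probability failure event so that it does not inflate the rate, together with confirming the balance of $d$; note that $\hat\theta$ is treated as fixed input in this single-episode analysis, so the term $4p_{\max}L\|\hat\theta-\theta_0\|_1 T_0$ survives the expectation unchanged, which is exactly what the statement asserts.
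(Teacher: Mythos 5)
Your proposal is correct and follows essentially the same route as the paper: decompose $R_{T_0}=R_{T_0,1}+R_{T_0,2}$, invoke Proposition \ref{supp_thm:1} for the discrete part, use Assumption 2 on a nearest grid price for the continuous part, simplify the leading term via Lemma \ref{supp_lem:9} with $\delta=1/T_0$ and $d=C\lceil T_0^{1/6}\rceil$, and absorb the failure event using $R_{T_0}\le p_{\max}T_0$. The only (immaterial) deviation is your half-spacing claim $|\dot p_t-p_t^*|\le(p_{\max}+2\|\hat\theta\|_1)/(2d)$; because grid points falling outside $(0,p_{\max})$ are excluded from $\mathcal{S}_t$, the paper only guarantees a distance below one full spacing $|G(\hat\theta)|/d$, but this changes the constant in $CT_0/d^2$ by a factor of four at most and does not affect the result.
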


\begin{proof}
	In Proposition \ref{supp_thm:1}, we have proved a high probability upper bound for the discrete part regret. Thus we focus on the continuous part regret $R_{T_{0},2}$ which admits the form
	\begin{equation*}
	\begin{aligned}
	R_{T_{0},2} &= \sum_{t=1}^{T_{0}}(p^{*}_{t}(1-F(p^{*}_{t}-x_{t}^{\top}\theta_{0}))-\tilde{p}^{*}_{t}(1-F(\tilde{p}^{*}_{t}-x_{t}^{\top}\theta_{0})))\\
	& = \sum_{t=1}^{T_{0}}(f_{x_{t}^{\top}\theta_{0}}(p_{t}^{*})-f_{x_{t}^{\top}\theta_{0}}(\tilde{p}_{t}^{*})).
	\end{aligned}
	\end{equation*}
	
	By our discretization approach, $\{m_{i}+x_{t}^{\top}\hat{\theta} \}_{i\in[d]}$ are a sequence of points with a special pattern that any two consecutive points have a difference $$|(m_{i+1}+x_{t}^{\top}\hat{\theta})-(m_{i}+x_{t}^{\top}\hat{\theta})|=\frac{|G(\hat{\theta})|}{d}.$$ Moreover, the first point satisfies $m_{1}+x_{t}^{\top}\hat{\theta}\leq \frac{1}{2}\cdot\frac{|G(\hat{\theta})|}{d}$ while the last points satisfies $m_{d}+x_{t}^{\top}\hat{\theta}\geq p_{\max}-\frac{1}{2}\cdot\frac{|G(\hat{\theta})|}{d}$. Since $\mathcal{S}_{t} = \{m_{j}+x_{t}^{\top}\hat{\theta}|j\in [d],m_{j}+x_{t}^{\top}\hat{\theta}\in(0,p_{\max})\}$ and $p_{t}^{*}\in(0,p_{\max})$, there must be some price $\dot{p}_{t}\in\mathcal{S}_{t}$ whose distance with $p_{t}^{*}$ is less than $\frac{|G(\hat{\theta})|}{d}$, i.e., $|p_{t}^{*}-\dot{p}_{t}|\leq \frac{|G(\hat{\theta})|}{d}$. Thus by Assumption 2, there exists $C_{2}>0$ only dependent on $F$ such that
	\[
	f_{x_{t}^{\top}\theta_{0}}(p_{t}^{*})-f_{x_{t}^{\top}\theta_{0}}(\dot{p}_{t})\leq C_{2}(p_{t}^{*}-\dot{p}_{t})^{2}\leq \frac{C_{2}C_{3}^{2}}{d^{2}}
	\]
	where $C_{3} = p_{\max}+2W\geq p_{\max}+2||\hat{\theta}||_{1} = |G(\hat{\theta})|$. The last inequality follows from the fact that $\hat{\theta}$ represents the input $\hat{\theta}_{k-1}$ for the $k$-th episode, which is a projection on $\Theta = \{\theta\in\mathbb{R}^{d_{0}}:||\theta||_{1}\leq W \}$.
	
	Since $\tilde{p}_{t}^{*}$ yields the maximum reward for prices in $\mathcal{S}_{t}$, we have $f_{x_{t}^{\top}\theta_{0}}(\tilde{p}_{t}^{*})\geq f_{x_{t}^{\top}\theta_{0}}(\dot{p}_{t})$. Thus we further derive 
	\[
	f_{x_{t}^{\top}\theta_{0}}(p_{t}^{*})-f_{x_{t}^{\top}\theta_{0}}(\tilde{p}^{*}_{t})\leq f_{x_{t}^{\top}\theta_{0}}(p_{t}^{*})-f_{x_{t}^{\top}\theta_{0}}(\dot{p}_{t})\leq \frac{C_{2}C_{3}^{2}}{d^{2}} = \frac{C}{d^{2}}
	\]
	where $C = C_{2}C_{3}^{2}$ which depends on $F$ and $W$. Therefore, the continous part regret $R_{T_{0},2}$ can be bounded as
	\[ 
	R_{T_{0},2} = \sum_{t=1}^{T_{0}}(f_{x_{t}^{\top}\theta_{0}}(p_{t}^{*})-f_{x_{t}^{\top}\theta_{0}}(\tilde{p}^{*}_{t}))\leq C\frac{T_{0}}{d^{2}}. 
	\]
	
	By Proposition \ref{supp_thm:1}, the discrete part regret $R_{T_{0},1}$ satisfies
	\[
	R_{T_{0},1}\leq 2\sqrt{2dT_{0}\beta_{T_{0}}^{*}\log(\frac{d\lambda+T_{0}p_{\max}^{2}}{d\lambda})} +  4T_{0}L||\theta_{0}-\hat{\theta}||_{1}p_{\max}+2dp_{\max}
	\]
	with probability at least $1-\delta$. Thus the overall regret $R_{T_{0}} =R_{T_{0},1} + R_{T_{0},2}$ satisfies 
	\[
	R_{T_{0}}\leq 2\sqrt{2dT_{0}\beta_{T_{0}}^{*}\log(\frac{d\lambda+T_{0}p_{\max}^{2}}{d\lambda})}+  4T_{0}L||\theta_{0}-\hat{\theta}||_{1}p_{\max} + C\frac{T_{0}}{d^{2}}+2dp_{\max}
	\]
	with probability at least $1-\delta$.
	
	Let $\delta = \frac{1}{T_{0}}$, then $\beta_{T_{0}}^{*} =p^{2}_{\max}(1\vee (C_{1}\sqrt{\lambda d}+\sqrt{2\log(T_{0})+d\log(\frac{d\lambda+(T_{0}-1)p_{\max}^{2}}{d\lambda}) })^{2})$. By Lemma \ref{supp_lem:9}, there exists constants $C_{1}^{'},C_{2}^{'}>0$ such that $\sqrt{\beta_{T_{0}}^{*}\log(\frac{d\lambda+T_{0}p_{\max}^{2}}{d\lambda})}\leq C_{1}^{'}\sqrt{d}\log(C_{2}^{'}T_{0})$. Thus with probability at least $1-\delta = 1-\frac{1}{T_{0}}$, 
	\begin{equation*}
	\begin{aligned}
	R_{T_{0}}&\leq 2\sqrt{2dT_{0}\beta_{T_{0}}\log(\frac{d\lambda+T_{0}p_{\max}^{2}}{d\lambda})} +  4T_{0}L||\theta_{0}-\hat{\theta}||_{1}p_{\max} + C\frac{T_{0}}{d^{2}}+2dp_{\max}\\
	&\leq 2\sqrt{2}C_{1}^{'}d\log(C_{2}^{'}T_{0})\sqrt{T_{0}}+4T_{0}L||\theta_{0}-\hat{\theta}||_{1}p_{\max} + C\frac{T_{0}}{d^{2}}+2dp_{\max}.\\
	\end{aligned}
	\end{equation*}
	Since $r_{t}\leq p_{\max}$, we have $R_{T_{0}} = \sum_{t=1}^{T_{0}}r_{t} \leq p_{\max}T_{0}$. Therefore, we obtain 
	\begin{equation*}
	\begin{aligned}
	\mathbb{E}(R_{T_{0}})&\leq (1-\frac{1}{T_{0}})(2\sqrt{2}C_{1}^{'}d\log(C_{2}^{'}T_{0})\sqrt{T_{0}}+4T_{0}L||\theta_{0}-\hat{\theta}||_{1}p_{\max} + C\frac{T_{0}}{d^{2}})\\
	&+2dp_{\max}+p_{\max}T_{0}\frac{1}{T_{0}}\\
	&\leq 2\sqrt{2}C_{1}^{'}d\log(C_{2}^{'}T_{0})\sqrt{T_{0}}+4T_{0}L||\theta_{0}-\hat{\theta}||_{1}p_{\max} + C\frac{T_{0}}{d^{2}}+(2d+1)p_{\max} \\
	&\leq 2\sqrt{2}C_{1}^{'}d\log(C_{2}^{'}T_{0})\sqrt{T_{0}}+4T_{0}L||\theta_{0}-\hat{\theta}||_{1}p_{\max} + C\frac{T_{0}}{d^{2}}+3dp_{\max}\\
	\end{aligned}
	\end{equation*}
	Since $\lceil T_{0}^{1/6}\rceil\leq 2T_{0}^{1/6}$, $d = C^{'}\lceil T_{0}^{1/6}\rceil$ satisfies $C^{'}T_{0}^{1/6}\leq d\leq 2C^{'}T_{0}^{1/6}$. Thus we obtain
	\begin{equation*}
	\begin{aligned}
	\mathbb{E}(R_{T_{0}})&\leq 2\sqrt{2}C_{1}^{'}d\log(C_{2}^{'}T_{0})\sqrt{T_{0}}+4T_{0}L||\theta_{0}-\hat{\theta}||_{1}p_{\max}+C\frac{T_{0}}{d^{2}}+3dp_{\max}\\
	&\leq 4\sqrt{2}C^{'}C_{1}^{'}\log(C_{2}^{'}T_{0})T_{0}^{2/3}+\frac{C}{(C^{'})^{2}}T_{0}^{2/3}+6p_{\max}C^{'}T_{0}^{1/6}+4L||\theta_{0}-\hat{\theta}||_{1}p_{\max}T_{0}\\
	& \leq C_{3}^{'}\log(C_{2}^{'}T_{0})T_{0}^{2/3}+4L||\theta_{0}-\hat{\theta}||_{1}p_{\max}T_{0}\\
	\end{aligned}
	\end{equation*}
	where $C_{3}^{'} = 4\sqrt{2}C^{'}C_{1}^{'}+\frac{C}{(C^{'})^{2}}+6p_{\max}C^{'}$. Thus we obtain
	\[
	\mathbb{E}(R_{T_{0}}) = \tilde{O}(T_{0}^{2/3}) + 4T_{0}L||\theta_{0}-\hat{\theta}||_{1}p_{\max}.
	\]
\end{proof}

\begin{theorem}\label{supp_thm:3}
	Under Assumptions 1 -- 2, the DIP policy yields the expected regret
	\[
	\mathbb{E}(R_{T}) =  \tilde{O}(T^{2/3}) +4p_{\max}L\sum_{k=2}^{n}2^{k-2}\ell_{2}\mathbb{E}||\hat{\theta}_{k-1}-\theta_{0}||_{1}.
	\]
\end{theorem}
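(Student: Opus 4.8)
The plan is to prove the bound by decomposing the cumulative regret across the episodes produced by the doubling trick and applying the single-episode bound of Proposition~\ref{supp_thm:2} to each episode. Write $R_T = \sum_{k=1}^{n} R^{(k)}$, where $R^{(k)}$ collects the regret incurred during episode $\mathcal{E}_k$. For the warm-up episode $k=1$, prices are set at random and $r_t \le p_{\max}$, so $R^{(1)}\le p_{\max}\ell_1 = p_{\max}\alpha_1 = O(1)$, which will be absorbed into the $\tilde O(T^{2/3})$ term. For each $k \ge 2$, DIP applies Algorithm~3 with input $\hat\theta_{k-1}$, discretization number $d_k = C\lceil \ell_k^{1/6}\rceil$, confidence parameter $\delta = 1/\ell_k$, and horizon $T_0 = \ell_k = 2^{k-2}\ell_2$; these are precisely the choices under which Proposition~\ref{supp_thm:2} was established.

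First I would handle the conditioning. Because $\hat\theta_{k-1}$ is computed by Inner Algorithm~A using only the data from episode $\mathcal{E}_{k-1}$, it is independent of the fresh randomness (covariates, noise, and internal pricing choices) driving episode $\mathcal{E}_k$. Conditioning on $\hat\theta_{k-1}$, Proposition~\ref{supp_thm:2} gives
\[
\mathbb{E}\bigl(R^{(k)}\mid \hat\theta_{k-1}\bigr) \le \tilde O\bigl(\ell_k^{2/3}\bigr) + 4p_{\max}L\,\|\hat\theta_{k-1}-\theta_0\|_1\,\ell_k.
\]
Taking the outer expectation over $\hat\theta_{k-1}$ and summing over $k$ yields
\[
\mathbb{E}(R_T) \le p_{\max}\alpha_1 + \sum_{k=2}^{n}\tilde O\bigl(\ell_k^{2/3}\bigr) + 4p_{\max}L\sum_{k=2}^{n} \ell_k\,\mathbb{E}\|\hat\theta_{k-1}-\theta_0\|_1.
\]
Since $\ell_k = 2^{k-2}\ell_2$, the last sum is already exactly the second term in the claimed bound, so it remains only to show $\sum_{k=2}^{n}\tilde O(\ell_k^{2/3}) = \tilde O(T^{2/3})$.

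The key step is the collapse of this geometric sum. Tracking the logarithmic factors from Proposition~\ref{supp_thm:2}, each summand is of order $\ell_k^{2/3}\log(C_2'\ell_k)$; since $\ell_k \le T$ for all $k$, every logarithm is bounded by $\log(C_2'T)$ and can be pulled out. What remains is $\sum_{k=2}^{n}\ell_k^{2/3} = \ell_2^{2/3}\sum_{j=0}^{n-2}2^{2j/3}$, a geometric series with ratio $2^{2/3}>1$, whose value is at most a constant multiple (depending only on $2^{2/3}$) of its largest term $\ell_n^{2/3}$. Because the episode lengths partition the horizon, $\ell_n \le \sum_{k=1}^{n} \ell_k \le T$, so $\ell_n^{2/3}\le T^{2/3}$ and hence $\sum_{k=2}^{n} \ell_k^{2/3} = O(T^{2/3})$. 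Combining this with the $\log(C_2'T)$ factor and the $O(1)$ warm-up contribution gives $p_{\max}\alpha_1 + \sum_{k=2}^{n} \tilde O(\ell_k^{2/3}) = \tilde O(T^{2/3})$, which establishes the theorem.

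The main obstacle is not any single computation but the bookkeeping that keeps the aggregate sub-linear term at $\tilde O(T^{2/3})$ rather than $\tilde O(n\,T^{2/3})$ or $\tilde O(T)$: one must exploit the geometric growth of the doubling schedule so that the final episode dominates the sum, and simultaneously verify that the single-episode bound of Proposition~\ref{supp_thm:2} applies verbatim to each episode. The latter hinges on the independence of the plug-in estimate $\hat\theta_{k-1}$ from the data generated within episode $\mathcal{E}_k$, which is precisely the correlation reduction that the doubling trick is designed to provide, and which justifies the conditioning step above.
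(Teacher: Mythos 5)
Your proposal is correct and follows essentially the same route as the paper's proof: decompose the regret by episode, bound the warm-up episode by $p_{\max}\ell_1$, apply Proposition 3 conditionally on $\hat{\theta}_{k-1}$ for each $k\ge 2$, pull the logarithmic factor out using $2^{k-2}\ell_2 \le 2T$, and collapse the geometric sum $\sum_k \ell_k^{2/3}$ into $O(T^{2/3})$. The only cosmetic difference is that the paper works with the nominal episode lengths $2^{k-2}\ell_2$ throughout (which is what appears in the theorem's second term and what the algorithm uses to set $\delta$ and $d_k$), whereas you write $\ell_k$; since the final episode may be truncated this only makes the bound looser, so nothing is lost.
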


\begin{proof}
	Note that the lengths of the first two episodes are fixed as $\ell_{1}=\alpha_{1}$ and $\ell_{2} = \alpha_{2}$ and the regret order is with respect to $T\to\infty$. Given $T$, the doubling trick yields the number of episodes $n = n(T,\alpha_{1},\alpha_{2})$ and the lengths of the $k$-th episode $\ell_{k} = \ell_{k}(T,\alpha_{1},\alpha_{2})$.We then find a mathematical formulation of these functions $n(T,\alpha_{1},\alpha_{2})$ and $\ell_{k}(T,\alpha_{1},\alpha_{2})$ for $k\in[n]$. 
	
	First we find a relationship between $n$ and $T$. It is obvious that
	\[
	\alpha_{1}+\alpha_{2}(1+2+\dots+2^{n-3})< T\leq \alpha_{1}+\alpha_{2}(1+2+\dots+2^{n-2}).
	\]
	Therefore, we have 
	\begin{equation}\label{supp_eq:4}
	\log_{2}(\frac{T-\alpha_{1}}{\alpha_{2}}+1)+1\leq n<\log_{2}(\frac{T-\alpha_{1}}{\alpha_{2}}+1)+2.
	\end{equation}
	Thus $n = \lceil \log_{2}(\frac{T-\alpha_{1}}{\alpha_{2}}+1)+1\rceil = \lceil\log_{2}(\frac{T-\alpha_{1}}{\alpha_{2}}+1)\rceil + 1=n(T,\alpha_{1},\alpha_{2})$. On the other hand, we have $\ell_{1} = \alpha_{1} = \ell_{1}(T,\alpha_{1},\alpha_{2})$. Moreover, $\ell_{k} = 2^{k-2}\ell_{2} = 2^{k-2}\alpha_{2} = \ell_{k}(T,\alpha_{1},\alpha_{2})$ for $2\leq k\leq n-1=\lceil\log_{2}(\frac{T-\alpha_{1}}{\alpha_{2}}+1)\rceil$. Finally, $\ell_{n} = T-\alpha_{1}-\alpha_{2}(2^{\lceil\log_{2}(\frac{T-\alpha_{1}}{\alpha_{2}}+1)\rceil-1}-1) = \ell_{n}(T,\alpha_{1},\alpha_{2})$.
	
	Denote the regret in episode $k$ as $R_{k}^{'}$. Then $R_{T} = \sum_{k=1}^{n}R^{'}_{k}$. We price randomly for the first episode. Thus we obtain
	\[
	R^{'}_{1}\leq p_{\max}\ell_{1}\text{ and }\mathbb{E}(R^{'}_{1})\leq p_{\max}\ell_{1}.
	\]
	For the regret $R_{k}^{'}$ in episode $2\leq k\leq n$, we have by Proposition \ref{supp_thm:2}, \[
	\mathbb{E}(R_{k}^{'}|\hat{\theta}_{k-1})\leq C_{3}^{'}\log(C_{2}^{'}2^{k-2}\ell_{2})(2^{k-2}\ell_{2})^{2/3}+4L||\theta_{0}-\hat{\theta}_{k-1}||_{1}p_{\max}2^{k-2}\ell_{2}.
	\]
	Therefore, we obtain
	\[
	\mathbb{E}(R_{k}^{'})\leq C_{3}^{'}\log(C_{2}^{'}2^{k-2}\ell_{2})(2^{k-2}\ell_{2})^{2/3}+4p_{\max}L\mathbb{E}(||\theta_{0}-\hat{\theta}_{k-1}||_{1})2^{k-2}\ell_{2}.
	\]
	By \ref{supp_eq:4}, we have $2^{n-1}\leq 2(\frac{T-\alpha_{1}}{\alpha_{2}}+1)\leq 2T$. Thus we obtain
	\[
	\sum_{k=2}^{n}(2^{k-2}\ell_{2})^{2/3}=\ell_{2}^{2/3}\frac{(2^{2/3})^{n-1}-1}{2^{2/3}-1} \leq C_{4}T^{2/3} \text{ where }C_{4} = \frac{2^{2/3}\ell_{2}^{2/3}}{2^{2/3}-1}. 
	\]
	Since we assume $T>\alpha_{1}+\alpha_{2}$, we have $n\geq 3$. Thus $2^{n-3}\alpha_{2} = \ell_{n-1}\leq T$ and we obtain $2^{k-2}\ell_{2} \leq 2^{n-2}\ell_{2} = 2^{n-2}\alpha_{2}\leq 2T$ for $2\leq k\leq n$. Therefore, we obtain
	\begin{equation*}
	\begin{aligned}
	\mathbb{E}(R_{T}) &=\mathbb{E}(\sum_{k=1}^{n}R_{k}^{'}) = \sum_{k=1}^{n}\mathbb{E}(R_{k}^{'})\\
	&\leq \ell_{1}p_{\max}+\sum_{k=2}^{n}4p_{\max}L\mathbb{E}(||\theta_{0}-\hat{\theta}_{k-1}||_{1})2^{k-2}\ell_{2}+\sum_{k=2}^{n}C_{3}^{'}\log(C_{2}^{'}2^{k-2}\ell_{2})(2^{k-2}\ell_{2})^{2/3}\\ 
	&\leq \ell_{1}p_{\max}+\sum_{k=2}^{n}4p_{\max}L\mathbb{E}(||\theta_{0}-\hat{\theta}_{k-1}||_{1})2^{k-2}\ell_{2}+\sum_{k=2}^{n}C_{3}^{'}\log(2C_{2}^{'}T)(2^{k-2}\ell_{2})^{2/3}\\ 
	&\leq \ell_{1}p_{\max}+\sum_{k=2}^{n}4p_{\max}L\mathbb{E}(||\theta_{0}-\hat{\theta}_{k-1}||_{1})2^{k-2}\ell_{2}+C_{3}^{'}C_{4}\log(2C_{2}^{'}T)T^{2/3}\\ 
	&=\tilde{O}(T^{2/3}) + 4p_{\max}L\sum_{k=2}^{n}2^{k-2}\ell_{2}\mathbb{E}(||\theta_{0}-\hat{\theta}_{k-1}||_{1}).\\
	\end{aligned}
	\end{equation*}
\end{proof}

\renewcommand{\thelemma}{S\arabic{lemma}}
\setcounter{lemma}{6}

%\begin{lemma}
%For any perturbed linear bandit $Z_{t} = \langle \xi_{t},A_{t}\rangle + \eta_{t}$ with perturbation constant $C_{p}$, there exists $\tilde{\xi}$ such that $||\xi_{t}-\tilde{\xi}||_{\infty}\leq \frac{C_{p}}{2},\forall t\in\mathbb{N}^{+}$. 
%\end{lemma}

%\begin{proof}
%The proof is rather direct. Define $\overline{\xi}$ satisfying $\overline{\xi}_{i} = \sup_{t\in\mathbb{N}^{+}}\xi_{ti}$ and $\underline{\xi}$ satisfying $\underline{\xi}_{i} = \inf_{t\in\mathbb{N}^{+}}\xi_{ti}$. Define $\tilde{\xi} = \frac{1}{2} (\overline{\xi}+\underline{\xi})$.

%Then we claim that $\forall i\in [d]$, $0\leq \overline{\xi}_{i}-\underline{\xi}_{i}\leq C_{p}$. It is obvious that $0\leq \overline{\xi}_{i}-\underline{\xi}_{i}$. We prove by contradiction for the other part. Suppose $\exists i\in[d],\epsilon>0$ such that $\overline{\xi}_{i}-\underline{\xi}_{i}= C_{p}+\epsilon>C_{p}$. Then there exists $s,t\in\mathbb{N}^{+}$ such that $\overline{\xi}_{i}-\xi_{si}<\frac{\epsilon}{2}$ and $\xi_{ti}-\underline{\xi}_{i}<\frac{\epsilon}{2}$. Then we have $\xi_{si}-\xi_{ti}>C_{p}$, contradiction. 
%\end{proof}

\def\thesection{B}

\section{Implementation Details}

In all problem instances, we consider low-dimensional and nonsparse coefficients. Thus we apply RMLP and RMLP-2 without regularization. We now specify some algorithm inputs.
\begin{itemize}
	\item Across all the simulation settings and the real-data applications, we use the same value of $\lambda = 0.1, p_{\max} = 30, C = 20$ and $W = 10^{4}$. The known bound $W$ of $||\theta_{0}||_{1}$ for DIP and RMLP, and of $||\theta_{0}||_{1}+\frac{1}{\sigma}$ for RMLP-2 is mainly for theoretical purpose. Thus we select a random and loose choice $W = 10^{4}$ to minimize its affect on the real performance of the three methods. 
	\item In the simulation settings, we use the same value of the first and the second episode lengths $\alpha_{1} = \alpha_{2}= 2^{11}$. In the real-data applications, the number of data samples is not as many as in the simulation settings and thus we shrink the first and second episode lengths to $\alpha_{1} = \alpha_{2} = 2^{10}$ or $2^{9}$ according to the horizon length. 
	\item For DIP, we use $\beta_{t}^{*} = s^{*}(1\vee (\frac{1}{p_{\max}}\sqrt{\lambda d}+\sqrt{2\log(T_{0})+d\log(\frac{d\lambda+(t-1)p_{\max}^{2}}{d\lambda}) })^{2})$ in the UCB construction of Inner Algorithm B with a scaling parameter $s^{*} = \frac{1}{40}$. Such a scaling is common in UCB practices.
	\item The specified distribution of RMLP is the logistic distribution with CDF $\frac{e^{x}}{e^{x}+1}$ and the specified class of distributions of RMLP-2 is the class of logistic distributions $\{\frac{e^{(x-\mu)/s}}{e^{(x-\mu)/s}+1}|\mu\in \mathbb{R},s\in \mathbb{R}^{+}\}$. We choose logistic distributions since the corresponding optimization problem of RMLP and RMLP-2 can be formulated into modified versions of the conventional logistic regression. 
\end{itemize}

\def\thesection{C}

\section{Detailed Discussion of Instability of RMLP-2}

We provide a more detailed discussion of why large estimation errors sometimes happen with RMLP-2 as indicated by the simulation Examples 7 -- 8. Note that RMLP-2 uses a mapping function to map the customer covariates to the set prices. Therefore, the set prices together with the customer covariates lie in a lower-dimensional subspace. It turns out that such subspace is likely to lie almost on only one side of the Bayes decision boundary when the distribution of $x_{t}^{\top}\theta_{0}$ is positively away from $0$ by the nature of the mapping function. This is exactly the case for Examples 7 -- 8. Then with such singular data structure and the wrongly specified distribution class, the RMLP-2 estimation process, which borrows a maximum likelihood idea, becomes unstable especially for higher dimensions. On the other hand, the distribution of $x_{t}^{\top}\theta_{0}$ is not positively away from 0 in Example 9, thus yielding relatively stable estimations.

\def\thesection{D}

\section{Detailed Estimation Procedure of $\theta_{0}$ and $F$ for Real Auto-Loan Data}

We take the dataset of the entire US as an example to illustrate how to estimate the noise CDF $F$ and $\theta_{0}$. The estimation of state-specific $F$ and $\theta_{0}$ are similar. Let $T_{0} = 208085$ be the number of data points for the US. 

The model we consider can be simply written as
\[
y_{t} \sim \text{Ber}(1-F(p_{t}-x_{t}^{\top}\theta_{0})).
\]
Note that
\[
F(p_{t}-x_{t}^{\top}\theta_{0}) = \tilde{F}(p_{t}-x_{t}^{\top}\theta_{0}-F^{-1}(\frac{1}{2})) = \tilde{F}(p_{t}-\tilde{x}_{t}^{\top}\tilde{\theta}_{0})
\]
where $\tilde{F}(\cdot) = F(\cdot+F^{-1}(\frac{1}{2}))$ and $\tilde{x}_{t} = (1,x_{t}^{\top})^{\top},\tilde{\theta}_{0} = (F^{-1}(\frac{1}{2}),\theta_{0}^{\top})^{\top}$. Then $\tilde{F}^{-1}(-\frac{1}{2}) = 0$. Thus without loss of generality, we can assume $F^{-1}(\frac{1}{2}) = 0$ by adding an all-$1$ column to $x_{t}$ and an intercept term to $\theta_{0}$. Then we have
\[
\begin{cases}
\mathbb{P}(y_{t} = 1)  > \frac{1}{2}, \text{ if }x_{t}^{\top}\theta_{0}-p_{t}>0;\\
\mathbb{P}(y_{t} = 1)  = \frac{1}{2}, \text{ if }x_{t}^{\top}\theta_{0}-p_{t}=0;\\
\mathbb{P}(y_{t} = 1)  < \frac{1}{2}, \text{ if }x_{t}^{\top}\theta_{0}-p_{t}<0.\\
\end{cases}
\]
Therefore, we can form a classification problem with responses $\{y_{t}\}_{t\in[T_{0}]}$ and covariates $\{(x_{t}^{\top},p_{t})^{\top}\}_{t\in[T_{0}]}$. It admits the Bayes decision boundary $\{u: (\theta_{0}^{\top},-1)u = 0 \}$ involving $\theta_{0}$. We use logistic regression to obtain an estimate $(\hat{\beta}^{\top},\hat{b})$ of a multiple of $(\theta_{0}^{\top},-1)$. Then we use $\hat{\theta}_{0}=-\frac{\hat{\beta}}{b}$ as the estimate of $\theta_{0}$. 

After obtaining the estimate $\hat{\theta}_{0}$ of $\theta_{0}$, we step forward to estimate $F$. Denote the corresponding PDF function as $f$. Our final aim would be generating an appropriate estimate of $f$ as a weighted average of normal distribution PDFs. Then we are able to construct a suitable closed form of an estimate $\hat{F}$ of $F$ as a weighted average of normal distribution CDFs. The data closely related to the behavior of $F$ can be formulated as $\{(p_{t}-x_{t}^{\top}\theta_{0},y_{t})\}_{1\leq t\leq T_{0}}$. However, we do not know the true values of $p_{t}-x_{t}^{\top}\theta_{0}$ since $\theta_{0}$ is unknown. By substituting $\hat{\theta}_{0}$ for $\theta_{0}$, we obtain the approximate dataset $\{(p_{t}-x_{t}^{\top}\hat{\theta}_{0},y_{t})\}_{1\leq t\leq T_{0}}$. Since $y_{t} \sim \text{Ber}(1-F(p_{t}-x_{t}^{\top}\theta_{0}))$, we have for any $v\in \mathbb{R}$, \begin{equation*}
\begin{aligned}
F(v) &= \mathbb{E}(1_{\{y_{t}=0 \}}\ |\ p_{t}-x_{t}^{\top}\theta_{0} = v) \\
&\approx \mathbb{E}(1_{\{y_{t}=0 \}}\ |\ p_{t}-x_{t}^{\top}\hat{\theta}_{0} = v)\\
&\approx \mathbb{E}(1_{\{y_{t}=0 \}}\ |\ p_{t}-x_{t}^{\top}\hat{\theta}_{0}\in [v-w,v+w])\\
&\approx \frac{\sum_{t=1}^{T_{0}}1_{\{y_{t} = 0,p_{t}-x_{t}^{\top}\hat{\theta}_{0}\in[v-w,v+w]  \}}}{\sum_{t=1}^{T_{0}}1_{\{p_{t}-x_{t}^{\top}\hat{\theta}_{0}\in[v-w,v+w] \}}} = \tilde{F}_{w}(v),\\
\end{aligned}  
\end{equation*}
where $w$ is some pre-specified window size parameter. Thus by this approximation strategy we are able to estimate the value of $F$ at the point $v$. Then we use the difference quotient $\frac{\tilde{F}_{w}(v+w)-F_{w}(v-w)}{2w}$ as an estimate $\tilde{f}_{w}(v)$ of $f$ at the point $v$. We find that most of the data in $\{(p_{t}-x_{t}^{\top}\hat{\theta}_{0},y_{t})\}_{1\leq t\leq T_{0}}$ belong to the range $[-5,30]$. Thus we formulate a discrete set $V = \{v_{1},\dots,v_{7} \}$ on this range where $v_{i} = -7.5 + 5i$. Here we use a moderate window size $w = 2$ with respect to the dataset size and the length of the objective range to obtain the satisfactory estimates $\text{Est}(V) = \{\tilde{f}_{w}(v_{1}),\dots, \tilde{f}_{w}(v_{7})) \}$ of $f$ on $V$. The left image in Figure \ref{supp_fig:1} shows the discrete estimates $\text{Est}(V)$. Based on these discrete estimates, we are able to formulate a closed form estimate of $f$ on the entire real line by an approach similar in idea with the kernel density estimation. Namely, we can construct the estimate $\hat{f}_{\text{raw}} = \sum_{i=1}^{7}\frac{\tilde{f}_{w}(v_{i})}{\sum_{j=1}^{7}\tilde{f}_{w}(v_{j})}\phi(v_{i},\sigma^{2})$ where $\phi(v_{i},\sigma^{2})$ is the PDF function of $N(v_{i},\sigma^{2})$ distribution. Here we use a choice of $\sigma = 3$ to obtain an estimate $\hat{f}_{\text{raw}}$ with appropriate smoothness. 

\begin{figure}[h]
	\centering
	\subfigure{
		\begin{minipage}[t]{0.32\linewidth}
			\centering
			\includegraphics[width=2.05in]{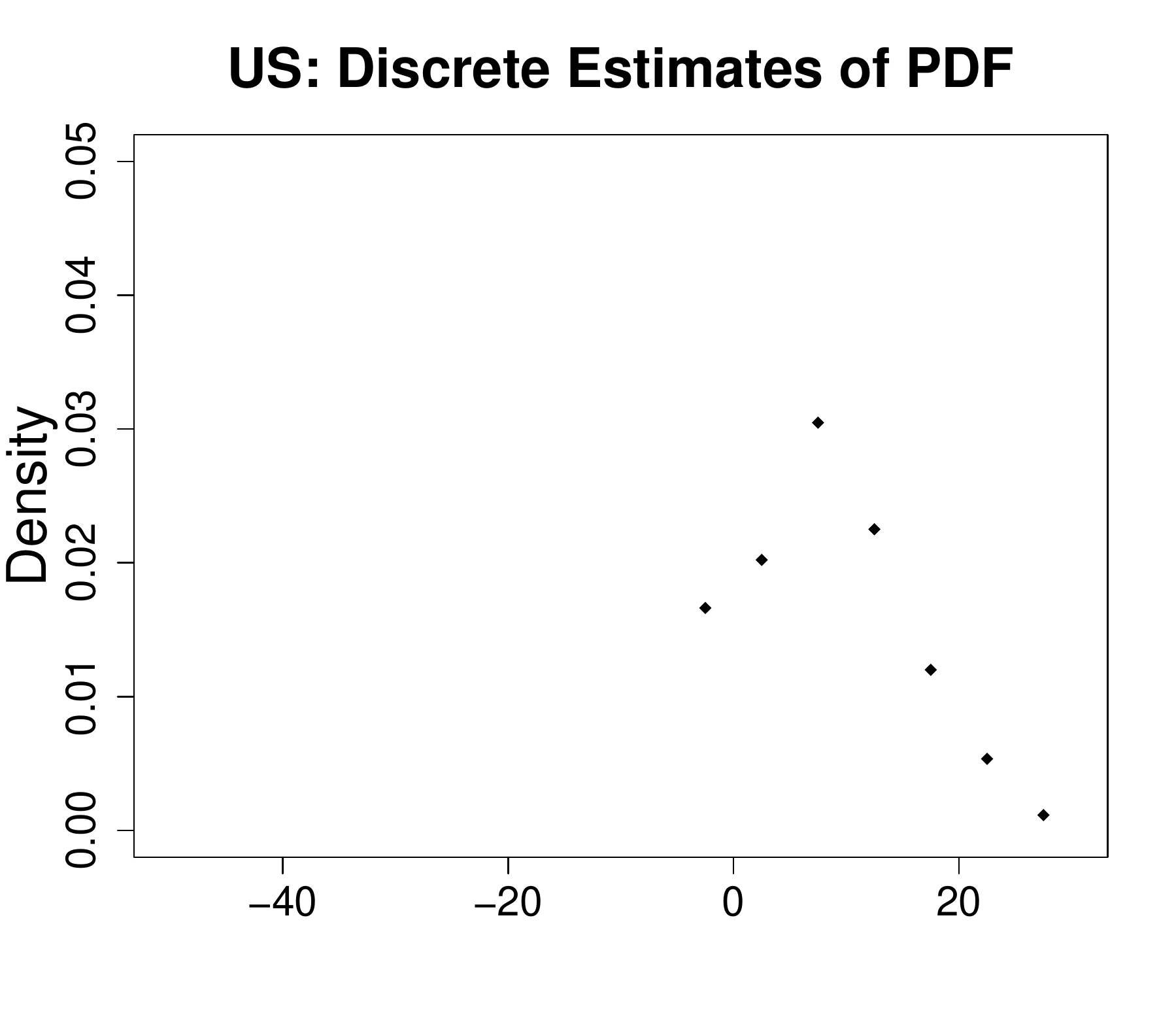}
			%\caption{fig1}
		\end{minipage}%
	}%
	\subfigure{
		\begin{minipage}[t]{0.32\linewidth}
			\centering
			\includegraphics[width=2.05in]{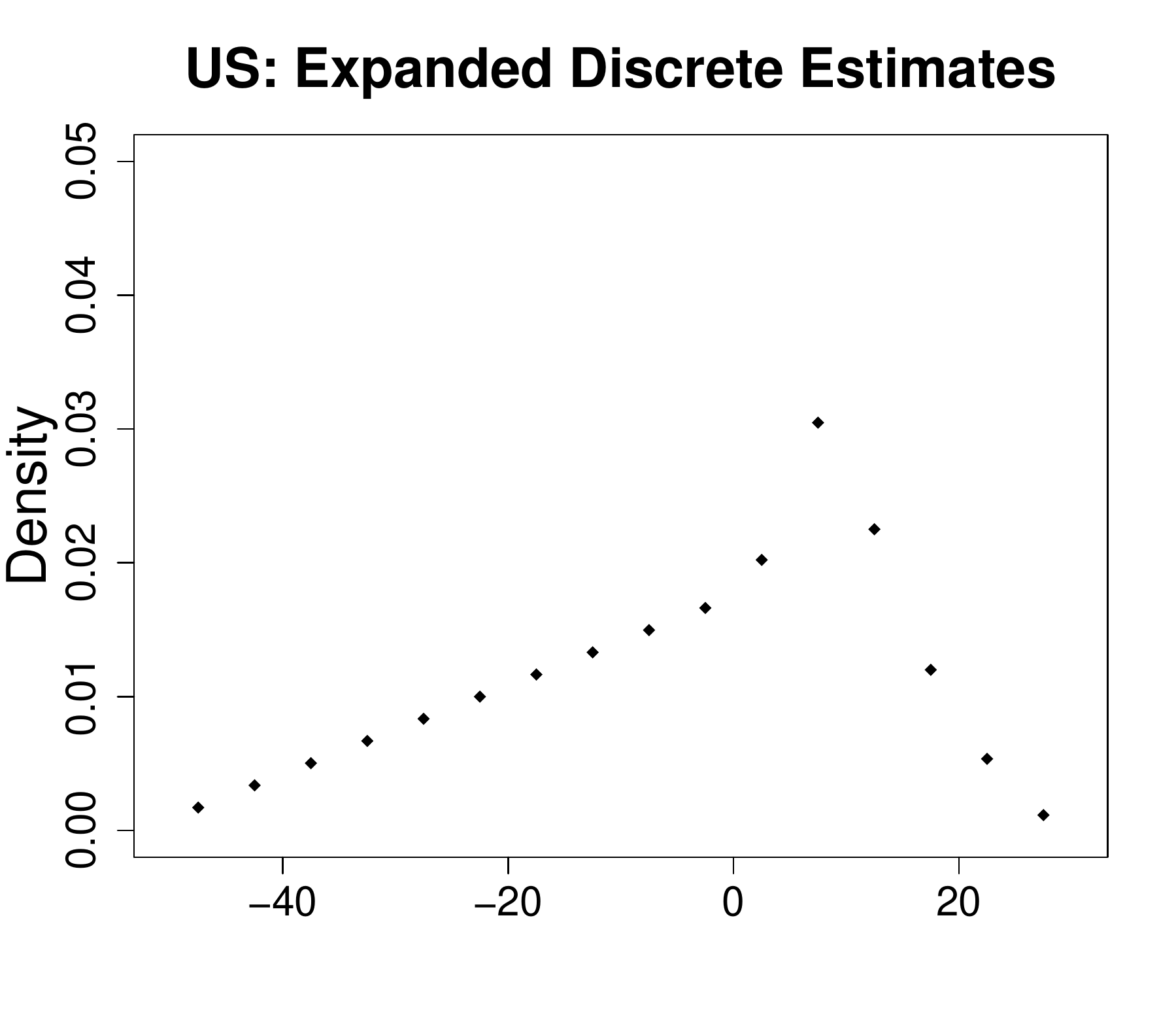}
			%\caption{fig2}
		\end{minipage}%
	}%
	\subfigure{
		\begin{minipage}[t]{0.32\linewidth}
			\centering
			\includegraphics[width=2.05in]{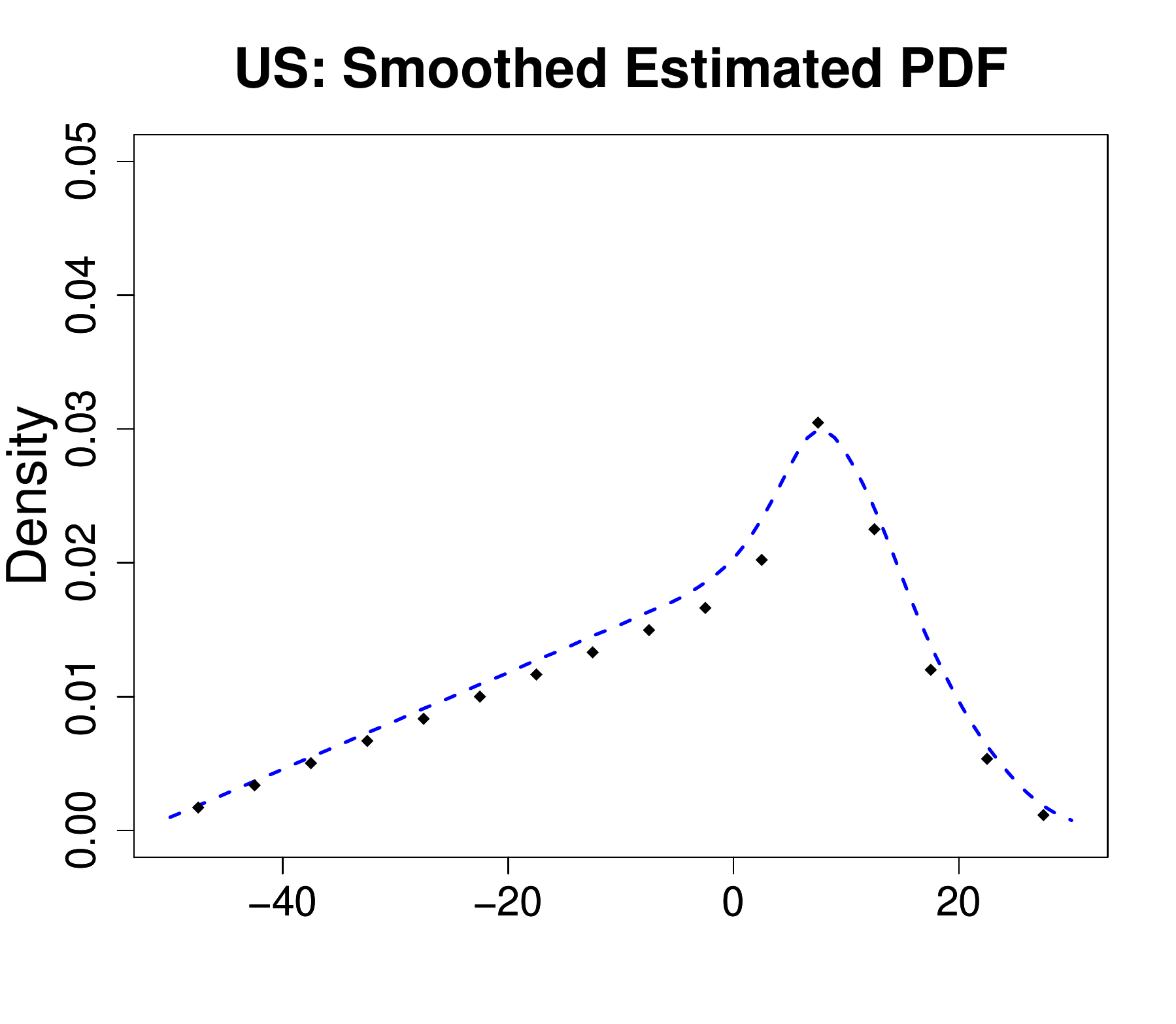}
			%\caption{fig1}
		\end{minipage}%
	}%
	\centering
	\caption{Estimation procedure of the noise PDF for the US data.}
	\label{supp_fig:1}
\end{figure}

Note that the true $F$ satisfies $F^{-1}(0) = \frac{1}{2}$ and thus has the same mass on both sides of $0$. However, by our choice of $V\subseteq [-5,30]$ and the kernel smoothing approach, $\hat{f}_{\text{raw}}$ places much more mass on the positive side of $0$. The simplest way to remedy this problem is to involve more discrete estimates of the pdf $f$ on the negative side of $0$. This helps make the sum of discrete estimates $\tilde{f}_{w}(v)$ on both sides of $0$ equal. Then by the nature of our kernel smoothing approach, the resulting smoothed estimate $\hat{f}$ yields an $\hat{F}$ approximately satisfying $\hat{F}^{-1}(\frac{1}{2}) = 0$. However, using the same way to estimate $f(v)$ for $v\notin [-5,30]$ is not reliable because of high variance and instability caused by lack of data outside the range $[-5,30]$. Thus we simply expand $V,\text{Est}(V)$ into $V^{*},\text{Est}(V^{*})$ in a smooth way. Specifically, let $V^{*} = \{v_{-8},\dots,v_{0},v_{1},\dots,v_{9} \}$ where $v_{i} = -7.5 + 5i$. Then $\tilde{f}_{w}(v_{-8}),\dots,\tilde{f}_{w}(v_{0})$ are well chosen for smoothness and to satisfy $\sum_{v_{i}<0}\tilde{f}_{w}(v_{i}) = \sum_{v_{i}>0}\tilde{f}_{w}(v_{i})$. The expanded discrete estimates $\text{Est}(V^{*})$ are shown in the middle image in Figure \ref{supp_fig:1}. Our final estimates of $f$ and $F$ admit the forms
\[
\hat{f} = \sum_{i=-8}^{7}\frac{\tilde{f}_{w}(v_{i})}{\sum_{j=-8}^{7}\tilde{f}_{w}(v_{j})}\phi(v_{i},\sigma^{2}), \hat{F} = \sum_{i=-8}^{7}\frac{\tilde{f}_{w}(v_{i})}{\sum_{j=-8}^{7}\tilde{f}_{w}(v_{j})}\Phi(v_{i},\sigma^{2}),
\]
where $\Phi(v_{i},\sigma^{2})$ is the CDF function of $N(v_{i},\sigma^{2})$ distribution. The right image in Figure \ref{supp_fig:1} shows the final smoothed estimate of the noise PDF. 

In Figure \ref{supp_fig:2}, we present the similar estimation procedure of the noise PDF for the California data.

\begin{figure}[h]
	\centering
	\subfigure{
		\begin{minipage}[t]{0.32\linewidth}
			\centering
			\includegraphics[width=2.05in]{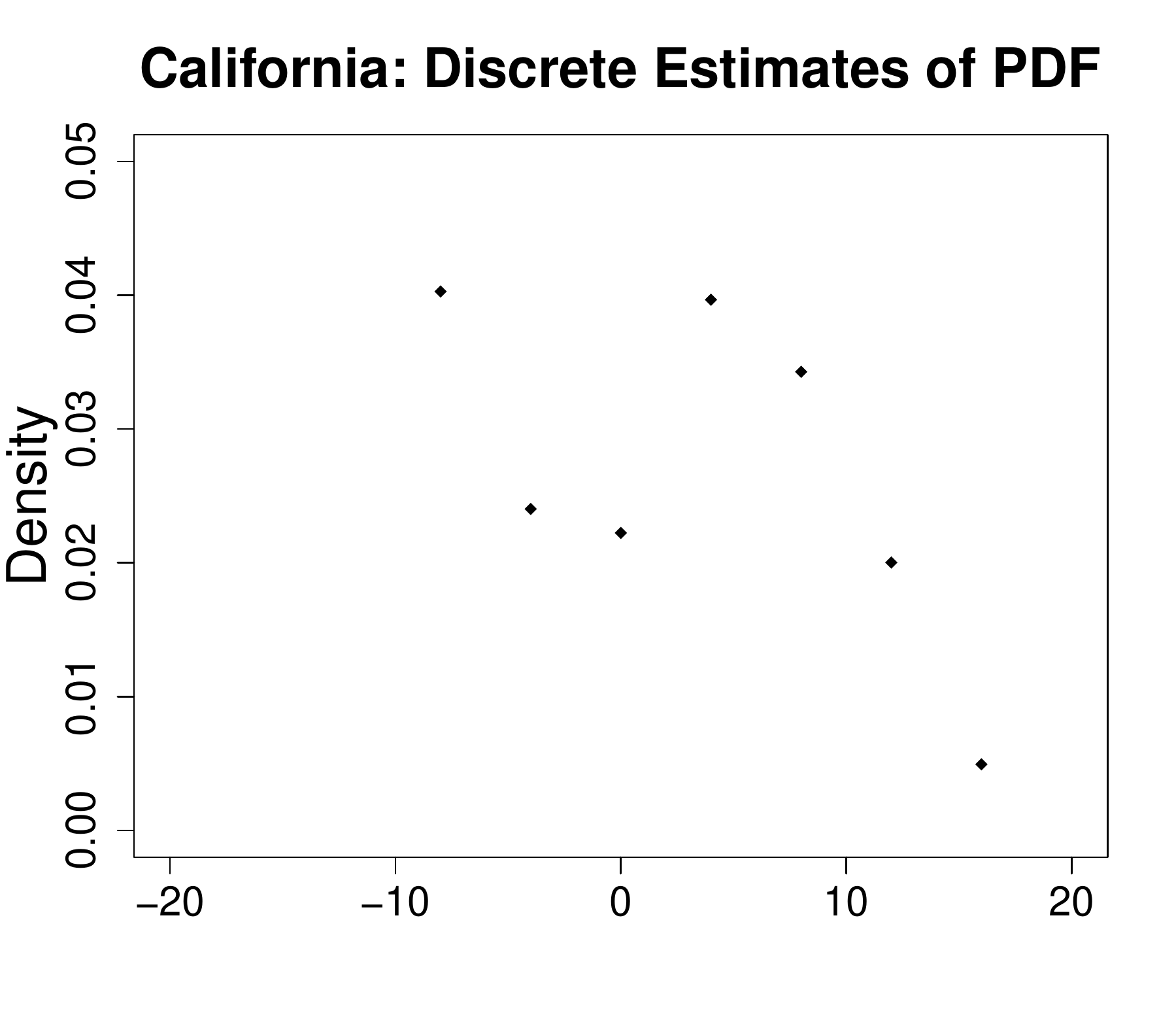}
			%\caption{fig1}
		\end{minipage}%
	}%
	\subfigure{
		\begin{minipage}[t]{0.32\linewidth}
			\centering
			\includegraphics[width=2.05in]{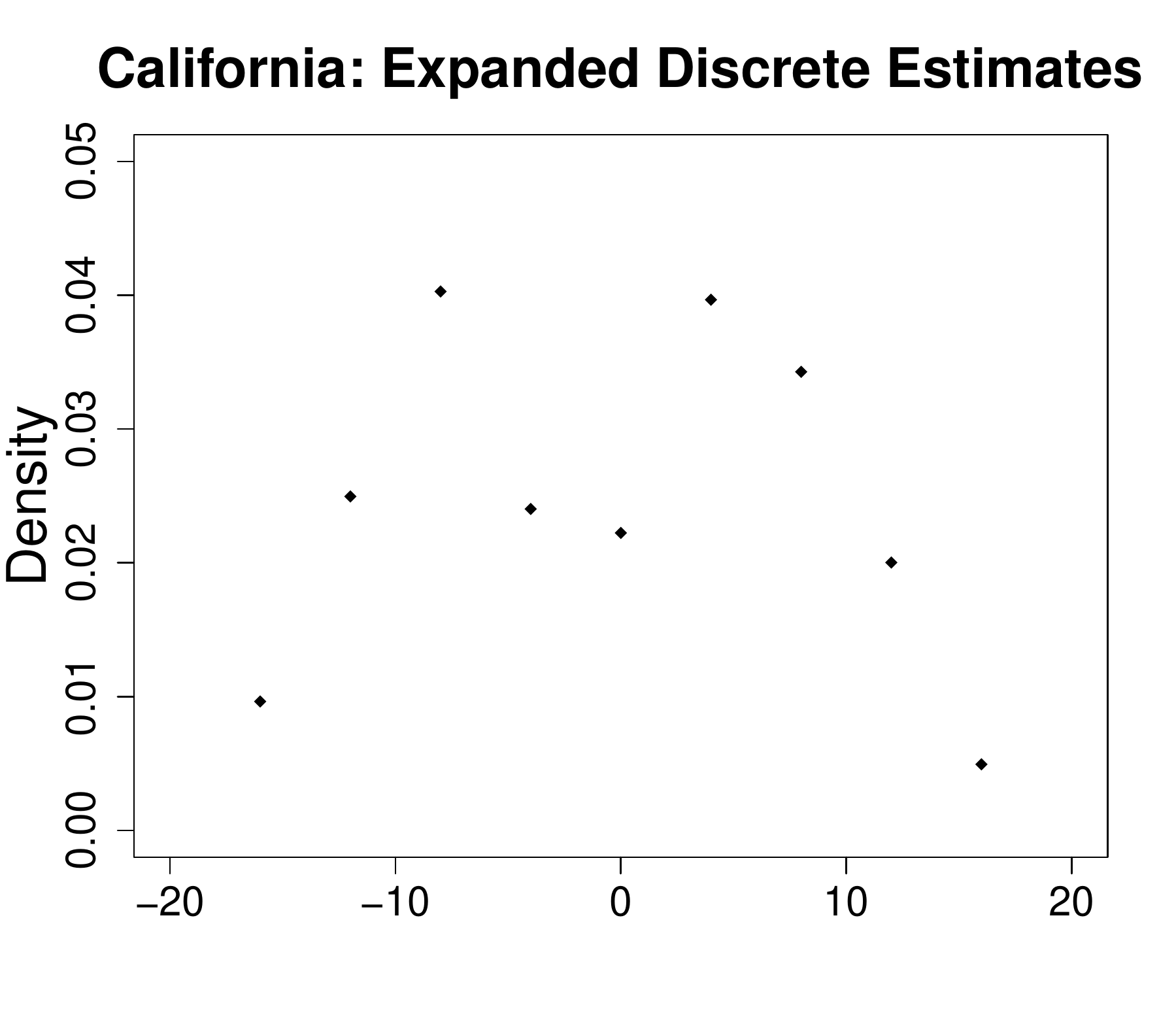}
			%\caption{fig2}
		\end{minipage}%
	}%
	\subfigure{
		\begin{minipage}[t]{0.32\linewidth}
			\centering
			\includegraphics[width=2.05in]{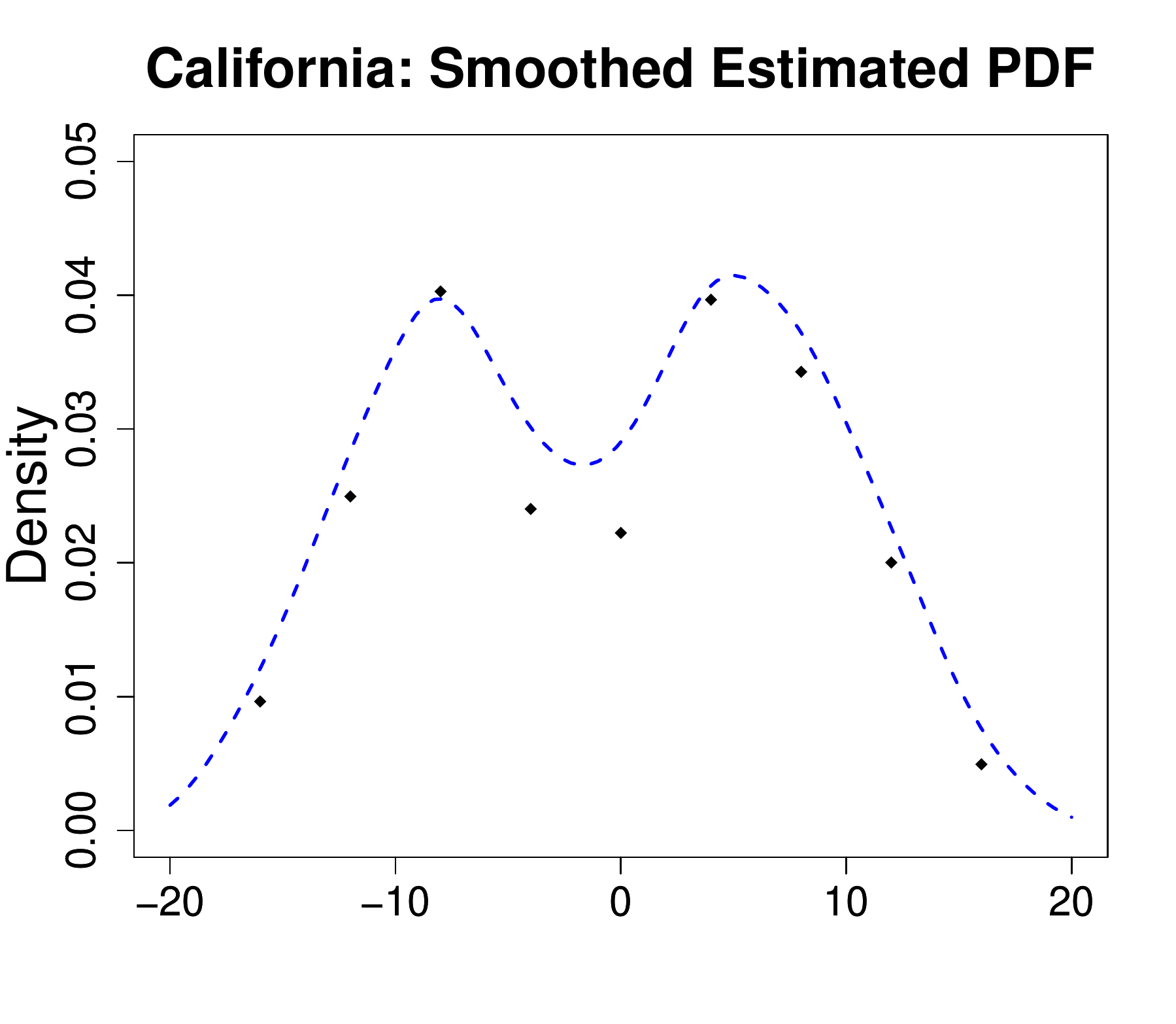}
			%\caption{fig1}
		\end{minipage}%
	}%
	\centering
	\caption{Estimation procedure of the noise PDF for the California data.}
	\label{supp_fig:2}
\end{figure}

\def\thesection{E}
\section{DIP-SVM: A Variant of DIP}
We propose a variant DIP-SVM of our DIP algorithm in the following Algorithm \ref{supp_alg:6}. Instead of applying logistic regression, it uses Support Vector Machine (SVM) as the linear classifier in Inner Algorithm A to estimate $\theta_{0}$. For the simulation setting in Example 1, we conduct the regret comparison between DIP and DIP-SVM, along with RMLP and RMLP-2. Figure \ref{supp_fig:DIP-SVM} shows that DIP-SVM performs similar to DIP, and both outperform RMLP-2 and RMLP. This suggests that the performance of DIP policy is relatively robust to the choice of the linear classifier. In this paper, we suggest to use logistic regression in DIP policy due to its stable performance and cheap computation.

\setcounter{algorithm}{5}
\begin{algorithm}[h!]
	%\begin{small}
	\caption{DIP-SVM}\label{supp_alg:6}
	\begin{algorithmic}[1]
		\STATE \textbf{Input:} \textbf{(at time 0)} $\alpha_{1},\alpha_{2},p_{\max},C,\lambda,W$
		\STATE \textbf{Input:} \textbf{(arrives over time)} covariates $\{x_{t}\}_{t\in[T]}$
		\STATE \textbf{For} time $t=1,\dots,\ell_{1} (= \alpha_{1})$, \textbf{do}
		\STATE \hspace{0.075in} Set a price $p_{t}$ randomly from $(0,p_{\max})$ and receive a binary response $y_{t}$. 
		\STATE \textbf{For} episodes $k=2,3,\dots,n(=n(T,\alpha_{1},\alpha_{2}))$, \textbf{do}
		\STATE \hspace{0.075in} Apply the SVM version of Inner Algorithm A with the input data\\ \hspace{0.075in} $\{(x_{t},p_{t},y_{t}) \}_{\sum_{i=1}^{k-2}\ell_{i}+1\leq t\leq \sum_{i=1}^{k-1}\ell_{i}}$ and $W$ to obtain the estimate $\hat{\theta}_{k-1}$.
		\STATE \hspace{0.075in} Apply Inner Algorithm B on the coming sequential covariates\\
		\hspace{0.075in} $\{x_{t}\}_{\sum_{i=1}^{k-1}\ell_{i}+1\leq t\leq \sum_{i=1}^{k}\ell_{i}}$, with the estimate $\hat{\theta}_{k-1}$, discretization number \\
		\hspace{0.075in} $d_{k} = C\lceil (2^{k-2}\ell_{2})^{\frac{1}{6}}\rceil$ and the UCB construction in (2) with $\beta_{t} = \beta_{t}^{*}$ and $\delta = \frac{1}{2^{k-2}\ell_{2}}$.
	\end{algorithmic}
	%\end{small}
\end{algorithm}

\begin{figure}[h]
	\centering
	\includegraphics[width=0.5\textwidth]{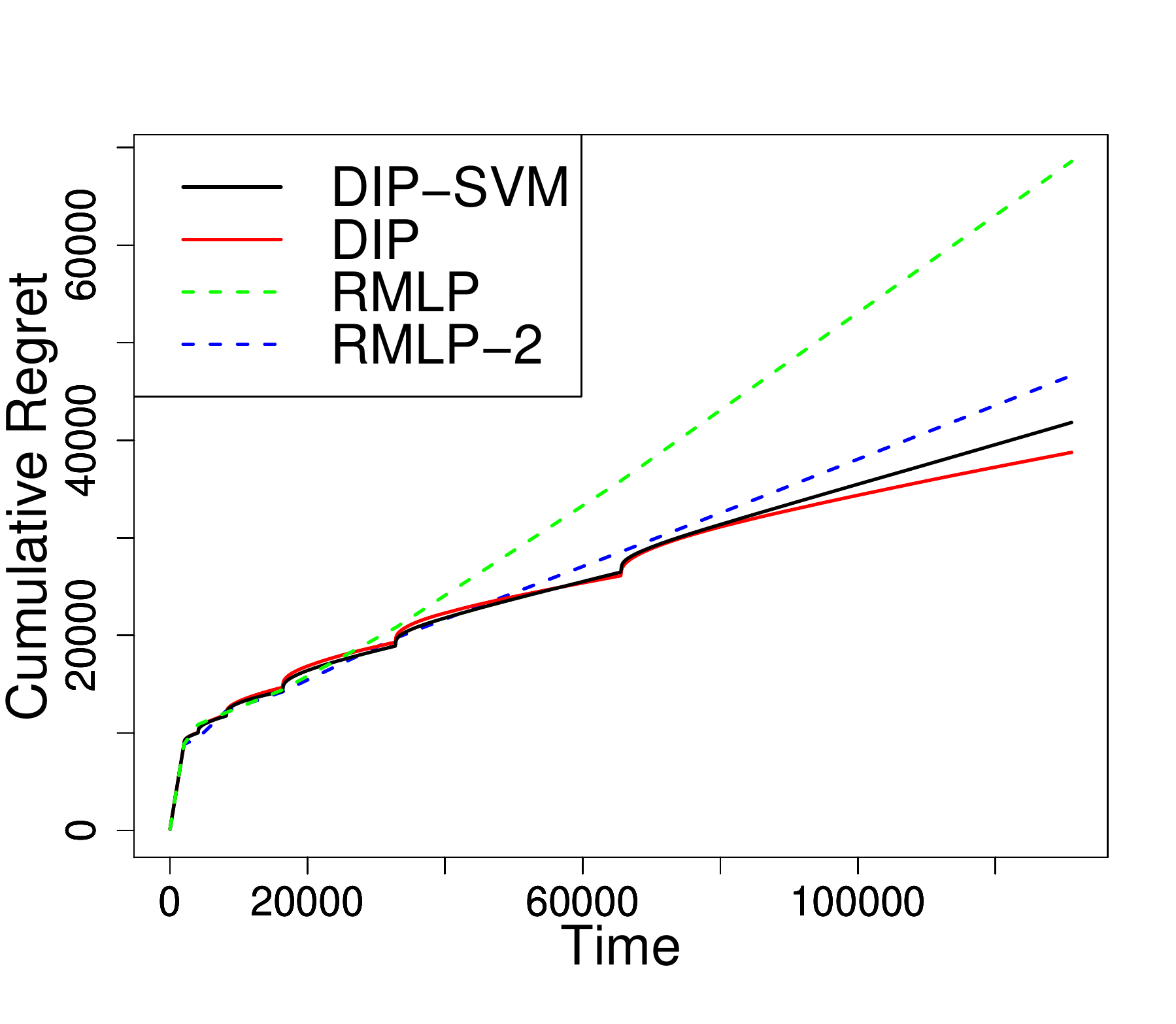}
	\caption{Regret comparison with DIP-SVM.}
	\label{supp_fig:DIP-SVM}
\end{figure}

\def\thesection{F}
\section{Real data Analysis in Four More US States}
We provide additional real data analysis on the Texas, Florida, Pennsylvania and Virginia data. As shown in Figure \ref{supp_fig:four_states}, DIP shows a clear sublinear trend while RMLP and RMLP-2 do not. Specifically, DIP outperforms both RMLP and RMLP-2 for the Texas and Pennsylvania state data just like that in the original cases of the US and California state data. For the Florida and Virginia state data, RMLP-2 performs better than DIP within the limited time horizon. However, considering the increasing trend of RMLP-2 and DIP, we expect that DIP would eventually outperform RMLP-2 as the time horizon gets longer.

\begin{figure}[h]
	\centering
	\subfigure{
		\begin{minipage}[t]{0.24\linewidth}
			\centering
			\includegraphics[width=1.45in]{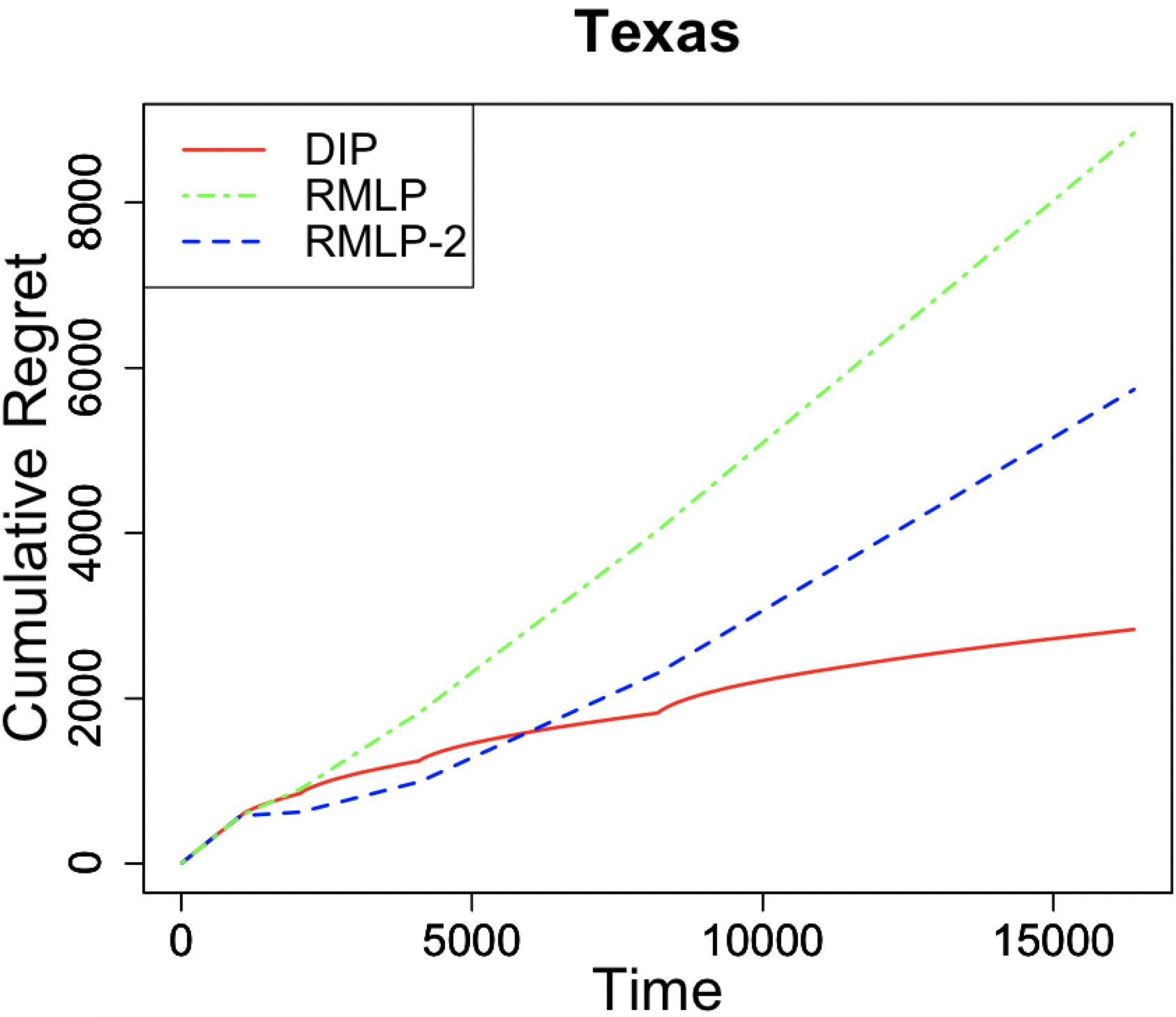}
			%\caption{Texas}
		\end{minipage}%
	}%
	\subfigure{
		\begin{minipage}[t]{0.24\linewidth}
			\centering
			\includegraphics[width=1.45in]{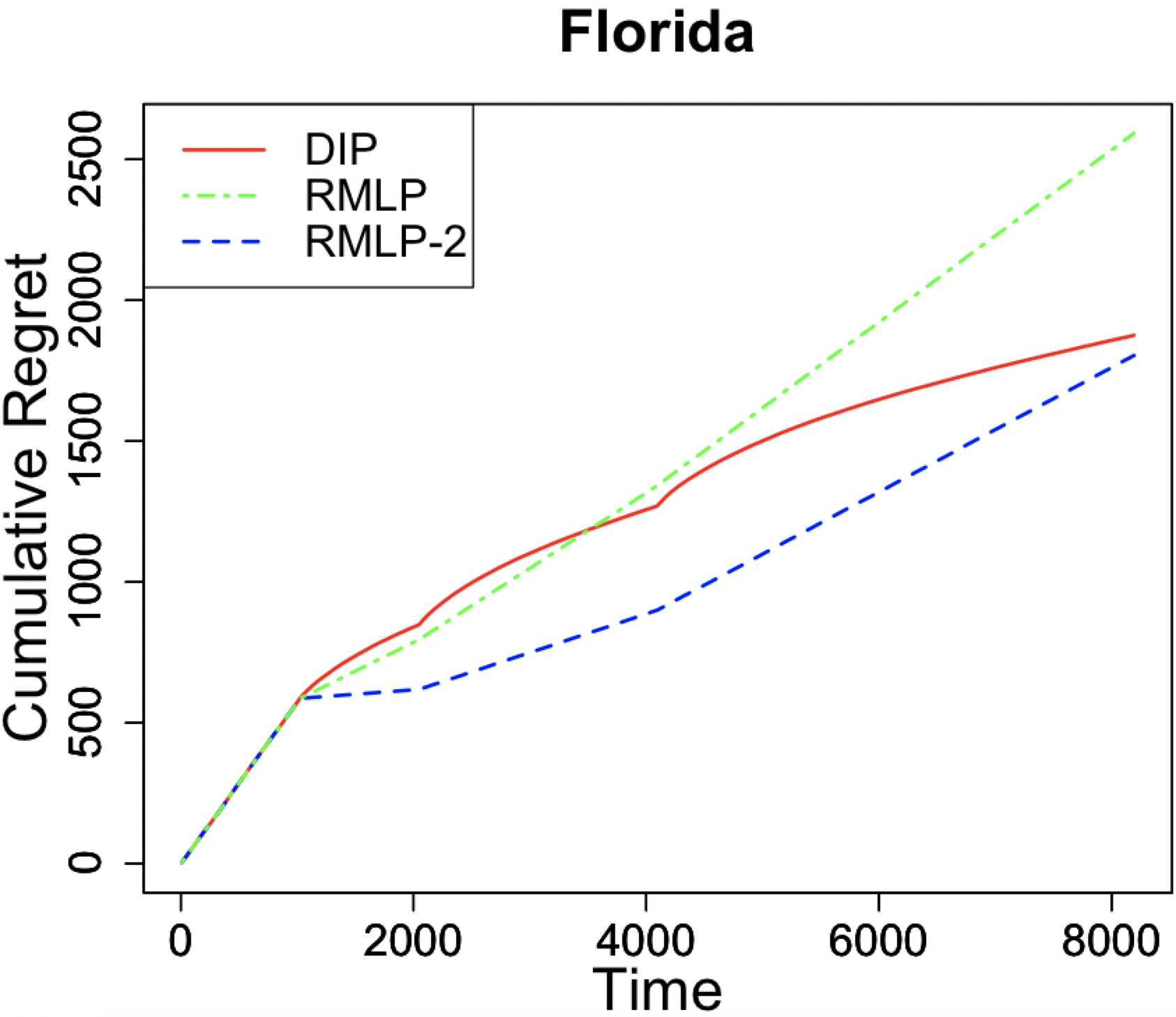}
			%\caption{Florida}
		\end{minipage}%
	}%
	\subfigure{
		\begin{minipage}[t]{0.24\linewidth}
			\centering
			\includegraphics[width=1.45in]{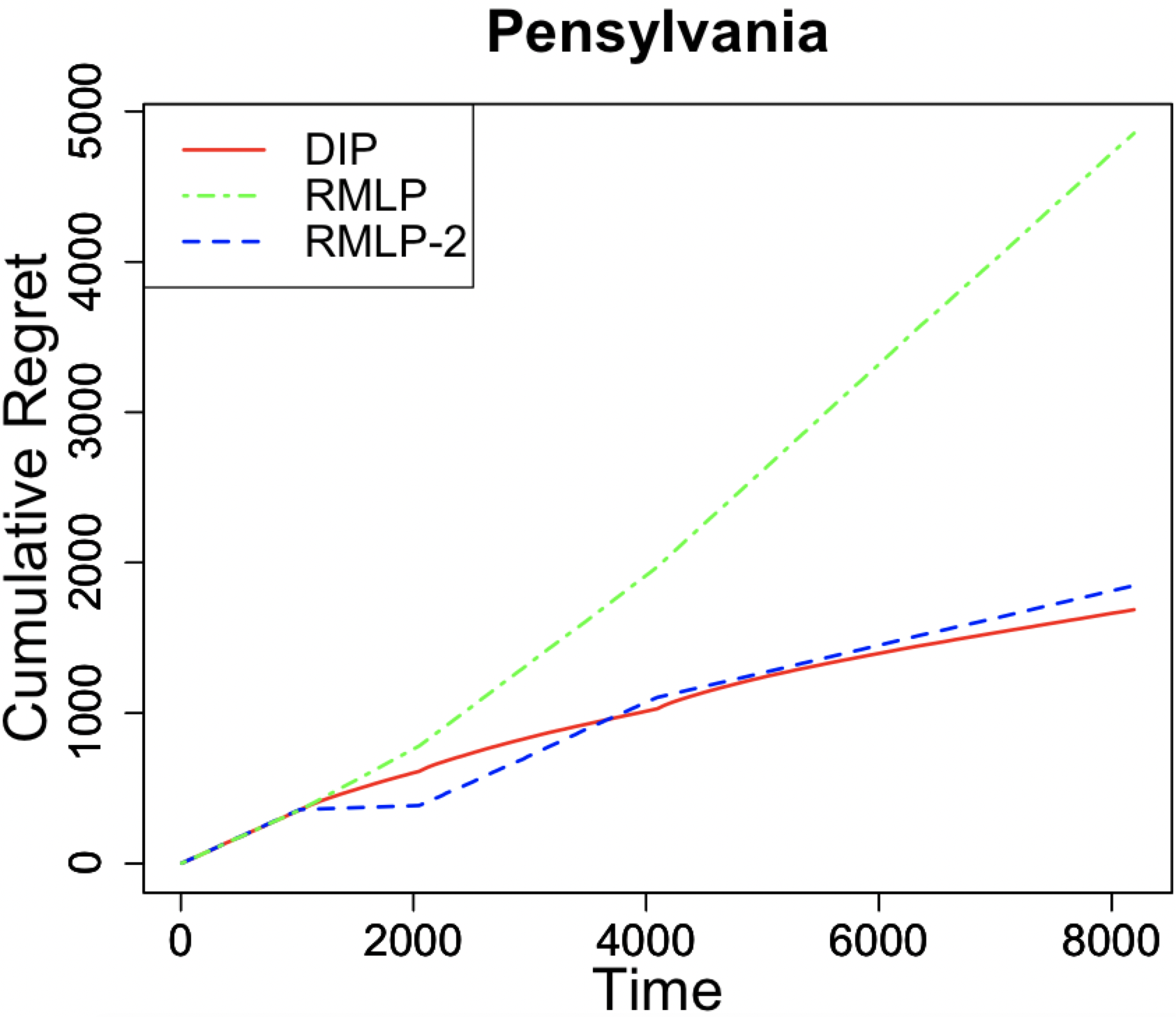}
			%\caption{Pensylvania}
		\end{minipage}%
	}%
	\subfigure{
		\begin{minipage}[t]{0.24\linewidth}
			\centering
			\includegraphics[width=1.45in]{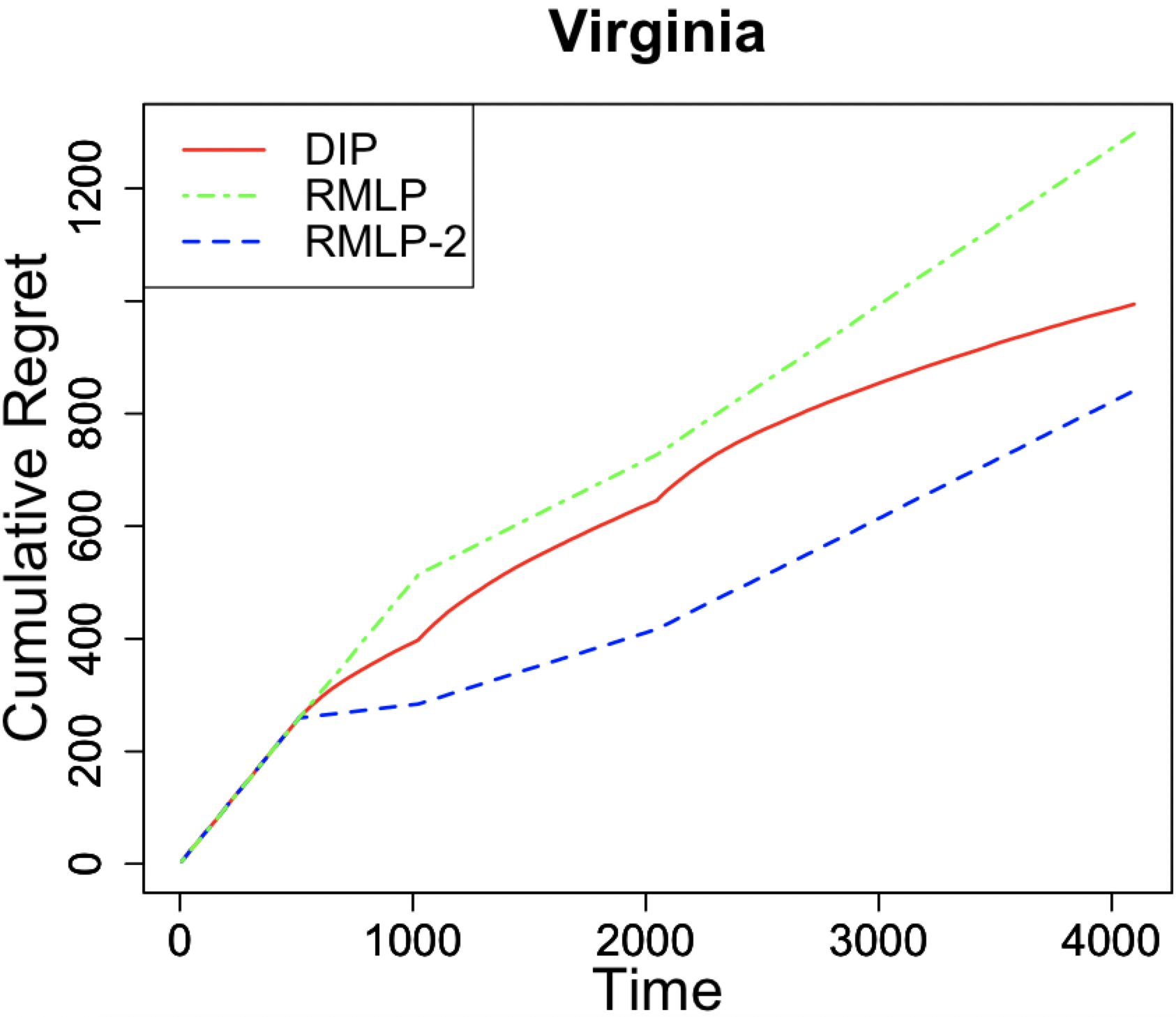}
			%\caption{Virginia}
		\end{minipage}%
	}%
	\centering
	\caption{Additional real data analysis in four more US states.}
	\label{supp_fig:four_states}
\end{figure}

\baselineskip=18pt
\bibliographystyle{biom}
\bibliography{references}

\begin{thebibliography}{}

\bibitem[\protect\citeauthoryear{Abbasi-Yadkori, P{\'a}l, and
  Szepesv{\'a}ri}{Abbasi-Yadkori et~al.}{2011}]{abbasi2011improved}
Abbasi-Yadkori, Y., P{\'a}l, D., and Szepesv{\'a}ri, C. (2011).
\newblock Improved algorithms for linear stochastic bandits.
\newblock In {\em Advances in Neural Information Processing Systems}, pages
  2312--2320.

\bibitem[\protect\citeauthoryear{Agrawal and Goyal}{Agrawal and
  Goyal}{2013}]{agrawal2013thompson}
Agrawal, S. and Goyal, N. (2013).
\newblock Thompson sampling for contextual bandits with linear payoffs.
\newblock In {\em International Conference on Machine Learning}, pages
  127--135.

\bibitem[\protect\citeauthoryear{Auer, Cesa-Bianchi, and Fischer}{Auer
  et~al.}{2002}]{auer2002finite}
Auer, P., Cesa-Bianchi, N., and Fischer, P. (2002).
\newblock Finite-time analysis of the multiarmed bandit problem.
\newblock {\em Machine learning} {\bf 47,} 235--256.

\bibitem[\protect\citeauthoryear{Ban and Keskin}{Ban and
  Keskin}{2020}]{ban2020personalized}
Ban, G.-Y. and Keskin, B. (2020).
\newblock Personalized dynamic pricing with machine learning: High dimensional
  features and heterogeneous elasticity.
\newblock {\em Forthcoming, Management Science} .

\bibitem[\protect\citeauthoryear{Bastani, Simchi-Levi, and Zhu}{Bastani
  et~al.}{2019}]{bastani2019meta}
Bastani, H., Simchi-Levi, D., and Zhu, R. (2019).
\newblock Meta dynamic pricing: Transfer learning across experiments.
\newblock {\em Available at SSRN 3334629} .

\bibitem[\protect\citeauthoryear{Besbes and Zeevi}{Besbes and
  Zeevi}{2009}]{besbes2009dynamic}
Besbes, O. and Zeevi, A. (2009).
\newblock Dynamic pricing without knowing the demand function: Risk bounds and
  near-optimal algorithms.
\newblock {\em Operations Research} {\bf 57,} 1407--1420.

\bibitem[\protect\citeauthoryear{Besbes and Zeevi}{Besbes and
  Zeevi}{2011}]{besbes2011minimax}
Besbes, O. and Zeevi, A. (2011).
\newblock On the minimax complexity of pricing in a changing environment.
\newblock {\em Operations research} {\bf 59,} 66--79.

\bibitem[\protect\citeauthoryear{Besbes and Zeevi}{Besbes and
  Zeevi}{2015}]{besbes2015surprising}
Besbes, O. and Zeevi, A. (2015).
\newblock On the (surprising) sufficiency of linear models for dynamic pricing
  with demand learning.
\newblock {\em Management Science} {\bf 61,} 723--739.

\bibitem[\protect\citeauthoryear{Bickel, Klaassen, Bickel, Ritov, Klaassen,
  Wellner, and Ritov}{Bickel et~al.}{1993}]{bickel1993efficient}
Bickel, P.~J., Klaassen, C.~A., Bickel, P.~J., Ritov, Y., Klaassen, J.,
  Wellner, J.~A., and Ritov, Y. (1993).
\newblock {\em Efficient and adaptive estimation for semiparametric models},
  volume~4.
\newblock Johns Hopkins University Press Baltimore.

\bibitem[\protect\citeauthoryear{Broder and Rusmevichientong}{Broder and
  Rusmevichientong}{2012}]{broder2012dynamic}
Broder, J. and Rusmevichientong, P. (2012).
\newblock Dynamic pricing under a general parametric choice model.
\newblock {\em Operations Research} {\bf 60,} 965--980.

\bibitem[\protect\citeauthoryear{Bubeck and Cesa-Bianchi}{Bubeck and
  Cesa-Bianchi}{2012}]{bubeck2012regret}
Bubeck, S. and Cesa-Bianchi, N. (2012).
\newblock Regret analysis of stochastic and nonstochastic multi-armed bandit
  problems.
\newblock {\em arXiv preprint arXiv:1204.5721} .

\bibitem[\protect\citeauthoryear{Cesa-Bianchi, Cesari, and
  Perchet}{Cesa-Bianchi et~al.}{2019}]{cesa2019dynamic}
Cesa-Bianchi, N., Cesari, T., and Perchet, V. (2019).
\newblock Dynamic pricing with finitely many unknown valuations.
\newblock In {\em Algorithmic Learning Theory}, pages 247--273. PMLR.

\bibitem[\protect\citeauthoryear{Chen and Gallego}{Chen and
  Gallego}{2018}]{chen2018primal}
Chen, N. and Gallego, G. (2018).
\newblock A primal-dual learning algorithm for personalized dynamic pricing
  with an inventory constraint.
\newblock {\em Available at SSRN 3301153} .

\bibitem[\protect\citeauthoryear{Chen and Gallego}{Chen and
  Gallego}{2021}]{chen2021nonparametric}
Chen, N. and Gallego, G. (2021).
\newblock Nonparametric pricing analytics with customer covariates.
\newblock {\em Forthcoming, Operations Research} .

\bibitem[\protect\citeauthoryear{Chen, Owen, Pixton, and Simchi-Levi}{Chen
  et~al.}{2021}]{chen2021statistical}
Chen, X., Owen, Z., Pixton, C., and Simchi-Levi, D. (2021).
\newblock A statistical learning approach to personalization in revenue
  management.
\newblock {\em Forthcoming, Management Science} .

\bibitem[\protect\citeauthoryear{Chen, Wen, and Xie}{Chen
  et~al.}{2019}]{chen2019dynamic}
Chen, Y., Wen, Z., and Xie, Y. (2019).
\newblock Dynamic pricing in an evolving and unknown marketplace.
\newblock {\em Available at SSRN 3382957} .

\bibitem[\protect\citeauthoryear{Cheung, Simchi-Levi, and Wang}{Cheung
  et~al.}{2017}]{cheung2017dynamic}
Cheung, W.~C., Simchi-Levi, D., and Wang, H. (2017).
\newblock Dynamic pricing and demand learning with limited price
  experimentation.
\newblock {\em Operations Research} {\bf 65,} 1722--1731.

\bibitem[\protect\citeauthoryear{Cheung, Simchi-Levi, and Zhu}{Cheung
  et~al.}{2018}]{cheung2018hedging}
Cheung, W.~C., Simchi-Levi, D., and Zhu, R. (2018).
\newblock Hedging the drift: Learning to optimize under non-stationarity.
\newblock {\em Available at SSRN 3261050} .

\bibitem[\protect\citeauthoryear{Chu, Li, Reyzin, and Schapire}{Chu
  et~al.}{2011}]{chu2011contextual}
Chu, W., Li, L., Reyzin, L., and Schapire, R. (2011).
\newblock Contextual bandits with linear payoff functions.
\newblock In {\em Proceedings of the Fourteenth International Conference on
  Artificial Intelligence and Statistics}, pages 208--214. JMLR Workshop and
  Conference Proceedings.

\bibitem[\protect\citeauthoryear{Cohen, Lobel, and Paes~Leme}{Cohen
  et~al.}{2020}]{cohen2020feature}
Cohen, M.~C., Lobel, I., and Paes~Leme, R. (2020).
\newblock Feature-based dynamic pricing.
\newblock {\em Management Science} {\bf 66,} 4921--4943.

\bibitem[\protect\citeauthoryear{den Boer}{den Boer}{2015a}]{den2015dynamic}
den Boer, A.~V. (2015a).
\newblock Dynamic pricing and learning: historical origins, current research,
  and new directions.
\newblock {\em Surveys in operations research and management science} {\bf 20,}
  1--18.

\bibitem[\protect\citeauthoryear{den Boer}{den Boer}{2015b}]{den2015tracking}
den Boer, A.~V. (2015b).
\newblock Tracking the market: Dynamic pricing and learning in a changing
  environment.
\newblock {\em European journal of operational research} {\bf 247,} 914--927.

\bibitem[\protect\citeauthoryear{den Boer and Keskin}{den Boer and
  Keskin}{2020}]{den2020discontinuous}
den Boer, A.~V. and Keskin, N.~B. (2020).
\newblock Discontinuous demand functions: estimation and pricing.
\newblock {\em Management Science} {\bf 66,} 4516--4534.

\bibitem[\protect\citeauthoryear{den Boer and Zwart}{den Boer and
  Zwart}{2014}]{den2014simultaneously}
den Boer, A.~V. and Zwart, B. (2014).
\newblock Simultaneously learning and optimizing using controlled variance
  pricing.
\newblock {\em Management science} {\bf 60,} 770--783.

\bibitem[\protect\citeauthoryear{Fan, Guo, and Yu}{Fan
  et~al.}{2021}]{fan2021policy}
Fan, J., Guo, Y., and Yu, M. (2021).
\newblock Policy optimization using semiparametric models for dynamic pricing.
\newblock {\em Available at SSRN 3922825} .

\bibitem[\protect\citeauthoryear{Foster and Rakhlin}{Foster and
  Rakhlin}{2020}]{foster2020beyond}
Foster, D. and Rakhlin, A. (2020).
\newblock Beyond ucb: Optimal and efficient contextual bandits with regression
  oracles.
\newblock In {\em International Conference on Machine Learning}, pages
  3199--3210. PMLR.

\bibitem[\protect\citeauthoryear{Foster, Gentile, Mohri, and Zimmert}{Foster
  et~al.}{2020}]{foster2020adapting}
Foster, D.~J., Gentile, C., Mohri, M., and Zimmert, J. (2020).
\newblock Adapting to misspecification in contextual bandits.
\newblock {\em Advances in Neural Information Processing Systems} {\bf 33,}.

\bibitem[\protect\citeauthoryear{Golrezaei, Jaillet, and Liang}{Golrezaei
  et~al.}{2019}]{golrezaei2019incentive}
Golrezaei, N., Jaillet, P., and Liang, J. C.~N. (2019).
\newblock Incentive-aware contextual pricing with non-parametric market noise.
\newblock {\em arXiv preprint arXiv:1911.03508} .

\bibitem[\protect\citeauthoryear{Golrezaei, Javanmard, and Mirrokni}{Golrezaei
  et~al.}{2021}]{golrezaei2021dynamic}
Golrezaei, N., Javanmard, A., and Mirrokni, V. (2021).
\newblock Dynamic incentive-aware learning: Robust pricing in contextual
  auctions.
\newblock {\em Operations Research} {\bf 69,} 297--314.

\bibitem[\protect\citeauthoryear{Huang, Mani, and Wang}{Huang
  et~al.}{2021}]{huang2021value}
Huang, J., Mani, A., and Wang, Z. (2021).
\newblock The value of price discrimination in large social networks.
\newblock {\em Management Science, Forthcoming} .

\bibitem[\protect\citeauthoryear{Javanmard}{Javanmard}{2017}]{javanmard2017perishability}
Javanmard, A. (2017).
\newblock Perishability of data: dynamic pricing under varying-coefficient
  models.
\newblock {\em The Journal of Machine Learning Research} {\bf 18,} 1714--1744.

\bibitem[\protect\citeauthoryear{Javanmard and Nazerzadeh}{Javanmard and
  Nazerzadeh}{2019}]{javanmard2019dynamic}
Javanmard, A. and Nazerzadeh, H. (2019).
\newblock Dynamic pricing in high-dimensions.
\newblock {\em The Journal of Machine Learning Research} {\bf 20,} 315--363.

\bibitem[\protect\citeauthoryear{Javanmard, Nazerzadeh, and Shao}{Javanmard
  et~al.}{2020}]{javanmard2020multi}
Javanmard, A., Nazerzadeh, H., and Shao, S. (2020).
\newblock Multi-product dynamic pricing in high-dimensions with heterogeneous
  price sensitivity.
\newblock In {\em 2020 IEEE International Symposium on Information Theory
  (ISIT)}, pages 2652--2657. IEEE.

\bibitem[\protect\citeauthoryear{Kallus and Zhou}{Kallus and
  Zhou}{2020}]{kallus2020fairness}
Kallus, N. and Zhou, A. (2020).
\newblock Fairness, welfare, and equity in personalized pricing.
\newblock {\em arXiv preprint arXiv:2012.11066} .

\bibitem[\protect\citeauthoryear{Keskin and Zeevi}{Keskin and
  Zeevi}{2014}]{keskin2014dynamic}
Keskin, N.~B. and Zeevi, A. (2014).
\newblock Dynamic pricing with an unknown demand model: Asymptotically optimal
  semi-myopic policies.
\newblock {\em Operations Research} {\bf 62,} 1142--1167.

\bibitem[\protect\citeauthoryear{Keskin and Zeevi}{Keskin and
  Zeevi}{2017}]{keskin2017chasing}
Keskin, N.~B. and Zeevi, A. (2017).
\newblock Chasing demand: Learning and earning in a changing environment.
\newblock {\em Mathematics of Operations Research} {\bf 42,} 277--307.

\bibitem[\protect\citeauthoryear{Kleinberg and Leighton}{Kleinberg and
  Leighton}{2003}]{kleinberg2003value}
Kleinberg, R. and Leighton, T. (2003).
\newblock The value of knowing a demand curve: Bounds on regret for online
  posted-price auctions.
\newblock In {\em 44th Annual IEEE Symposium on Foundations of Computer
  Science, 2003. Proceedings.}, pages 594--605. IEEE.

\bibitem[\protect\citeauthoryear{Lattimore and Szepesv{\'a}ri}{Lattimore and
  Szepesv{\'a}ri}{2020}]{lattimore2020bandit}
Lattimore, T. and Szepesv{\'a}ri, C. (2020).
\newblock {\em Bandit algorithms}.
\newblock Cambridge University Press.

\bibitem[\protect\citeauthoryear{Lattimore, Szepesvari, and Weisz}{Lattimore
  et~al.}{2020}]{lattimore2020learning}
Lattimore, T., Szepesvari, C., and Weisz, G. (2020).
\newblock Learning with good feature representations in bandits and in rl with
  a generative model.
\newblock In {\em International Conference on Machine Learning}, pages
  5662--5670. PMLR.

\bibitem[\protect\citeauthoryear{Mao, Leme, and Schneider}{Mao
  et~al.}{2018}]{mao2018contextual}
Mao, J., Leme, R., and Schneider, J. (2018).
\newblock Contextual pricing for lipschitz buyers.
\newblock In {\em Advances in Neural Information Processing Systems}, pages
  5643--5651.

\bibitem[\protect\citeauthoryear{Misra, Schwartz, and Abernethy}{Misra
  et~al.}{2019}]{misra2019dynamic}
Misra, K., Schwartz, E.~M., and Abernethy, J. (2019).
\newblock Dynamic online pricing with incomplete information using multiarmed
  bandit experiments.
\newblock {\em Marketing Science} {\bf 38,} 226--252.

\bibitem[\protect\citeauthoryear{Mueller, Syrgkanis, and Taddy}{Mueller
  et~al.}{2018}]{mueller2018low}
Mueller, J., Syrgkanis, V., and Taddy, M. (2018).
\newblock Low-rank bandit methods for high-dimensional dynamic pricing.
\newblock {\em arXiv preprint arXiv:1801.10242} .

\bibitem[\protect\citeauthoryear{Nambiar, Simchi-Levi, and Wang}{Nambiar
  et~al.}{2019}]{nambiar2019dynamic}
Nambiar, M., Simchi-Levi, D., and Wang, H. (2019).
\newblock Dynamic learning and pricing with model misspecification.
\newblock {\em Management Science} {\bf 65,} 4980--5000.

\bibitem[\protect\citeauthoryear{Pacchiano, Phan, Abbasi~Yadkori, Rao, Zimmert,
  Lattimore, and Szepesvari}{Pacchiano et~al.}{2020}]{pacchiano2020model}
Pacchiano, A., Phan, M., Abbasi~Yadkori, Y., Rao, A., Zimmert, J., Lattimore,
  T., and Szepesvari, C. (2020).
\newblock Model selection in contextual stochastic bandit problems.
\newblock {\em Advances in Neural Information Processing Systems} {\bf 33,}.

\bibitem[\protect\citeauthoryear{Perchet, Rigollet, et~al\mbox{.}}{Perchet
  et~al.}{2013}]{perchet2013multi}
Perchet, V., Rigollet, P., et~al. (2013).
\newblock The multi-armed bandit problem with covariates.
\newblock {\em The Annals of Statistics} {\bf 41,} 693--721.

\bibitem[\protect\citeauthoryear{Phillips, {\c{S}}im{\c{s}}ek, and
  Van~Ryzin}{Phillips et~al.}{2015}]{phillips2015effectiveness}
Phillips, R., {\c{S}}im{\c{s}}ek, A.~S., and Van~Ryzin, G. (2015).
\newblock The effectiveness of field price discretion: Empirical evidence from
  auto lending.
\newblock {\em Management Science} {\bf 61,} 1741--1759.

\bibitem[\protect\citeauthoryear{Qiang and Bayati}{Qiang and
  Bayati}{2016}]{qiang2016dynamic}
Qiang, S. and Bayati, M. (2016).
\newblock Dynamic pricing with demand covariates.
\newblock {\em Available at SSRN 2765257} .

\bibitem[\protect\citeauthoryear{Russac, Vernade, and Capp{\'e}}{Russac
  et~al.}{2019}]{russac2019weighted}
Russac, Y., Vernade, C., and Capp{\'e}, O. (2019).
\newblock Weighted linear bandits for non-stationary environments.
\newblock In {\em Advances in Neural Information Processing Systems}, pages
  12040--12049.

\bibitem[\protect\citeauthoryear{Shah, Johari, and Blanchet}{Shah
  et~al.}{2019}]{shah2019semi}
Shah, V., Johari, R., and Blanchet, J. (2019).
\newblock Semi-parametric dynamic contextual pricing.
\newblock In {\em Advances in Neural Information Processing Systems}, pages
  2363--2373.

\bibitem[\protect\citeauthoryear{Wang, Shen, and Liu}{Wang
  et~al.}{2008}]{wang2008probability}
Wang, J., Shen, X., and Liu, Y. (2008).
\newblock Probability estimation for large-margin classifiers.
\newblock {\em Biometrika} {\bf 95,} 149--167.

\bibitem[\protect\citeauthoryear{Wang, Chen, and Simchi-Levi}{Wang
  et~al.}{2021}]{wang2021multimodal}
Wang, Y., Chen, B., and Simchi-Levi, D. (2021).
\newblock Multimodal dynamic pricing.
\newblock {\em Forthcoming, Management Science} .

\bibitem[\protect\citeauthoryear{Wang, Chen, Chang, and Ge}{Wang
  et~al.}{2020}]{wang2020uncertainty}
Wang, Y., Chen, X., Chang, X., and Ge, D. (2020).
\newblock Uncertainty quantification for demand prediction in contextual
  dynamic pricing.
\newblock {\em Forthcoming, Production and Operations Management} .

\bibitem[\protect\citeauthoryear{Wang, Deng, and Ye}{Wang
  et~al.}{2014}]{wang2014close}
Wang, Z., Deng, S., and Ye, Y. (2014).
\newblock Close the gaps: A learning-while-doing algorithm for single-product
  revenue management problems.
\newblock {\em Operations Research} {\bf 62,} 318--331.

\bibitem[\protect\citeauthoryear{Weed, Perchet, and Rigollet}{Weed
  et~al.}{2016}]{weed2016online}
Weed, J., Perchet, V., and Rigollet, P. (2016).
\newblock Online learning in repeated auctions.
\newblock In {\em Conference on Learning Theory}, pages 1562--1583. PMLR.

\bibitem[\protect\citeauthoryear{Xu and Wang}{Xu and
  Wang}{2022}]{xu2022towards}
Xu, J. and Wang, Y.-X. (2022).
\newblock Towards agnostic feature-based dynamic pricing: Linear policies vs
  linear valuation with unknown noise.
\newblock {\em arXiv preprint arXiv:2201.11341} .

\bibitem[\protect\citeauthoryear{Zhao, Zhang, Jiang, and Zhou}{Zhao
  et~al.}{2020}]{zhao2020simple}
Zhao, P., Zhang, L., Jiang, Y., and Zhou, Z.-H. (2020).
\newblock A simple approach for non-stationary linear bandits.
\newblock In {\em Proceedings of the 23rd International Conference on
  Artificial Intelligence and Statistics, AISTATS}, volume 2020.

\end{thebibliography}

\end{document}